\renewcommand{\thefootnote}{\arabic{footnote}}
\newtcbox{\mybox}{colback=blue!5,
	colframe=blue!30!black, center, enhanced, varwidth upper}
\def\isArxiv{1}
\newcommand{\changelocaltocdepth}[1]{%
  \addtocontents{toc}{\protect\setcounter{tocdepth}{#1}}%
  \setcounter{tocdepth}{#1}%
}
\newcommand{\myitem}[1]{%
\item[#1]\protected@edef\@currentlabel{#1}%
}
\renewcommand{\paragraph}{%
  \@startsection{paragraph}{4}%
  {\z@}{1ex \@plus 1ex \@minus .2ex}{-.5em}%
  {\normalfont\normalsize\bfseries}%
}
\newtheorem{theorem}{Theorem}[section]
\newtheorem{lemma}[theorem]{Lemma}
\newtheorem{proposition}[theorem]{Proposition}
\newtheorem{corollary}[theorem]{Corollary}
\newtheorem{example}[theorem]{Example}
\newtheorem{definition}[theorem]{Definition}
\newtheorem{remark}[theorem]{Remark}
\newtheorem{assumption}[theorem]{Assumption}
\newcommand{\be}{\begin{equation}}
\newcommand{\ee}{\end{equation}}
\newcommand{\bthm}{\begin{theorem}}
\newcommand{\ethm}{\end{theorem}}
\newcommand{\blem}{\begin{lemma}}
\newcommand{\elem}{\end{lemma}}
\newcommand{\bpof}{\begin{proof}}
\newcommand{\epof}{\end{proof}}
\newcommand{\bcor}{\begin{corollary}}
\newcommand{\ecor}{\end{corollary}}
\newcommand{\bprop}{\begin{proposition}}
\newcommand{\eprop}{\end{proposition}}
\newcommand{\supp}{\mathrm{supp}}
\newcommand{\diag}{\mathrm{diag}}
\newcommand{\mc}[1]{\mathcal{#1}}
\newcommand{\mbb}[1]{\mathbb{#1}}
\newcommand{\mfk}[1]{\mathfrak{#1}}
\newcommand{\mscr}[1]{\mathbb{#1}}
\newcommand{\msf}[1]{\mathsf{#1}}
\newcommand{\RR}{\mathbb{R}}
\newcommand{\QQ}{\mathbb{Q}}
\newcommand{\NN}{\mathbb{N}}
\newcommand{\ind}{\NN}
\newcommand{\leqf}{\preceq}
\newcommand{\Vinf}{\vct V_{\infty}}
\newcommand{\Uinf}{\vct U_{\infty}}
\newcommand{\Ginf}{\msf G_{\infty} }
\DeclareMathOperator{\sdist}{\overline{d}}
\newcommand{\sdistU}[1]{\sdist_{#1}}
\newcommand{\vct}[1]{{#1}}  %
\newif\ifnotes
\newcommand{\varphinn}[2]{\varphi_{#2,#1}}
\newcommand{\thetann}[2]{\theta_{#2,#1}}
\newcommand{\psinn}[2]{\psi_{#2,#1}}
\newcommand{\agg}{\texttt{Agg}}
\title{On Transferring Transferability: \\Towards a Theory for Size Generalization
}
\author{Eitan Levin\thanks{Department of Computing and Mathematical Sciences, Caltech, Pasadena, CA 91125, USA.}\textsuperscript{$\,\,\,\diamond$} \and Yuxin Ma\footnotemark[2]\textsuperscript{$\,\,\,\diamond$} \and
   Mateo D\'\i az\thanks{Department of Applied Mathematics and Statistics and the Mathematical Institute for Data Science, Johns Hopkins University, Baltimore, MD 21218, USA.} \and Soledad Villar\footnotemark[2]}
\date{}
\begin{document}
\maketitle
\begingroup
\def\thefootnote{$\diamond$}\footnotetext{These authors contributed equally to this work}
\endgroup
\begin{abstract}
    Many modern learning tasks require models that can take inputs of varying sizes. Consequently, dimension-independent architectures have been proposed for domains where the inputs are graphs, sets, and point clouds. Recent work on graph neural networks has explored whether a model trained on low-dimensional data can transfer its performance to higher-dimensional inputs. We extend this body of work by introducing a general framework for transferability across dimensions. We show that transferability corresponds precisely to continuity in a limit space formed by identifying small problem instances with equivalent large ones. This identification is driven by the data and the learning task. We instantiate our framework on existing architectures, and implement the necessary changes to ensure their transferability. Finally, we provide design principles for designing new transferable models. Numerical experiments support our findings.

\end{abstract}
\newpage
\tableofcontents
\newpage
\changelocaltocdepth{2}
\section{Introduction}
Modern learning problems often involve inputs of arbitrary size: language has text sequences of arbitrary length, graphs encode networks with an arbitrary number of nodes, and particle systems might have an arbitrary number of points. Although traditional models are typically limited to the specific input dimensions they were trained on, dimension-independent architectures have been proposed for several domains. For instance, graph neural networks (GNN) for graphs~\cite{ruizNEURIPS2020}, DeepSet for sets~\cite{zaheer2017deep}, and PointNet for point clouds~\cite{qi2017pointnet}. These models have a desirable feature: they can be trained with low-dimensional samples and, then, deployed on higher-dimensional data.

A key question in this context is whether the performance of a model trained with low-dimensional samples transfers across dimensions. This has been thoroughly studied in the context of GNNs~\cite{ruizNEURIPS2020, maskey2023transferability, maskey2022generalization}, where the term \emph{transferability} was first coined.\footnote{The term ``transferability” is not related to transfer learning or meta-learning. It specifically refers to the phenomenon that consistency of outputs is preserved under size changes for structurally similar inputs, permitting direct performance carryover from small to large problems.} In turn, the tools used to establish transferability of GNNs are, at first sight, problem-specific, including the concept of a graphon, and certain convergence of homomorphism densities. Thus, the analysis does not readily extend beyond GNNs,
and so we ask
\mybox{\it\centering What properties of the model, data, and learning task ensure that the learning performance transfers well across dimensions?}

This question is relevant for the current pursuit of graph foundation models~\cite{liu2023towards}, GNN for combinatorial optimization~\cite{cappart2023combinatorial}, neural PDE solvers~\cite{raissi2019physics,li2020fourier}, and so on, where pre-trained models aim to be adapted to a wide range of tasks involving different-sized objects. In this paper, we answer this question by identifying assumptions on the learning problem that make a model generalize across dimensions, and give a theoretical framework to understand models with such properties. 
Our framework requires technical notions that we develop throughout the paper, so before diving into the details, we proceed to give the high-level intuition.

Our framework considers a machine learning model parametrized by $\theta \in \mathbb R^k$, where the number of parameters $k$ is fixed. For any fixed $\theta$, the model can be evaluated on inputs of any size $n$. That is, the learned $\theta$ defines a sequence of functions, for example, $(f_{n}\colon \vct V_{n} \rightarrow \RR)_n$ with increasingly larger input sizes.\footnote{The output size can also depend on $n$, which is often the case in representation learning.} We think of $\vct V_{n} \subseteq \Vinf$ as a subspace of an infinite-dimensional limit space. Transferability then amounts to asking whether for large enough $n$ and $m$ we have $f_{n}(x_{n}) \approx f_{m}(x_{m})$ 
when the inputs $x_n$ and $x_m$ are close to each other in the limiting space $V_{\infty}$.
For example, we consider graphs of different sizes to be close if their associated step graphons are close in the cut metric, following standard practice in the GNN literature. Similarly, we view point clouds of varying sizes as close if their corresponding empirical measures are close in the Wasserstein metric.

In a nutshell, we identify two properties that guarantee transferability: $(i)$ the functions in the sequence are \emph{compatible} with one another in a precise sense, and $(ii)$ all of them are continuous with respect to a prescribed norm on $\Vinf$. Compatibility ensures that the sequence extends to a function taking infinite-dimensional inputs $f_{\infty}\colon \Vinf \rightarrow \RR$, while continuity ensures that this extension is continuous. This allows us to relate model outputs on two differently-sized inputs via their extension $f_{n}(x_{n}) = f_{\infty}(x_n)\approx f_{\infty}(x_m) = f_{m}(x_{m}).$ 
Moreover, if we evaluate our models on a sequence of inputs $x_n$ converging in a suitable sense to an infinite-dimensional input $x$, then the evaluations $f_n(x_n)$ converge to the value of the limiting model $f_{\infty}(x)$.
Our framework parallels the ideas from the GNN literature, where graph convergence to a limiting graphon is measured in cut distance---which is equivalent to homomorphism density convergence. However, this framework provides a transparent extension that allows us to analyze other models, e.g., DeepSets and PointNet.

\paragraph{Our contributions.} Next, we summarize our three core contributions.
\begin{itemize}[leftmargin=3mm]
    \item[] (\textit{A framework for transferability}) We introduce a class of any-dimensional models and a formal definition of transferability. We show that transferability holds whenever certain notions of compatibility and continuity hold. We further leverage transferability to establish a generalization bound across dimensions. In our theory, the notion of transferability relies on identifying low-dimensional objects with higher-dimensional ones, and measuring similarity of such objects in an appropriate metric. We illustrate how these choices have to align with the data and learning tasks. 
\item[] (\textit{Transferability of existing models})
We instantiate our framework on several models, including DeepSet~\cite{zaheer2017deep}, PointNet~\cite{qi2017pointnet}, standard GNNs~\cite{ruizNEURIPS2020}, Invariant Graph Networks (IGN)~\cite{maron2018invariant}, and DS-CI~\cite{blum2024learning}. Not all of these models satisfy our assumptions. For those that do, our framework yields new and existing transferability results in a unified fashion. For those that do not, we either modify them to ensure transferability or demonstrate numerically that transferability fails and impedes the performance of these models. 

\item[] (\textit{New transferable models})
We provide design principles for constructing transferable models. 
Leveraging these ideas, we develop a
variant of IGN that is provably transferable, addressing
challenges highlighted in previous work~\cite{maron2018invariant,herbst2025invariant}. We also design a new transferable model for point cloud tasks that is more computationally efficient than existing methods \cite{blum2024learning}.
\end{itemize}

\paragraph{Notation.}
The Kronecker product on matrices is denoted by $\otimes$. We write
$\mathbbm{1}_{n}$ and $I_{n}$ for the all-ones vector and the identity matrix of size $n$ respectively. Given two sequences $(a_n), (b_n) \subseteq \RR$, we say that $a_n \lesssim b_n$ if there is a constant $C > 0$ so that $a_n \leq Cb_n$ for all $n.$ See also Appendix~\ref{appen:notation}.
\paragraph{Outline.} 
\if\isArxiv 1 
The remainder of this section covers related work.
\fi
Section~\ref{sec:relating_problems} introduces the theoretical framework. Section~\ref{sec:transferability} defines transferability, which is used in Section~\ref{sec:size_generalization} to derive a generalization guarantee. Section~\ref{sec:transferable_nn} instantiates our framework on concrete neural network examples, and 
Section~\ref{sec:experiments} provides supporting experiments.\footnote{The code is  available at \url{https://github.com/yuxinma98/transferring_transferability}.}
\if\isArxiv 0
Related work and limitations are discussed in the Appendix.
\fi 

\if\isArxiv 1
 \subsection*{Related work}
 \label{app:related_work}

\paragraph{GNN transferability.} 
The work on GNN transferability under the graphon framework was pioneered by~\cite{ruizNEURIPS2020}, focusing on a variant of graph convolutional network (GCN) for deterministic graphs obtained from the same graphon.
In parallel,~\cite{levie2021transferability} explores transferability with respect to an alternative limit space to graphon in the form of a topological space, and~\cite{keriven2020convergence} examines an arguably equivalent notion---convergence and stability---by analyzing random graphs sampled from a graphon. 
This line of research has since been further developed~\cite{ChamonRuizRibeiro, maskey2023transferability}, extending to more general message-passing networks (MPNNs)~\cite{cordonnier2023convergence}, more general notion of graph limits~\cite{le2024limits}, and other models~\cite{ruiz2024spectral, velasco2024graph, cai2022convergence,herbst2025invariant}. Our framework unifies and recovers several of the above-mentioned results, which we briefly discuss in Appendix~\ref{appen:MPNN}. Furthermore, we develop new insights into the transferability of Invariant Graph Networks (IGNs)~\cite{maron2018invariant}, a topic previously examined in~\cite{cai2022convergence, herbst2025invariant}. We leverage our framework to advance this line of inquiry and elaborate on the connections between our approach and prior work in Appendix~\ref{appen:GGNN}.

\paragraph{GNN generalization.}
A related line of research concerns the generalization theory of GNNs. Generalization bounds have been derived based on various frameworks, including Rademacher complexity~\cite{garg2020generalization, lv2021generalization}, VC dimension~\cite{scarselli2018vapnik, morris2023wl, d2025vc}, the PAC-Bayesian approach~\cite{liao2020pac,ju2023generalization}, the neural tangent kernel~\cite{du2019graph}, and covering numbers~\cite{maskey2022generalization, maskey2024generalization, levie2024graphon, vasileiou2024covered, rauchwerger2025generalization}. Notions of size generalization without an explicit notion of convergence have been studied in~\cite{yehudai2021local, bevilacqua2021size}. For a comprehensive overview, we refer the reader to the recent survey~\cite{vasileiou2025survey}.
While most research on GNN generalization bounds consider graphs of fixed or bounded size,~\cite{levie2024graphon, rauchwerger2025generalization} explore the ``uniform regime,'' addressing unbounded graph sizes. This perspective is closely aligned with the transferability theory, as they both consider continuous extensions over suitable limit spaces. Our work formalizes this connection, showing that transferability implies generalization (Section~\ref{sec:size_generalization}, Appendix~\ref{appen:generalization_bounds}). We derive a generalization bound via covering numbers using the same proof strategy as prior works, but it applies well beyond GNNs and offers a setting that directly connects to the notion of size generalization.

\paragraph{Equivariant machine learning.} Our theory naturally applies to equivariant machine learning models, i.e., neural networks with symmetries imposed. We particularly focus on models from~\cite{zaheer2017deep, maron2018invariant, qi2017pointnet, ruiz2021graph, blum2024learning}. There are many other equivariant machine learning models we haven't discussed here, such as the ones expressed in terms of group convolutions~\cite{cohen2016group, kondor2018generalization, gregory2023equivariant}, representation theory~\cite{kondor2018n, thomas2018tensor, geiger2022e3nn}, canonicalization~\cite{kaba2023equivariance}, and invariant theory~\cite{villar2021scalars,blum2023machine,villar2023dimensionless}, among others. 

\paragraph{Representation stability and any-dimensional learning.}
As noted by~\cite{levin2023free, levin2024any}, the presence of symmetry allows functions operating on inputs of arbitrary dimension to be parameterized with a finite number of parameters, which may be partially explained by the theory of representation stability~\cite{church2013representation}. Leveraging techniques from this theory,~\cite{levin2024any} provides the first general theoretical framework for any-dimensional equivariant models. Recent work applies similar techniques to study the generalization properties across dimensions of any-dimensional regression models~\cite{diaz2025invariant}. Our work builds on the theoretical foundation established by this line of research. \if\isArxiv 1 
Beyond learning, representation stability has since been leveraged to study limits of a number of mathematical objects \cite{sam2017grobner, sam2015stability, sam2016gl,conca2014noetherianity, ramos2018families, alexandr2023moment}.
\fi
\paragraph{Any-dimensional expressivity and universality.}
Our framework naturally prompts the question of expressivity for any-dimensional neural networks on their limit spaces. While we do not pursue this direction here, related questions have been independently studied.~\cite{bueno2021representation} shows that normalized DeepSets and PointNet are universal for uniformly continuous functions on suitable limit spaces, closely tied to our results (see Appendix~\ref{appen:deepset_analysis}). In GNNs, the notion of ``uniform expressivity''---expressing logical queries without parameter dependence on input size---has been explored in~\cite{grohe2021logic, barcelo:hal-03356968, khalife2024uniform, rosenbluth2023some}. Our framework offers complementary insights despite differing foundations.

\fi

\section{How to consistently grow: equivalence of differently sized objects}
\label{sec:relating_problems} 

\label{subsec:flags_cs} 
In this section, we formally introduce the setting for studying transferability. The definitions here build upon the framework introduced in \cite{levin2023free, levin2024any} for defining any-dimensional neural networks and optimization problems. Central to our developments is the notion of compatibility between a sequence of functions. Intuitively, a sequence of functions is compatible when they respect the equivalence between low-dimensional objects and high-dimensional version of them. For example, sample statistics like the mean and standard deviation remain invariant when each element is repeated $m$ times. Similarly, various graph parameters such as triangle densities and normalized max-cut values
are preserved when each vertex is replaced by $m$ identical copies of itself. %

To formalize these notions of equivalence of objects and compatibility between functions, we will leverage the concept of a \emph{consistent sequence}. Intuitively, they are nested sequences of vector spaces with growing groups acting on them, representing problem instances of different
sizes. They are interconnected by embeddings that establish equivalence between smaller and
larger instances.

\begin{definition}
    \label{def:consistent_sequence} A \emph{consistent sequence}
    $\mscr V = \{(\vct V_{n})_{n\in\ind},(\varphinn{n}{N})_{n\leqf N},(\msf G_{n})_{n\in\ind}\}$ is a sequence of finite-dimensional vectors spaces $\vct V_n$, maps $\varphinn{n}{N}$ and groups $\msf G_n$ acting linearly on $V_n$, indexed by a directed poset\footnote{This is a partial order on $\NN$ where every two elements have an upper bound, see Appendix~\ref{appen:consistent_sequences}.} $(\ind, \leqf)$, such that for all $n\leqf N$, the group $\msf G_{n}$ is embedded into $\msf G_{N}$, %
    and $\varphinn{n}{N}\colon \vct V_{n}\hookrightarrow \vct V_{N}$ is a linear, $\msf G_n$-equivariant embedding.
\end{definition}

Although the definition includes a sequence of groups, it applies to situations without symmetry by setting $\msf G_n = \{\mathrm{id}\}$ for all $n$. 
While all the examples in this work involve symmetries, we believe our framework might be relevant in symmetry-free settings.\footnote{The symmetric case is relevant because the theoretical setting relates to representation stability \cite{levin2024any, CHURCH2013250}.} %
\begin{example}\label{ex:set_cs}
    Here are two prototypical examples of consistent sequences. In both cases, $\vct V_n = \RR^n$, with the symmetric group $\msf S_n$ acting via coordinate permutation. $(i)$ the sequence indexed by $\NN$ with the standard ordering $\leq$, paired with the \emph{zero-padding embedding} $\RR^n \hookrightarrow \RR^{N}$, which maps $(x_1, \dots, x_n)$ to $(x_1, \dots, x_n, 0,\dots ,0)$; and $(ii)$ the sequence indexed by $\NN$ with the divisibility ordering ($n \leqf N$ whenever $N$ is divisible by $n$) paired with the \emph{duplication embedding} $\mathbb{R}
    ^{n}\hookrightarrow \mathbb{R}^{N}$, sending a vector
    $x = ({x}_{1},\dots,{x}_{n})$ to
    $x \otimes \mathbbm{1}_{N/n} = (\underbrace{{x}_1,\dots, {x}_1}_{\text{$N/n$ copies}}, \dots , \underbrace{{x}_n,
    \dots, {x}_n}_{\text{$N/n$ copies}})$. %
\end{example}

Next, we define a common space for objects of all sizes in a consistent sequence, in which we can compare objects of different sizes and take their limits. 
For example, if we identify vectors $x$ with their duplication embeddings $x\otimes \mathbbm{1}_{N/n}$, we can view the resulting equivalence classes as step functions on $[0,1]$ taking value $x_i$ on consecutive intervals of length $1/n$.

\begin{definition}
    \label{def:limit_space} Let $\Vinf = \bigcup \vct V_{n}$ where we identify
    $v\in\vct V_{n}$ with its image $\varphinn{n}{N}(v)$ for all
    $n \leqf N$ to be equivalent. Analogously, we let $\Ginf$ be the union of groups with equivalent identifications. 
\end{definition}

For simplicity, we will often write $v \in \Vinf$ for any finite-dimensional object $v\in V_n$ to denote the equivalence class $[v]$. It is straightforward to check that $G_\infty$ acts on $\Vinf$ via $ g\cdot [v] = [g\cdot v]$.
Having established how consistent sequences define a desirable equivalence relation in the
space of problem instances of varying sizes, we now introduce compatible functions that respect such equivalence. %

\begin{definition}
    \label{def:extension_to_limit} Let $\mscr V=\{(\vct V_{n}),(\varphinn{n}{N}), (\msf G_n) \}$
    and $\mscr U = \{(\vct U_{n}) ,(\psinn{n}{N}), (\msf G_n)\}$ be two consistent sequences indexed by $(\ind , \leqf)$. A sequence of
    maps $(f_{n}\colon \vct V_{n}\to \vct U_{n})$ is \emph{compatible} with respect to $\mscr V,\mscr U$ if $f_{N}\circ \varphinn{n}{N}= \psinn{n}{N}\circ f_{n}$ for all $n \leqf N$, and each
    $f_{n}$ is $\msf{G}_{n}$-equivariant.
\end{definition}
It is instructive to visualize compatible maps using the following commutative diagram, which represents a mapping between sequences while ensuring that the
diagram commutes:
\[
    \begin{tikzcd}
        \dots \arrow[r, hookrightarrow, "\dots"] & (\vct V_{n_1}, \msf G_{n_1}) \arrow[r, hookrightarrow,
        "\varphinn{n_1}{n_2}"] \arrow[d, "f_{n_1}"] & (\vct V_{n_2},\msf G_{n_2}) \arrow[r,
        hookrightarrow, "\varphinn{n_2}{n_3}"] \arrow[d, "f_{n_2}"] & (\vct V_{n_3}, \msf G_{n_3})
        \arrow[r, hookrightarrow, "\dots"] \arrow[d, "f_{n_3}"] & \dots \\ \dots
        \arrow[r, hookrightarrow, "\dots"] & (\vct U_{n_1},\msf G_{n_1}) \arrow[r,
        hookrightarrow, "\psinn{n_1}{n_2}"] & (\vct U_{n_2},\msf G_{n_2}) \arrow[r,
        hookrightarrow, "\psinn{n_2}{n_3}"] & (\vct U_{n_3},\msf G_{n_3}) \arrow[r,
        hookrightarrow, "\dots"] & \dots
    \end{tikzcd}
    \quad \text{for }n_{1}\leqf n_{2}\leqf \dots
\]
Equivalently, compatible maps are precisely those sequences $(f_n)$ that extend to equivalence classes:
\mybox{\it\centering A sequence of maps $(f_n)$ is compatible if, and only if, there exists a $\Ginf$-equivariant map 
$f_{\infty} \colon \Vinf \to \Uinf$  
such that $f_n = f_{\infty} \mid_{\vct V_n}$ for all $n$.}
See Appendix~\ref{app:prop:compatibility} for a proof of this equivalence.

\subsection{How to define distances across dimensions}\label{subsec:norms}
Compatible maps are functions that respect the equivalence defined by consistent
sequences. We now consider whether these functions further preserve
proximity, mapping ``nearby'' objects to ``nearby'' outputs. This requires a well-defined
notion of distance between objects of different sizes that is consistent with the earlier equivalence. %
\begin{definition}
    \label{def:norms} For a consistent sequence $\mscr{V}$, a sequence of norms
    $(\|\cdot\|_{\vct V_n})$ on $\vct V_{n}$ are \emph{compatible} if all the embeddings $\varphinn{n}{N}$
and the $\msf G_n$-actions are isometries.  i.e., for all $n
\leqf N, x \in \vct{V}_{n}, g\in\msf G_n$, $\|\varphinn{n}{N}x\|_{\vct V_N}= \|x\|_{\vct V_n}$  and $\|g\cdot x\|_{\vct
V_n}= \|x\|_{\vct V_n}$.
\end{definition}
Equivalently, compatible norms are those that extend to the limit; that is, there exists a norm $\|\cdot\|_{\Vinf}$ on $\Vinf$ such that for any $n$ and $x \in \vct V_n$,  
$
\|x\|_{\vct V_n} = \|x\|_{\Vinf},
$  
and the $\Ginf$-action on $\Vinf$ is an isometry with respect to $\|\cdot\|_{\Vinf}$; see Appendix~\ref{appen:metrics_cs} for a proof of this statement. 

\begin{example}[Example~\ref{ex:set_cs} continued]\label{ex:cs_set_norms}
    For our prototypical consistent sequences, the $\ell_p$ norms $\|x\|_p \coloneq \left(\sum_i |x_i|^p\right)^{1/p}$ are compatible with the zero-padding embeddings, while the normalized $\ell_p$ norms $\|x\|_{\overline{p}} \coloneq \left(\frac{1}{n} \sum_i |x_i|^p\right)^{1/p}$ are compatible with the duplication embeddings. 
\end{example}

With norms in place, we can define a limit space that includes not only equivalence classes of finite-dimensional objects, but also their limits. As we will see, this recovers meaningful spaces such as $L^p([0,1])$ and the space of graphons with the cut norm. 
Moreover, the orbit spaces of these limits recover probability measures (with the Wasserstein distance) and equivalence classes of graphons (with the cut distance), respectively.

\begin{definition}
    \label{def:continuous_limit} The
    \emph{limit space} is the pair $(\overline \Vinf, \Ginf)$ where $\overline \Vinf$ denotes the completion of 
    $\Vinf$ with respect to $\|\cdot\|_{\Vinf}$, endowed with the \emph{symmetrized metric}\footnote{It is a pseudometric on $\overline \Vinf$, and a metric on the space whose points are closures of orbits  under the action of $\Ginf$ in $\overline \Vinf$ (see Section 3 of \cite{cahill2024towards}).}
    \[\sdist (x,y) \coloneq \inf_{g\in \Ginf}\|g\cdot x -y\|_{\Vinf} \quad \text{for } x,y\in \overline{\Vinf}.\]
    
\end{definition}

\section{Transferability is \emph{just} continuity}
\label{sec:transferability} 

The notion of {GNN transferability} was established in \cite{ruizNEURIPS2020, levie2021transferability}. It states that  %
the output discrepancy $\| f_n(A_n, X_n) - f_m(A_m, X_m) \|$ between graph signals of sizes $n,m$ sampled from the same graphon signal $(W, f)$  vanishes as $n,m$ grow. Prior studies explore this property under various sampling schemes, norms, and GNN architectures \cite{keriven2020convergence,ruizNEURIPS2020,levie2024graphon,cai2022convergence,herbst2025invariant}.

We formalize the idea of functions ``mapping close objects to close outputs'' as (Lipschitz) continuity of compatible maps.

\begin{definition}
    \label{def:continuous_extension}
    Let $\mscr V, \mscr U$ be consistent sequences endowed with norms. A sequence of equivariant maps $(f_n: \vct V_n \to \vct U_n)$ is \emph{continuously (respectively, $L$-Lipschitz, $L(r)$-locally Lipschitz) transferable} if 
    there exists $f_\infty:\overline{\Vinf}\to \overline{\Uinf}$ that is continuous (respectively, $L$-Lipschitz, $L(r)$-Lipschitz on the ball $B(0,r)=\{v\in\overline{\Vinf}:\|v\|_{\Vinf}< r\}$ for all $r>0$) with respect to $\|\cdot\|_{\Vinf}, \|\cdot\|_{\Uinf}$, such that $f_n = f_\infty |_{\vct V_n}$ for all $n$. Notice that if $(f_n)$ is transferable, then it must be compatible.
\end{definition}
In Appendix~\ref{app:prop:cont_ext} we show that Lipschitz transferability is equivalent to having a compatible sequence $(f_n)$, each Lipschitz with the same constant. For linear maps, it suffices to verify that the operator norms are uniformly bounded. This simplifies the verification of transferability. Importantly, we also show that transferability with respect to the norm $\|\cdot\|_{\vct V_{\infty}}$ implies transferability with respect to the symmetrized metric $\sdist$.

The following proposition shows that the output discrepancy is controlled whenever a dimension-independent model is continuously or Lipschitz transferable, justifying our terminology. Thus, our framework extends transferability beyond GNNs, highlighting that transferability is equivalent to continuity in the limit space. For a given function $R \colon \NN \rightarrow \RR_+$, we say that $(x_n)$ converges to $x$ in $\vct V_{\infty}$ at a rate $R(n)$ in $\sdist$ if $\sdist(x_n,x) \lesssim R(n)$ and $R(n) \rightarrow 0$.

\begin{proposition}
    \label{prop:convergence_transferability} 
    Let $\mscr V,\mscr U$ be consistent sequences and let $(f_n\colon \vct V_n\to\vct U_n)$ be maps between them.
    \begin{enumerate}[leftmargin=0.2cm, font=\textbf, align=left]
        \item[(Transferability)] For any sequence $(x_{n} \in \vct V_{n})$ converging to a limiting object $x \in \overline{\Vinf}$ in $\sdist$, if $(f_n)$ is continuously transferable, then $\sdist(f_n(x_n),f_m(x_m)) \to 0$ as $n, m \to \infty$. 
        Furthermore, if $(x_n)$ converges to $x$ at rate $R(n)$ and $(f_n)$ is locally Lipschitz-transferable, then \[\sdist(f_n(x_n),f_m(x_m)) \lesssim R(n) + R(m).\]

        \item[(Stability)] If $(f_n)$ is $L(r)$-locally Lipschitz transferable, then for any two inputs $x_n\in\vct V_n$ and $x_m\in\vct V_m$ of any two sizes $n,m$ with $\|x_n\|_{\vct V_n},\|x_m\|_{\vct V_m}\leq r$, we have $\sdist(f_n(x_n),f_m(x_m))\leq L(r)\sdist(x_n,x_m)$.
    \end{enumerate}
\end{proposition}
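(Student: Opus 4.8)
The plan is to deduce both claims directly from the defining property of transferability, namely the existence of a continuous (respectively locally Lipschitz) extension $f_\infty\colon\overline{\Vinf}\to\overline{\Uinf}$ with $f_n=f_\infty|_{\vct V_n}$, together with the earlier-stated fact that transferability with respect to $\|\cdot\|_{\Vinf}$ implies transferability with respect to the symmetrized metric $\sdist$ (Appendix~\ref{app:prop:cont_ext}). So from the outset I would work with $f_\infty$ as a map between the symmetrized limit spaces that is continuous (resp.\ $L(r)$-Lipschitz on $\sdist$-balls of radius $r$), and reduce every statement about finite-dimensional $f_n$ to a statement about $f_\infty$ evaluated at the corresponding equivalence classes in $\overline{\Vinf}$, using $f_n(x_n)=f_\infty(x_n)$ and $f_m(x_m)=f_\infty(x_m)$.

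For the Stability claim: given $x_n\in\vct V_n$, $x_m\in\vct V_m$ with $\|x_n\|_{\vct V_n},\|x_m\|_{\vct V_m}\le r$, first note that under the compatible norm these two objects sit inside the closed $\sdist$-ball $B(0,r)\subseteq\overline{\Vinf}$ (the norm of the equivalence class equals $\|x_n\|_{\vct V_n}$, and $\sdist(x,0)\le\|x\|_{\Vinf}$). Then apply the $L(r)$-Lipschitz bound for $f_\infty$ on that ball: $\sdist\big(f_\infty(x_n),f_\infty(x_m)\big)\le L(r)\,\sdist(x_n,x_m)$. Finally rewrite the left side as $\sdist(f_n(x_n),f_m(x_m))$ via the restriction identity. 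This is essentially a one-line unfolding; the only mild care needed is to confirm that the version of "$L(r)$-locally Lipschitz transferable'' we get on the symmetrized metric indeed gives a Lipschitz constant on the radius-$r$ ball (which is exactly the content of the cited appendix result), and to be slightly careful about open vs.\ closed balls — since $\|x_n\|\le r$ rather than $<r$, I would invoke Lipschitzness on $B(0,r')$ for any $r'>r$ and then, since the constant $L(r')$ may differ from $L(r)$, either absorb this by monotonicity conventions on $L(\cdot)$ or note that continuity of $f_\infty$ lets one pass to the limit $r'\downarrow r$; I expect the paper's convention makes $L$ nondecreasing so that $L(r)$ on the closed ball is fine.

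For the Transferability claim: suppose $x_n\to x$ in $\sdist$ with $x\in\overline{\Vinf}$. In the continuous case, $f_\infty$ is continuous at $x$, so $f_\infty(x_n)\to f_\infty(x)$ in $\sdist$ on $\overline{\Uinf}$; by the triangle inequality $\sdist(f_n(x_n),f_m(x_m))=\sdist(f_\infty(x_n),f_\infty(x_m))\le\sdist(f_\infty(x_n),f_\infty(x))+\sdist(f_\infty(x),f_\infty(x_m))\to 0$ as $n,m\to\infty$. For the rate version, since $\sdist(x_n,x)\lesssim R(n)$ with $R(n)\to0$, the sequence $(x_n)$ together with $x$ is $\sdist$-bounded, say all within $B(0,r)$ for some fixed $r$; applying the $L(r)$-Lipschitz bound gives $\sdist(f_\infty(x_n),f_\infty(x))\le L(r)\,\sdist(x_n,x)\lesssim R(n)$, and similarly for $m$, so $\sdist(f_n(x_n),f_m(x_m))\le L(r)\big(\sdist(x_n,x)+\sdist(x,x_m)\big)\lesssim R(n)+R(m)$, absorbing $L(r)$ into the implied constant.

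The only genuine obstacle is bookkeeping rather than mathematics: ensuring the norm-to-symmetrized-metric transfer is applied correctly so that all distances on both sides are the symmetrized $\sdist$ (not the ambient norm), and that the $\sdist$-ball containing the $x_n$'s and $x$ is uniform in $n$ — which follows because a convergent sequence is bounded. Once those points are set up, both parts are immediate consequences of the definition of $f_\infty$ and the triangle inequality.
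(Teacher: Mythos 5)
Your proposal is correct and follows essentially the same route as the paper: reduce both claims to properties of the extension $f_\infty$, transfer from the norm to the symmetrized metric via Proposition~\ref{prop:continuity_in_delta}, then use continuity (resp.\ local Lipschitzness) together with the triangle inequality through $x$ for the transferability bound. The only minor organizational difference is that the paper's detailed versions (Propositions in Appendix~\ref{appen:convergence_transferability}) first argue in $d_\infty$ and then remark that the same holds in $\sdist$, whereas you move to $\sdist$ from the outset --- this is immaterial. Your observation about open versus closed balls (the definition uses $\|v\|<r$ while the stability claim assumes $\|x_n\|\le r$) is a genuine but small wrinkle present in the paper's own statement; your proposed fixes (monotonicity of $L$ or passing $r'\downarrow r$) are fine. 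One last remark: the paper's rate argument actually only invokes the weaker hypothesis ``locally Lipschitz at $x$'' (a single Lipschitz constant on some ball around $x$), not $L(r)$-local Lipschitzness on all balls around the origin; your stronger hypothesis of course suffices, but the bounded-sequence step you include is exactly the bridge that makes it work.
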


Several remarks are in order. First, the same results trivially hold when the symmetrized metric is replaced by the norms on $\Vinf,\Uinf$. However, the finite inputs $(x_n)$ are often obtained from the limiting object via random sampling, and converge only in the symmetrized metric.
This is the case for all the common sampling strategies used for sets, graphs, and point clouds, which we review in Appendix~\ref{appen:convergence_rates}. In the case of graphs, our framework recovers results from prior work, achieving the same or stronger rate of convergence. 
Second, some models are only locally Lipschitz outside of a measure-zero set. Transferability still holds for such models for almost all limit objects $x\in\overline\Vinf$.
This and other extensions are deferred to Appendix~\ref{appen:convergence_transferability}. 
Third, Lipschitz continuity of neural networks has been extensively studied in the context of model \emph{stability} and \emph{robustness} to small input perturbations~\cite{fazlyab2019efficient,virmaux2018lipschitz,kim2021lipschitz,gama2020stability}. Our results generalize the connection between stability and transferability, first established in \cite{ruiz2021graph}, by showing that Lipschitz continuity of an any-dimensional model implies not only stability, but also \emph{transferability} across input sizes.

\section{Transferability implies size generalization}\label{sec:size_generalization}
We use transferability to derive a generalization bound for models trained on inputs of fixed-size $n$, and evaluate their performance as $n \to \infty$. It can therefore be interpreted as a \emph{size generalization bound}, accounting for distributional shifts induced by size variation. We study a supervised learning task with input and output spaces modeled by consistent sequences $\mscr V$ and $\mscr U$ satisfying the following.

\begin{assumption}\label{ass:generalization}
Let $\mu$ be a probability distribution supported on a product of subsets $X \times Y\subseteq \overline\Vinf \times \overline\Uinf$ whose orbit closures are compact in the symmetrized metrics.
Let $\mc S_n\colon\overline{\Vinf}\times \overline{\Uinf}\to \vct V_n\times \vct U_n$ be a random sampling procedure such that $\mc S_n(x,y)\in X\times Y$ almost surely for $(x,y)\in\mathrm{supp}(\mu)$. This sampling induces a distribution $\mu_n$ on $\vct V_n \times \vct U_n$ by drawing $(x, y) \sim \mu$ and then sampling $(x_n, y_n) \sim \mc S_n(x, y)$. A dataset $s = \{(x_i, y_i)\}_{i=1}^N$ is drawn i.i.d. from $\mu_n$. Suppose the loss function $\ell\colon \overline\Uinf \times \overline\Uinf \to \RR$ is bounded by $M$ and $c_\ell$-Lipschitz. Assume training is performed using a \emph{locally Lipschitz transferable} neural network model, and $\mc{A}_s:\overline\Vinf\to\overline\Uinf$ is the hypothesis learned (given dataset $s$), which is $c_s$-Lipschitz on $X\times Y$ in $\sdist$. \end{assumption}

In Appendix~\ref{appen:generalization_bounds}, we further break down these technical assumptions and provide detailed motivation to make them more accessible. 
We also discuss two concrete examples of sets and graphs to justify the practicality of the key assumptions. The following generalization bound follows by applying the results from \cite{xu2012robustness} to our setting.

\begin{proposition}
    \label{prop:generalization_bound}
Consider a learning task with input and output spaces modeled by consistent sequences $\mscr V$ and $\mscr U$ satisfying Assumption~\ref{ass:generalization}. For any $\delta > 0$, with probability at least $1 - \delta$, 
\begin{align}\begin{split}\label{eq:generalizing}
    &\left|\frac{1}{N} \sum_{i=1}^{N} \ell(\mathcal{A}_s(x_i), y_i) - \mathbb{E}_{(x,y) \sim \mu} \ell(\mathcal{A}_s(x), y) \right| \\
    & \quad \quad\leq  c_l(c_s\vee 1)\left(\xi^{-1}(N)+W_1(\mu,\mu_n)\right) + M\sqrt{(2\log 2) \xi^{-1}(N)^2 + \frac{2\log(1/\delta)}{N}}
    \end{split}
\end{align}
where $\xi(r)\coloneq \frac{C_X(r/4)C_Y(r/4)}{r^2}$, $C_X(\varepsilon)$ and $C_Y(\varepsilon)$ are the $\varepsilon$-covering numbers in $\sdist$ of $X$ and $Y$, respectively, and $W_1$ denotes the Wasserstein-1 distance. Moreover, if the sampling $\mc S_n$ has convergence rate $R(n)$ in expectation, then \( W_1(\mu, \mu_n) \lesssim R(n)\) as \( n \to \infty \) and the bound \eqref{eq:generalizing} converges to $0$ as $n,N\to\infty$.
\end{proposition}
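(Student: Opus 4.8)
The plan is to invoke the robust-generalization framework of \cite{xu2012robustness}, which bounds the generalization gap of an algorithm that is \emph{robust} in the sense that the input space can be partitioned into $K$ cells so that the loss varies little within each cell. The key observation is that local Lipschitz transferability turns the learned hypothesis $\mathcal{A}_s$ into exactly such a robust algorithm on the compact orbit spaces, and that the cell count $K$ is governed by the covering numbers $C_X, C_Y$ in the symmetrized metric. Concretely, first I would fix a resolution $r>0$ and cover the compact sets $X$ and $Y$ (in $\sdist$) by $C_X(r/4)$ and $C_Y(r/4)$ balls respectively, taking the product cover of $X\times Y$ with $K = C_X(r/4)\,C_Y(r/4)$ cells of $\sdist$-diameter at most $r/2$. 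Within one such cell, if $(x,y)$ and $(x',y')$ lie in it, then $\sdist(x,x')\le r/2$ and $\sdist(y,y')\le r/2$; applying the $c_s$-Lipschitz bound on $\mathcal{A}_s$ (Proposition~\ref{prop:convergence_transferability}, stability part, which applies because $\mathcal{A}_s$ is locally Lipschitz transferable) gives $\sdist(\mathcal{A}_s(x),\mathcal{A}_s(x'))\le c_s\,\sdist(x,x')$, and then the $c_\ell$-Lipschitzness of $\ell$ yields $|\ell(\mathcal{A}_s(x),y)-\ell(\mathcal{A}_s(x'),y')|\le c_\ell(c_s\vee 1)\cdot r$. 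This is the ``$(K,\epsilon(K))$-robustness'' hypothesis of \cite{xu2012robustness} with $\epsilon = c_\ell(c_s\vee 1)r$.

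Next I would plug this into the master inequality of \cite{xu2012robustness}: for a $(K,\epsilon)$-robust algorithm with loss bounded by $M$, with probability $\ge 1-\delta$ the empirical average of the loss over the training sample deviates from its population counterpart (under the \emph{sampled} distribution $\mu_n$) by at most $\epsilon + M\sqrt{2K\log 2/N + 2\log(1/\delta)/N}$. The remaining bookkeeping is to optimize the free parameter $r$: setting $r = \xi^{-1}(N)$ where $\xi(r) = C_X(r/4)C_Y(r/4)/r^2$ balances the two terms so that $K/N \asymp \xi^{-1}(N)^2$ and $\epsilon \asymp \xi^{-1}(N)$, producing the stated $c_\ell(c_s\vee 1)\xi^{-1}(N) + M\sqrt{(2\log 2)\xi^{-1}(N)^2 + 2\log(1/\delta)/N}$. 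Finally, I would pass from the $\mu_n$-population risk to the $\mu$-population risk: since the hypothesis restricted to the orbit space is $c_\ell(c_s\vee 1)$-Lipschitz as a map $(x,y)\mapsto\ell(\mathcal{A}_s(x),y)$ in $\sdist$, Kantorovich--Rubinstein duality gives $|\mathbb{E}_{\mu_n}\ell(\mathcal{A}_s(x),y)-\mathbb{E}_{\mu}\ell(\mathcal{A}_s(x),y)|\le c_\ell(c_s\vee 1)\,W_1(\mu,\mu_n)$, where $W_1$ is taken with respect to $\sdist$; adding this term to the bound yields \eqref{eq:generalizing}. The convergence-to-zero claim then follows because $\xi^{-1}(N)\to 0$ as $N\to\infty$ (compactness makes $C_X,C_Y$ finite for every resolution, so $\xi$ is a genuine increasing-to-infinity function with $\xi^{-1}(N)\to 0$), and because $W_1(\mu,\mu_n)\lesssim R(n)\to 0$ by the assumed expected convergence rate of the sampling procedure $\mc S_n$ together with the contraction property of pushforwards under $\sdist$-measurable samplers.

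The main obstacle I anticipate is not the robustness argument itself but making sure all the metric-space hypotheses of \cite{xu2012robustness} are legitimately met in our setting: that reference is usually stated for a fixed input space, whereas here the samples $(x_i,y_i)$ live in the finite-dimensional spaces $\vct V_n\times\vct U_n$, not in $\overline\Vinf\times\overline\Uinf$. The resolution is that the embeddings $\varphinn{n}{\infty}, \psinn{n}{\infty}$ are isometries for the compatible norms, and hence $\sdist$-isometries after symmetrization, so one can regard $\mu_n$ as supported on a subset of $X\times Y$ (this is exactly the content of the assumption $\mc S_n(x,y)\in X\times Y$ a.s.), and the same product cover of $X\times Y$ induces a valid partition for the $\mu_n$-samples. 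A second point requiring care is that $\sdist$ is only a pseudometric on $\overline\Vinf$ and a genuine metric on the orbit-closure space; covering numbers and $W_1$ must be interpreted on the orbit space, and one must check that $\mathcal{A}_s$ and $\ell\circ\mathcal{A}_s$ descend to well-defined Lipschitz maps there — which they do because $\mathcal{A}_s$ is $\Ginf$-equivariant and $\ell$ is defined on $\overline\Uinf$ with the symmetrized metric. Once these identifications are in place, the rest is the routine parameter optimization sketched above.
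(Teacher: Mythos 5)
Your proposal is correct and follows essentially the same route as the paper: decompose the generalization error into an empirical-vs-$\mu_n$ term bounded via the $(K,\epsilon)$-robustness framework of Xu--Mannor with $K = C_X(\gamma/4)C_Y(\gamma/4)$, optimize $\gamma = \xi^{-1}(N)$, and then bridge from $\mu_n$ to $\mu$ via Kantorovich--Rubinstein duality and the natural coupling induced by the sampling procedure to get $W_1(\mu,\mu_n)\lesssim R(n)$. The only cosmetic difference is that you unfold the covering/robustness step explicitly whereas the paper invokes it as a cited theorem from Xu--Mannor, and the paper isolates the $\mu_n$-only bound as a standalone proposition before the triangle-inequality splitting; the substance, constants, and order of bookkeeping are the same.
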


We defer the proof to Appendix~\ref{appen:generalization_bounds}.
This bound shows that generalization improves with greater model transferability (i.e., smaller Lipschitz constants) and higher-dimensional training data (i.e., large $n$), while it deteriorates with increasing complexity of the data space (i.e., larger covering numbers). 

\section{Transferable neural networks}
\label{sec:transferable_nn} 
We now apply our general framework to the settings of sets, graphs, and point clouds. In each case, we identify suitable consistent sequences, analyze the compatibility and transferability of existing neural network models with respect to these sequences, and propose a principled recipe for designing new transferable models.
This analysis provides new insights into the tasks for which each model is best suited, while offering provable size-generalization guarantees.

\mybox{\it\centering A transferable neural network learns a fixed set of parameters $\theta$ that define transferable functions $(f_n)$ for any $n$-dimensional normed space $V_n$. The compatibility conditions on $(f_n)$ and the sequence of norms $\|\cdot\|_{V_n}$ describes the implicit inductive bias of the model.}

Our analysis hinges on the following proposition, which observes that compatibility and transferability are preserved under composition and reduces the verification of Lipschitz continuity from the limit space to each finite-dimensional space. This significantly simplifies the task of establishing transferability in complex neural networks. The following result is proved in Appendix~\ref{appen:transferable_networks}.
\begin{proposition}
    \label{prop:loc_lip_NNs} Let $(\vct V_{n}^{(i)})_n,(\vct U_{n}^{(i)})_n$ be consistent sequences 
    for $i=1,\ldots,D$. For each $i$, let
    $(W_{n}^{(i)}\colon \vct V_{n}^{(i)}\to\vct U_{n}^{(i)})$ be linear maps
    and $(\rho_{n}^{(i)}\colon \vct U_{n}^{(i)}\to\vct V_{n}^{(i+1)})$ be
    nonlinearities, and assume that
    $(W_{n}^{(i)}),(\rho_{n}^{(i)})$ are compatible, that $\sup_{n,i}\|W_{n}^{(i)}
    \|_{\mathrm{op}}<\infty$, and that $\rho_{n}^{(i)}$ is $L(r)$-Lipschitz
    on balls $B(0,r)$ in $\vct U_n^{(i)}$ for all $r>0$ and all $n$. 
    Then the sequence of neural networks $
    (\widehat f_n = W_{n}^{(D)}\circ \rho_{n}^{(D-1)}\circ\ldots\circ \rho_{n}^{(1)}\circ W_{n}
    ^{(1)})$ is $\widehat L(r)$-locally Lipschitz transferable for explicit $\widehat L(r)$.
    
\end{proposition}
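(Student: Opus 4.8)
The plan is to prove the proposition by composing the three elementary facts already established in the paper: (a) compatibility is preserved under composition (stated in the box after Definition~\ref{def:extension_to_limit}, via the existence of equivariant extensions to $\Vinf$); (b) for a \emph{linear} compatible map, Lipschitz transferability with a uniform constant is equivalent to a uniform bound on operator norms (stated after Definition~\ref{def:continuous_extension}); and (c) a composition of locally Lipschitz maps is locally Lipschitz, with an explicitly computable constant. The only genuinely new bookkeeping is tracking how the local-Lipschitz radii propagate through the layers, since the domain of the $(i+1)$-st nonlinearity is the image of the $i$-th block, and we need a bound on the norm of intermediate activations in terms of $\|v\|$ for the input $v$.

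First I would observe that each $W_n^{(i)}$ is compatible (hence extends to $W_\infty^{(i)}\colon \vct V_\infty^{(i)} \to \vct U_\infty^{(i)}$) and is $\|W_n^{(i)}\|_{\mathrm{op}} \le B \coloneq \sup_{n,i}\|W_n^{(i)}\|_{\mathrm{op}} < \infty$-Lipschitz, uniformly in $n$; by fact (b), $(W_n^{(i)})_n$ is $B$-Lipschitz transferable. Each $(\rho_n^{(i)})_n$ is compatible by hypothesis and $L(r)$-locally Lipschitz on balls; I claim it is locally Lipschitz transferable. This follows because compatibility gives the extension $\rho_\infty^{(i)}$ to $\Vinf$, and uniform local Lipschitzness on finite-dimensional balls extends to $\overline{\Vinf}$ by density and completeness — this is precisely the kind of statement promised in Appendix~\ref{app:prop:cont_ext}, so I would cite it. Then by the composition part of the same appendix result (or by directly composing the extensions $W_\infty^{(D)} \circ \rho_\infty^{(D-1)} \circ \cdots \circ \rho_\infty^{(1)} \circ W_\infty^{(1)}$, which is well-defined and $\Ginf$-equivariant), the sequence $(\widehat f_n)$ is compatible and extends to $\widehat f_\infty$ on $\overline{\Vinf^{(1)}}$.

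The substantive step is the explicit local Lipschitz constant. Fix $r > 0$ and an input $v$ with $\|v\| < r$. After the first linear layer, $\|W_n^{(1)} v\| \le B r \eqqcolon r_1$; the first nonlinearity maps the ball $B(0, r_1)$ into some ball of radius $r_1' \le \rho_{\max}(r_1) \coloneq \sup_i \big(\|\rho_n^{(i)}(0)\| + L(r_1) r_1\big)$ (assuming the $\rho_n^{(i)}(0)$ are uniformly bounded, which one may either assume or absorb, noting that $\|\rho_n^{(i)}(v)\| \le \|\rho_n^{(i)}(0)\| + L(\|v\|)\|v\|$); then the next linear layer scales by at most $B$, and so on. Iterating, I would define radii $r_0 = r$, $r_{i} = $ (bound on activations entering layer $i+1$) by the recursion $r_i \le B \cdot \big(c + L(r_{i-1}) r_{i-1}\big)$ where $c$ bounds $\sup_{n,i}\|\rho_n^{(i)}(0)\|$, and set $\widehat L(r) = B^{?} \prod_{i=1}^{D-1} L(r_{i-1})$ with the appropriate power of $B$ (one factor of $B$ per linear layer, so $B^D$ if there are $D$ linear layers, adjusted for the actual layer count) — all finite because $B < \infty$ and each $L(\cdot)$ is finite. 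The Lipschitz bound on $\widehat f_\infty$ restricted to $B(0,r)$ then follows from the chain: $\|\widehat f_\infty(x) - \widehat f_\infty(y)\| \le B \cdot L(r_{D-2}) \cdots L(r_0) \cdot B \cdots B \|x - y\|$ for $x, y \in B(0,r)$, since each factor in the composition is Lipschitz on the relevant intermediate ball containing the images of $x$ and $y$.

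I expect the main obstacle to be purely organizational rather than deep: making the intermediate-radius recursion precise and uniform in $n$ simultaneously, and being careful that ``$L(r)$-Lipschitz on $B(0,r)$ for all $r$'' composes correctly when the later layers see inputs in a \emph{larger} ball than $B(0,r)$ (one must evaluate $L$ at the inflated radius $r_i$, not at $r$). A minor subtlety is whether $\rho_n^{(i)}(0)$ needs a uniform bound; if the paper's convention is $\rho_n^{(i)}(0) = 0$ (as for homogeneous activations) this disappears, otherwise it should be folded into the hypotheses or noted that local Lipschitzness at a single point plus on balls gives such a bound. Everything else is a routine induction on the number $D$ of layers, invoking facts (a) and (b) at each step.
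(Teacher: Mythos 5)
Your approach is essentially the paper's: invoke Proposition~\ref{prop:cont_ext} to reduce transferability to compatibility plus uniform $n$-wise (local) Lipschitzness, note that both properties compose, and track how the Lipschitz radius inflates through the layers, giving an explicit $\widehat L(r)$ by induction on $D$. Your care with the $\rho_n^{(i)}(0)$ offset and with evaluating $L(\cdot)$ at the inflated intermediate radius rather than at $r$ is, if anything, slightly more precise than the paper's stated recursion $\ell_{i+1}=L_{i+1}(L_W^{i+1}\ell_i)$, which implicitly assumes $\rho(0)=0$ and appears to omit a factor of $r$ in the argument of $L_{i+1}$.
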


\subsection{Sets}\label{sec:sets} \label{sec:set_cs}
\textbf{Consistent sequence of sets.} 
We have described in Examples~\ref{ex:set_cs} and~\ref{ex:cs_set_norms} two consistent sequences that formalize equivalence across sets of varying sizes: the \emph{zero-padding consistent sequence} $\mscr V_{\mathrm{zero}}$ and the \emph{duplication consistent sequence} $\mscr V_{\mathrm{dup}}$, along with norms on $\RR^n$ that are compatible with each sequence. The resulting limit spaces and symmetrized metrics recover interesting mathematical structures, as summarized in Table~\ref{tab:set_metric}. Further details are provided in Appendix~\ref{appen:cs_sets}.  See Figure \ref{fig:set-cs} for an illustration.
\if\isArxiv 1
\begin{table}[h!]
    \centering
    \small
    \renewcommand{\arraystretch}{1.3}
    \setlength{\tabcolsep}{8pt}
    \begin{tabular}{@{}p{1.8cm} p{1.2cm} p{5cm} p{6cm}@{}}
        \toprule
        \textbf{Consistent Sequence} & \textbf{Limit space} & \textbf{Orbit closures of $\overline \Vinf$} & \textbf{Orbits of $\Vinf$} \\
        \midrule
        $\mscr{V}_{\mathrm{zero}}$ & 
        $\ell_p$ & Tuples of ordered sequences in $\ell_p$ & Tuples of ordered sequences with finitely-many nonzeros \\
        
        $\mscr V_{\mathrm{dup}}$ & $L^p([0,1])$ & Probability measures on $\RR$ with Wasserstein $p$-metric & 
        Empirical measures\\
        \bottomrule
    \end{tabular}
    \caption{Limit spaces and their orbit spaces for consistent sequences of sets.}
    \label{tab:set_metric}
\end{table}

\else

\begin{table}[h!]
\vspace{-8pt}
    \centering
    \small
    \renewcommand{\arraystretch}{1.3}
    \setlength{\tabcolsep}{8pt}
    \begin{tabular}{@{}p{1.3cm} p{1.2cm} p{3.5cm} p{6cm}@{}}
        \toprule
        \textbf{Consistent Sequence} & \textbf{Limit space} & \textbf{Orbit closures of $\overline \Vinf$} & \textbf{Orbits of $\Vinf$} \\
        \midrule
        $\mscr{V}_{\mathrm{zero}}$ & 
        $\ell_p$ & Ordered sequences in $\ell_p$ & Ordered sequences with finitely-many nonzeros \\
        
        $\mscr V_{\mathrm{dup}}$ & $L^p([0,1])$ & Probability measures on $\RR$ with Wasserstein $p$-metric & 
        Empirical measures\\
        \bottomrule
    \end{tabular}
    \vspace{2pt}
    \caption{Limit spaces and their orbit spaces for consistent sequences of sets.}
    \label{tab:set_metric}
    \vspace{-15pt}
\end{table}
\fi

\captionsetup[subfigure]{font=small} 
\begin{figure}[h]
    \centering
    \subfloat{
        \includegraphics[width=0.8\linewidth]{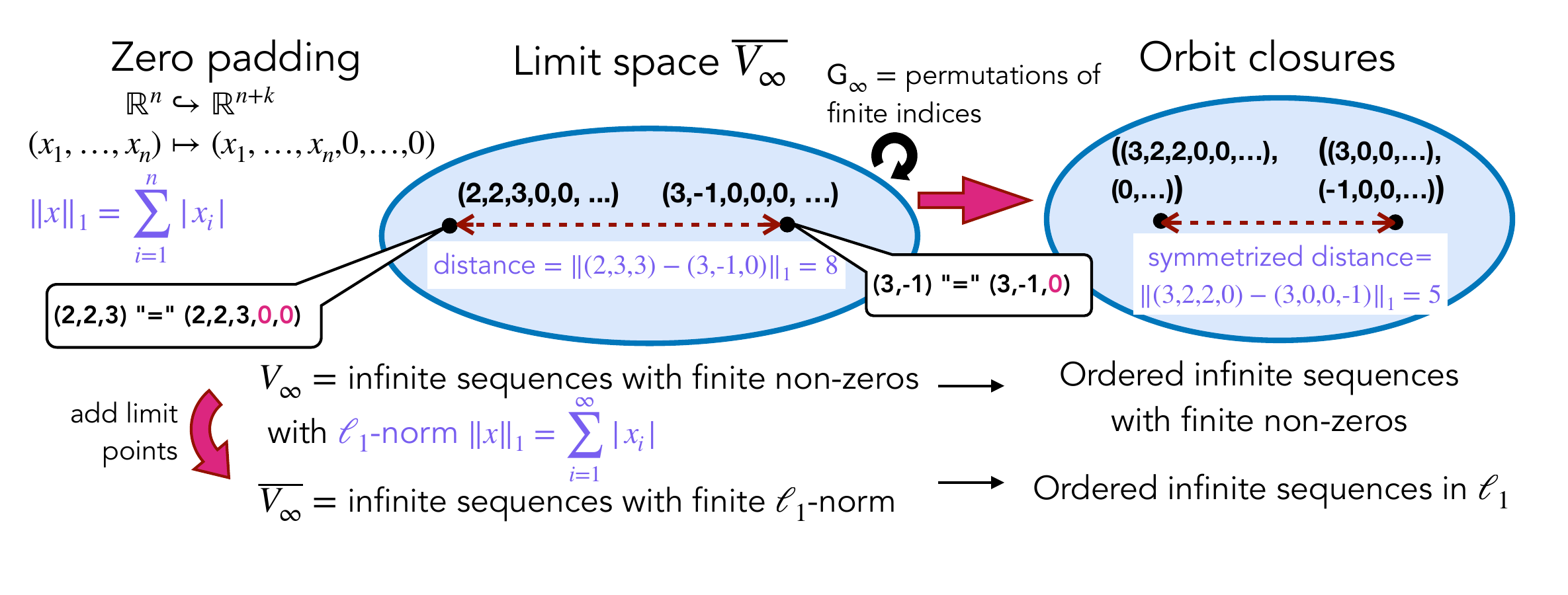}
        \label{fig:zero-padding}
    }
    
    \vspace{-0.5cm} %
    \subfloat{
        \includegraphics[width=0.8\linewidth]{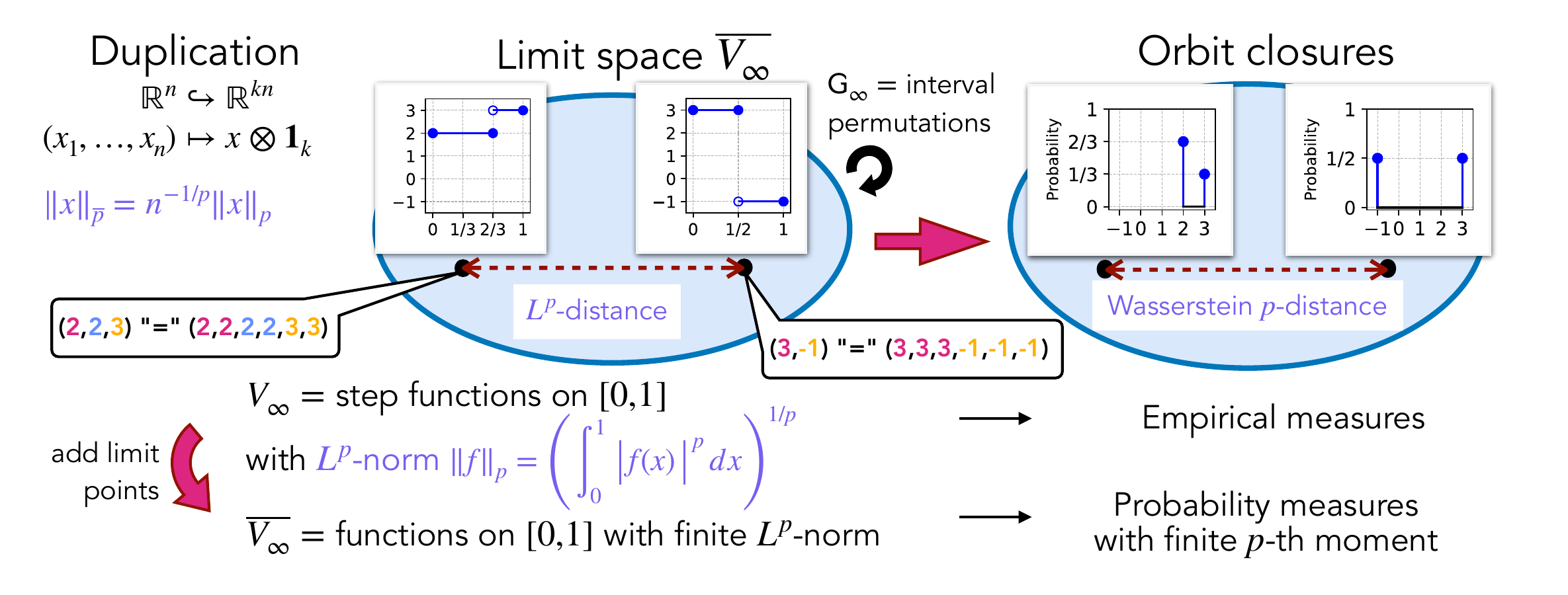}
        \label{fig:duplication}
    }
    \caption{Two examples of consistent sequences on sets. \emph{(top)} Zero-padding consistent sequence for sets. \emph{(bottom)} Duplication consistent sequence for sets.}
    \label{fig:set-cs}
\end{figure}

\paragraph{Permutation-invariant neural networks on sets. }
Prominent invariant neural networks on multi-sets including \emph{DeepSet}~\cite{zaheer2017deep},
\emph{normalized DeepSet}~\cite{bueno2021representation}, and \emph{PointNet}~\cite{qi2017pointnet}
take the form
$
    f_{n}(X) = \sigma\left(\agg_{i=1}^{n}\rho(X_{i:})\right)
$,
where $\rho\colon\RR^{d}\to \RR^{h}$, $\sigma\colon\RR^{h}\to\RR$ are fully-connected neural
networks, both independent of $n$, and $\agg$ denotes a permutation-invariant
aggregation. Sum ($\agg_{i=1}^{n}\coloneq \sum_{i=1}^{n}$), mean ($\agg
_{i=1}^{n}\coloneq \frac{1}{n}\sum_{i=1}^{n}$) and max ($\agg_{i=1}^{n}\coloneq
\max_{i=1}^{n}$) aggregations yield DeepSet, normalized DeepSet and
PointNet, respectively. Using Proposition~\ref{prop:loc_lip_NNs}, we examine the transferability of these models, demonstrating that they parameterize functions on different limit spaces, and hence are suitable for different tasks, see Corollary \ref{cor:deepset_analysis}, Table~\ref{tab:set_models}, and Figure~\ref{fig:deepset_transferability}. This is proved in Appendix~\ref{appen:NN_sets}.

\begin{corollary}
    \label{cor:deepset_analysis} The transferability of the
    three models are summarized in Table \ref{tab:set_models}.
    Particularly, they extend to Lipschitz functions $\mathrm{DeepSet}_{\infty}: l_{1}(\RR
    ^{d})\to \RR$,
    $\overline{\mathrm{DeepSet}}_{\infty}: \mathcal{P}_{1}(\RR^{d}) \to \RR$, $\mathrm{PointNet}
    _{\infty}:\mc P_\infty(\RR^d)\to \RR$ given by
    {\small\[
        \mathrm{DeepSet}_{\infty}((x_i)) = \sigma\left(\sum_{i=1}^{\infty}\rho(x_{i})\right), \overline
        {\mathrm{DeepSet}}_{\infty}(\mu) = \sigma\left(\int\rho d\mu\right), \mathrm{PointNet}
        _{\infty}(\mu) = \sigma\left(\sup_{x\in \supp(\mu)}\rho(x)\right).
    \]}
\end{corollary}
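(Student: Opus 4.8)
The plan is to verify the hypotheses of Proposition~\ref{prop:loc_lip_NNs} for each of the three architectures, with respect to the appropriate consistent sequence and norm, and then read off the claimed limiting maps by tracking what the aggregation operator becomes in the limit space. Concretely, each model factors as $f_n = \sigma \circ \agg_{i=1}^n \circ\, \rho$, where $\rho$ acts coordinatewise on the rows and $\sigma$ is applied to the aggregated vector in $\RR^h$; both $\sigma,\rho$ are fixed fully-connected nets, hence Lipschitz on bounded sets (and globally Lipschitz if their activations are). So the only nontrivial steps are (i) checking compatibility of the aggregation layer with the relevant embedding, and (ii) bounding its (local) Lipschitz constant with respect to the compatible norms from Example~\ref{ex:cs_set_norms}.

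For \textbf{DeepSet} with $\mscr V_{\mathrm{zero}}$ and $\|\cdot\|_1$: the row map $X\mapsto (\rho(X_{i:}))_i$ is compatible with zero-padding only if $\rho(0)=0$, since zero-padding adds rows equal to $0$ and we need $\rho$ to send them to $0\in\RR^h$; under that hypothesis $\sum_i \rho(X_{i:})$ is unchanged by zero-padding, giving compatibility, and the limiting map is $((x_i))\mapsto \sigma(\sum_{i=1}^\infty \rho(x_i))$ on $\ell_1(\RR^d)$, which converges absolutely because $\|\rho(x_i)\|\le \mathrm{Lip}(\rho)\|x_i\|$ is summable. Local Lipschitzness follows from Proposition~\ref{prop:loc_lip_NNs} once I note $\sum_i \rho(X_{i:})$ is Lipschitz from $(\RR^{n\times d},\|\cdot\|_1)$ to $(\RR^h,\|\cdot\|)$ with constant $\mathrm{Lip}(\rho)$ uniformly in $n$. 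Incompatibility with $\mscr V_{\mathrm{dup}}$ is seen by a one-line example: duplication multiplies $\sum_i\rho(x_i)$ by $N/n$, so no limiting extension exists. For \textbf{normalized DeepSet} with $\mscr V_{\mathrm{dup}}$ and $\|\cdot\|_{\overline p}$: $\frac1n\sum_i \rho(x_i) = \int \rho\, d\mu_X$ where $\mu_X$ is the empirical measure, which is manifestly invariant under duplication, giving compatibility; the limiting map is $\mu\mapsto \sigma(\int\rho\,d\mu)$ on $\mathcal P_1(\RR^d)$, and Lipschitzness in $W_1$ (equivalently, in $\|\cdot\|_{\overline 1}$ on the finite level) holds because $\mu\mapsto\int\rho\,d\mu$ is $\mathrm{Lip}(\rho)$-Lipschitz in $W_1$ by Kantorovich–Rubinstein. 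Incompatibility with $\mscr V_{\mathrm{zero}}$: zero-padding changes $\frac1n\sum_i\rho(x_i)$ unless $\rho(0)=0$ \emph{and} even then it rescales, so no extension.

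For \textbf{PointNet} with $\mscr V_{\mathrm{dup}}$ and $\|\cdot\|_{\overline\infty}$: coordinatewise $\max_i \rho(x_i)$ is invariant under duplication, hence compatible, and the natural limit is $\mu\mapsto \sigma(\sup_{x\in\supp(\mu)}\rho(x))$ on $\mathcal P_\infty(\RR^d)$ — here one should verify the sup over the (compact, for finitely supported or $W_\infty$-limit measures) support is attained and agrees with the finite-$n$ max. The subtle point is transferability: $\max$ is $1$-Lipschitz in $\|\cdot\|_\infty$, but in $W_p$ for $p<\infty$ the map $\mu\mapsto\sup\supp(\mu)$ is \emph{not} continuous (a tiny-mass far-away point moves the sup a lot but $W_p$ little), which is why the corollary only claims Lipschitz transferability for $p=\infty$; for $p=\infty$, $W_\infty$ controls the Hausdorff distance of supports, and $\sup$ composed with the Lipschitz $\rho$ is then Lipschitz. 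I expect this $p=\infty$ case to be the main obstacle, since it requires relating $W_\infty$ convergence of the empirical measures to Hausdorff convergence of supports and checking that the finite-level $\max$ extends continuously — everything else is a direct application of Proposition~\ref{prop:loc_lip_NNs} together with the standard facts that fully-connected nets are locally Lipschitz and that $\|\cdot\|_1$, $\|\cdot\|_{\overline p}$ are the compatible norms identified in Example~\ref{ex:cs_set_norms}. Finally, I would collect the per-model conclusions into Table~\ref{tab:set_models} and state the three limiting formulas, which is exactly the content of the corollary.
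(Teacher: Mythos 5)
Your proposal is correct and follows essentially the same strategy as the paper: decompose each model as $\sigma \circ \agg_{i=1}^n \circ\, \rho^{\oplus n}$, then apply Proposition~\ref{prop:loc_lip_NNs} by checking compatibility and per-level Lipschitzness of each layer, with the $\rho(0)=0$ proviso for DeepSet under zero-padding and the counterexamples for the incompatible pairings. Two small places where you diverge or hesitate unnecessarily. First, for normalized DeepSet you invoke Kantorovich--Rubinstein duality to get $W_1$-Lipschitzness directly on the limit space; the paper instead verifies the mean aggregation is $1$-Lipschitz from $\|\cdot\|_{\overline p}$ to $\|\cdot\|_{\RR^h}$ at every finite $n$ (via H\"older's inequality $\|\cdot\|_{\overline 1}\le\|\cdot\|_{\overline p}$) and lets Proposition~\ref{prop:loc_lip_NNs} produce the extension. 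Both work, but the paper explicitly flags the K--R route as the prior approach of \cite{bueno2021representation} and prefers the finite-level check because it is uniform across layers and applies to all $p\in[1,\infty]$ at once, whereas K--R gives $W_1$ and you then need $W_1\le W_p$ to cover the rest. Second, you expect the PointNet $p=\infty$ case to be an obstacle requiring a detour through Hausdorff convergence of supports; in fact Proposition~\ref{prop:loc_lip_NNs} (and the underlying Proposition~\ref{prop:cont_ext}) already gives a unique continuous extension to $\overline{\vct V_\infty}$ once the entrywise inequality $|\max_i X_{ij}-\max_i Y_{ij}|\le\max_i|X_{ij}-Y_{ij}|$ shows the aggregation is $1$-Lipschitz in $\ell_\infty$ at every finite level. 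The identification of the extension with $\mu\mapsto\sigma\bigl(\sup_{x\in\supp\mu}\rho(x)\bigr)$ then only requires recognizing the completed limit space, and the paper works with the $W_\infty$-induced quotient metric $d_{\mc K}$ on compact supports rather than the Hausdorff metric, sidestepping the comparison you flag (it notes $d_H\le d_{\mc K}$ but does not need the reverse). With these clarifications, your argument closes correctly and reaches the same conclusion as the paper's.
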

\if \isArxiv 1
\begin{table}[!h]
        \centering
        \small
        \renewcommand{\arraystretch}{1.3}
        \setlength{\tabcolsep}{8pt}
        \begin{tabular}{@{}p{1.5cm} p{5.5cm} p{7.6cm}@{}}
            \toprule \textbf{Model} & \textbf{$\mscr{V}_{\mathrm{zero}}$, $\|\cdot\|_1$}                                            & \textbf{$\mscr V_{\mathrm{dup}}$, $\|\cdot\|_{\overline{p}}$} \\
            \midrule $\mathrm{DeepSet}$        & Lipschitz transferable if $\rho(0) = 0$ & Incompatible                                                                      \\
            $\overline{\mathrm{DeepSet}}$      & Incompatible                                                                            & Lipschitz transferable      \\
            $\mathrm{PointNet}$                & Incompatible                                                                            & Compatible, Lipschitz transferable if $p=\infty$\\
            \bottomrule
        \end{tabular}
        \vspace{2pt}
        \caption{Transferability of invariant neural networks on sets, assuming the nonlinearities $\sigma,\rho$ are Lipschitz.}
        \label{tab:set_models}
    \end{table}
\else
    \begin{table}[h]
        \centering
        \small
        \renewcommand{\arraystretch}{1.3}
        \setlength{\tabcolsep}{8pt}
        \begin{tabular}{@{}p{1.5cm} p{4.5cm} p{6cm}@{}}
            \toprule \textbf{Model} & \textbf{$\mscr{V}_{\mathrm{zero}}$, $\|\cdot\|_1$}                                            & \textbf{$\mscr V_{\mathrm{dup}}$, $\|\cdot\|_{\overline{p}}$} \\
            \midrule $\mathrm{DeepSet}$        & Lipschitz transferable if $\rho(0) = 0$ & Incompatible                                                                      \\
            $\overline{\mathrm{DeepSet}}$      & Incompatible                                                                            & Lipschitz transferable      \\
            $\mathrm{PointNet}$                & Incompatible                                                                            & Compatible, Lipschitz transferable if $p=\infty$\\
            \bottomrule
        \end{tabular}
        \vspace{2pt}
        \caption{Transferability of invariant neural networks on sets, assuming the nonlinearities $\sigma,\rho$ are Lipschitz.}
        \label{tab:set_models}
    \end{table}
\fi
\if\isArxiv 1
\begin{figure}[h]
    \hspace*{-0.4cm}
    \centering
    \subfloat[DeepSet \cite{zaheer2017deep} output\\ (incompatible)\label{fig:deepset_output}]
    { \includegraphics[width=0.24\textwidth]{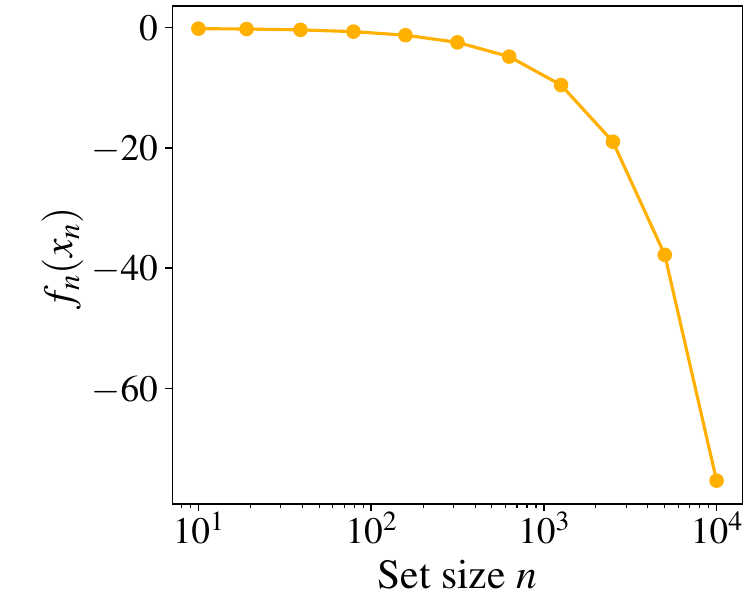} }
    \subfloat[Normalized DeepSet \cite{bueno2021representation} output\\(
    Lipschitz transferable)\label{fig:normalized_deepset_output}]{ \includegraphics[width=0.24\textwidth]{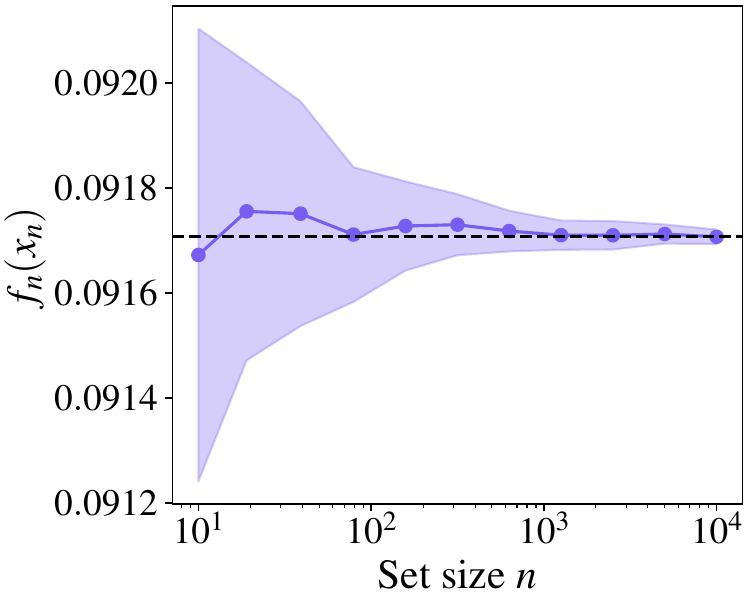} }
    \subfloat[PointNet \cite{qi2017pointnet} output \label{fig:pointnet_output}\\(compatible,
    not transferable)]
    { \includegraphics[width=0.24\textwidth]{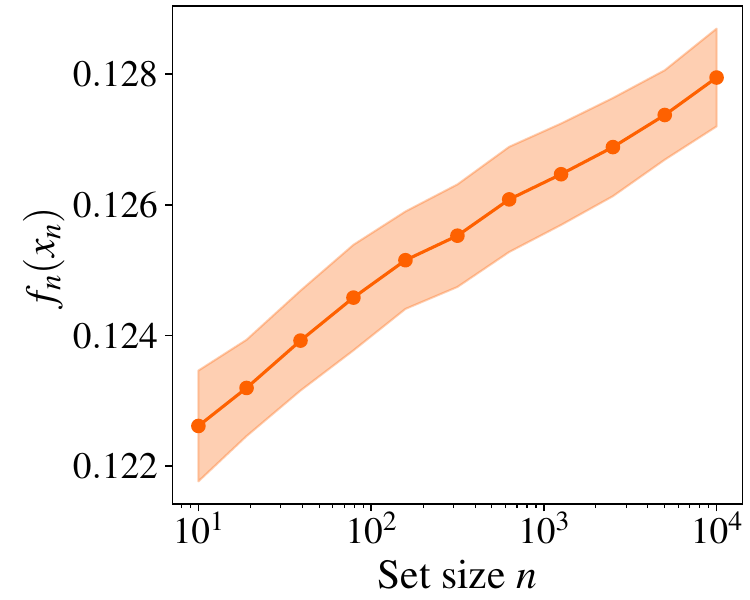} }
\subfloat[Normalized DeepSet \cite{bueno2021representation} error\label{fig:normalized_deepset_error}]
    { \includegraphics[width=0.24\textwidth]{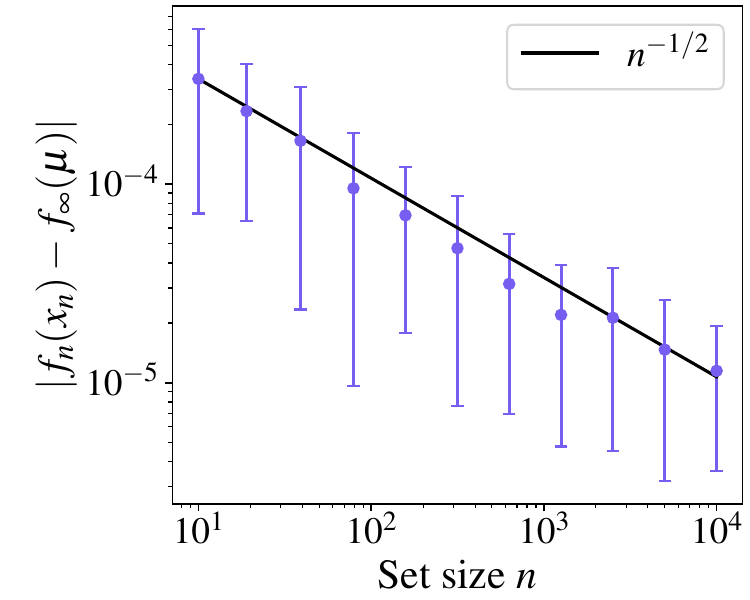} }
    \caption{\small Transferability of invariant networks on sets under $(\mscr V_{\mathrm{dup}}, \|\cdot\|_{\overline{1}})$.
The plots show outputs of untrained, randomly initialized models on input sets of increasing size $n$. Each set consists of $n$ i.i.d.\ samples from $\mathcal{N}(0,1)$, a distribution with non-compact support. 
Error bars indicate one standard deviation above and below the mean over $100$ random samples.
\emph{(a)(b)(c)}: Model output $f_n(X_n)$ vs.\ set size $n$. For normalized DeepSet, the dashed line represents the limiting value 
$f_\infty(\mu) = \sigma\left( \int \rho(x)\, d\mu(x) \right)$ for $\mu = \mathcal{N}(0,1)$, computed via numerical integration. 
While the outputs of DeepSet and PointNet diverge as $n$ increases, the transferable model, normalized DeepSet, converges to the theoretical limit, i.e., $f_n(X_n) \to f_\infty(\mu)$.
\emph{(d)}: Convergence error $|f_n(X_n) - f_\infty(\mu)|$ vs.\ set size $n$ for normalized DeepSet (both axes in log scale), demonstrating the expected $O(n^{-1/2})$ convergence rate as predicted by Proposition~\ref{prop:convergence_transferability}.
See Appendix~\ref{appen:NN_sets} for further discussion.}
    \label{fig:deepset_transferability}
\end{figure}
\else
\begin{figure}[h]
    \vspace{-15pt}
    \hspace*{-0.5cm}
    \centering
    \subfloat[DeepSet \cite{zaheer2017deep} output\\ (incompatible)\label{fig:deepset_output}]
    { \includegraphics[width=0.24\textwidth]{plots/deepset_transferability_outputs.pdf} }
    \subfloat[Normalized DeepSet \cite{bueno2021representation} output\\(
    Lipschitz transferable)\label{fig:normalized_deepset_output}]{ \includegraphics[width=0.24\textwidth]{plots/normalized_deepset_transferability_outputs.pdf} }
    \subfloat[PointNet \cite{qi2017pointnet} output \label{fig:pointnet_output}\\(compatible,
    not transferable)]
    { \includegraphics[width=0.24\textwidth]{plots/pointnet_transferability_outputs.pdf} }
\subfloat[Normalized DeepSet \cite{bueno2021representation} error\label{fig:normalized_deepset_error}]
    { \includegraphics[width=0.24\textwidth]{plots/deepset_transferability_errors.pdf} }
    \caption{\small Transferability of invariant networks on sets under $(\mscr V_{\mathrm{dup}}, \|\cdot\|_{\overline{1}})$.
The plots show outputs of untrained, randomly initialized models on input sets of increasing size $n$. Each set consists of $n$ i.i.d.\ samples from $\mathcal{N}(0,1)$, a distribution with non-compact support. 
Error bars indicate one standard deviation above and below the mean over $100$ random samples.
\emph{(a)(b)(c)}: Model output $f_n(X_n)$ vs.\ set size $n$. For normalized DeepSet, the dashed line represents the limiting value 
$f_\infty(\mu) = \sigma\left( \int \rho(x)\, d\mu(x) \right)$ for $\mu = \mathcal{N}(0,1)$, computed via numerical integration. 
While the outputs of DeepSet and PointNet diverge as $n$ increases, the transferable model, normalized DeepSet, converges to the theoretical limit, i.e., $f_n(X_n) \to f_\infty(\mu)$.
\emph{(d)}: Convergence error $|f_n(X_n) - f_\infty(\mu)|$ vs.\ set size $n$ for normalized DeepSet (both axes in log scale), demonstrating the expected $O(n^{-1/2})$ convergence rate as predicted by Proposition~\ref{prop:convergence_transferability}.
See Appendix~\ref{appen:NN_sets} for further discussion.}
    \label{fig:deepset_transferability}
\end{figure}
\fi
\subsection{Graphs}\label{sec:graphs}
\begin{figure}[t]
    \centering
    \includegraphics[width=0.8\linewidth]{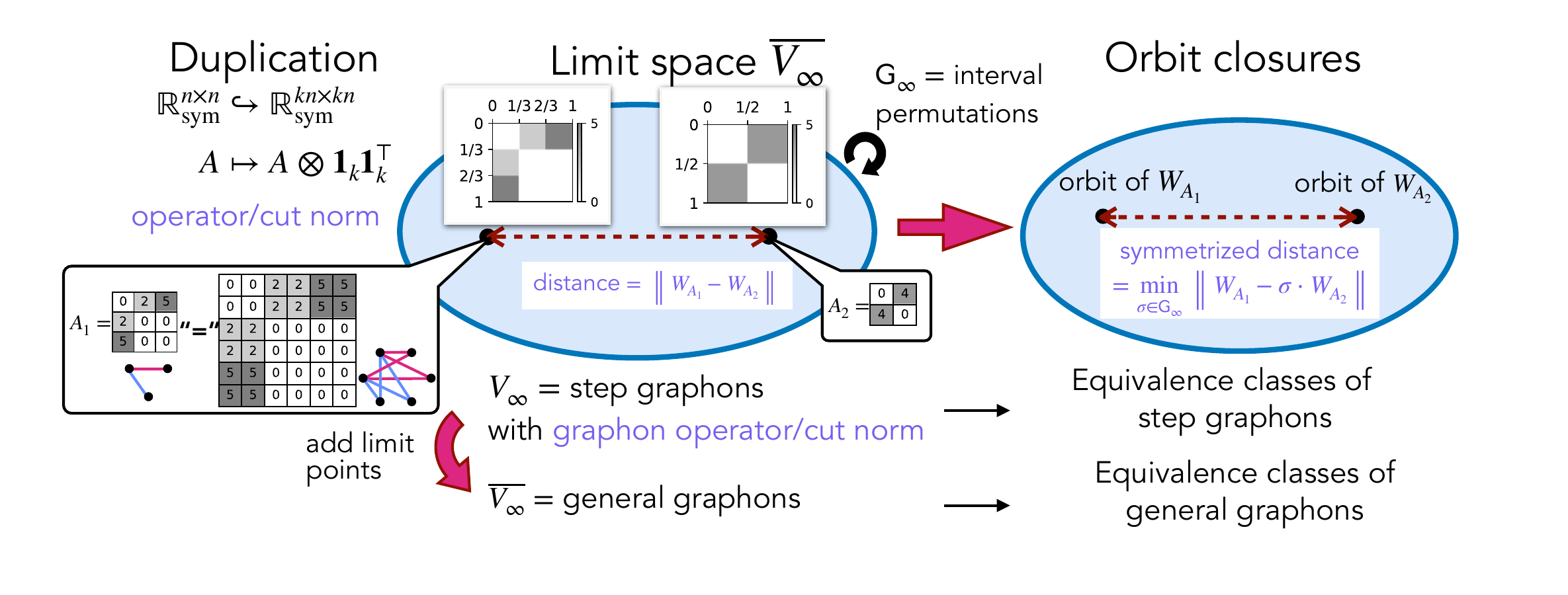}
    \caption{Duplication consistent sequence for graphs}
    \label{fig:graph-duplication}
\end{figure}
\textbf{Consistent sequences for graph signals. }
To model graph signals, we consider the sequence of vector spaces  
$\vct V_n = \RR^{n\times n}_{\mathrm{sym}}\times \RR^{n\times d}$, representing the adjacency matrix of a weighted graph with $n$ nodes and node features of dimension $d$. The symmetric group $\msf S_{n}$ acts on $(A,X)\in \vct V_n$ by  
$g\cdot (A,X) = (gAg^\top, gX)$, capturing the invariance to node ordering.  

While other embeddings are possible, in this work we focus on the \emph{duplication-consistent sequence} $\mscr V_{\mathrm{dup}}^G$ for graphs, illustrated in Figure \ref{fig:graph-duplication}. Specifically, for any $n|N$, the embedding is defined by  
$\varphinn{n}{N}(A,X) = \left(A \otimes \mathbbm{1}_{N/n} \mathbbm{1}_{N/n}^{\top},\; X \otimes \mathbbm{1}_{N/n}\right),$  
which corresponds to replacing each node with $N/n$ duplicated copies. 
These embeddings precisely identify graph signals that induce the same step graphon signal. We consider three compatible norms: the $p$-norm $\|(A,X)\|_{\overline{p}} = (\frac{1}{n^2}\sum_{i,j}|A_{ij}|^p)^{1/p} + \left( \frac{1}{n} \sum_i \|X_{i:}\|_{\mathbb{R}^d}^p \right)^{1/p}$, the operator $p$-norm 
$
\|(A,X)\|_{\mathrm{op},p} \coloneq  \frac{1}{n} \|A\|_{\mathrm{op},p} + \left( \frac{1}{n} \sum_i \|X_{i:}\|_{\mathbb{R}^d}^p \right)^{1/p}
$, and the cut norm $\|(A,X)\|_{\square} \coloneq \|A\|_{\square}+ \|X\|_{\square}$. In all cases, the limit space is the space of graphon signals and the symmetrized metrics recover extensively studied graphon distances in~\cite{lovasz,levie2024graphon}.

\paragraph{\emph{Message Passing Neural Networks (MPNNs)}}  
Message-passing-based GNNs form the most widely used and general paradigm, encompassing many existing models \cite{gilmer2017neural}. Instantiating Proposition~\ref{prop:loc_lip_NNs}, we analyze in Appendix~\ref{appen:MPNN} the transferability of MPNNs under constraints on the message function, update function, and local aggregation, and compare our results with \cite{ruizNEURIPS2020,levie2024graphon}.
\paragraph{Making \emph{Invariant Graph Networks} (IGN) Transferable}  
IGNs~\cite{maron2018invariant} are an alternative approach to designing GNNs by alternating linear $\msf S_{n}$-equivariant layers with pointwise nonlinearities. IGN is incompatible with respect to $\mscr V_{\mathrm{dup}}^G$ because the linear layers used in~\cite{maron2018invariant} are incompatible. 
Nevertheless, the hypotheses in Proposition~\ref{prop:loc_lip_NNs} provide a systematic approach for modifying IGN to achieve transferability, leading to two newly proposed models. We highlight this recipe for constructing transferable neural networks is general.
\begin{itemize}[font={\textit}, align=left, leftmargin=*]
    \vspace{-5pt}
    \item[{Generalizable Graph Neural Network (GGNN)}:] We use compatible linear layers and message-passing-like nonlinearities of the form  $
        (A, X) \mapsto \left(A, \sigma\left(\sum_{s=0}^{S} n^{-s} A^{s} X_{s} \right) \right)$,
        where $\sigma$ acts entrywise. The GGNNs are compatible and at least as expressive as the GNNs in~\cite{ruizNEURIPS2020}.
    \item[{Continuous GGNN}:] We restrict the linear layers in GGNN to the subspace that has bounded operator norm on the limit space. This model is transferable in both operator 2-norm and the cut norm.
    \vspace{-5pt}
\end{itemize}
The complete description of these models and proof of their transferability are in Appendix \ref{appen:GGNN}. 
The transferability of various GNN models with respect to $(\mscr V^G_{\mathrm{dup}}, \|\cdot\|_{\mathrm{op},p}), p=2$ is illustrated in Figure~\ref{fig:gnn_transferability} through a numerical experiment.
\if\isArxiv 1
\begin{figure}[h]
    \hspace*{-0.3cm}
    \centering
    \subfloat[GNN \protect\cite{ruizNEURIPS2020} \\ (locally Lipschitz transferable)]{ 
        \includegraphics[width=.24\linewidth]{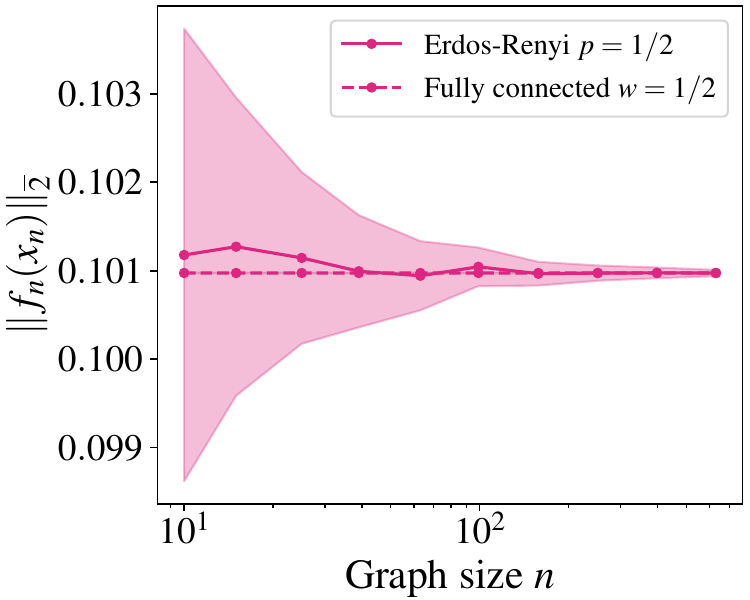} }
    \subfloat[Normalized 2-IGN \cite{cai2022convergence} \\ (incompatible)]{ 
        \includegraphics[width=.24\linewidth]{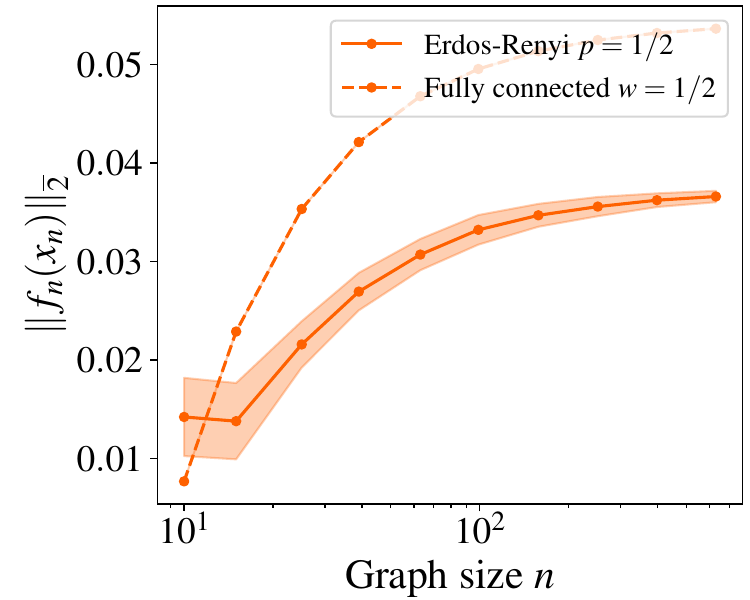} }
    \subfloat[GGNN \\ (compatible, not transferable)]{ 
        \includegraphics[width=.24\linewidth]{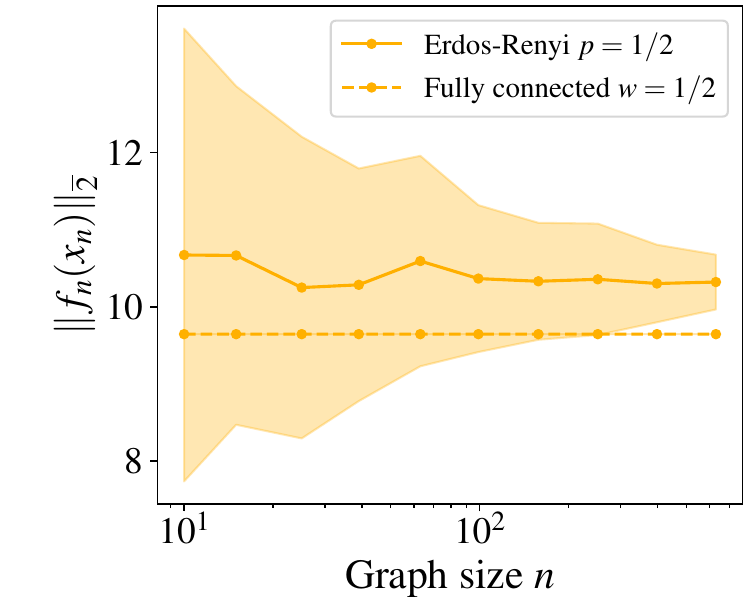} }
    \subfloat[Continuous GGNN \\ (locally Lipschitz transferable)]{ 
        \includegraphics[width=.24\linewidth]{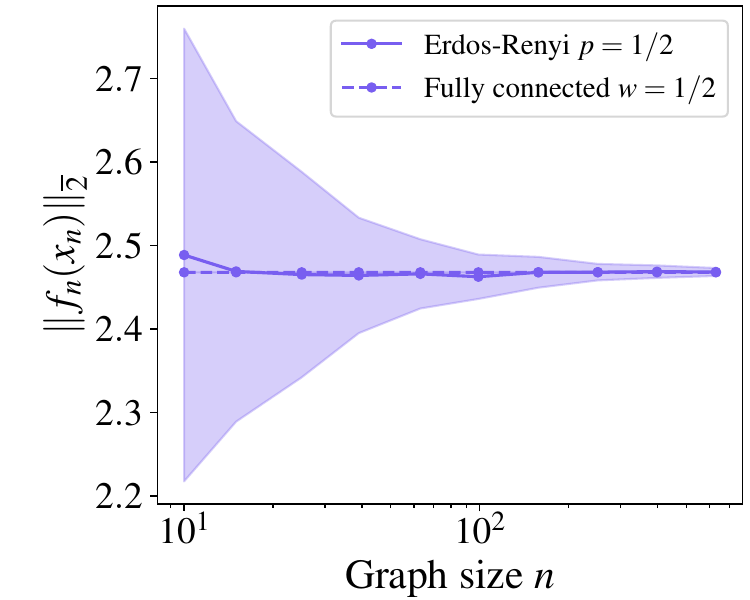} }
    \caption{Transferability of equivariant GNNs with respect to $(\mscr V^G_{\mathrm{dup}}, \|\cdot\|_{\mathrm{op},2})$.  
    The plots show outputs of untrained, randomly initialized models for two sequences of input graph signals $(A_n, X_n)$:  
    (dashed lines) Fully-connected weighted graphs $A_n = \frac{\mathbbm{1}_n\mathbbm{1}_n^{\top}}{2}$, $X_n = \mathbbm{1}_n$.  
    (solid lines) $A_n$ is drawn i.i.d. from the Erdős–Rényi model $G(n,1/2)$, with $X_n = \mathbbm{1}_n$.  
    These two sequences represent different samplings of the same underlying constant graphon signal, where  
    $W \equiv 1/2$ and $f \equiv 1$. Error bars indicate one standard deviation above and below the mean over $100$ random samples.
    \emph{(dashed lines)}: For the fully connected model each finite graph signal exactly induces the underlying graphon signal. The outputs of all compatible models ((a), (c), (d)) remain constant over $n$, whereas the incompatible model (b) does not.  
    \emph{(solid lines)}: The outputs of all transferable models ((a), (d)) converge to the same limit as Sequence (1), while the discontinuous model (c) does not.
    }
    \label{fig:gnn_transferability}
\end{figure}

\else
\begin{figure}[h]
    \hspace*{-0.2cm}
    \centering
    \subfloat[GNN \protect\cite{ruizNEURIPS2020} \\ (locally Lipschitz transferable)]{ 
        \includegraphics[width=.24\linewidth]{plots/gnn_transferability_outputs.pdf} }
    \subfloat[Normalized 2-IGN \cite{cai2022convergence} \\ (incompatible)]{ 
        \includegraphics[width=.24\linewidth]{plots/ign_transferability_outputs.pdf} }
    \subfloat[GGNN \\ (compatible, not transferable)]{ 
        \includegraphics[width=.24\linewidth]{plots/ggnn_transferability_outputs.pdf} }
    \subfloat[Continuous GGNN \\ (locally Lipschitz transferable)]{ 
        \includegraphics[width=.24\linewidth]{plots/continuous_ggnn_transferability_outputs.pdf} }
    \caption{Transferability of equivariant GNNs with respect to $(\mscr V^G_{\mathrm{dup}}, \|\cdot\|_{\mathrm{op},2})$.  
    The plots show outputs of untrained, randomly initialized models for two sequences of input graph signals $(A_n, X_n)$:  
    (dashed lines) Fully-connected weighted graphs $A_n = \frac{\mathbbm{1}_n\mathbbm{1}_n^{\top}}{2}$, $X_n = \mathbbm{1}_n$.  
    (solid lines) $A_n$ is drawn i.i.d. from the Erdős–Rényi model $G(n,1/2)$, with $X_n = \mathbbm{1}_n$.  
    These two sequences represent different samplings of the same underlying constant graphon signal, where  
    $W \equiv 1/2$ and $f \equiv 1$. Error bars indicate one standard deviation above and below the mean over $100$ random samples.
    \emph{(dashed lines)}: For the fully connected model each finite graph signal exactly induces the underlying graphon signal. The outputs of all compatible models ((a), (c), (d)) remain constant over $n$, whereas the incompatible model (b) does not.  
    \emph{(solid lines)}: The outputs of all transferable models ((a), (d)) converge to the same limit as Sequence (1), while the discontinuous model (c) does not.
    }
    \label{fig:gnn_transferability}
    \vspace{-5pt}
\end{figure}
\fi
\subsection{Point clouds}\label{sec:point_clouds}
\paragraph{Consistent sequences for point clouds. }
For $k$-dimensional point clouds, we consider a sequence of vector spaces $\vct V_n=\RR^{n\times k}$, representing sets of $n$ points in $\RR^{k}$, where $k$ is fixed (typically $k=2$ or $3$).
Unlike the sets models in Section~\ref{sec:set_cs}, here we consider not only the permutation symmetry $\msf S_{n}$, which acts on the rows and ensures
invariance to the ordering of points, but also the additional symmetry given by the
orthogonal group $\msf O(k)$, which acts on the columns to capture rotational and
reflectional symmetries in the Euclidean space $\RR^{k}$. i.e. $(g, h) \cdot X = g X h^{\top}$.

We consider the duplication consistent sequences of point clouds $\mscr{V}_{\mathrm{dup}}^{P}$. Specifically, for any $n\mid N$, the embedding is defined as 
$\varphinn{n}{N}(X) = X \otimes \mathbbm{1}_{N/n}$. The group embedding $\msf S_{n}\times
\msf O(k) \hookrightarrow \msf S_N\times \msf O(k)$ is given by $(g,h)\mapsto (g\otimes I_{N/n}, h)$. We further endow each $\vct V_n$ with the normalized $\ell_p$ norm $\|X\|_{\overline p} = \left(\frac{1}{n}\sum_{i=1}^{n}\|X_{i:}\|_2^{p}
\right )^{1/p}$. 
The orbit closures of $\overline\Vinf$ can be identified with
the orbit space under $\msf O(k)$ of probability measures on $\RR^{k}$ with finite $p$th moment.

\paragraph{Invariant neural networks on point clouds. }
First, we analyze the \emph{DeepSet for Conjugation Invariance} (DS-CI) proposed in \cite{blum2024learning}. In Appendix~\ref{appen:ds-ci;oi-ds}, we show that the normalized
variants of DS-CI is ``approximately'' locally Lipschitz transferable with respect to $(\mscr V_{\mathrm{dup}}^{P}, \|\cdot\|_{\overline{p}})$.

We also introduce a more time and space-efficient model, \emph{normalized SVD-DeepSet}, defined as:
$\overline{\operatorname{SVD-DS}}_n(X) = \overline{\mathrm{DeepSet}}_n(XV)$,
where for an input point cloud $X \in \RR^{n\times k}$, we compute its singular value
decomposition $X = U\Sigma V^{\top}$ with ordered singular values. This model effectively applies a canonicalization step with respect to the $\msf O(k)$ action. We prove that it is locally Lipschitz-transferable outside of a zero-measure set (which exists for any canonicalization~\cite{contin_canon}) in Appendix~\ref{appen:svd-ds}.
In Appendix~\ref{sec:pt_cloud_transferability_plots}, we illustrate the transferability of these models when input point clouds are i.i.d. samples from an $\msf O(k)$-invariant measure on $\RR^{k}$ through a numerical experiment. 

\section{Size generalization experiments}\label{sec:experiments}
Our theoretical framework emphasizes understanding how solutions to small problem instances inform solutions to larger ones, i.e., whether the target function is compatible with respect to a given consistent sequence, and continuous with respect to an associated norm. Under this view, size generalization can be provably achieved using a neural network model that is \emph{aligned} with the target function: \emph{Compatibility alignment} ensures that the model is compatible with respect to the same consistent sequence as the target function, and yields a function on the correct limit space. \emph{Continuity alignment} further requires the model to be continuously transferable with respect to the same norm as the target function, providing stronger inductive bias and asymptotic guarantees.

We evaluate the effect of these alignments through experiments where models are trained on small inputs of fixed size $n_{\text{train}}$ and tested on larger inputs $n_{\text{test}} \geq n_{\text{train}}$. In each experiment, all models have approximately the same number of parameters to ensure fair comparisons. In this Section we show a selected set of results. See Appendix~\ref{appen:size_generalization_experiments} for a full discussion.

\begin{figure}[t]
    \hspace*{-0.5cm}
    \centering
    \subfloat[Set: Population statistics]{\label{fig:pop_stats} \includegraphics[width=0.24\textwidth]{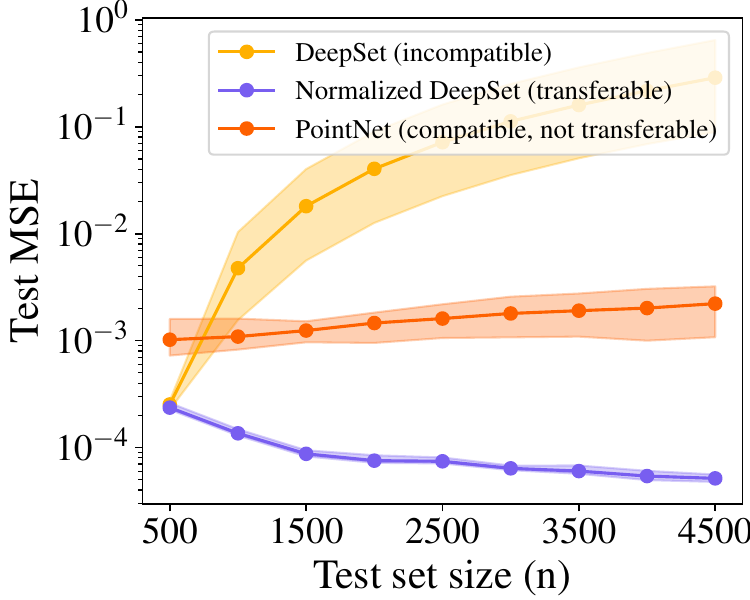}}
    \subfloat[Set: Maximal distance from the origin]{\label{fig:hausdorff} \includegraphics[width=0.24\textwidth]{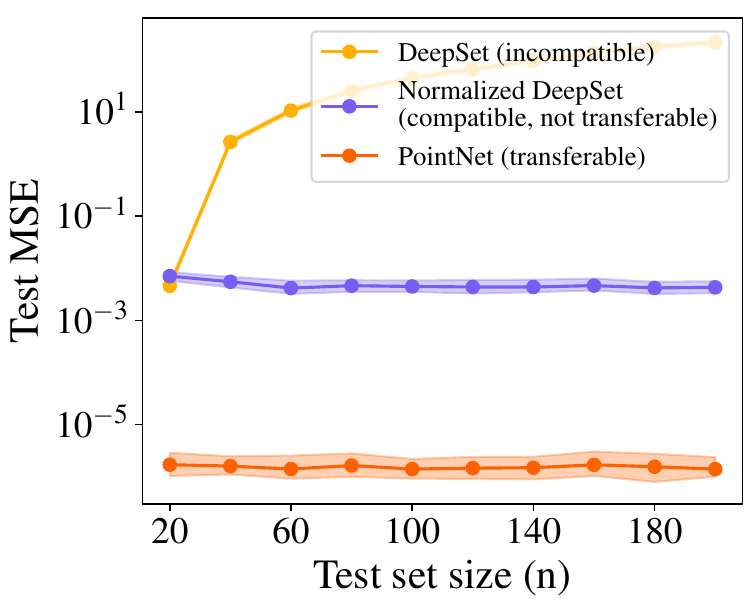}}
    \subfloat[Graph: Weighted triangle densities]{\label{fig:triangle_density} \includegraphics[width=0.24\textwidth]{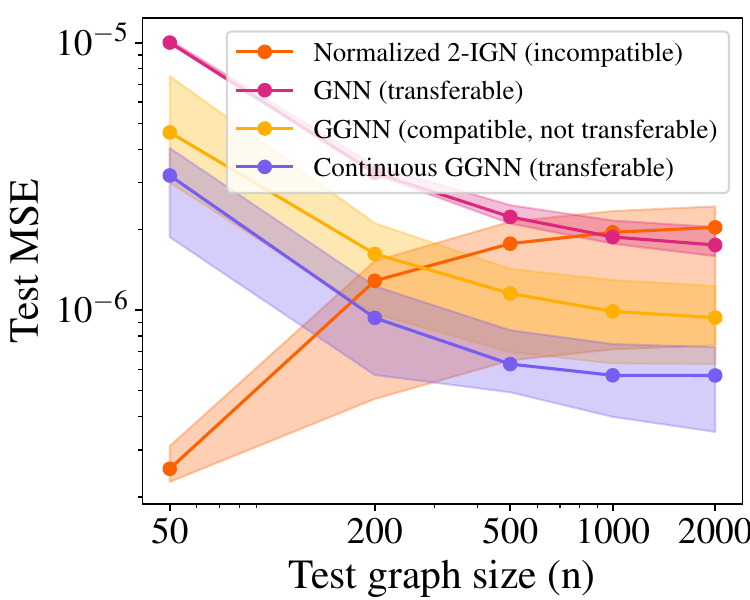}}
    \subfloat[Point cloud: Gromov-Wasserstein lower bound]{\label{fig:GWLB} \includegraphics[width=0.24\textwidth]{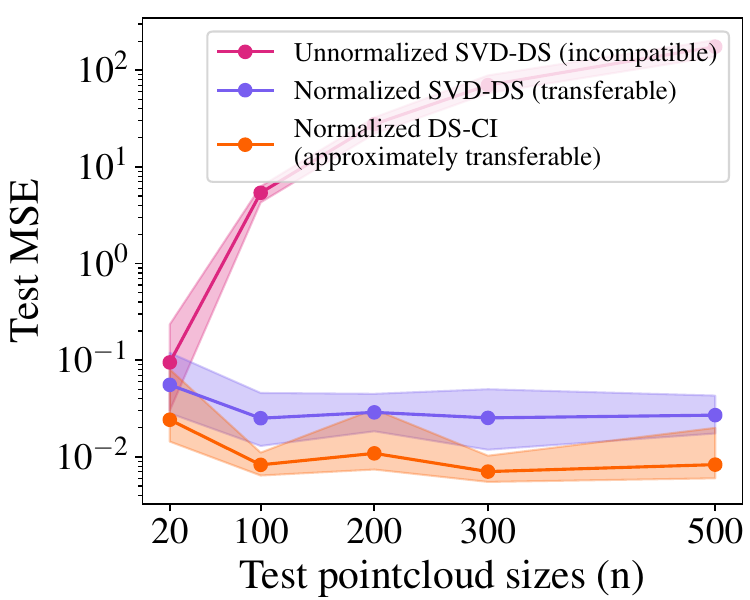}}
    \caption{Size generalization experiments: Mean test MSE (over 10 random runs) against test input dimensionality $n$. Error bars indicate the min/max range in (a)(b)(d), and $20^{th}/80^{th}$ percentiles in (c) for legibility.}
    \label{fig:set_size_generalization}
\end{figure}
\if\isArxiv 1
\vspace{5pt}
\fi

\paragraph{Size generalization on sets.}
We consider two learning tasks on sets of arbitrary size, where the target functions exhibit distinct properties, leading to different models being more suitable.

\if\isArxiv 1
\vspace{5pt}
\fi
\noindent\textit{Experiment 1: Population Statistics.}
We tackle Task 3 from Section 4.1.1 of \cite{zaheer2017deep} (other tasks yield similar
results). The dataset consists of $N$ sample sets. We first randomly choose a unit vector $v\in \RR^{32}$. 
For each set, we sample a parameter $\lambda\in[0,1]$ and then generate a set of $n$ points from
$\mu(\lambda) = \mathcal{N}(0, I + \lambda vv^{\top})$. The target function
involves learning the mutual information, which depends on the underlying probability
measure $\mu(\lambda)$ and is continuous with respect to the Wasserstein $p$-distance. Based on the transferability analysis in Table~\ref{tab:set_models}, normalized DeepSet is well-suited for this task, as it aligns with the target function at the continuity level, whereas PointNet aligns only at the compatibility level, and DeepSet lacks alignment at either level. Indeed, normalized DeepSet has the best in-distribution test performance and size generalization behavior as shown in Figure~\ref{fig:pop_stats}.

\noindent\textit{Experiment 2: Maximal Distance from the Origin.}
In this task, each dataset consists of $N$ sets where each set contains $n$ two-dimensional points sampled as follows. First, a center is sampled from $\mathcal{N}(0, I_2)$ and a radius is sampled from $\mathrm{Unif}([0,1])$, which together define a circle. The set then consists of $n$ points sampled uniformly along the circumference. The goal is to learn the maximum Euclidean norm among the points in each set. The target function in this case only depends on a point cloud via its support, and is continuous
with respect to the Hausdorff distance. Consequently, for this task, PointNet is
well-suited, as it aligns with the task at the level of continuity. (It was proved in \cite{bueno2021representation} that PointNet extends to a function that is continuous with respect to the Hausdorff distance. We discuss the relationship between this result and ours in the PointNet part of Appendix~\ref{appen:NN_sets}.) In contrast,
normalized DeepSet aligns only at the compatibility level, while DeepSet remains
unaligned. The comparison of various model's performance, shown in Figure~\ref{fig:hausdorff} is as expected.

\paragraph{Size Generalization on Graphs.}
The dataset consists of $N$ attributed graphs $(A, x)$, where
$A_{ij}= A_{ji}\stackrel{i.i.d}{\sim}\mathrm{Unif}([0,1])$ for $i \leq j$, and $x_{i}
\stackrel{i.i.d}{\sim}\mathrm{Unif}([0,1])$. (We also experimented with simple graphs;
see Appendix~\ref{sec:experiments_graphs} for details.) The task is to learn the signal-weighted triangle density
$y_{i}= \frac{1}{n^{2}}\sum_{j,k \in [n]}A_{ij}A_{jk}A_{ki}x_{i}x_{j}x_{k}$, which
depends on the underlying graphon and is continuous with respect to the cut norm. Figure~\ref{fig:triangle_density} shows that our proposed continuous GGNN, which
aligns with the target function at the continuity level, achieves the best test performance on large graphs. Although the message-passing GNN is also provably transferable,
it lacks sufficient expressiveness for this task~\cite{chen2020can}, leading to poor performance.

\paragraph{Size Generalization on Point Clouds.}
We follow the setup in Section 7.2 of \cite{blum2024learning}, using the ModelNet10 dataset \cite{wu20153d}. From Class 2 (chair) and Class 7 (sofa), we randomly selected
80 point clouds each, splitting them into 40 for training and 40 for testing. This
results in $40 \times 40$ cross-class pairs. The objective is to learn the third
lower bound of the Gromov-Wasserstein distance \cite{memoli2011gromov}, which is
invariant and continuous with respect to the Wasserstein $p$-metric (proven in
Appendix \ref{app:continuity-gw}).
Figure~\ref{fig:GWLB} shows that normalized DS-CI and normalized SVD-DS, both aligned with
the target at the continuity level, achieve good test performance and size generalization.
While normalized SVD-DS underperforms compared to normalized DS-CI, it offers superior time
and memory efficiency.

\section*{Acknowledgments}
\addcontentsline{toc}{section}{Acknowledgments}
    We thank Ben Blum-Smith, Ningyuan (Teresa) Huang, and Derek Lim for helpful discussions. EL is partially supported by AFOSR FA9550-23-1-0070 and FA9550-23-1-0204. YM is funded by NSF BSF 2430292. MD is partially supported by NSF CCF 2442615 and DMS 2502377.
    SV is partially supported by NSF CCF 2212457, the NSF–Simons
Research Collaboration on the Mathematical and Scientific Foundations of Deep Learning (MoDL)
(NSF DMS 2031985), NSF CAREER 2339682, NSF BSF 2430292, and ONR N00014-22-1-2126.
\bibliographystyle{plain}
\bibliography{ref.bib}

\appendix
\changelocaltocdepth{1}
\section{Notation}\label{appen:notation}

We use $\RR$ and $\NN$ to denote the sets of real numbers and natural numbers. We use the pair $(\ind, \leqf)$ to denote a directed poset on natural numbers, used for indexing. We write $[n]$ to denote the set $\{1,2,\dots,n\}$. The symbols $\mscr V$ and $\mscr U$ are used for consistent sequences, and $\vct V_n$, $\vct U_n$ for the corresponding vector spaces at finite levels. General groups are denoted by $\msf G_n$; we use $\msf S_n$ for the symmetric group and $\msf O(k)$ for the orthogonal group. We denote embeddings of vector spaces from dimension $n$ to $N$ by $\varphinn{n}{N}$ and $\psinn{n}{N}$. Embeddings of groups are denoted by $\thetann{n}{N}$.  
We use ``$\circ$'' to denote the composition of functions, and $\mathrm{id}$ for the identity element in a group or the identity map, depending on the context. Binary operations in groups are denoted by ``$*$'', with subscripts such as ``$*_n$'' used when clarity is needed, e.g., to indicate the operation in the group $\msf G_n$. Group actions are denoted by ``$\cdot$'', with subscripts such as ``$\cdot_n$'' when needed.

We write $\sim$ to denote equivalence relations, and use $[x]$ to denote the equivalence class of $x$, i.e., $[x] = \{ y : y \sim x \}$. We write $\vee$ and $\wedge$ for maximum and minimum respectively. In a metric space, we use $B(x_0,r)$ to denote the ball centered at $x_0$ with radius $r$, i.e. $B(x_0,r)\coloneq \{x:d(x_0,x)<r\}$. Given a set $S$, the symbol $\overline{S}$ denotes either completion or closure, depending on the context. Given a function $f,$ the symbol $\overline{f}$ denotes a normalized variant of that function. Given two sequences $(a_n),(b_n)\subseteq\RR$, we write $a_n\lesssim b_n$ if there exists a constant $C>0$ such that $a_n\leq Cb_n$ for all $n\in\NN$.
For a given function $R:\NN\to \RR_+$, we say a sequence $(x_n)$ converges to $x$ at rate $R(n)$ with respect to distance metric $d$ if $d(x_n, x) \lesssim R(n)$ and $R(n) \to 0$. For a matrix $X$, we use $X_{i:}$ to denote the vector formed by its $i$-th row.
This work employs several norms: $\|\cdot\|_p$ refers to the standard $\ell_p$-norm (for vectors and infinite sequences) or $L^p$-norm (for functions), while $\|\cdot\|_{\overline p}$ denotes their normalized counterparts which will be defined later.
The operator norm of a linear map $T:\vct V\to \vct W$ between two normed spaces is denoted $\|T\|_{\mathrm{op}} \coloneq \sup_{\|v\|=1}\|Tv\|$; its definition inherently depends on the specific norms chosen for $\vct V$ and $\vct W$.
Particularly, $\|\cdot\|_{\mathrm{op},p,q}$ denotes the operator norm when the domain is equipped with the $\ell_p$-norm (or $L^p$-norm) and the codomain with the $\ell_q$-norm (or $L^q$-norm), we use $\|\cdot\|_{\mathrm{op},p}$ as an abbreviation for $\|\cdot\|_{\mathrm{op},p,p}$.

\if\isArxiv 0
\section{Related work} 

\fi
\section{Consistent sequences and compatible, transferable maps: details and missing proofs from Section~\ref{sec:relating_problems}}
\label{appen:flags_consistent_sequences}
\subsection{Consistent sequences and limit space}\label{appen:consistent_sequences}
The concept of consistent sequences originated in the theory of representation stability~\cite{CHURCH2013250}. We generalize this notion, originally considering a sequence of vector spaces, by allowing indexing over any directed poset on the natural numbers. Although we focus on natural numbers in this work, our theory generalizes to any directed poset.
\paragraph{Directed poset indexing.}

We require the indexing set $(\ind, \leqf)$ to be a \emph{directed poset} on natural numbers, meaning
that $\ind$ is the set of natural numbers equipped with a binary operation $\leqf$ satisfying:
\begin{itemize}[font={\textit}, align=left, leftmargin=*]
    \item[(Partial order)] The binary operation $\leqf$ is a partial order; that is, it satisfies:
    reflexivity ($a \leqf a$ for all $a \in \ind$), transitivity (if $a \leqf b$ and $b \leqf c$, then $a \leqf c$), and antisymmetry (if $a \leqf b$ and $b \leqf a$, then $a = b$).

    \item[(Upper bound condition)] For every pair $a, b \in \ind$,
        there exists $c \in \ind$ such that $a \leqf c$ and $b \leqf c$.
\end{itemize}

The directed poset indexing generalizes the notion of sequences to allow a more complex and flexible way of defining how smaller problem instances are embedded into larger ones, permitting ``branching'' directions of growth. We will see that the upper bound condition is crucial, as it ensures that any two problem instances are comparable---meaning they can both be embedded into a third, larger problem dimension and compared there. In this work, we only consider two cases: the natural numbers with the standard ordering $\leq$, and with the divisibility ordering, where $a \leqf b$ if and only if $a \mid b$. %

\begin{definition}[Consistent sequence: detailed
version of Definition~\ref{def:consistent_sequence}]
    A \emph{consistent sequence} of group representations over directed poset $(\ind,\leqf)$ is
    $\mscr V = \{(\vct V_{n})_{n\in\ind},(\varphinn{n}{N})_{n\leqf N},(\msf G_{n})_{n\in\ind}\}$, where
    \begin{enumerate}[leftmargin=0.8cm]
        \item (\emph{Groups}) $(\msf G_n)$ is a sequence of groups indexed by $\ind$ such that whenever $n\leqf N$, $\msf G_{n}$ is embedded into $\msf G_{N}$ via an injective group homomorphism $
        \thetann{n}{N}\colon \msf G_{n}\to \msf G_{N}$, where 
    \[
        \thetann{i}{i}= \mathrm{id}_{\msf G_i}\quad \text{for all $i\in\ind$},
    \]
    \[
        \thetann{j}{k}\circ \thetann{i}{j}= \thetann{i}{k}\quad \text{whenever
        $i \leqf j \leqf k$ in $\ind$}.
    \]
        \item (\emph{Vector spaces}) $(\vct V_n)$ is a sequence of (finite-dimensional,
            real) vector spaces indexed by $\ind$, such that each $\vct V_n$ is a $\msf G_n$-representation, and whenever $n\leqf N$, $\vct V_{n}$ is embedded into $\vct V_{N}$ through a linear embedding
    $\varphinn{n}{N}\colon \vct V_{n}\hookrightarrow \vct V_{N}$, where
     \[
        \varphinn{i}{i}= \mathrm{id}_{\vct{V}_i}\quad \text{for all }i\in \ind,
    \]
    \[
        \varphinn{j}{k}\circ \varphinn{i}{j}= \varphinn{i}{k}\quad \text{whenever
        $i \leqf j \leqf k$ in $\ind$}.
    \]
        \item (\emph{Equivariance}) Every $\varphinn{n}{N}$ is $\msf G_n$-equivariant, i.e.,
            \[\varphinn{n}{N}(g \cdot v) = \thetann{n}{N}(g)\cdot \varphinn{n}{N}(v)
            \text{ for all $g\in \msf G_{n}, v\in \vct V_{n}$}.\]
    \end{enumerate}
\end{definition}

Given a consistent sequence, we can first ``summarize'' the sequence of groups $(\msf G_n)$ into a single limit group $\Ginf$, and likewise ``summarize'' the sequence of vector spaces $(\vct V_n)$ into a single limit vector space $\Vinf$. We can then consider the action of $\Ginf$ on $\Vinf$.

\begin{definition}[Limit group: detailed version of Definition~\ref{def:limit_space}]
    \label{appen:def:groups} 
    The \emph{limit group} $\Ginf$ is defined as the disjoint union $\bigsqcup_n \msf G_n$ modulo the equivalence relation
    that identifies each element $g \in \msf G_n$ with its images under the transition maps $\thetann{n}{N}(g)$ for all $N \geq n$, i.e.,
    \[
        \Ginf \coloneqq \bigsqcup_n \msf G_n / \sim,
    \]
    where $g \sim \thetann{n}{N}(g)$ whenever $n \leqf N$ and $g \in \msf G_n$. This construction is also known as the \emph{direct limit} of groups, and is denoted by $\Ginf = \varinjlim \msf G_n$. For each $g \in \msf G_n$, its equivalence class in $\Ginf$ is denoted by $[g]$, representing the corresponding limiting object.
\end{definition}

The group structure on $\Ginf$ is inherited from the groups $(\msf G_n)$ as follows. For $g_n \in \msf G_n$ and $g_m \in \msf G_m$, define the binary operation on equivalence classes by
    \[
        [g_n] * [g_m] \coloneqq \left[\thetann{n}{N}(g_n) *_{N} \thetann{m}{N}(g_m)\right],
    \]
    where $N \in \ind$ is a common upper bound of $n$ and $m$ in $(\ind,\leqf)$, and $*_{N}$ denotes the group operation in $\msf G_N$. It is straightforward to check that this operation is well-defined.
    
\begin{definition}[Limit space of consistent sequence: detailed version of Definition~\ref{def:limit_space}]
    Define $\Vinf$ as the disjoint union
    $\bigsqcup\vct V_{n}$ modulo an equivalence relation identifying each element 
    $v\in\vct V_{n}$ with its images under the transition map $\varphinn{n}{N}(v)$ for all
    $n\leqf N$, i.e.,
    \[
        \Vinf\coloneq \bigsqcup_{n}\vct V_{n}/ \sim
        ,
    \]
    where $v \sim \varphinn{n}{N}(v)$ whenever $n\leqf N$. 
    This construction is also known as the \emph{direct limit} of vector spaces, and is denoted as $\Vinf= \varinjlim \vct V_{n}$.
    
\end{definition}

The vector space structure on $\Vinf$ is inherited from the vector spaces $(\vct V_n)$ as follows. For
    $v_{n}\in \vct V_{n}, v_{m}\in\vct V_{m}$, the addition and scalar multiplication
    on the equivalent classes are defined by
    \begin{align*}
        [v_{n}] + [v_{m}] &\coloneq [\varphinn{n}{N}(v_{n}) + \varphinn{m}{N}(v_{m})]
        \quad \text{ where $N$ is an upper bound for $n,m$ in $(\ind,\leqf)$},\\
        \lambda [v_{n}] &\coloneq [\lambda v_{n}].
    \end{align*}
    It is straightforward to check that these operations are well-defined.
The limit group $\msf G_{\infty}$ acts on $\Vinf$ by
    \[
        [g]\cdot [v] \coloneq [\thetann{n}{N}(g)\cdot_{N}\varphinn{m}{N}(v)]\text{ for $g\in\msf G_n, v\in\vct V_m$},
    \]
    where $N$ is an upper bound of $n,m$ in $(\ind,\leqf)$, and $\cdot_{N}$ is the
    group action of $\msf G_{N}$ on $\vct V_{N}$. It is also easy to check that this group action is well-defined, and $\Vinf$ is a $\msf G_{\infty}$-representation. The orbit space of $\Vinf$ under the action of $\Ginf$ is
\[
    \left\{ \Ginf \cdot x : x \in \Vinf \right\},
\]
where 
$
    \Ginf \cdot x \coloneqq \{ g \cdot x : g \in \Ginf \}
$ is the orbit of point $x \in \Vinf$ under the $\Ginf$-action.
The orbits form a partition of $\Vinf$ into disjoint subsets.
    
\paragraph{Consistent sequences without symmetries.}
A special case of consistent sequences arises when $\msf G_n = \{\mathrm{id}\}$, the trivial group, for all $n \in \ind$. In this case, the structure reduces to a directed system of vector spaces 
\[
\mscr V = \left\{ (\vct V_n)_{n \in \ind},\ (\varphinn{n}{N})_{n \leqf N} \right\}.
\]
Hence, our theory of size generalization applies in scenarios where no intrinsic symmetries are present.

\paragraph{Trivial consistent sequence.}
Another special case arises when $\vct{V}_n = \vct{V}$, a fixed vector space, for all $n \in \ind$, with embeddings given by $\varphinn{n}{N} = \mathrm{id}_{\vct{V}}$ for all $n \leqf N$. This yields the \emph{trivial consistent sequence} associated with $\vct{V}$, which we denote by $\mscr{V}_{\vct{V}}$. This construction is useful for modelling non-size-dependent spaces, such as the output space in graph-level classification or regression tasks.

\paragraph{Direct sum and tensor product.}
Given a consistent sequence $\mscr V = \{(\vct V_{n}), (\varphinn{n}{N}), (\msf G_{n}
)\}$, we can define its \emph{direct sum} and \emph{tensor product}. Both of them
are also consistent sequences. 
\begin{definition}
    \label{def:direct_sum} The d-th \emph{direct sum} of $\mscr V$ is defined as
    \[\mscr V^{\oplus d}\coloneq \{ (\vct V_{n}^{\oplus d}), (\varphinn{n}{N}^{\oplus
    d}), (\msf G_{n}) \},\] where $\vct V_{n}^{\oplus d}$ denotes the direct sum of
    $d$ copies of $\vct V_{n}$ and $\varphinn{n}{N}^{\oplus d}\colon \vct V_{n}^{\oplus
    d}\to \vct V_{N}^{\oplus d}$ is defined by applying $\varphinn{n}{N}$ to each component.
    The group $\msf G_{n}$ acts on $\vct V_{n}^{\oplus d}$ by simultaneously
    acting on every copy of $\vct V_{n}$, i.e.
    $g\cdot (v_{1},\dots, v_{d})\coloneq (g\cdot v_{1},\dots, g\cdot v_{d})$.
\end{definition}
\begin{definition}
    \label{def:tensor_product} The d-th \emph{tensor product} of $\mscr V$ is
    defined as
    \[\mscr V^{\otimes d}\coloneq \{ (\vct V_{n}^{\otimes d}), (\varphinn{n}{N}^{\otimes
    d}), (\msf G_{n}) \},\] where $\vct V_{n}^{\otimes d}$ denotes the $d$-fold tensor
    product of $\vct V_{n}$.
    $\varphinn{n}{N}^{\otimes d}\colon \vct V_{n}^{\otimes d}\to \vct V_{N}^{\otimes
    d}$
    is uniquely defined by
    $\varphinn{n}{N}^{\otimes d}(v_{1}\otimes \dots \otimes v_{d}) \coloneq \varphinn{n}{N}(v_{1}) \otimes \dots \varphinn{n}{N}(v_{d})$, and the group action of $\msf
    G_{n}$ on $\vct V_{n}^{\otimes d}$ is uniquely defined by $g\cdot (v_{1}\otimes
    \dots \otimes v_{d}) \coloneq (g\cdot v_{1})\otimes \dots\otimes (g\cdot v_{d}
    )$. The universal property of tensor product guarantees that
    $\varphinn{n}{N}^{\otimes d}$ and the group action mentioned are well-defined.
    Similarly, we also consider the \emph{$d$-th symmetric tensors} of $\mscr V$ as
    \[\mathrm{Sym}^{d}(\mscr V)\coloneq \{ (\mathrm{Sym}^{d}(\vct V_{n})), (\varphinn{n}{N}^{\otimes d}), (\msf G_{n})\},\] where $\mathrm{Sym}^{d}(\vct V_{n})$
    denotes the space of symmetric tensors of order $d$ defined on $\vct V_{n}$,
    i.e. the subspace of $\vct V_{n}^{\otimes d}$ invariant under the action of
    the symmetric group $\msf S_{d}$.
\end{definition}
The direct sum, $\mscr V^{\oplus d}$, is particularly useful for incorporating
hidden channels into our analysis, as it effectively adds the extra channel dimensions
to our data. In contrast, the tensor product,$\mscr V^{\otimes d}$, is helpful
to extend a consistent sequence on vectors or sets to higher-order objects
such as matrices or graphs. For example, the duplication consistent sequences
for graphs exactly arise as the 2nd symmetric tensors of the duplication
consistent sequences for sets.

\subsection{Compatible maps}
\label{app:prop:compatibility}
Recall from Definition~\ref{def:extension_to_limit} that a sequence of maps $(f_n \colon \vct V_n \to \vct U_n)$ is \emph{compatible} with respect to the consistent sequences $\mscr V,\mscr U$ if, for all $n \leqf N$, 
\[
f_N \circ \varphinn{n}{N} = \psinn{n}{N} \circ f_n,
\]
and each $f_n$ is $\msf G_n$-equivariant. This condition is equivalent to the existence of an extension to the limit map $f_\infty$.
\begin{proposition}[Compatible maps and extension to limit]
\label{prop:compatibility}
Let $\mscr{V} = \{(\vct{V}_n), (\varphinn{n}{N}), (\mathsf{G}_n)\}$ and
$\mscr{U} = \{(\vct{U}_n), (\psinn{n}{N}), (\mathsf{G}_n)\}$ be two consistent sequences.
A sequence of maps $(f_n \colon \vct{V}_n \to \vct{U}_n)$ is compatible
if and only if it extends to the limit; that is, there exists a $\mathsf{G}_\infty$-equivariant map
\[
f_\infty \colon \vct{V}_\infty \to \vct{U}_\infty
\]
such that $f_n = f_\infty \mid_{\vct{V}_n}$ for all $n$.
\end{proposition}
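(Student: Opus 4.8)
The plan is to recognize this as the universal property characterizing the direct limit and to prove the two implications separately; building $f_\infty$ out of a compatible sequence is the substantive direction, so I would dispatch the converse first as a warm-up. Throughout, write $\iota_n\colon\vct V_n\to\Vinf$, $v\mapsto[v]$, and $\jmath_n\colon\vct U_n\to\Uinf$, $u\mapsto[u]$, for the canonical maps; these are injective because the $\varphinn{n}{N}$ and $\psinn{n}{N}$ are embeddings, and by the definition of $\Vinf$ one has $\iota_N\circ\varphinn{n}{N}=\iota_n$ whenever $n\leqf N$ (and likewise for $\jmath$).

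For the ``if'' direction, suppose a $\Ginf$-equivariant $f_\infty\colon\Vinf\to\Uinf$ is given with $f_n=f_\infty|_{\vct V_n}$, which I read as $\jmath_n\circ f_n=f_\infty\circ\iota_n$ for all $n$. Then for $n\leqf N$,
\[
\jmath_N\circ\psinn{n}{N}\circ f_n=\jmath_n\circ f_n=f_\infty\circ\iota_n=f_\infty\circ\iota_N\circ\varphinn{n}{N}=\jmath_N\circ f_N\circ\varphinn{n}{N},
\]
and cancelling the injection $\jmath_N$ yields the compatibility square $f_N\circ\varphinn{n}{N}=\psinn{n}{N}\circ f_n$. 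For $\msf G_n$-equivariance of $f_n$, fix $g\in\msf G_n$; using $\iota_n(g\cdot v)=[g]\cdot\iota_n(v)$, the $\Ginf$-equivariance of $f_\infty$, and the analogous identity for $\jmath_n$, the same cancellation of $\jmath_n$ gives $f_n(g\cdot v)=g\cdot f_n(v)$.

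For the ``only if'' direction, given a compatible $(f_n)$ I would set $f_\infty([v])\coloneq[f_n(v)]$ for $v\in\vct V_n$; since $\Vinf=\bigcup_n\vct V_n$ this defines $f_\infty$ on all of $\Vinf$, so the only real issue is well-definedness. The key auxiliary fact is that, over a directed poset, $[v]=[v']$ for $v\in\vct V_n$, $v'\in\vct V_m$ holds if and only if there exists a common upper bound $N$ of $n,m$ in $(\ind,\leqf)$ with $\varphinn{n}{N}(v)=\varphinn{m}{N}(v')$: one implication is immediate from the generating relations $v\sim\varphinn{n}{N}(v)$, and the other I would prove by induction on the length of a zig-zag chain realizing the equivalence, repeatedly using the upper-bound axiom (together with the functoriality $\varphinn{j}{k}\circ\varphinn{i}{j}=\varphinn{i}{k}$) to lift all intermediate terms to a single index. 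Granting this, if $[v]=[v']$ choose such an $N$ and apply compatibility at $n\leqf N$ and $m\leqf N$:
\[
\psinn{n}{N}(f_n(v))=f_N(\varphinn{n}{N}(v))=f_N(\varphinn{m}{N}(v'))=\psinn{m}{N}(f_m(v')),
\]
so $[f_n(v)]=[f_m(v')]$ in $\Uinf$, proving $f_\infty$ well-defined. By construction $f_\infty\circ\iota_n=\jmath_n\circ f_n$, i.e.\ $f_n=f_\infty|_{\vct V_n}$, and $\Ginf$-equivariance of $f_\infty$ follows from the same ``lift to a common upper bound'' device together with the $\msf G_n$-equivariance of each $f_n$ and the compatibility of the $\msf G_n$-actions with $\varphinn{n}{N}$ and $\psinn{n}{N}$.

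The main obstacle is precisely the well-definedness of $f_\infty$, and inside it the lemma that any identification in the direct limit is already witnessed at a single finite level --- this is the one place where directedness of the indexing poset is genuinely needed, and it fails without that hypothesis. The remaining verifications are routine diagram-chasing with the injective canonical maps $\iota_n,\jmath_n$ and the vector-space and $\Ginf$-action structures on $\Vinf,\Uinf$ established in the preceding subsection.
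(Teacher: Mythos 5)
Your proof is correct and follows essentially the same route as the paper's: for the forward direction you define $f_\infty([v])\coloneqq[f_n(v)]$, check well-definedness via the compatibility relation, and verify equivariance by lifting to a common upper bound; the backward direction is the same diagram-chase. The one difference is that you make explicit the lemma (that two elements of $\Vinf$ are equal iff they agree at some common level $N$, which is where directedness is used) that the paper compresses into the single sentence ``Compatibility ensures this is well-defined'' — a useful elaboration, though the cleanest way to establish that lemma is to observe directly that the relation ``$\exists N$ with $\varphinn{n}{N}(v)=\varphinn{m}{N}(v')$'' is already transitive (hence an equivalence relation) by directedness, rather than inducting on zig-zag length.
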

\begin{proof}
\noindent(\emph{$\Leftarrow$})  
Suppose there exists a $\mathsf{G}_\infty$-equivariant map $f_\infty$ such that 
$f_n = f_\infty|_{\vct{V}_n}$ for all $n$. Then for all $n \leqf N$ and $x \in \vct{V}_n$,
\[
    [f_n(x)] = f_\infty([x]) = f_\infty([\varphinn{n}{N}(x)]) = [f_N(\varphinn{n}{N}(x))],
\]
which implies $f_N \circ \varphinn{n}{N} = \psinn{n}{N} \circ f_n$.  
Moreover, for all $n \in \ind$, $x \in \vct{V}_n$, and $g \in \mathsf{G}_n$,
\[
    [f_n(g \cdot x)] = f_\infty([g \cdot x]) = f_\infty([g] \cdot [x]) = [g] \cdot f_\infty([x]) 
    = [g] \cdot [f_n(x)] = [g \cdot f_n(x)],
\]
so each $f_n$ is $\mathsf{G}_n$-equivariant.

\noindent(\emph{$\Rightarrow$})  
Conversely, suppose $(f_n)$ are compatible. Define 
\[
    f_\infty \colon \vct{V}_\infty \to \vct{W}_\infty, \quad f_\infty([x]) \coloneqq [f_n(x)] \quad \text{if } x \in \vct{V}_n.
\]
Compatibility ensures this is well-defined.  
To verify equivariance, let $g \in \mathsf{G}_m$ and $x \in \vct{V}_n$, and let $N$ be a common upper bound of $n$ and $m$ in $(\ind,\leqf)$. Then,
\[
    \begin{aligned}
        f_\infty([g] \cdot [x]) 
        &= f_\infty([\thetann{m}{N}(g) \cdot \varphinn{n}{N}(x)]) \\
        &= [f_N(\thetann{m}{N}(g) \cdot \varphinn{n}{N}(x))] \\
        &= [\thetann{m}{N}(g) \cdot f_N(\varphinn{n}{N}(x))] \\
        &= [\thetann{m}{N}(g) \cdot \psinn{n}{N}(f_n(x))] \\
        &= [g] \cdot f_\infty([x]).
    \end{aligned}
\]
Therefore, $f_\infty$ is $\mathsf{G}_\infty$-equivariant.
\end{proof}

This proposition implies that learning a function on the infinite-dimensional space $\vct{V}_\infty$, a task that may appear difficult, reduces to learning a compatible sequence of functions on the finite-dimensional vector spaces along the sequence, which is a more tractable problem.
\subsection{Metrics on consistent sequences} \label{appen:metrics_cs}
In Section~\ref{subsec:norms}, we introduced a norm on $\vct V_\infty$ so as to define distance between objects of different dimensions. In this appendix, we take a more general perspective and examine metric structures on consistent sequences, and present detailed proofs. The same proofs carry over to the norm setting with minimal modification.
\begin{definition}[Compatible metrics: generalized version of Definition~\ref{def:norms}]
\label{def:metrics}
Let $\mscr{V} = \{(\vct{V}_n), (\varphinn{n}{N}), (\mathsf{G}_n)\}$ be a consistent sequence.  A sequence of metrics $(d_n)$ on the vector spaces $\vct{V}_n$ is said to be \emph{compatible} if all the embeddings $\varphinn{n}{N}$ and the $\mathsf{G}_n$-actions are isometries. That is, for all $n \leqf N$, $x, y \in \vct{V}_n$, and $g \in \mathsf{G}_n$, we have:
\[
    d_N(\varphinn{n}{N}(x), \varphinn{n}{N}(y)) = d_n(x, y), \quad \text{and} \quad d_n(g \cdot x, g \cdot y) = d_n(x, y).
\]
\end{definition}
\noindent Similar to compatible maps, this is equivalent to the existence of an extension to a metric $d_\infty$ on $\vct V_\infty$.

\begin{proposition}[Compatible metrics and extension to the limit]
\label{prop:metrics_compatibility}
A sequence of metrics $(d_n)$ on the spaces $\vct{V}_n$ is compatible if and only if it extends to a metric on the limit space. That is, there exists a metric $d_\infty$ on $\vct{V}_\infty$ such that
\[
    d_n(x, y) = d_\infty([x], [y]) \quad \text{for all } n \in \ind, \, x, y \in \vct{V}_n,
\]
and the $\mathsf{G}_\infty$-action on $\vct{V}_\infty$ is an isometry with respect to $d_\infty$, i.e.,
\[
    d_\infty(g \cdot x, g \cdot y) = d_\infty(x, y) \quad \text{for all } x, y \in \vct{V}_\infty, \, g \in \mathsf{G}_\infty.
\]
\end{proposition}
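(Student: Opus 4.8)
\textbf{Proof plan for Proposition~\ref{prop:metrics_compatibility}.}

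The statement is an ``if and only if'' whose structure mirrors exactly that of Proposition~\ref{prop:compatibility} (compatible maps $\Leftrightarrow$ extension to the limit), so the plan is to run the same two-directional argument, now tracking metric values instead of function values.

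For the easy direction ($\Leftarrow$), I would assume $d_\infty$ exists on $\vct V_\infty$ with the stated properties. Given $n \leqf N$ and $x,y \in \vct V_n$, the key observation is that $[\varphinn{n}{N}(x)] = [x]$ and $[\varphinn{n}{N}(y)] = [y]$ in $\vct V_\infty$ by the definition of the equivalence relation; hence $d_N(\varphinn{n}{N}(x),\varphinn{n}{N}(y)) = d_\infty([\varphinn{n}{N}(x)],[\varphinn{n}{N}(y)]) = d_\infty([x],[y]) = d_n(x,y)$, giving that $\varphinn{n}{N}$ is an isometry. Similarly, for $g \in \msf G_n$, using $[g]\cdot[x] = [g\cdot x]$ and the $\Ginf$-isometry hypothesis, $d_n(g\cdot x, g\cdot y) = d_\infty([g]\cdot[x],[g]\cdot[y]) = d_\infty([x],[y]) = d_n(x,y)$, so the $\msf G_n$-action is an isometry.

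For the main direction ($\Rightarrow$), I would \emph{define} $d_\infty([x],[y])$ for $[x],[y] \in \vct V_\infty$ by picking a common upper bound $N$ of the levels at which $x$ and $y$ are represented (this exists by the directed-poset condition), embedding both into $\vct V_N$ via the respective $\varphi$ maps, and setting $d_\infty([x],[y]) \coloneq d_N(\varphinn{n}{N}(x),\varphinn{m}{N}(y))$. The bulk of the work is checking this is well-defined: if $N'$ is another common upper bound, pick a further upper bound $M$ of $N$ and $N'$, and use the cocycle identity $\varphinn{N}{M}\circ\varphinn{n}{N} = \varphinn{n}{M}$ together with the isometry hypothesis on $\varphinn{N}{M}$ (and $\varphinn{N'}{M}$) to conclude both choices give $d_M(\varphinn{n}{M}(x),\varphinn{m}{M}(y))$. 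Independence of the chosen representatives $x \in \vct V_n$ of the class $[x]$ follows by the same embed-and-compare trick. Then I would verify the metric axioms: non-negativity, symmetry, and the triangle inequality all transfer from $d_N$ after embedding the three points into a single common upper-bound space; $d_\infty([x],[y]) = 0 \iff [x] = [y]$ uses that $d_N$ is a genuine metric and that equality in $\vct V_N$ of the embedded representatives is precisely what $[x]=[y]$ means (injectivity of the embeddings is automatic since they are linear isometries, hence injective). Finally, the relation $d_n(x,y) = d_\infty([x],[y])$ holds by taking $N = n$ in the definition, and $\Ginf$-invariance of $d_\infty$ follows by unwinding the definition $[g]\cdot[x] = [\thetann{m}{N}(g)\cdot\varphinn{n}{N}(x)]$, choosing a common upper bound large enough to contain $g,x,y$ simultaneously, and invoking the $\msf G_N$-isometry hypothesis.

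The main obstacle — really the only nontrivial point — is the well-definedness of $d_\infty$, i.e., that the value does not depend on which common upper bound $N$ or which representatives one picks; everything else is bookkeeping with the isometry hypotheses and the cocycle identities $\varphinn{j}{k}\circ\varphinn{i}{j} = \varphinn{i}{k}$. Since the excerpt explicitly says ``The same proofs carry over to the norm setting with minimal modification,'' I would also remark that replacing $d_n(x,y)$ by $\|x-y\|_{\vct V_n}$ throughout yields the norm version (Definition~\ref{def:norms}), using linearity of the embeddings to see that $\|\varphinn{n}{N}(x-y)\|_{\vct V_N} = \|\varphinn{n}{N}(x) - \varphinn{n}{N}(y)\|_{\vct V_N}$.
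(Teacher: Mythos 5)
Your proposal is correct and takes essentially the same approach as the paper: the easy direction unwinds the definitions of $d_n = d_\infty|_{\vct V_n}$ and the $\Ginf$-action, while the converse defines $d_\infty([x],[y]) \coloneq d_N(\varphinn{n}{N}(x),\varphinn{m}{N}(y))$ for a common upper bound $N$ and checks well-definedness via the cocycle identities and the isometry hypothesis. You spell out the well-definedness and metric-axiom checks in slightly more detail than the paper (which dispatches them as straightforward), but the structure and key steps are identical.
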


\begin{proof}
The proof is primarily a matter of bookkeeping, similar in spirit to Proposition~\ref{prop:compatibility}. For completeness, we present the full argument below.

\noindent(\emph{$\Leftarrow$}) Suppose $(d_n)$ extends to a metric $d_\infty$ on $\Vinf$. Then for all $n \leqf N$ and $x, y \in \vct V_n$, we have
\[
    d_n(x, y) = d_\infty([x], [y]) = d_\infty([\varphinn{n}{N}(x)], [\varphinn{n}{N}(y)]) = d_N(\varphinn{n}{N}(x), \varphinn{n}{N}(y)).
\]
Thus, the embeddings $\varphinn{n}{N}$ are isometries. Moreover, for any $g \in \msf G_n$ and $x, y \in \vct V_n$,
\[
    d_n(g \cdot x, g \cdot y) = d_\infty([g \cdot x], [g \cdot y]) = d_\infty([g] \cdot [x], [g] \cdot [y]) = d_\infty([x], [y]) = d_n(x, y),
\]
so the group actions are isometries as well.

\noindent(\emph{$\Rightarrow$}) Conversely, suppose the collection $(d_n)$ is compatible. Define a metric $d_\infty \colon \Vinf \times \Vinf \to \RR$ as follows: for $x \in \vct V_n$ and $y \in \vct V_m$, let $N$ be any common upper bound of $n$ and $m$ in $(\ind, \leqf)$, and set
\[
    d_\infty([x], [y]) \coloneq d_N(\varphinn{n}{N}(x), \varphinn{m}{N}(y)).
\]
Compatibility of the metrics ensures that $d_\infty$ is well-defined. It is also easy to check that $d_\infty$ is a metric. Moreover, by construction, $d_n = d_\infty|_{\vct V_n}$ for all $n$.

To verify that the $\Ginf$-action is isometric, take $x \in \vct V_{n_1}$, $y \in \vct V_{n_2}$, and $g \in \msf G_n$ for some $n \in \ind$. Let $N$ be a common upper bound of $n, n_1, n_2$ in $(\ind, \leqf)$. Then,
\begin{align*}
    d_\infty([g] \cdot [x], [g] \cdot [y]) 
    &= d_N(\thetann{n}{N}(g) \cdot \varphinn{n_1}{N}(x), \thetann{n}{N}(g) \cdot \varphinn{n_2}{N}(y)) \\
    &= d_N(\varphinn{n_1}{N}(x), \varphinn{n_2}{N}(y)) \\
    &= d_\infty([x], [y]).
\end{align*}
This completes the proof.
\end{proof}

With the metric structure in place, we define the limit space via the completion of the metric space. The \emph{completion} of a metric space $M$ is a complete metric space $\overline{M}$---that is, a space in which every Cauchy sequence converges---that contains $M$ as a dense subset (i.e., the smallest closed subset of $\overline{M}$ containing $M$ is $\overline{M}$ itself). 

\begin{definition}[Limit space: detailed version of Definition~\ref{def:continuous_limit}]
Let $\mscr V$ be a consistent sequence, and let $\Vinf$ be equipped with the metric $d_\infty$. Denote by $\overline{\Vinf}$ the completion of $\Vinf$ with respect to $d_\infty$.
The $\Ginf$-action on $\Vinf$ extends to a well-defined action on $\overline{\Vinf}$ as follows: for any $x \in \overline{\Vinf}$ and $g \in \Ginf$, choose a sequence $(x_n)$ in $\Vinf$ such that $x_n \to x$ in $\overline{\Vinf}$, and define
\[
g \cdot x \coloneq \lim_{n \to \infty} g \cdot x_n.
\]
This limit exists because $(g \cdot x_n)$ is a Cauchy sequence, as the $\Ginf$-action on $\Vinf$ is isometric. The resulting action on $\overline{\Vinf}$ is linear and isometric.
We define the \emph{limit space} of the consistent sequence $\mscr V$ to be the $\Ginf$-representation $\overline{\Vinf}$.
\end{definition}

The set of orbit closures in $\overline{\Vinf}$ under the action of $\msf G_{\infty}$ is 
\[
    \left\{\overline{\Ginf \cdot x} : x \in \overline{\Vinf} \right\}.
\]
where $\overline{\Ginf \cdot x}$ is the closure of the orbit $\Ginf \cdot x$.
Intuitively, $\overline\Vinf$ includes not only elements from finite-dimensional objects (elements in $\Vinf$),  but also additional points that are ``reachable'' as limits of finite-dimensional objects. We can further define a symmetrized metric on the limit space.
\begin{proposition}[Symmetrized metric]
Let $x, y \in \overline{\Vinf}$. Define
\[
    \sdist(x, y) \coloneq \inf_{g \in \Ginf} d_{\infty}(g \cdot x, y).
\]
Then $\sdist$ is a pseudometric on $\overline{\Vinf}$ and induces a metric on the space of orbit closures in $\overline{\Vinf}$ under the $\Ginf$-action. We refer to $\sdist$ as the \emph{symmetrized metric}.
\end{proposition}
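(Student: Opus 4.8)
The plan is to verify the pseudometric axioms directly from the definition, relying only on two structural inputs: that $d_\infty$ is a genuine metric on $\overline{\Vinf}$, and that each element of $\Ginf$ acts on $\overline{\Vinf}$ as an isometry (as recorded in the definition of the limit space), together with the fact that $\Ginf$ is a group, so inverses are available. None of this uses the concrete construction of $\overline{\Vinf}$ as a completion, so the argument is purely formal bookkeeping in the style of Proposition~\ref{prop:metrics_compatibility}.

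First I would record non-negativity (immediate from $d_\infty\ge 0$) and $\sdist(x,x)=0$ (take $g=\id$). For symmetry, I would use that the action is isometric: $d_\infty(g\cdot x,y)=d_\infty(x,g^{-1}\cdot y)=d_\infty(g^{-1}\cdot y,x)$, and since $g\mapsto g^{-1}$ is a bijection of $\Ginf$, taking the infimum over $g$ gives $\sdist(x,y)=\sdist(y,x)$. For the triangle inequality, I would fix $g,h\in\Ginf$ and combine the ordinary triangle inequality for $d_\infty$ with isometry of $h$:
\[
d_\infty(hg\cdot x,z)\le d_\infty(hg\cdot x,\,h\cdot y)+d_\infty(h\cdot y,z)=d_\infty(g\cdot x,y)+d_\infty(h\cdot y,z).
\]
Since $hg\in\Ginf$, the left-hand side is at least $\sdist(x,z)$; taking the infimum over $g$ and then over $h$ on the right yields $\sdist(x,z)\le\sdist(x,y)+\sdist(y,z)$. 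This shows $\sdist$ is a pseudometric on $\overline{\Vinf}$.

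Next I would show $\sdist$ descends to a metric on the set of orbit closures. The crux is the characterization $\sdist(x,y)=0\iff\overline{\Ginf\cdot x}=\overline{\Ginf\cdot y}$. For the forward direction, $\sdist(x,y)=0$ produces $g_n\in\Ginf$ with $g_n\cdot x\to y$, so $y\in\overline{\Ginf\cdot x}$; applying each fixed $h\in\Ginf$ and using continuity (isometry) of $h\cdot(-)$ gives $h\cdot y=\lim_n (hg_n)\cdot x\in\overline{\Ginf\cdot x}$, hence $\overline{\Ginf\cdot y}\subseteq\overline{\Ginf\cdot x}$; by symmetry of $\sdist$ the reverse inclusion holds too, so the orbit closures agree. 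The converse is immediate: equal orbit closures give $x\in\overline{\Ginf\cdot y}$, hence a sequence $g_n\cdot y\to x$ and $\sdist(x,y)=\sdist(y,x)=0$. Well-definedness on orbit closures then follows from the triangle inequality: if $x,x'$ (resp.\ $y,y'$) share an orbit closure, then $\sdist(x,x')=\sdist(y,y')=0$, so $\sdist(x,y)=\sdist(x',y')$. Combined with symmetry, the triangle inequality, and the separation property just established, this makes $\sdist$ a genuine metric on the space of orbit closures.

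I do not expect a real obstacle, since every step is a short manipulation; the one place that needs care is the triangle inequality, where the two group elements must be composed in the order $hg$ (not $gh$) so that the isometry invoked is that of $h$ applied to the pair $(g\cdot x,\,y)$ — the wrong order leaves an untranslated term. A second minor point is that in the orbit-closure step one should push $\Ginf$-invariance through the closure using continuity of the action by each \emph{fixed} group element, rather than appealing to any joint continuity of the action map.
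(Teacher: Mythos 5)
Your proof is correct and follows essentially the same approach as the paper: verify the pseudometric axioms by direct manipulation, using that each $g \in \Ginf$ acts isometrically, and characterize vanishing of $\sdist$ via orbit closures. The only presentational differences are that the paper proves the triangle inequality by choosing near-optimal $g,h$ to within $\epsilon$ and letting $\epsilon\to 0$, whereas you take infima over $g$ and $h$ in two stages (both work, yours is marginally cleaner); and you spell out more carefully than the paper does both that $\sdist(x,y)=0$ implies the orbit closures actually coincide (not merely $y\in\overline{\Ginf\cdot x}$), and that $\sdist$ is well-defined on orbit closures. These are genuine small gaps that the paper glosses over and that you rightly fill.
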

\begin{proof}
    The non-negativity of $\sdist$ follows directly from the non-negativity of $d_{\infty}$.

    \noindent (\emph{Symmetry}) Since $d_{\infty}(g \cdot x, y) = d_{\infty}(x, g^{-1} \cdot y)$ for any $g \in \msf G_{\infty}$ by isometry of the $\Ginf$-action, taking the infimum over all $g \in \msf G_{\infty}$ (equivalently over $g^{-1}$) yields
    \[
        \sdist(x, y) = \sdist(y, x).
    \]
    \noindent (\emph{Triangle inequality}) Let $\varepsilon > 0$. Then there exist $g, h \in \msf G_{\infty}$ such that
    \[
        \sdist(x, y) > d_{\infty}(g \cdot x, y) - \varepsilon, \quad
        \sdist(y, z) > d_{\infty}(h \cdot y, z) - \varepsilon.
    \]
    Using the isometry of the group action:
    \[
        \begin{aligned}
            \sdist(x, y) + \sdist(y, z) & > d_{\infty}(g \cdot x, y) + d_{\infty}(h \cdot y, z) - 2\varepsilon \\
                                        & = d_{\infty}(h g \cdot x, h \cdot y) + d_{\infty}(h \cdot y, z) - 2\varepsilon \\
                                        & \geq d_{\infty}(h g \cdot x, z) - 2\varepsilon \\
                                        & \geq \sdist(x, z) - 2\varepsilon.
        \end{aligned}
    \]
    Since this holds for arbitrary $\varepsilon > 0$, the triangle inequality follows.

    \noindent (\emph{Definiteness}) We have $\sdist(x, y) = 0$ if and only if there exists a sequence $(g_n) \subseteq \msf G_{\infty}$ such that $d_{\infty}(g_n \cdot x, y) \to 0$. This is precisely the condition that $y \in \overline{\msf G_{\infty} \cdot x}$, i.e., $x$ and $y$ lie in the same orbit closure.

    Therefore, $\sdist$ is a pseudometric on $\overline{\Vinf}$, and descends to a true metric on the space of orbit closures which completes the proof.
\end{proof}

\section{Transferability: details and missing proofs from Section~\ref{sec:transferability}}
\subsection{Transferable maps}
\label{app:prop:cont_ext}
Following Definition~\ref{def:continuous_extension} of continuously, $L$-Lipschitz, and $L(r)$-locally Lipschitz transferable, we further define the following notion: If $(f_n)$ is a sequence of equivariant maps extending to $f_\infty$ which is locally Lipschitz at $x_0$\footnote{We say $f_\infty$ is locally Lipschitz \emph{at} $x_0$ if there exists $r>0$ and $L>0$ such that for all $x\in B(x_0,r)$, $d_\infty(f_\infty(x),f_\infty(x_0))\leq Ld_\infty(x,x_0)$. Notice that this is slightly different from saying $f_\infty$ is locally Lipschitz \emph{around} $x_0$, which means that there exists $r>0$ and $L>0$ such that $f_\infty$ is $L$-Lipschitz on $B(x_0, r)$.}, we say $(f_n)$ is \emph{locally Lipschitz transferable at $x_0$}. 
Being locally Lipschitz transferable at $x_0$ is a weaker condition than being $L(r)$-locally Lipschitz transferable, which is itself weaker than $L$-Lipschitz transferable. 
This definition is useful when studying models which are discontinuous on negligible sets of inputs, and which are therefore not $L(r)$-locally Lipschitz transferable. These often come up when constructing architectures based on canonicalizations~\cite{contin_canon}, as commonly done for point clouds for instance (see Section~\ref{sec:point_clouds}).

We first show a useful property that continuity/Lipschitz with respect to $\|\cdot \|_{\Vinf}, \|\cdot\|_{\Uinf}$ implies the same property
with respect to the symmetrized metrics, even though the converse does not hold. We will again state and prove the results for metrics instead.

\begin{proposition}[Continuity in $d_\infty$ implies continuity in $\sdist$]
    \label{prop:continuity_in_delta} 
    Let $\mscr V, \mscr U$ be consistent sequences. 
    If $f_{\infty}:\overline{\Vinf} \to \overline{\Uinf}$ is continuous (respectively, $L(r)$-Lipschitz on $B(0,r)$ for all $r>0$, $L$-Lipschitz, locally Lipschitz at $x_0\in\overline\Vinf$) with respect to 
    $d_{\infty}^{\vct V}$ and $d_{\infty}^{\vct U}$, then $f_{\infty}$ satisfies the same property with respect to 
    the symmetrized metrics $\sdistU{\vct V}$ and $\sdistU{\vct U}$. 
\end{proposition}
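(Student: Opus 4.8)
The plan is to reduce the statement to two structural facts already in hand. Since every transferable sequence is in particular compatible, Proposition~\ref{prop:compatibility} tells us that the limit map $f_\infty$ is $\Ginf$-equivariant; and since the norms (metrics) are compatible, $\Ginf$ acts on $\overline{\Vinf}$ and $\overline{\Uinf}$ by linear isometries, so in particular each $g\in\Ginf$ fixes the origin. The third, trivial ingredient is the bound $\sdist(\cdot,\cdot)\le d_\infty(\cdot,\cdot)$, obtained by taking $g=\mathrm{id}$ in the infimum defining $\sdist$. Together these let us transport any modulus-of-continuity estimate from $d_\infty$ to $\sdist$.

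First I would record the key inequality: for all $x,y\in\overline{\Vinf}$ and every $g\in\Ginf$, using equivariance $f_\infty(g\cdot x)=g\cdot f_\infty(x)$ and the fact that $g$ acts isometrically on $\overline{\Uinf}$,
\[
\sdistU{\vct U}\bigl(f_\infty(x),f_\infty(y)\bigr)\;\le\;d_\infty^{\vct U}\bigl(g\cdot f_\infty(x),\,f_\infty(y)\bigr)\;=\;d_\infty^{\vct U}\bigl(f_\infty(g\cdot x),\,f_\infty(y)\bigr).
\]
If $f_\infty$ is $L$-Lipschitz in $d_\infty$, the right-hand side is at most $L\,d_\infty^{\vct V}(g\cdot x,y)$; minimizing over $g\in\Ginf$ and using $\sdistU{\vct V}(x,y)=\inf_g d_\infty^{\vct V}(g\cdot x,y)$ yields $\sdistU{\vct U}(f_\infty(x),f_\infty(y))\le L\,\sdistU{\vct V}(x,y)$. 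For plain continuity: given $\epsilon>0$, pick $\delta>0$ witnessing continuity of $f_\infty$ at $y$ in $d_\infty$; whenever $\sdistU{\vct V}(x,y)<\delta$, choose $g$ with $d_\infty^{\vct V}(g\cdot x,y)<\delta$, and the displayed inequality gives $\sdistU{\vct U}(f_\infty(x),f_\infty(y))<\epsilon$.

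For the ball version I would use that every $g\in\Ginf$ fixes $0$, so $\sdist(0,v)=\inf_g d_\infty(g\cdot 0,v)=d_\infty(0,v)$; hence the $\sdist$-ball and the $d_\infty$-ball of radius $r$ about the origin coincide. If $x,y$ both lie in $B(0,r)$, then so do $g\cdot x$ and $y$ for every $g$ (the action is an isometry fixing $0$), so the $L(r)$-Lipschitz estimate on $B(0,r)$ applies to each pair $(g\cdot x,y)$, and minimizing over $g$ as above gives the claim. The ``locally Lipschitz at $x_0$'' case is essentially the same with $B(x_0,r)$ in place of $B(0,r)$: since the infimum defining $\sdist$ need not be attained, one works with a near-optimal $g$, and because $\sdistU{\vct V}(x,x_0)<r$ one can choose $g$ with $d_\infty^{\vct V}(g\cdot x,x_0)<r$ and with slack tending to $0$, apply the local estimate, and let the slack vanish.

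I do not anticipate a genuine obstacle here: the whole argument is bookkeeping around equivariance and isometry. The only two points requiring mild care are that the infimum in the definition of $\sdist$ may fail to be attained---so one argues with approximating group elements rather than an optimal one---and that the radius constraint in the ball cases survives passing to the group action, which holds because the origin is a common fixed point of $\Ginf$ in the $B(0,r)$ case and because the $\sdist$-radius is strictly below $r$ in the $B(x_0,r)$ case.
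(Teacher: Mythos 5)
Your proof is correct and follows essentially the same route as the paper's: exploit $\Ginf$-equivariance of $f_\infty$ and isometry of the group action to get $\sdistU{\vct U}(f_\infty(x),f_\infty(y))\le d_\infty^{\vct U}(f_\infty(g\cdot x),f_\infty(y))$, then pass to near-optimal $g$ (since the infimum in $\sdist$ need not be attained) and use that $\Ginf$ fixes the origin to preserve the $B(0,r)$ constraint. You also correctly flag the implicit equivariance hypothesis, which the paper uses without restating.
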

\begin{proof}
    \noindent(\emph{Continuity}) Let $x_{n}\to x$ in $\overline{\Vinf}$ with respect to the symmetrized
    metric $\sdistU{\vct V}$, and $\varepsilon>0$. By continuity of $f_{\infty}$ with respect
    to $d_{\infty}$, there exists $\delta>0$ such that  whenever $d_{\infty}(x,y)<\delta$,
    $d_{\infty}(f_{\infty}(x),f_{\infty}(y))<\varepsilon$. Take $N$ such that  for all
    $n\geq N$, $\sdistU{\vct V}(x_{n},x)< \frac{\delta}{2}$. Moreover, for each $n$, choose
    $g_{n}\in\msf G_{\infty}$ such that  $d_{\infty}(g_{n}\cdot x_{n}, x)\leq \sdistU{\vct V} (x
    _{n}, x) + \frac{\delta}{2}$. Then for all $n\geq N$, $d_{\infty}(g_{n}\cdot x_{n}
    ,x)< \delta$ and hence
    \[
        \sdistU{\vct U}(f_{\infty}(x_{n}), f_{\infty}(x))\leq d_{\infty}(g_{n}\cdot f_{\infty}
        (x_{n}), f_{\infty}(x)) = d_{\infty}(f_{\infty}(g_{n}\cdot x_{n}), f_{\infty}
        (x)) < \varepsilon.
    \]
    Therefore $f_{\infty}(x_{n})\to f_{\infty}(x)$ with respect to $\sdistU{\vct U}$.\\

    \noindent(\emph{$L(r)$-Lipschitz on $B(0,r)$})
    Suppose $f_\infty$ is $L(r)$-Lipschitz on $B(0,r)$ for all $r>0$ with respect to $d_\infty$. 
    Consider $x,y$ such that $\sdistU{V}(0,x)= d_\infty^{\vct V}(0,x)<r$, and $\sdistU{V}(0,y)=d_\infty^{\vct V} (0,y)<r$. Then 
    for any $g\in \Ginf$, we have $d_\infty^{\vct V}(0,g\cdot x)= d_\infty^{\vct V}(0,x)<r$. Hence,
    \[
        \sdistU{\vct U}(f_{\infty}(x), f_{\infty}(y))\leq d_{\infty}(g\cdot f_{\infty}(x),
        f_{\infty}(y)) = d_{\infty}(f_{\infty}(g\cdot x), f_{\infty}(y)) \leq L(r) d_{\infty}(g\cdot x, y).
    \]
    Take infimum over $g\in \Ginf$, get 
    $\sdistU{\vct U}(f_{\infty}(x),f_{\infty}(y)) \leq L(r) \sdistU{\vct V}(x,y)$.\\
    
    \noindent(\emph{Lipschitz})
    For any $x,y\in\Vinf,g\in\Ginf$,
    \[
        \sdistU{\vct U}(f_{\infty}(x), f_{\infty}(y))\leq d_{\infty}(g\cdot f_{\infty}(x),
        f_{\infty}(y)) = d_{\infty}(f_{\infty}(g\cdot x), f_{\infty}(y)) \leq L d_{\infty}(g\cdot x, y).
    \]
    Take infimum over $g\in\msf G_{\infty}$, get
    $\sdistU{\vct U}(f_{\infty}(x),f_{\infty}(y)) \leq L \sdistU{\vct V}(x,y)$. \\

    \noindent(\emph{Locally Lipschitz at $x_0$}) Suppose $d_{\infty}(f(x),f(x_0))\leq Ld_{\infty}(x,x_0)$ whenever $d_{\infty}(x,x_0)< r$. Then for any such $x$ we have $\sdist_{\vct V}(x,x_0) = \inf_{g\in \Ginf^{(r)}}d_{\infty}(x,x_0)$ where $\Ginf^{(r)} = \{g\in\Ginf: d_{\infty}(x,x_0)\leq r\}$. Moreover, for any $g\in\Ginf^{(r)}$ we have $\sdist_{\vct U}(f(x),f(x_0))\leq d_{\infty}(f(g\cdot x),f(x_0))\leq L d_{\infty}(g\cdot x,x)$, so after taking infimum over such $g$ we conclude that $\sdist_{\vct U}(f(x),f(x_0))\leq L\sdist_{\vct V}(x,x_0)$ whenever $\sdist_{\vct V}(x,x_0)< r$, as desired. This completes the proof of the proposition.
\end{proof}

Finally, we state and prove a set of more concrete characterizations of Lipschitz transferable sequence of functions defined in Definition~\ref{def:continuous_extension}, which are straightforward to check. 

\begin{proposition}
    \label{prop:cont_ext} Let $\mscr V, \mscr U$ be consistent sequences endowed with metrics.
    \begin{enumerate}[leftmargin=0.8cm]

        \item (\textbf{General case}) A compatible sequence of functions $(f_n:\vct V_n\to \vct U_n)$ is $L$-Lipschitz (respectively, $L(r)$-locally Lipschitz) transferable if and only if for all $n$, $f_n$ is $L$-Lipschitz (respectively, $L(r)$-Lipschitz on $B_n(0,r)\coloneq\{v\in\vct V_n: d^{\vct V}_n(0,v)< r\}$).

        \item (\textbf{Linear maps}) When the metrics are induced by norms, a compatible sequence of linear maps
            $(W_{n}\colon \vct V_{n}\to\vct U_{n})$ is continuously (respectively, $L$-Lipschitz) transferable 
            if and only if $\sup_{n}\|W_{n}\|_{\mathrm{op}}<\infty$ (respectively,  $\sup_{n}\|W_{n}\|_{\mathrm{op}}\leq L$).
    \end{enumerate}
\end{proposition}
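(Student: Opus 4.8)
The plan is to prove the two characterizations by exploiting the fact, established in Proposition~\ref{prop:compatibility}, that a compatible sequence $(f_n)$ is exactly the restriction of a well-defined map $f_\infty\colon \Vinf\to\Uinf$ to the finite levels, together with the fact (Proposition~\ref{prop:metrics_compatibility}) that the compatible metrics $d_n$ are the restrictions of a single metric $d_\infty$ on $\Vinf$. The key structural observation is that $\Vinf=\bigcup_n \varphinn{n}{\infty}(\vct V_n)$ is the union of the images of the finite-dimensional spaces, so any two points $x,y\in\Vinf$ already live inside a common $\vct V_N$, and the distance $d_\infty(x,y)$ is computed there; moreover $\Vinf$ is dense in $\overline{\Vinf}$ by construction of the completion. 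Thus a Lipschitz bound that holds uniformly on every finite level automatically holds on $\Vinf$, and then extends by density and uniform continuity to the completion $\overline{\Vinf}$.

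For part (1), the ``only if'' direction is immediate: if $(f_n)$ is $L$-Lipschitz transferable with extension $f_\infty$, then for $x,y\in\vct V_n$ we have $d_n^{\vct U}(f_n(x),f_n(y)) = d_\infty^{\vct U}(f_\infty(x),f_\infty(y)) \le L\, d_\infty^{\vct V}(x,y) = L\, d_n^{\vct V}(x,y)$ since all embeddings are isometries; the locally-Lipschitz case is identical, using that $B_n(0,r)$ sits isometrically inside $B(0,r)\subseteq\overline{\Vinf}$. For the ``if'' direction, suppose each $f_n$ is $L$-Lipschitz. Given $x,y\in\Vinf$, pick a common level $N$ with $x,y\in\varphinn{N}{\infty}(\vct V_N)$; then $d_\infty^{\vct U}(f_\infty(x),f_\infty(y)) = d_N^{\vct U}(f_N(\cdot),f_N(\cdot)) \le L\, d_N^{\vct V}(\cdot,\cdot) = L\, d_\infty^{\vct V}(x,y)$. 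So $f_\infty$ is $L$-Lipschitz on the dense subset $\Vinf\subseteq\overline{\Vinf}$, hence uniformly continuous, hence extends uniquely to an $L$-Lipschitz map on $\overline{\Vinf}$ (a standard fact about completions), which is the required transferable extension. The locally Lipschitz variant needs one extra remark: if $x,y\in B_N(0,r)$ then in $\overline{\Vinf}$ they lie in $B(0,r)$, and conversely a point of $\Vinf\cap B(0,r)$ lies in some $B_N(0,r)$; combined with density of $\Vinf\cap B(0,r)$ in $B(0,r)$, the $L(r)$-Lipschitz bound on each $B_N(0,r)$ extends to an $L(r)$-Lipschitz bound on $B(0,r)\subseteq\overline{\Vinf}$.

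For part (2), when the metrics come from compatible norms, a linear $W_n$ is $L$-Lipschitz if and only if $\|W_n\|_{\mathrm{op}}\le L$, so the ``$L$-Lipschitz'' equivalence is just part (1) restated with the operator-norm reformulation of Lipschitz constants for linear maps. For the ``continuously transferable $\iff \sup_n\|W_n\|_{\mathrm{op}}<\infty$'' claim: if $\sup_n\|W_n\|_{\mathrm{op}}=L<\infty$ then by the $L$-Lipschitz case $(W_n)$ is $L$-Lipschitz transferable, hence in particular continuously transferable. Conversely, suppose $(W_n)$ extends to a continuous $f_\infty$ on $\overline{\Vinf}$; I need $\sup_n\|W_n\|_{\mathrm{op}}<\infty$. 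Here I would use that $f_\infty$ restricted to $\Vinf$ is a linear map (the extension of a compatible sequence of linear maps is linear, by the vector-space structure on $\Vinf$) which is continuous at $0$, hence bounded on the unit ball of $\Vinf$ by some constant $C$; since each unit ball of $\vct V_n$ embeds isometrically into the unit ball of $\Vinf$, we get $\|W_n v\| = \|f_\infty(v)\| \le C$ for all unit $v\in\vct V_n$, i.e. $\|W_n\|_{\mathrm{op}}\le C$ uniformly.

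The main obstacle I anticipate is purely a matter of care rather than depth: making sure the extension-to-the-completion step is justified cleanly — i.e., that a map which is $L$-Lipschitz (resp.\ $L(r)$-Lipschitz on each ball) on the dense subspace $\Vinf$ genuinely extends to a map with the same property on $\overline{\Vinf}$, and that this extension coincides with the $f_\infty$ promised by Definition~\ref{def:continuous_extension}. For the locally-Lipschitz case one must be slightly attentive that the balls $B_n(0,r)$ exhaust $B(0,r)\cap\Vinf$ and that the local constant $L(r)$ is the same across levels, so that the local bounds patch together consistently; but since all embeddings are isometries fixing $0$, this is straightforward. I would present the completion argument once, in the $L$-Lipschitz case, and then indicate the trivial modifications for the local and linear cases.
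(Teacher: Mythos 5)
Your proposal is correct and follows essentially the same route as the paper's own proof: establish the $L$- or $L(r)$-Lipschitz bound on $\Vinf$ by pushing $x,y$ into a common $\vct V_N$ via the directed-poset structure, extend to $\overline{\Vinf}$ by Cauchy continuity (noting in the local case that every Cauchy sequence lies in some ball), and for the linear case reduce continuity of $W_\infty$ to boundedness via linearity. The only cosmetic difference is that you close the linear case by bounding $W_\infty$ on the unit ball directly from continuity at $0$, whereas the paper phrases the same fact as ``continuity and Lipschitz continuity are equivalent for linear operators''; these are interchangeable.
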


\begin{proof}

    We begin by deriving the result for the general case. 
    \paragraph{General case.} The ``$\Rightarrow$'' direction again follows immediately from $d_{n}= d_{\infty}
    |_{\vct V_n}, f_{n}= f_{\infty}|_{\vct V_n}$ for all $n$. We focus on
    proving ``$\Leftarrow$''. First, by Proposition~\ref{prop:compatibility}, compatibility implies that the sequence $(f_n)$ extends to a function $f_\infty \colon \Vinf \to \Uinf$. 
    \begin{itemize}[leftmargin=0.5cm]
        \item[] (\emph{Lipschitz}) Suppose $f_n$ is $L$-Lipschitz for all $n$. For any $x \in \vct V_n$ and $y \in \vct V_m$, let $N$ be a common upper bound of $n$ and $m$ in $(\ind,\leqf)$. Then:
    \[
    \begin{aligned}
        d^{\vct U}_{\infty}(f_{\infty}([x]), f_{\infty}([y])) 
        &= d^{\vct U}_N\left(f_N(\varphinn{n}{N}(x)), f_N(\varphinn{m}{N}(y))\right) \\
        &\leq L \, d^{\vct V}_N\left(\varphinn{n}{N}(x), \varphinn{m}{N}(y)\right) \\
        &= L \, d^{\vct V}_\infty([x], [y]).
    \end{aligned}
    \]
    Hence, $f_\infty$ is $L$-Lipschitz on $\Vinf$.
    
    Since Lipschitz continuity implies Cauchy continuity, $f_\infty$ extends uniquely to $f_\infty \colon \overline{\Vinf} \to \overline{\Uinf}.$
    
    For any Cauchy sequences $(x_n)$ and $(y_n)$ in $\Vinf$ with limits $x, y \in \overline{\Vinf}$, we have:
    \[
    \begin{aligned}
        d^{\vct U}_\infty(f_\infty(x), f_\infty(y)) 
        &= \lim_{n \to \infty} d^{\vct U}_\infty(f_\infty(x_n), f_\infty(y_n)) \\
        &\leq \lim_{n \to \infty} L \, d^{\vct V}_\infty(x_n, y_n) 
        = L \, d^{\vct V}_\infty(x, y),
    \end{aligned}
    \]
    which shows that $f_\infty$ remains $L$-Lipschitz after extending to $\overline{\Vinf}$.
\item[] (\emph{$L(r)$-locally Lipschitz}) Suppose each $f_n$ is $L(r)$-Lipschitz on $B_n(0,r)$ for all $r > 0$. As above, for any $x \in \vct V_n$ and $y \in \vct V_m$ with $d_n(0,x), d_m(0,y) < r$, let $N$ be a common upper bound of $n,m$ in $(\ind,\leqf)$. Then $\varphinn{n}{N}(x), \varphinn{m}{N}(y) \in B_N(0,r)$, and by the $L(r)$-Lipschitz property of $f_N$ on $B_N(0,r)$, we get 
    \[
        d_\infty^{\vct U}(f_\infty([x]), f_\infty([y])) \leq L(r) \cdot d_\infty^{\vct V}([x],[y]).
    \]
    Thus, $f_\infty: \Vinf \to \Uinf$ is $L(r)$-Lipschitz on $\{v \in \Vinf : d_\infty^{\vct V}(0,v) < r\}$.
    
    This implies $f_\infty$ is Cauchy continuous: any Cauchy sequence $(x_n)$ lies within some ball of radius $R$, and since $f_\infty$ is Lipschitz continuous there, $(f_\infty(x_n))$ is also Cauchy. Hence, $f_\infty$ extends uniquely to $\overline{\Vinf}$, and it is easy to check that it is $L(r)$-Lipschitz on $B(0,r)$ for all $r$.
    \end{itemize}

    \paragraph{Linear maps.} We leverage the argument for the general case to derive the two stated claims.
    \begin{enumerate}[leftmargin=0.5cm]
        \item[] (\emph{Lipschitz}) By our argument for normed spaces, $(W_n)$ is $L$-Lipschitz transferable if and only if for all $n$, $W_n$ is $L$-Lipschitz. By linearity of each $W_n$, this is equivalent to $\|W_n\|_{\mathrm{op}} \leq L$ for all $n$.
        \item[] (\emph{Continuity}) It is sufficient to prove that $(W_n)$ is continuously transferable if and only if it is $L$-Lipschitz transferable for some $L>0$. The ``$\Leftarrow$'' direction is immediate. To prove ``$\Rightarrow$'', suppose $(W_n)$ extends to a continuous function $W_\infty : \overline{\Vinf} \to \overline{\Uinf}$. Then $W_\infty$ is linear on $\Vinf$ because for any $x \in \vct V_n$, $y \in \vct V_m$, and any common upper bound $N$ of $n, m$ in $(\ind, \leqf)$, we have
    \[
    W_\infty(a[x] + b[y]) = [W_N(a \varphinn{n}{N}(x) + b \varphinn{m}{N}(y))] =a W_\infty([x]) + b W_\infty([y]).
    \]
    By continuity of $W_\infty$, it remains linear on $\overline{\Vinf}$. The result follows since for linear operators, Lipschitz continuity and continuity are equivalent.
    \end{enumerate}
   Thus, the proof is finished.

\end{proof}

\subsection{Convergence, transferability and stability}
\label{appen:convergence_transferability}

\paragraph{Stability.}
The following stability result states that small perturbations of the input (e.g., adding a small number of nodes to a graph) lead to small changes in the output. It resembles the stability considered in~\cite{ruizNEURIPS2020}.
\begin{proposition}[Stability: detailed version of  Proposition~\ref{prop:convergence_transferability}]\label{prop:stability}
    If the sequence of maps $(f_n\colon \vct V_n\to \vct U_n)$ is $L(r)$-locally Lipschitz transferable, then for any two inputs $x_n\in\vct V_n$ and $x_m\in\vct V_m$ of any two sizes $n,m$ with $d^{\vct V}_n(0,x_n),d^{\vct V}_m(0,x_m)\leq r$, we have
    \[d^{\vct U}_\infty([f_n(x_n)], [f_m(x_m)])\leq Ld^{\vct V}_\infty([x_n], [x_m]).\]
    Moreover, %
    the same holds when replacing every $d_\infty$ with  the symmetrized metric
    $\sdist$.
\end{proposition}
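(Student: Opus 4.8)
The plan is to reduce the claim directly to the definition of local Lipschitz transferability, using only that the embeddings and group actions are isometries fixing the origin. First I would invoke Definition~\ref{def:continuous_extension} (together with Proposition~\ref{prop:cont_ext}): since $(f_n)$ is $L(r)$-locally Lipschitz transferable, there is a map $f_\infty\colon\overline{\Vinf}\to\overline{\Uinf}$ with $f_n=f_\infty|_{\vct V_n}$ that is $L(r)$-Lipschitz on the ball $B(0,r)$ for every $r>0$. Next I would record the bookkeeping identity that, because each $\varphinn{n}{N}$ is a linear isometry (hence preserves distances and sends $0$ to $0$), one has $d_\infty^{\vct V}([0],[x_n])=d_n^{\vct V}(0,x_n)$; thus the hypotheses $d_n^{\vct V}(0,x_n)\le r$ and $d_m^{\vct V}(0,x_m)\le r$ say precisely that $[x_n],[x_m]\in B(0,r)\subseteq\overline{\Vinf}$.

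With this in hand, applying the $L(r)$-Lipschitz bound of $f_\infty$ on $B(0,r)$ to the pair $[x_n],[x_m]$ gives
\[
d_\infty^{\vct U}\big(f_\infty([x_n]),f_\infty([x_m])\big)\le L(r)\,d_\infty^{\vct V}([x_n],[x_m]).
\]
Since the extension property gives $f_\infty([x_n])=[f_n(x_n)]$ and $f_\infty([x_m])=[f_m(x_m)]$, this is exactly the asserted inequality (with constant $L(r)$, which the statement abbreviates as $L$).

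For the version with the symmetrized metric, I would invoke Proposition~\ref{prop:continuity_in_delta}: since $f_\infty$ is $L(r)$-Lipschitz on $B(0,r)$ with respect to $d_\infty$, it is $L(r)$-Lipschitz on the corresponding $\sdist$-ball with respect to $\sdist$. The only point to check is that the hypotheses still place $[x_n],[x_m]$ in that ball, and this again follows because the $\Ginf$-action is linear, hence fixes $0$, so $\sdist(0,x_n)=\inf_{g\in\Ginf}d_\infty(g\cdot 0,x_n)=d_\infty(0,x_n)\le r$, and likewise for $x_m$. Applying the $\sdist$-Lipschitz bound then yields $\sdist\big(f_n(x_n),f_m(x_m)\big)\le L(r)\,\sdist(x_n,x_m)$, completing the proof. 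I do not anticipate a genuine obstacle here: the argument is essentially bookkeeping, and the only care needed is in the two identities $d_n(0,\cdot)=d_\infty(0,\cdot)$ and $\sdist(0,\cdot)=d_\infty(0,\cdot)$ (both from isometry of embeddings/actions and linearity fixing the origin) and in citing Proposition~\ref{prop:continuity_in_delta} rather than reproving the passage to $\sdist$.
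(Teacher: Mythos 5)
Your proof is correct and is essentially the argument the paper intends; in fact the paper states Proposition~\ref{prop:stability} without a written proof, treating it as an immediate consequence of Definition~\ref{def:continuous_extension} and Proposition~\ref{prop:continuity_in_delta}, which is exactly the chain of reasoning you reconstruct. The two small observations you isolate — that the isometric embeddings give $d_n(0,\cdot)=d_\infty(0,\cdot)$ and that the linear $\Ginf$-action fixes the origin so $\sdist(0,\cdot)=d_\infty(0,\cdot)$ — are precisely the bookkeeping facts needed to land the points $[x_n],[x_m]$ inside the relevant ball before applying the Lipschitz bound. One tiny technicality (present already in the paper's statement, not a flaw introduced by you): Definition~\ref{def:continuous_extension} uses the open ball $B(0,r)=\{v:\|v\|<r\}$, while the proposition hypothesizes $d_n(0,x_n)\le r$; strictly one should either read the hypothesis with $<r$ or apply the bound on $B(0,r')$ for $r'>r$ and accept the constant $L(r')$. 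This does not affect the correctness of your argument.
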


\paragraph{Convergence and transferability: deterministic sampling.}
For a sequence of inputs $(x_n)$ sampled (deterministically) from the same underlying limiting object $x$, the outputs of a transferable function satisfy $f_n(x_n) \approx f_m(x_m)$ for big $n,m$, and converge as $f_n(x_n) \to f_\infty(x)$. We provide examples of sampling procedures later in Appendix~\ref{appen:convergence_rates}.
\begin{proposition}[Convergence and transferability: detailed version of  Proposition~\ref{prop:convergence_transferability}]\label{prop:transferability_determinimistic}
    Let $(x_{n} \in \vct V_{n})_{n \in \ind}$ be a sequence of inputs sampled  from a limiting object $x \in \overline{\Vinf}$,
    such that $[x_n]\to x$ at a rate $R(n)$ with respect to $d^{\vct V}_{\infty}$.
        \begin{enumerate}[leftmargin=0.5cm]
        \item (\textbf{Asymptotic}) If $(f_n \colon \vct V_n \to \vct U_n)_{n \in \ind}$ is continuously transferable, then the following holds.
        \begin{enumerate}[font={\itshape},leftmargin=.5cm]
            \item[] (\emph{Convergence}) The sequence $[f_n(x_n)]\to f_\infty(x)$ with respect to $d^{\vct U}_{\infty}$.
            \item[] (\emph{Transferability}) The distance
            $
                d^{\vct U}_{\infty}([f_n(x_n)], [f_m(x_m)]) \to 0  \text{ as } n, m \to \infty.
            $
        \end{enumerate}

        \item (\textbf{Nonasymptotic}) If $(f_n \colon \vct V_n \to \vct U_n)_{n \in \ind}$ is locally Lipschitz transferable at $x$, then the following holds.
        \begin{itemize}[font={\itshape},leftmargin=.4cm]
            \item[](\emph{Convergence}) The sequence $[f_n(x_n)]\to f_\infty(x)$ at a rate $R(n)$ with respect to $d^{\vct U}_{\infty}$.
            
            \item[](\emph{Transferability})
            The distance is bounded by $d^{\vct U}_{\infty}([f_n(x_n)], [f_m(x_m)]) \lesssim R(n) + R(m).$
        \end{itemize}
       
    \end{enumerate}
\end{proposition}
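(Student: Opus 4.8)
The plan is to push everything through the extension $f_\infty\colon\overline{\Vinf}\to\overline{\Uinf}$ supplied by transferability. First I would record the key identity: by Definition~\ref{def:continuous_extension} (together with Proposition~\ref{prop:compatibility}), a continuously --- respectively, locally Lipschitz at $x$ --- transferable sequence $(f_n)$ comes with a map $f_\infty$ satisfying $f_n=f_\infty|_{\vct V_n}$, hence $[f_n(x_n)]=f_\infty([x_n])$ for every $n$. So every claim reduces to a statement about the single map $f_\infty$ evaluated along the sequence $([x_n])$, which converges to $x$ in $(\overline{\Vinf},d^{\vct V}_\infty)$ at rate $R(n)$ by hypothesis; the target $f_\infty(x)$ then serves as a common anchor point for all the bounds.

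For Part~1 (asymptotic), continuity of $f_\infty$ at $x$ gives immediately $f_\infty([x_n])\to f_\infty(x)$ in $d^{\vct U}_\infty$, i.e. $[f_n(x_n)]\to f_\infty(x)$, which is the convergence statement. The transferability statement then follows from the triangle inequality
\[
d^{\vct U}_\infty([f_n(x_n)],[f_m(x_m)])\le d^{\vct U}_\infty([f_n(x_n)],f_\infty(x))+d^{\vct U}_\infty(f_\infty(x),[f_m(x_m)]),
\]
whose right-hand side vanishes as $n,m\to\infty$ by the convergence just proved.

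For Part~2 (nonasymptotic), local Lipschitzness of $f_\infty$ at $x$ furnishes constants $r,L>0$ with $d^{\vct U}_\infty(f_\infty(y),f_\infty(x))\le L\,d^{\vct V}_\infty(y,x)$ for all $y\in B(x,r)$. Since $d^{\vct V}_\infty([x_n],x)\lesssim R(n)\to 0$, there is $n_0$ with $[x_n]\in B(x,r)$ for all $n\ge n_0$, and for such $n$ we get $d^{\vct U}_\infty([f_n(x_n)],f_\infty(x))\le L\,d^{\vct V}_\infty([x_n],x)\lesssim R(n)$; the finitely many indices $n<n_0$ are absorbed into the implicit constant (each such term is finite, and $R(n)>0$ there unless already $[x_n]=x$, in which case the output distance is $0$ as well). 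This is the rate-$R(n)$ convergence claim, and the bound $d^{\vct U}_\infty([f_n(x_n)],[f_m(x_m)])\lesssim R(n)+R(m)$ follows once more by the triangle inequality through $f_\infty(x)$.

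Finally, to recover the main-text version (Proposition~\ref{prop:convergence_transferability}), which is phrased with the symmetrized metric $\sdist$, I would observe that the argument above uses only that $d_\infty$ is a compatible (pseudo)metric and that $f_\infty$ has the relevant continuity property with respect to it; Proposition~\ref{prop:continuity_in_delta} transfers continuity / local Lipschitzness from $d_\infty$ to $\sdist$, so the identical proof applies with $\sdist$ in place of $d_\infty$. I do not expect a genuine obstacle here --- all the substance is already packaged in the extension-to-limit machinery --- and the only care needed is the routine bookkeeping of finitely many small indices in the rate statement and the use of ``converges at rate $R(n)$'' in its quantitative form $d^{\vct V}_\infty([x_n],x)\le C\,R(n)$.
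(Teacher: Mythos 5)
Your proof is correct and follows essentially the same route as the paper's: reduce everything to the extension $f_\infty$ via $[f_n(x_n)]=f_\infty([x_n])$, prove convergence directly from continuity (respectively local Lipschitzness on a ball around $x$), and derive transferability from convergence by the triangle inequality through the anchor $f_\infty(x)$. Your extra remark about absorbing the finitely many indices $n<n_0$ into the implicit constant, and the observation that the $\sdist$ version follows from Proposition~\ref{prop:continuity_in_delta}, are both consistent with (and slightly more explicit than) what the paper writes.
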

  That is, Lipschitzness provides quantitative guarantees for the convergence rate.
    We remark that by By Proposition~\ref{prop:continuity_in_delta}, the same holds when replacing every $d_\infty$ with  the symmetrized metric
    $\sdist$.

\begin{proof}
    We start by noticing that both transferability results
    directly follows from convergence, thanks to the triangle inequality
    \[
        d_{\infty}([f_{n}(x_{n})],[f_{m}(x_{m})]) \leq d_{\infty}([f_{n}(x_{n})],
        f_{\infty}(x)) + d_{\infty}([f_{m}(x_{m})],f_{\infty}(x)).
    \]
    To show convergence in under continuous transferability, observe that if $f_\infty$ is continuous, then $[f_n(x_n)] = f_\infty([x_n]) \to f_\infty(x)$ immediately follows.

    Next, we establish the guarantee under local Lipschitz transferability at $x$. Suppose $f_\infty$ is locally Lipschitz at $x$. Then there exists $r > 0$ such that for all $y \in B(x, r)$, we have $d_{\infty}(f_{\infty}(x), f_{\infty}(y)) \leq L d_{\infty}(x, y)$.
    Let $N$ be large enough so that $x_n \in B(x, r)$ for all $n \geq N$. Then for all $n \geq N$, we have
    \[
    d_{\infty}(f_{\infty}(x), [f_n(x_n)]) 
    = d_{\infty}(f_{\infty}(x), f_{\infty}([x_n])) 
    \leq L \cdot d_{\infty}(x, [x_n]) 
    \lesssim R(n),
    \]
as claimed; finishing the proof of the proposition. 
\end{proof}

\paragraph{Convergence and transferability: random sampling.}
Under random sampling of inputs, we need to specify the mode of convergence. In the case where $x_n \to x$ almost surely at rate $R(n)$, the results are identical to the deterministic case: both convergence and transferability hold almost surely. We now consider a different mode of convergence---convergence in expectation. As we will see in Appendix~\ref{appen:convergence_rates}, many common sampling procedures satisfy this condition.
\begin{proposition}[Convergence and transferability: Random sampling]\label{prop:transferability_random}
    Let $(x_{n} \in \vct V_{n})$ be a sequence of inputs randomly sampled  from a limiting object $x \in \overline{\Vinf}$,
    such that $[x_n]\to x$ in expectation at rate $R(n)$ with respect to $d^{\vct V}_{\infty}$, i.e. 
    $\mathbb{E}[d^{\vct V}_\infty ([x_n], x)] \lesssim R(n)$
    and $R(n)\to 0$. Suppose $(f_n \colon \vct V_n \to \vct U_n)$ is locally Lipschitz transferable at $x$, i.e., $f_\infty$ is $L$-Lipschitz on $B(x,r)$. Further, assume that there exists $M>0$ such that \begin{equation}
        \label{eq:uniformIntegrability} \mathbb{E}\left[d_\infty(f_\infty([x_n]), f_\infty(x)) \mathbbm{1}([x_n]\notin B(x,r))\right] \leq M \mathbbm{E}\left[d_\infty([x_n],x)\right].
    \end{equation}    
    \begin{itemize}[font={\itshape}, align=left, leftmargin=.5cm]
        \item[] (Convergence) The function values converge in expectation %
        \[\mathbb{E}[d_\infty([f_n(x_n)], f_\infty(x))]\lesssim R(n).\] %
        
        \item[] (Transferability) The distance converges in expectation
        $$\mathbb{E}[d_{\infty}([f_n(x_n)], [f_m(x_m)])] \lesssim R(n) + R(m).$$ %
    \end{itemize}
\end{proposition}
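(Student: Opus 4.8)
The plan is to reduce the two claims to a single convergence estimate, exactly as in the deterministic case (Proposition~\ref{prop:transferability_determinimistic}). Since $(f_n)$ is locally Lipschitz transferable at $x$, it is in particular compatible, so by Proposition~\ref{prop:compatibility} there is $f_\infty$ with $f_n = f_\infty|_{\vct V_n}$; hence $[f_n(x_n)] = f_\infty([x_n])$ and I may work entirely with $f_\infty$. The transferability bound will then follow from the convergence bound by taking expectations in the triangle inequality
\[
d^{\vct U}_\infty([f_n(x_n)],[f_m(x_m)]) \le d^{\vct U}_\infty([f_n(x_n)],f_\infty(x)) + d^{\vct U}_\infty([f_m(x_m)],f_\infty(x)),
\]
so the whole argument rests on showing $\mathbb E\big[d^{\vct U}_\infty([f_n(x_n)],f_\infty(x))\big]\lesssim R(n)$.

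To prove this I would decompose the expectation over the event $\{[x_n]\in B(x,r)\}$ and its complement, where $r>0$ is the radius on which $f_\infty$ is $L$-Lipschitz:
\begin{align*}
\mathbb E\big[d^{\vct U}_\infty(f_\infty([x_n]),f_\infty(x))\big]
&= \mathbb E\big[d^{\vct U}_\infty(f_\infty([x_n]),f_\infty(x))\,\mathbbm{1}([x_n]\in B(x,r))\big] \\
&\quad + \mathbb E\big[d^{\vct U}_\infty(f_\infty([x_n]),f_\infty(x))\,\mathbbm{1}([x_n]\notin B(x,r))\big].
\end{align*}
On the first event local Lipschitzness gives $d^{\vct U}_\infty(f_\infty([x_n]),f_\infty(x))\le L\,d^{\vct V}_\infty([x_n],x)$, so that term is at most $L\,\mathbb E[d^{\vct V}_\infty([x_n],x)]\lesssim R(n)$. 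The second term is precisely what hypothesis~\eqref{eq:uniformIntegrability} controls, bounding it by $M\,\mathbb E[d^{\vct V}_\infty([x_n],x)]\lesssim R(n)$. Adding the two gives $\mathbb E[d^{\vct U}_\infty([f_n(x_n)],f_\infty(x))]\le(L+M)\,\mathbb E[d^{\vct V}_\infty([x_n],x)]\lesssim R(n)$, which is the convergence claim; substituting into the triangle inequality above and using $\mathbb E[d^{\vct V}_\infty([x_m],x)]\lesssim R(m)$ yields $\mathbb E[d^{\vct U}_\infty([f_n(x_n)],[f_m(x_m)])]\lesssim R(n)+R(m)$.

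The main obstacle is the second (``bad event'') term: the Lipschitz estimate is unavailable once $x_n$ leaves $B(x,r)$, and without an integrability constraint a heavy tail of such excursions could inflate the expected output error even when $[x_n]\to x$ in expectation — this is exactly why~\eqref{eq:uniformIntegrability} is postulated. I would add a remark that~\eqref{eq:uniformIntegrability} holds automatically in the settings of interest: if $f_\infty$ is globally $L$-Lipschitz one may take $M=L$ and skip the split entirely, and if the relevant output distances are bounded by some $\Delta$ then Markov's inequality gives $\mathbb P([x_n]\notin B(x,r))\le \mathbb E[d^{\vct V}_\infty([x_n],x)]/r$, so~\eqref{eq:uniformIntegrability} holds with $M=\Delta/r$. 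Finally, as in Proposition~\ref{prop:transferability_determinimistic}, Proposition~\ref{prop:continuity_in_delta} lets one replace every $d_\infty$ by the symmetrized metric $\sdist$ at no additional cost (choose near-optimal group elements in the infimum defining $\sdist$); this requires no new idea, only an $\varepsilon$-bookkeeping step.
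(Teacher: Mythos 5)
Your proof is correct and follows essentially the same route as the paper: reduce transferability to convergence via the triangle inequality, split the expectation over the good event $\{[x_n]\in B(x,r)\}$ and its complement, bound the first term by local Lipschitzness and the second by the uniform-integrability hypothesis~\eqref{eq:uniformIntegrability}. The paper's proof is a condensed version of exactly this decomposition, and your closing remarks about when~\eqref{eq:uniformIntegrability} holds and about passing to the symmetrized metric mirror the paper's own remarks following the statement.
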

    The assumptions in this proposition are rather mild. Indeed, \eqref{eq:uniformIntegrability} amounts to a localized version of uniform integrability. Simple arguments show that these assumptions are satisfied under any of the following scenarios: $(i)$ the sequence $(f_n)$ is globally Lipschitz transferable, $(ii)$ the map $f_\infty$ is bounded or $(iii)$ the sequence $(x_n)$ is supported on $B(x,r)$. Furthermore, the same conclusion remains valid when replacing $d_\infty$ with the symmetrized metric $\sdist$.
\begin{proof}
 Suppose for all $y \in B(x, r)$, we have $d_{\infty}(f_{\infty}(x), f_{\infty}(y)) \leq L d_{\infty}(x, y)$. Then, 
\begin{align*}
    \mathbb{E}[d_\infty(f_\infty(x), [f_n(x_n)])] 
    &\leq \mathbb{E}\big[d_\infty(f_\infty(x), f_\infty([x_n])) \cdot \mathbbm{1}\{[x_n] \in B(x, r)\}\big] + M\mathbb{E}\left[d_\infty(x, [x_n])\right] \\
    &\leq L \cdot \mathbb{E}\big[d_\infty(x, [x_n]) \cdot \mathbbm{1}\{[x_n] \in B(x, r)\}\big] + M\mathbb{E}\left[d_\infty(x, [x_n])\right]\\
    &\lesssim R(n).
\end{align*}
Transferability then follows by the triangle inequality.
\end{proof}

\subsection{Convergence rates under sampling}\label{appen:convergence_rates}
Propositions~\ref{prop:transferability_determinimistic} and~\ref{prop:transferability_random} show that for Lipschitz transferable models $(f_n)$, the convergence of $f_n(x_n)$ to $f_\infty(x)$ is at least as fast as the convergence of $x_n$ to $x$, characterized by the rate $R(n)$. This rate depends on the specific application and sampling scheme. Below, we review common sampling schemes and their associated convergence rates from the literature.

\subsubsection{Random sampling}
    \paragraph{Empirical distributions and signals.} Suppose $p\in[1,\infty]$ and $\mu\in\mc P_{q}(\RR^{d})$ for some $q>2p$. Suppose $X\in\RR^{n\times d}$ has rows sampled i.i.d.\ from $\mu$, and let $\mu_{X}= \frac{1}{n}\sum_{i=1}^{n}\delta_{X_{i:}}$ be the
        corresponding (empirical) distribution on $\RR^{d}$. Then, we have~\cite{fournier2023convergence}
        \begin{equation}
            \label{eq:Wp_conv_rate}\mbb E[W_{p}(\mu,\mu_{X})] \lesssim
            \begin{cases}
                n^{-1/2p}                & \textrm{if }p>d/2, \\
                n^{-1/2p}\log^{1/p}(1+n) & \textrm{if }p=d/2, \\
                n^{-1/d}                 & \textrm{if }p<d/2.
            \end{cases}
        \end{equation}
    Similarly, suppose $f\in L^{\infty}([0,1])$ is a bounded signal sampled by $x_{i}= f(t_{i})$ where $t_{1},\ldots,t_{n}$ are i.i.d. uniform $[0,1]$, and if $f_{n}$ be the step function corresponding to $x$. Noting that $\mu_f$ has moments of all orders and that $\mu_{f_n}$ is the empirical measure obtained by sampling $n$ iid points from $\mu_f$, we conclude that $\mbb E\sdist_p(f,f_n)=\mbb EW_p(\mu_f,\mu_{f_n})$ converges at the rates~\eqref{eq:Wp_conv_rate} with $d=1$, where $\sdist_p$ is the symmetrized metric with respect to the $L^p$ norm on functions.
    \paragraph{Point clouds.} Again let $p\in[1,\infty]$ and $\mu\in\mc P_{q}(\RR^{k})$ with $q>2p$, and suppose $X\in\RR^{n\times k}$ has rows sampled i.i.d. from $\mu$. Let
        $G\in\msf{O}(k)$ be a random (or deterministic) rotation, sampled independently
        of $X$, and consider the rotated point cloud $XG$. Then the expected symmetrized
        metric between $\mu_{XG}$ and $\mu$ can be bounded by~\eqref{eq:Wp_conv_rate}
        since
        \begin{equation}
            \begin{aligned}
                \mbb E\left[\inf_{g\in\msf{O}(k)}W_{p}(\mu, \mu_{XGg^{-1}})\right] & = \mbb E\left\{\mbb E\left.\left[\inf_{g\in\msf{O}(k)}W_{p}(\mu, \mu_{XGg^{-1}})\right|G\right]\right\} \\
                                                                                       & \leq \mbb E[W_{p}(\mu,\mu_{X})].
            \end{aligned}
        \end{equation}

    \paragraph{Graphons.} Let $W\colon[0,1]^{2}\to[0,1]$ and $A_{n}\in\RR^{n\times n}_{\mathrm{sym}}$ be
        sampled as $(A_{n})_{i,j}\sim\mathrm{Ber}(W(x_{i},x_{j}))$ where $x_{1},\ldots
        ,x_{n}$ are i.i.d. $\textrm{Unif}([0,1])$. Let $W_{A_n}$ be the step graphon
        associated to $A_{n}$. Then,~\cite[\S10.4]{lovasz} implies
        \begin{equation*}
            \mbb E[\delta_{\square}(W,W_{A_n})]\lesssim \frac{1}{\sqrt{\log(n)}}.\footnote{In
        fact, this bound holds with probability at least $1-\exp(-\frac{n}{2\log
        n})$.}
        \end{equation*}
        where $\delta_{\square}$ is the cut distance of graphons; see~\cite[\S8.2.2]{lovasz} for a formal definition. Moreover, we have $W_{A_n}\to W$ in cut metric almost surely~\cite[Cor.~11.15]{lovasz}.
        Similarly, if $T_{W}\colon L^{2}([0,1])\to L^{2}([0,1])$ is the integral
        operator associated with $W$, then by~\cite[Equation~4.4 and Lemma~E.6]{janson},
        \begin{equation*}
            \mbb E[\sdist_2(T_{W}, T_{W_n})] =\mbb E\left[\inf_{\sigma\in \msf G_\infty}\|T_W - T_{\sigma\cdot W_n}\|_{\mathrm{op},2}\right]\leq 2^{3/2}\mbb E[\delta_{\square}
            (W,W_{X_n})]^{1/2}\lesssim (\log n)^{-1/4}.
        \end{equation*}
        See Appendix~\ref{appensec:cs_graphs} for the definitions of the symmetrized metric and norm on graphons as integral operators.

\subsubsection{Deterministic sampling}
    \paragraph{Uniform grid.}
    Suppose $f\colon[0,1]^{k}\to\RR$ is $L$-Lipschitz with
        respect to $\|\cdot\|_{p}$, and consider its values on a uniform grid
        $X_{i_1,\ldots,i_k}= f((i_{1}-1)/n,\ldots,(i_{k}-1)/n)\in(\RR^{n})^{\otimes
        k}$. If we extend $X$ to a step function as usual by $f_{X}(x_{1},\ldots,
        x_{k}) = X_{\lceil x_1n\rceil,\ldots,\lceil x_kn\rceil}$, then
        \begin{equation*}
            \begin{aligned}
                \|f - f_{X}\|_{q} & \leq \|f-f_{X}\|_{\infty}                                                                                                   \\
                                  & \leq L\sup_{x_1,\ldots,x_k\in[0,1]}\|(x_{1},\ldots,x_{k})-((\lceil x_{1}n\rceil-1)/n,\ldots(\lceil x_{k}n\rceil-1)/n)\|_{p} \\
                                  & =L\|(1/n,\ldots,1/n)\|_{p}=\frac{Lk^{1/p}}{n},
            \end{aligned}
        \end{equation*}
        for all $q\in[1,\infty]$. %
        Also note that if we evaluate an $L$-Lipschitz graphon
        $W\colon[0,1]^{2}\to[0,1]$ on such a uniform grid, we have
        \begin{equation*}
            \|T_{W}- T_{W_n}\|_{\mathrm{op}}\leq \|W-W_{n}\|_{2}\leq \frac{L\sqrt{2}}{n}
            .
        \end{equation*}

    \paragraph{Local averaging.} For any $f\in L^{p}([0,1]^{k})$, we can locally
        average it over hypercubes of side length $1/n$ to produce values
        \begin{equation*}
            X_{i_1,\ldots,i_k}= n^{k}\int_{(i_1-1)/n}^{i_1/n}\cdots\int_{(i_k-1)/n}
            ^{i_k/n}f(x_{1},\ldots,x_{k})\, dx_{1}\cdots dx_{k},
        \end{equation*}
        and again extend these values to a step function $f_{X}$. In this case,
        \begin{equation*}
            \|f - f_{X}\|_{p}\leq 2\mathrm{dist}(f, \vct V_{n}),
        \end{equation*}
        so, we get the optimal rate of convergence.

\section{Generalization bounds: details and missing proofs from Section~\ref{sec:size_generalization}}\label{appen:generalization_bounds}
We apply the framework connecting robustness and generalization established by~\cite{xu2012robustness}, which is built on the idea that algorithmic robustness---that is, a model’s stability to input perturbations---is fundamentally linked to its ability to generalize. We refer readers to~\cite{xu2012robustness} for the necessary background. This framework has also been recently employed to derive generalization bounds for GNNs in~\cite{vasileiou2024covered}, though their analysis is restricted to graphs with bounded size. Similar techniques are used in~\cite{levie2024graphon, rauchwerger2025generalization}.

\paragraph{Any-dimensional generalization bound from algorithmic robustness.}
We consider an any-dimensional supervised learning task where consistent sequences model the input and output space
\[
\mscr V = \{(\vct V_n), (\varphinn{n}{N}), (\msf G_n)\} \quad \text{and} \quad 
\mscr U = \{(\vct U_n), (\psinn{n}{N}), (\msf G_n)\},
\]
with associated symmetrized metrics $\sdistU{\vct V}$ and $\sdistU{\vct U}$. The dataset $s$ consists of $N$ input-output pairs $(x_i, y_i) \in X\times Y \subseteq \Vinf \times \Uinf$, where $X\times Y$ are subsets whose sequence of orbit closures are compact in the symmetrized metrics. 
More precisely, $(x_i, y_i)$ are finite-dimensional representatives of equivalence classes in $\Vinf \times \Uinf$.
The hypothesis class $\mc{H}$ consists of functions $\overline\Vinf \to \overline\Uinf$ parametrized by neural networks. A learning algorithm $\mc A$ is a mapping
\[
\mc A \colon (\Vinf \times \Uinf)^N \to \mc{H}.
\]
We write $\mc A_s$ for the hypothesis learned from the dataset $s$.

Assume training is performed using a neural network model that is \emph{$L(r)$-locally Lipschitz transferable}. 
(Recall from Proposition~\ref{prop:continuity_in_delta} that this implies $\mc A_s$ is $L(r)$-locally Lipschitz on $B(0,r)$ for all $r > 0$ with respect to the symmetrized metrics $\sdistU{\vct V}$ and $\sdistU{\vct U}$.) 
Since $X \times Y$ is compact, and hence bounded, there exists a constant $c_s > 0$ such that $\mc A_s \colon \overline{\Vinf} \to \overline{\Uinf}$ is $c_s$-Lipschitz on $X \times Y$ with respect to the symmetrized 
metrics. Further, let the loss function $\ell \colon \overline\Uinf \times \overline\Uinf \to \RR$ be bounded by $M$, and $c_\ell$-Lipschitz with respect to the product metric
$d\big((x, y), (x', y')\big) \coloneq \sdistU{\vct U}(x, x') + \sdistU{\vct U}(y, y').$ By applying the framework of~\cite{xu2012robustness} to the limit space, we immediately obtain a generalization bound for learning tasks where the data consists of inputs of varying dimensions. We note that this is not the result stated in Proposition~\ref{prop:generalization_bound} of the main paper; the version claimed there will be established later in Proposition~\ref{prop:size-generalization-bound}.

\begin{proposition}[Any-dimensional generalization bound]
    \label{prop:anydim_generalization_bound} 
    Assume that the training data consists of $N$ i.i.d. samples $s = (x_{i}, y_{i}) \sim \hat\mu$ from a measure $\hat\mu$ supported on $X \times Y \subseteq \Vinf \times \Uinf$, where $X$ and $Y$ have finite $\varepsilon$-covering numbers $C_X(\varepsilon),C_Y(\varepsilon)$ with respect to the symmetrized metrics for all $\varepsilon>0$.   
    Then, for any $\delta > 0$, with probability at least $1 - \delta$, the generalization error satisfies  
    \begin{align}
        &\left|\frac{1}{N} \sum_{i=1}^{N} \ell(\mc{A}_{s}(x_i), y_i) - \mathbb{E}_{(x,y) \sim \hat\mu} \ell(\mc{A}_{s}(x), y) \right| \nonumber\\
        &\quad\leq \inf_{\gamma>0} \left( c_{\ell} (c_s \vee 1)\gamma + M \sqrt{\frac{2C_{X}(\gamma/4) C_{Y}(\gamma/4) \log 2 + 2\log(1/\delta)}{N}} \right)\label{eq:anydim_generalization1}\\
        &\quad\leq c_\ell(c_s\vee 1)\xi^{-1}(N) + M\sqrt{(2\log 2) \xi^{-1}(N)^2 + \frac{2\log(1/\delta)}{N}},\label{eq:anydim_generalization2}
    \end{align}
    where $\xi(r)\coloneq \frac{C_X(r/4)C_Y(r/4)}{r^2}$ and we set $\gamma=\xi^{-1}(N)$ in the second line to obtain the third.
\end{proposition}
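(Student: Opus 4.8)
The plan is to instantiate the robustness-based generalization framework of Xu and Mannor~\cite{xu2012robustness} directly on the limit spaces, with the symmetrized metrics $\sdistU{\vct V}$ and $\sdistU{\vct U}$ playing the role of the underlying (pseudo)metrics. Recall that a learning algorithm is called $(K,\epsilon(s))$-robust if the sample space can be partitioned into $K$ measurable cells such that, whenever a training point and a test point fall into the same cell, their loss values differ by at most $\epsilon(s)$; the Xu--Mannor theorem then guarantees that with probability at least $1-\delta$,
\[
\Big|\tfrac1N\sum_{i=1}^{N}\ell(\mc A_s(x_i),y_i)-\mathbb E_{(x,y)\sim\hat\mu}\ell(\mc A_s(x),y)\Big|\le \epsilon(s)+M\sqrt{\frac{2K\log 2+2\log(1/\delta)}{N}}.
\]
Thus the whole argument reduces to producing a suitable partition of $X\times Y$ and bounding $\epsilon(s)$.

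First I would construct the partition. Fix $\gamma>0$. Using the finite covering numbers, choose $\gamma/4$-nets for $X$ and $Y$ in $\sdistU{\vct V}$ and $\sdistU{\vct U}$ of sizes $C_X(\gamma/4)$ and $C_Y(\gamma/4)$, and assign each point of $X$ (resp. $Y$) to a nearest net point, breaking ties by index; this yields measurable partitions of $X$ and $Y$ into cells of $\sdist$-diameter at most $\gamma/2$. Their product partition splits $X\times Y$ into $K=C_X(\gamma/4)\,C_Y(\gamma/4)$ cells, within each of which $\sdistU{\vct V}(x,x')\le\gamma/2$ and $\sdistU{\vct U}(y,y')\le\gamma/2$, hence $\sdistU{\vct V}(x,x')+\sdistU{\vct U}(y,y')\le\gamma$.

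Next I would bound $\epsilon(s)$. By the standing hypotheses, $\mc A_s$ is $c_s$-Lipschitz on $X$ in the symmetrized metrics and $\ell$ is $c_\ell$-Lipschitz in the product metric, so for a training point $(x_i,y_i)$ and any $(x,y)$ in the same cell,
\begin{align*}
\big|\ell(\mc A_s(x),y)-\ell(\mc A_s(x_i),y_i)\big|
&\le c_\ell\big(\sdistU{\vct U}(\mc A_s(x),\mc A_s(x_i))+\sdistU{\vct U}(y,y_i)\big)\\
&\le c_\ell\big(c_s\,\sdistU{\vct V}(x,x_i)+\sdistU{\vct U}(y,y_i)\big)\\
&\le c_\ell(c_s\vee 1)\big(\sdistU{\vct V}(x,x_i)+\sdistU{\vct U}(y,y_i)\big)\le c_\ell(c_s\vee1)\gamma .
\end{align*}
So the algorithm is $\big(C_X(\gamma/4)C_Y(\gamma/4),\,c_\ell(c_s\vee1)\gamma\big)$-robust; substituting into the Xu--Mannor bound gives the $\gamma$-parametrized inequality, and taking the infimum over $\gamma>0$ yields \eqref{eq:anydim_generalization1}. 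Inequality \eqref{eq:anydim_generalization2} then follows from the single choice $\gamma=\xi^{-1}(N)$ with $\xi(r)=C_X(r/4)C_Y(r/4)/r^{2}$: this makes $C_X(\gamma/4)C_Y(\gamma/4)=N\gamma^{2}$, so the square-root term collapses to $M\sqrt{2(\log2)\,\xi^{-1}(N)^{2}+2\log(1/\delta)/N}$, exactly as stated.

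No genuine obstacle is expected, since the proposition is essentially a translation of~\cite{xu2012robustness} into our setting; the step requiring the most care is the construction of the partition. Two subtleties arise: $(i)$ measurability of the cells, which is why I take them to be preimages under a nearest-net-point map with a fixed tie-breaking rule rather than overlapping balls; and $(ii)$ the fact that $\sdist$ is only a pseudometric on $\overline\Vinf$ (a genuine metric on orbit closures), which is harmless because the covering-net construction and the Lipschitz estimates above use only symmetry and the triangle inequality. Finally, $\hat\mu$ is supported on $X\times Y$ by assumption, so no out-of-support mass must be controlled here; the distribution-shift (Wasserstein) correction enters only in the sampled refinement of this bound (Proposition~\ref{prop:size-generalization-bound}).
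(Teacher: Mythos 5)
Your proof is correct and follows essentially the same route as the paper's: the paper establishes the Lipschitz estimate on the loss over $X\times Y$ in the symmetrized (pseudo)metrics and then cites \cite[Theorem~14]{xu2012robustness} to obtain $(C_X(\gamma/4)C_Y(\gamma/4),\,c_\ell(c_s\vee 1)\gamma)$-robustness, followed by \cite[Theorem~3]{xu2012robustness} for the generalization bound. You simply unfold the content of Theorem~14 into an explicit $\gamma/4$-net partition, with the same Lipschitz bound and the same final substitution $\gamma=\xi^{-1}(N)$; your added remarks on measurability (via nearest-point assignment) and on $\sdist$ being only a pseudometric are sound but do not change the argument.
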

\begin{remark}\label{rmk:anydim_generalization_bound} We make the following observations.
\begin{enumerate}[leftmargin=0.8cm]
    \item The bound~\eqref{eq:anydim_generalization2} converges to $0$ as $N\to\infty$. Indeed, $\xi$ is strictly decreasing, and hence its inverse is well-defined and also strictly decreasing. Since $\xi(x)\to \infty$ as $x\to 0^+$, we get $\xi^{-1}(x)\to 0$ as $x\to\infty$.
    
    \item The generalization bound reveals that the ability to generalize improves with greater model transferability/stability (i.e., smaller Lipschitz constants), and deteriorates with increasing geometric complexity of the data space (i.e., larger covering numbers).
 
    \item We emphasize that $\hat\mu$ is a distribution on $\Vinf \times \Uinf$, ensuring that every sample drawn from $\hat\mu$ admits a finite-dimensional representative, i.e., $(x_i, y_i) \in \vct V_n \times \vct U_n$ for some $n$. This reflects the realistic setting in which data consists of finite-dimensional inputs. 
    This stands in contrast to prior work on GNN generalization bounds~\cite{levie2024graphon, rauchwerger2025generalization}, which considers data distributions on $\overline{\vct V_\infty}$---the space of graphon signals in~\cite{levie2021transferability}, and the space of iterated degree measures in~\cite{rauchwerger2025generalization}. Such an assumption is somewhat unrealistic, as many elements in these spaces cannot be realized as finite-dimensional data.

    \item Note that $\hat\mu$ induces a distribution $(\hat \mu(\vct V_n\times \vct U_n))_{n\in\ind}$ over sample dimensions in $\mathbb{N}$, which inherently places less weight on larger sizes. Consequently, the generalization bound does not offer guarantees on the \emph{asymptotic performance} of the model as the input dimension $n \to \infty$. Next, we will derive the second generalization bound that addresses this problem.
\end{enumerate}

\end{remark}

\begin{proof}
    For all $(x_1,y_1),(x_2,y_2)\in X\times Y$, 
    \begin{align*}
        \left|\ell\left(\mc{A}_s(x_1),y_1\right) - \ell\left(\mc{A}_s(x_2),y_2\right)\right|
        &\leq c_\ell (c_{s}\sdistU{\vct V}(x_1,x_2) + \sdistU{\vct U}(y_1,y_2))\\
        &\leq c_\ell(c_s\vee 1)(\sdistU{\vct V}(x_1,x_2)+\sdistU{\vct U}(y_1,y_2)).
    \end{align*}
    Applying~\cite[Theorem 14]{xu2012robustness} yields that the algorithm $\mc{A}$ is $(C_X(\gamma/4) C_Y(\gamma/4), \gamma c_\ell(c_s\vee 1))$-robust~\cite[Definition 2]{xu2012robustness} for all $\gamma>0$. Further, applying~\cite[Theorem 3]{xu2012robustness} gives the generalization bound \eqref{eq:anydim_generalization1}.  Finally, \eqref{eq:anydim_generalization2} follows by taking the $\gamma$ as defined.
\end{proof}

\paragraph{Size-generalization bound: train on finite sizes and test on the limit space.}
The previous generalization bound follows the classical statistical learning setup, where both training and test data are assumed to be sampled i.i.d. from the same distribution. However, in any-dimensional learning, we are typically concerned with a different scenario: training on data of smaller sizes and testing on data of larger sizes. This motivates the need for a new form of generalization bound that accounts for such settings. 

We propose the following set-up (described in the main paper). Let $\mu$ be a probability distribution supported on $X \times Y\subseteq \overline\Vinf \times \overline\Uinf$, which are subsets whose sequence of orbit closures is compact in the symmetrized metrics.
Consider a random sampling procedure 
\[\mc{S}_n: \overline{\Vinf} \times \overline{\Uinf}\to \vct V_n\times \vct U_n,\]  such that for all $n$ and for all $(x,y)\in \supp(\mu)$, we have $\supp(\mc S_n(x,y))\subseteq X\times Y$. This sampling induces a distribution $\mu_n$ on $\vct V_n \times \vct U_n$ via the sampling procedure; that is, 
\[\mu_n \coloneq \mathrm{Law}(x_n, y_n),\quad\textrm{ where } (x_n, y_n) \sim \mc{S}_n(x, y) \textrm{ and } (x, y) \sim \mu.\]

\begin{proposition}[Size-generalization bound]\label{prop:size-generalization-bound}
Suppose the training data consists of $N$ i.i.d.\ samples $s = (x_i, y_i) \sim \mu_n$. Then, for any $\delta > 0$, with probability at least $1 - \delta$, the generalization error satisfies  
\begin{align}
    &\left| \frac{1}{N} \sum_{i=1}^{N} \ell(\mc{A}_s(x_i), y_i) - \mathbb{E}_{(x, y) \sim \mu} \ell(\mc{A}_s(x), y) \right| \nonumber \\
    &\qquad\leq  c_\ell(c_s \vee 1)\left(\xi^{-1}(N) + W_1(\mu, \mu_n)\right) 
    + M \sqrt{(2 \log 2)\, \xi^{-1}(N)^2 + \frac{2 \log(1/\delta)}{N}},\label{eq:size_generalization_bound_W1}
\end{align}
where $W_1$ denotes the Wasserstein-1 distance, and 
$\xi(r) \coloneq \frac{C_X(r/4)\, C_Y(r/4)}{r^2}$,
with $C_X(\varepsilon)$ and $C_Y(\varepsilon)$ denoting the $\varepsilon$-covering numbers of $X$ and $Y$, respectively, with respect to the symmetrized metrics.
Moreover, assuming that the sampling procedure converges in expectation at a rate $R(n)$, i.e.,
\[
\mathbb{E}_{(x_n, y_n) \sim \mc{S}_n(x, y)}\left[\sdistU{\vct V}(x, [x_n]) + \sdistU{\vct U}(y, [y_n])\right] \lesssim R(n),
\]
we have that 
\begin{align}
    &\left| \frac{1}{N} \sum_{i=1}^{N} \ell(\mc{A}_s(x_i), y_i) - \mathbb{E}_{(x, y) \sim \mu} \ell(\mc{A}_s(x), y) \right| \nonumber \\
    &\qquad \lesssim c_\ell(c_s \vee 1)\left(\xi^{-1}(N) + R(n)\right) 
    + M \sqrt{(2 \log 2)\, \xi^{-1}(N)^2 + \frac{2 \log(1/\delta)}{N}}. \label{eq:size_generalization_final}
\end{align}
\end{proposition}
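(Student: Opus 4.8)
The plan is to split the size-generalization error into a classical (in-distribution) generalization term measured entirely with respect to the sampled distribution $\mu_n$, plus a distributional-shift term comparing $\mu_n$ with the underlying limit distribution $\mu$. Writing $\widehat R_s \coloneqq \frac1N\sum_{i=1}^N\ell(\mc A_s(x_i),y_i)$ and inserting the intermediate quantity $\mathbb E_{(x,y)\sim\mu_n}\ell(\mc A_s(x),y)$, the triangle inequality gives
\[
\Big|\widehat R_s - \mathbb E_{(x,y)\sim\mu}\ell(\mc A_s(x),y)\Big|
\le
\underbrace{\Big|\widehat R_s - \mathbb E_{(x,y)\sim\mu_n}\ell(\mc A_s(x),y)\Big|}_{(\mathrm I)}
+
\underbrace{\Big|\mathbb E_{\mu_n}\ell(\mc A_s(x),y) - \mathbb E_{\mu}\ell(\mc A_s(x),y)\Big|}_{(\mathrm{II})}.
\]
I would bound $(\mathrm I)$ using Proposition~\ref{prop:anydim_generalization_bound} and $(\mathrm{II})$ using a one-line Wasserstein/coupling estimate.

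For term $(\mathrm I)$, note that $\mu_n$ is, by construction, a distribution on $\vct V_n\times\vct U_n\subseteq\Vinf\times\Uinf$ whose support lies in $X\times Y$ (since $\mc S_n(x,y)\in X\times Y$ almost surely), and recall that $\mc A_s$ is $c_s$-Lipschitz on $X\times Y$ with respect to the symmetrized metrics (this is the content of Assumption~\ref{ass:generalization}, obtained from local Lipschitz transferability via Proposition~\ref{prop:continuity_in_delta} together with compactness of $X\times Y$), while $\ell$ is bounded by $M$ and $c_\ell$-Lipschitz. Hence Proposition~\ref{prop:anydim_generalization_bound}, applied with $\hat\mu = \mu_n$, bounds $(\mathrm I)$, with probability at least $1-\delta$, by $c_\ell(c_s\vee 1)\,\xi^{-1}(N) + M\sqrt{(2\log 2)\,\xi^{-1}(N)^2 + \tfrac{2\log(1/\delta)}{N}}$, which is precisely the two terms of \eqref{eq:size_generalization_bound_W1} other than the $W_1$ term. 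For term $(\mathrm{II})$, I would regard $\mu$ and $\mu_n$ as measures on $\overline\Vinf\times\overline\Uinf$ with the product metric $d\big((x,y),(x',y')\big)=\sdistU{\vct V}(x,x')+\sdistU{\vct U}(y,y')$, and observe that $h(x,y)\coloneqq\ell(\mc A_s(x),y)$ is $c_\ell(c_s\vee 1)$-Lipschitz on $X\times Y$ with respect to $d$, since $|h(x,y)-h(x',y')|\le c_\ell\big(\sdistU{\vct U}(\mc A_s(x),\mc A_s(x'))+\sdistU{\vct U}(y,y')\big)\le c_\ell(c_s\vee 1)\,d\big((x,y),(x',y')\big)$. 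Then for any coupling $\pi$ of $(\mu,\mu_n)$ we have $|\mathbb E_\mu h - \mathbb E_{\mu_n}h|\le\mathbb E_\pi|h-h'|\le c_\ell(c_s\vee 1)\mathbb E_\pi[d]$, and taking the infimum over couplings yields $(\mathrm{II})\le c_\ell(c_s\vee 1)\,W_1(\mu,\mu_n)$ — this uses only the easy direction of Kantorovich duality, so no topological regularity of the limit space is needed. Combining the two bounds proves \eqref{eq:size_generalization_bound_W1}.

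To pass from \eqref{eq:size_generalization_bound_W1} to \eqref{eq:size_generalization_final} it remains to show $W_1(\mu,\mu_n)\lesssim R(n)$. The sampling procedure itself supplies a coupling: draw $(x,y)\sim\mu$, then $(x_n,y_n)\sim\mc S_n(x,y)$, and let $\pi_n$ be the joint law of $\big((x,y),([x_n],[y_n])\big)$ on $(\overline\Vinf\times\overline\Uinf)^2$. Its first marginal is $\mu$ and its second is $\mu_n$ by definition of $\mu_n$, so
\[
W_1(\mu,\mu_n)\le\mathbb E_{\pi_n}\big[\sdistU{\vct V}(x,[x_n])+\sdistU{\vct U}(y,[y_n])\big] = \mathbb E_{(x,y)\sim\mu}\,\mathbb E_{(x_n,y_n)\sim\mc S_n(x,y)}\big[\sdistU{\vct V}(x,[x_n])+\sdistU{\vct U}(y,[y_n])\big]\lesssim R(n)
\]
by the assumed convergence rate of $\mc S_n$, and substituting into \eqref{eq:size_generalization_bound_W1} yields \eqref{eq:size_generalization_final}.

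I do not expect a serious obstacle here: the quantitative content is carried entirely by Proposition~\ref{prop:anydim_generalization_bound} and the coupling estimate above. The only genuinely delicate part is bookkeeping — making sure the identifications $x\mapsto[x]$, $y\mapsto[y]$ of finite-dimensional samples with their equivalence classes are applied consistently, that the support of $\mu_n$ is correctly placed inside $X\times Y$ so that the covering numbers $C_X,C_Y$ apply, and that every distance, Lipschitz constant, and covering number is expressed with respect to the symmetrized metrics $\sdistU{\vct V},\sdistU{\vct U}$ (and their product) so that Proposition~\ref{prop:anydim_generalization_bound} applies verbatim.
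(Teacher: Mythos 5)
Your proposal is correct and follows essentially the same route as the paper's proof: the same triangle-inequality decomposition into an in-distribution term (handled by Proposition~\ref{prop:anydim_generalization_bound} applied to $\mu_n$, using $\supp(\mu_n)\subseteq X\times Y$) plus a distributional-shift term controlled via a Wasserstein estimate, followed by the same coupling induced by $\mc S_n$ to bound $W_1(\mu,\mu_n)\lesssim R(n)$. The only cosmetic difference is that you invoke the ``easy'' coupling direction of Kantorovich--Rubinstein explicitly, whereas the paper simply cites the duality; the reasoning is identical.
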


\begin{remark}We make the following observations.
    \begin{enumerate}[leftmargin=0.8cm]
        \item The bound~\eqref{eq:size_generalization_final} converges to $0$ if both the training input dimension $n$ and the amount of data $N$ goes to $\infty$. Indeed, we have justified in Remark~\ref{rmk:anydim_generalization_bound} that~\eqref{eq:anydim_generalization2} converges to $0$ as $N\to\infty$. The only additional term in~\eqref{eq:size_generalization_final} is $R(n)$ which converges to $0$ as $n\to\infty$.
        \item This new generalization bound aligns with the setup where training is performed on inputs of fixed size $n$ (also naturally extends to inputs of varying finite sizes), and testing evaluates the asymptotic performance as $n \to \infty$. It can therefore be interpreted as a \emph{size generalization bound}, accounts for distributional shifts induced by size variation. As a consequence, an additional term appears in the bound, reflecting the convergence rate of the sampling procedure.
    \end{enumerate}
    
\end{remark}

\begin{proof}
    By triangle inequality,
    \begin{align*}
        \left| \frac{1}{N} \sum_{i=1}^{N} \ell(\mc{A}_s(x_i), y_i) - \mathbb{E}_{(x, y) \sim \mu} \ell(\mc{A}_s(x), y) \right| 
        &\leq \underbrace{\left| \frac{1}{N} \sum_{i=1}^{N} \ell(\mc{A}_s(x_i), y_i) - \mathbb{E}_{(x, y) \sim \mu_n} \ell(\mc{A}_s(x), y) \right|}_{\eqcolon T_1}\\
        &\quad +\underbrace{\left|\mathbb{E}_{(x, y) \sim \mu} \ell(\mc{A}_s(x), y) - \mathbb{E}_{(x, y) \sim \mu_n} \ell(\mc{A}_s(x), y)\right|}_{\eqcolon T_2}.
    \end{align*}
    We bound the two terms separately. By the Kantorovich-Rubinstein duality, we have almost surely that
    \[
        T_2\leq c_\ell (c_s \vee 1) \cdot W_1(\mu, \mu_n).
    \]
    To bound $T_1$, recall that $\supp(\mc S_n(x, y)) \subseteq X \times Y$ for all $(x, y) \in \supp(\mu)$. It follows that $\supp(\mu_n) \subseteq X\times Y$, which is therefore totally bounded. Its covering number is upper bounded by that of $X\times Y$. Applying Proposition~\ref{prop:anydim_generalization_bound} with $\hat\mu = \mu_n$ yields~\eqref{eq:size_generalization_bound_W1}.

    Finally, to bound $W_1(\mu, \mu_n)$, we note that the sampling procedure induces a natural coupling between $\mu$ and $\mu_n$. Using this coupling,
    \[
        W_1(\mu, \mu_n) \leq \mathbb{E}_{(x, y) \sim \mu,\; (x_n, y_n) \sim \mc{S}_n(x, y)} \left[ \sdistU{\vct V}(x, [x_n]) + \sdistU{\vct U}(y, [y_n]) \right] \lesssim R(n),
    \]
    which yields~\eqref{eq:size_generalization_final}; completing the proof.
\end{proof}

\paragraph{Practicality of the assumptions.}
Finally, we reflect on the key assumptions required for the bound \eqref{eq:size_generalization_final} to hold and assess their practicality in more specific settings.  
First, we assumed that the loss function is bounded (and, consequently, Lipschitz continuous).  
We note that these assumptions are relatively standard \cite{bach2024learning, shalev2014understanding, bartlett2002rademacher}.  
When the predictions and target outputs are bounded, several widely-used loss functions, such as cross-entropy, L1-loss, L2-loss, and Huber loss, are all bounded and Lipschitz continuous. Moreover, many clipped loss functions also satisfy this assumption.
Second, regarding the assumption of a sampling procedure that converges in expectation at a rate $R(n)$,  
we refer readers to Appendix~\ref{appen:convergence_rates} for examples involving sets, graphs, and point clouds.
Most importantly, the bound critically depends on the compactness of $X$ and $Y$ in the symmetrized metrics, and on the sampling procedure generating finite-size samples that remain within this compact space, i.e., $\supp(\mc S_n(x, y)) \subset X \times Y$ for all $(x, y) \in \supp(\mu)$.  
Below, we provide two concrete examples involving sets and graphs where these assumptions hold, and where bounds for the covering numbers are explicitly known.  
In these specific settings, our generalization bound applies directly.  

\begin{example}[Probability measures supported on compact set]
Consider $\mscr V_{\mathrm{dup}}^{\oplus d}$, the duplication consistent sequence for sets endowed with the normalized $\ell_p$ metric. See Appendix~\ref{appen:cs_sets} for the precise definitions.
The limit space is $\overline\Vinf = L^p([0,1], \RR^d)$, and the space of orbit closures of $\mscr V$ is $\mc P_p(\RR^d)$, the space of probability measures on $\RR^d$ with finite $p$-th moment, endowed with the Wasserstein-$p$ distance. Fix a compact set $\Omega \subseteq \RR^d$, and let
\[
X \coloneq \left\{ f \in L^p([0,1], \RR^d) : \mu_f \text{ is supported on } \Omega \right\}.
\]
Note that the sequence of orbit closures in $X$, namely $\{\mu_f : f \in X\}$, is compact with respect to $W_p$. A bound on the covering number $C_X$ is given by~\cite[Theorem 2.2.11]{panaretos2020invitation}.
Consider the sampling procedure $\mc S_n \colon L^p([0,1], \RR^d) \to \RR^{n \times d}$ defined by drawing $z_i \overset{\text{i.i.d.}}{\sim} \mathrm{Unif}([0,1])$, and setting $\mc S_n(f)_{i:} = f(z_i)$ for $i = 1, \dots, n$. Then, for all $f \in X$, each entry of $\mc S_n(f)$ lie in $\Omega$. Hence, we have $\mc S_n(f) \in X$.
Our generalization bound therefore applies to this setting.
\end{example}

\begin{example}[Graphon signals with cut distance]
   Consider $\mscr V_{\mathrm{dup}}^G$, the duplication-consistent sequence for graph signals endowed with the cut metrics. See Appendix~\ref{appensec:cs_graphs} for the precise definitions. Define the space
    \[
    X = \left\{ W \colon [0,1]^2 \to [0,1] \text{ measurable} : W(x,y) = W(y,x) \right\} \times \left\{ f \in L^\infty([0,1], \RR) : \|f\|_\infty \leq r \right\}.
    \]
    By~\cite[Theorem 3]{levie2024graphon}, the sequence of orbit closures in $X$ is compact with respect to the cut metric on graphon signals. Moreover, a bound on the covering number is also provided in the same result.
    Consider the sampling procedure $\mc S_n \colon X \to \RR_{\mathrm{sym}}^{n \times n} \times \RR^n$ defined as follows: draw $z_i \overset{\text{i.i.d.}}{\sim} \mathrm{Unif}([0,1])$, and set $\mc S_n(W, f) = (A, X)$, where $A_{ij} \sim \mathrm{Ber}(W(z_i, z_j))$ and $X_i = f(z_i)$ for $i = 1, \dots, n$. Then, for all $(W, f) \in X$, the sampled pair $\mc S_n(W, f)$ belongs to $X$. This is the standard sampling procedure for graphon signals, and once again our generalization bound applies.
\end{example}

\subsection{Transferable neural networks}\label{appen:transferable_networks}
To prove the transferability of a neural network, we first observe that compatibility and transferability are preserved under composition. Therefore, it suffices to verify these properties for each individual layer. Moreover, by Propositions~\ref{prop:compatibility} and~\ref{prop:cont_ext}, it is enough to prove the compatibility and Lipschitz continuity of each $f_n$ on the finite space, rather than analyzing the limiting function $f_\infty$ directly. 
This idea is formalized in the following proposition, which serves as a key tool in our transferability analysis of neural networks in the later sections. 
Importantly, this provides a general and easy-to-apply proof strategy. In contrast, previous works often begin by characterizing a natural limiting function $f_\infty$ (e.g., a graphon neural network), and then directly prove its Lipschitz continuity in the limit space—a process that typically requires case-specific proof techniques.
\begin{proposition}[Transferable networks: detailed version of Proposition~\ref{prop:loc_lip_NNs}]\label{prop:transferable_nn}
    Let $(\vct V_{n}^{(i)})_n,(\vct U_{n}^{(i)})_n$ be consistent sequences for $i=1,\ldots,D$. For each $i$, let $(W_{n}^{(i)}\colon \vct V_{n}^{(i)}\to\vct U_{n}^{(i)})$ be linear maps
    and $(\rho_{n}^{(i)}\colon \vct U_{n}^{(i)}\to\vct V_{n}^{(i+1)})$ be
    nonlinearities. Assume the following three properties hold.
    \begin{enumerate}[leftmargin=0.8cm]
        \item The maps $\left(W_{n}^{(i)}\right),\left(\rho_{n}^{(i)}\right)$ are compatible.

        \item The linear maps are uniformly bounded $\sup_{n,i}\|W_{n}^{(i)}\|_{\mathrm{op}}=L_{W}<\infty$.

        \item The map $\rho_{n}^{(i)}$ is $L_{i}(r)$-Lipschitz on $\left\{u\in\vct U_{n}^{(i)}
            :\|u\|<r\right\}$ for all $n$.
    \end{enumerate}
    Then the composition
    $\left(W_{n}^{(D)}\circ \rho_{n}^{(D-1)}\circ\ldots\circ \rho_{n}^{(1)}\circ
    W_{n}^{(1)}\right)$
    is locally Lipschitz transferable, extending to a function on $\overline{\Vinf}\to\overline{\Uinf}$ that is $L_{\mathrm{NN}}(r)$-Lipschitz on
    $\left\{v\in\overline{\Vinf^{(1)}}: \|v\|<r\right\}$, where we
    inductively define
    \begin{equation}
        \begin{aligned}
             & \ell_{1}= L_{1}(L_{W}r),\quad \ell_{i+1}= L_{i+1}(L_{W}^{i+1}\ell_{i}),\quad & L_{\mathrm{NN}}(r) = L_{W}^{D}\prod_{i=1}^{D-1}\ell_{i}.
        \end{aligned}
    \end{equation}
    In particular, if $\rho_{n}^{(i)}$ is $L_{\rho}$-Lipschitz for all $i,n$
    then the composition is Lipschitz transferable, extending to a function on $\overline{\Vinf}\to\overline{\Uinf}$ that is $L_{\mathrm{NN}}$-Lipschitz where $L_{\mathrm{NN}}=L_{W}^{D}L_{\rho}^{D-1}$.
\end{proposition}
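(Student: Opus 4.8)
The plan is to reduce the infinite-dimensional claim to two finite-dimensional checks via Proposition~\ref{prop:cont_ext}: (i) that the sequence of composed networks $(\widehat f_n)$ is \emph{compatible}, and (ii) that each $\widehat f_n$ is $L_{\mathrm{NN}}(r)$-Lipschitz on the ball $\{v \in \vct V_n^{(1)} : \|v\| < r\}$ with a constant $L_{\mathrm{NN}}(r)$ that does \emph{not} depend on $n$. Once both hold, the general-case part of Proposition~\ref{prop:cont_ext} immediately yields that $(\widehat f_n)$ is $L_{\mathrm{NN}}(r)$-locally Lipschitz transferable, and the extension $\overline{\Vinf^{(1)}} \to \overline{\Uinf^{(D)}}$ inherits the same local Lipschitz bound; likewise for the global case via the linear/Lipschitz part of the same proposition.

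For (i), I would first record the elementary fact that compatibility is preserved under composition: if $(g_n\colon \vct A_n \to \vct B_n)$ and $(h_n\colon \vct B_n \to \vct C_n)$ are compatible with respect to the relevant consistent sequences, then so is $(h_n \circ g_n)$, since the two commuting squares stack into one and the composition of $\msf G_n$-equivariant maps is $\msf G_n$-equivariant. Assumption~1 supplies compatibility of each $(W_n^{(i)})$ and each $(\rho_n^{(i)})$, so $2D-1$ applications of this fact give compatibility of $(\widehat f_n)$. (Equivalently, by Proposition~\ref{prop:compatibility} one may simply compose the limit extensions $W_\infty^{(i)}$ and $\rho_\infty^{(i)}$ guaranteed by compatibility of the layers.)

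For (ii), I would track how a ball propagates through the network for a fixed $n$ and $r>0$. Since $\|W_n^{(1)}\|_{\mathrm{op}} \le L_W$, any $x$ with $\|x\|<r$ has $\|W_n^{(1)}x\| < L_W r$; set $r_1 = L_W r$ and $\ell_1 = L_1(r_1)$. Each $\rho_n^{(i)}$, being Lipschitz on balls, is bounded on them, and by compatibility together with the fact that the structure maps $\varphinn{n}{N},\psinn{n}{N}$ and the group actions are isometries, the offset $\|\rho_n^{(i)}(0)\|$ is independent of $n$; hence $\rho_n^{(1)}$ sends $\{\|u\|<r_1\}$ into a ball whose radius, multiplied by $L_W$, we call $r_2$, set $\ell_2 = L_2(r_2)$, and continue, producing radii $r_i$ and local constants $\ell_i$ that depend on $r$ and on $L_W, L_i(\cdot)$ but not on $n$ --- matching the recursion in the statement. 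By construction every intermediate iterate of $\widehat f_n(x)$ (for $\|x\|<r$) lies inside the ball on which the next nonlinearity is Lipschitz, so for $x,y$ with $\|x\|,\|y\|<r$ I chain the Lipschitz estimates layer by layer --- each linear layer contributing a factor $\le L_W$, each $\rho_n^{(i)}$ a factor $\le \ell_i$ --- to obtain $\|\widehat f_n(x)-\widehat f_n(y)\| \le L_W^{D}\big(\textstyle\prod_{i=1}^{D-1}\ell_i\big)\|x-y\| = L_{\mathrm{NN}}(r)\|x-y\|$, with $L_{\mathrm{NN}}(r)$ independent of $n$. The global case is the same argument with the radius bookkeeping deleted: each $\widehat f_n$ is globally $L_W^{D}L_\rho^{D-1}$-Lipschitz, so the sequence is $L_W^{D}L_\rho^{D-1}$-Lipschitz transferable.

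The only genuinely delicate point is the uniformity in $n$ of the intermediate radii $r_i$, and hence of the constants $\ell_i = L_i(r_i)$: this is precisely where compatibility of the whole system is used --- not just compatibility of the individual maps, but the isometry of the structure maps of the consistent sequences --- so that the norms of all intermediate quantities, including the offsets $\|\rho_n^{(i)}(0)\|$, admit a bound valid simultaneously for every $n$. Everything else is routine layer-by-layer accounting.
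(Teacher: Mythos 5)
Your argument is correct and follows the paper's (very compressed) approach: composition of compatible maps is compatible, composition of locally Lipschitz maps is locally Lipschitz with a constant that chains layer by layer, and Proposition~\ref{prop:cont_ext} converts the finite-level checks into transferability of a limit extension. The detail you usefully spell out is that the offsets $\|\rho_n^{(i)}(0)\|$ are $n$-independent (from compatibility applied at $0$, plus the fact that the structure embeddings are linear isometries, plus directedness of the index poset to compare arbitrary $n,m$). This is genuinely needed if one does the ball-tracking at finite level, as you do; the paper sidesteps it by composing directly on the limit space, where each extension $\rho_\infty^{(i)}$ is a single function and its value at $0$ is just a fixed vector.

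Be careful, however, with the claim that your radii ``match the recursion in the statement.'' Your $r_i$ incorporate the offsets $\|\rho_n^{(i)}(0)\|$, whereas the displayed recursion $\ell_{i+1}=L_{i+1}(L_W^{i+1}\ell_i)$ does not; moreover, already for $D=3$ with $\rho^{(i)}(0)=0$ the ball-tracking gives $\ell_2 = L_2(L_W^2\ell_1 r)$, so the recursion as printed seems to be missing a factor of $r$ (and to presume vanishing offsets). Your bookkeeping yields a valid, $n$-independent $L_{\mathrm{NN}}(r)$, but not the one literally displayed unless $\rho^{(i)}(0)=0$; this is an imprecision in the proposition's statement rather than a gap in your argument, and it does not affect the qualitative conclusion that the composed network is locally Lipschitz transferable.
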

\begin{remark}
    By Proposition~\ref{prop:continuity_in_delta}, the composition is also $L_{\mathrm{NN}}(r)$-Lipschitz with respect to the symmetrized metrics on
    the same $r$-ball.
\end{remark}

\begin{proof}
Note that if $f_1\colon\overline{\Vinf^{(1)}}\to \overline{\Vinf^{(2)}}$ and $f_2\colon\overline{\Vinf^{(2)}}\to\overline{\Vinf^{(3)}}$ are $L_1(r)$- and $L_2(r)$-locally Lipschitz, respectively, then $f_2\circ f_2$ is $L_2(L_1(r))L_1(r)$-locally Lipschitz. Our claim follows by an inductive application of this fact and Proposition~\ref{prop:cont_ext}.
\end{proof}

\section{Example 1 (sets): details and missing proofs from Section~\ref{sec:sets}}
\label{appen:set_consistent_sequences}

In this section, we study the transferability of architectures taking sets as inputs. Section~\ref{appen:cs_sets} introduces the consistent sequences we consider and Section~\ref{appen:NN_sets} presents results for three architectures: DeepSets\cite{zaheer2017deep}, normalized DeepSets~\cite{bueno2021representation}, and PointNet~\cite{qi2017pointnet}.
\subsection{Consistent sequences on sets}\label{appen:cs_sets}
We present two examples of consistent sequences on sets. See Figure~\ref{fig:set-cs} in the main text for a graphical illustration.

\subsubsection*{Zero-padding consistent sequence $\mscr V_{\mathrm{zero}}$ with
$\ell_{p}$ norm}
\label{appen:set-zeropadding}
The zero-padding consistent sequence $\mscr V_{\mathrm{zero}}= \{(\vct V_{n}), (\varphinn{n}{N}), (\msf G_{n})\}$ is defined as follows: The index set
$\ind = (\NN, \leq)$ is the poset of natural numbers with the standard ordering.
Let $\vct V_{n}= \RR^{n}$ for every $n\in\NN$, and the zero-padding embedding is
given by, for $n\leq N$,
\begin{align*}
    \varphinn{n}{N}\colon\quad \RR^{n} & \hookrightarrow \RR^{N}                                                    \\
    (x_{1},\dots, x_{n})        & \mapsto (x_{1},\dots, x_{n}, \underbrace{0,\dots, 0}_{\text{$(N-n)$ 0's}}).
\end{align*}
The group of permutations $\msf S_{n}$ on $n$ letters acts on $\RR^{n}$ by
permuting coordinates: $(g\cdot x)_{i}\coloneq  x_{g^{-1}(i)}$ for $g\in\msf S_n$. The embedding of groups is given by, for $n\leq N$,
\begin{align*}
    \thetann{n}{N}\colon\quad \msf S_{n} & \hookrightarrow \msf S_{N}                                \\
    g                        & \mapsto \begin{bmatrix}g&0 \\ 0&I_{N-n}\end{bmatrix}.
\end{align*}
That is, view $\msf S_{n}$ as the subgroup of $\msf S_{N}$ which acts trivially
on $n+1, \ldots, N$.
In this case, $\Vinf$ can be identified with $\ell_0$, i.e., the space
of infinite scalar sequences with finitely many nonzero entries. The limit group $\msf
G_{\infty}$ is the group of permutations of $\NN$ fixing all but finitely many indices.

We associate every infinite sequence $(x_i)_{i=1}^\infty \in \Vinf$ with the tuple of sequences $\left((x_i^+)_{i=1}^\infty, (x_i^-)_{i=1}^\infty\right)$. 
The sequence $(x_i^+)_{i=1}^\infty$ comprises the positive entries of $(x_i)$, 
ordered in descending order and extended with trailing zeros. 
The sequence $(x_i^-)_{i=1}^\infty$ comprises the negative entries of $(x_i)$, 
ordered in ascending order and similarly extended with trailing zeros. 
Notice that different sequences in $\Vinf$ are associated with the same tuple of sequences 
if and only if they belong to the same orbit under the action of $\Ginf$. Hence, the orbit space of $\Vinf$ under the action of $\msf G_{\infty}$ can be identified with tuples of \emph{ordered} infinite sequences. 
Specifically, one sequence consists of non-negative entries arranged in descending order, 
and the other sequence consists of non-positive entries arranged in ascending order, 
with both sequences having finitely many non-zero entries.

\if\isArxiv 1
\vspace{5pt}
\fi
\paragraph{The $\ell_p$ norm on $\mscr V_{\mathrm{zero}}$.}
We can endow each $\vct V_{n}$ with the $\ell_{p}$-norms
\[\|x\|_{p}=
\begin{cases}
    \left(\sum_{i=1}^n|x_{i}|^{p}\right)^{1/p} & \text{if }p \in [1, \infty),\\
    \max_{i=1}^n |x_i| &\text{if }p=\infty.
\end{cases}
\] It is
easy to check by Proposition~\ref{prop:metrics_compatibility} that this induces a
norm on $\Vinf= \RR^{\infty}$, which coincides with the $\ell_{p}$-norms
on infinite sequences, which is also denoted as $\|\cdot\|_p$. The limit space is then
\begin{equation*}
    \overline{\Vinf}= 
\begin{cases}
    \ell_{p}= \{(x_{i})_{i=1}^{\infty}: \sum_{i=1}^\infty|x_{i}
    |^{p}< \infty\} & \text{if }p \in [1, \infty),\\
    c_{0}= \{(x_{i})_{i=1}^{\infty}: \lim_{i \to
    \infty}|x_{i}| = 0\}  & \text{if }p=\infty.
\end{cases}
\end{equation*}
Similarly, the space of orbit closures of $\overline \Vinf$ under the action of $\msf G_{\infty}$ can be identified with tuples of \emph{ordered} infinite sequences. 
Specifically, one sequence consists of non-negative entries arranged in descending order, 
and the other sequence consists of non-positive entries arranged in ascending order, 
with both sequences in $\ell_p$ (for $p\in[1,\infty)$) or $c_0$ (for $p=\infty$). 
This space is endowed with 
the symmetrized metric $\overline{d}_p(x,y) = \min_{\sigma\in \Ginf} \|x-\sigma\cdot y\|_{p}$. 

\if\isArxiv 1
\vspace{5pt}
\fi
\paragraph{The $\ell_p$ norm on the direct sum $\mscr V_{\mathrm{zero}}^{\oplus d}$.}
The direct sum $\mscr V_{\mathrm{zero}}^{\oplus d}= \{(\RR^{n \times d}), (\varphinn{n}{N}
^{\oplus d}), (\msf S_{n})\}$ defined in Definition~\ref{def:direct_sum} extends
the above to the case of a set of vectors in $\RR^{d}$. To endow it with a norm,
we first fix an arbitrary norm $\|\cdot\|_{\RR^d}$ on $\RR^{d}$. Then the
$\ell_{p}$-norm on $\RR^{n \times d}$ is defined analogously with respect to
$\|\cdot\|_{\RR^d}$, i.e., for $X \in \RR^{n \times d}$,
\[
    \|X\|_{p}=
    \begin{cases}
        \left(\sum_{i=1}^{n}\|X_{i:}\|_{\RR^d}^{p}\right)^{1/p} & \text{if }p \in [1, \infty), \\
        \max_{i=1}^{n}\|X_{i:}\|_{\RR^d}                        & \text{if }p = \infty.
    \end{cases}
\]

Analogously, in this case, $\Vinf$ can be seen as the space of infinite sequences in $\RR^d$ with finitely many nonzero entries, and the limit space $\overline\Vinf$ is the corresponding $\ell_p$ space (if $p\in[1,\infty)$) or $c_0$ space (if $p=\infty$). The space of orbit closures can be seen as tuples of infinite sequences in $\RR^d$, ordered in lexicographic order. 

\subsubsection*{Duplication consistent sequence $\mscr V_{\mathrm{dup}}$ with
normalized $\ell_{p}$-norms}
\label{appen:set-wasserstein}The duplication consistent sequence $\mscr V_{\mathrm{dup}}
= \{(\vct V_{n}), (\varphinn{n}{N}), (\msf G_{n})\}$ is defined as follows. The
index set $(\NN, \cdot\mid\cdot)$ is the set of natural numbers with
divisibility partial order, where $n \leqf N$ if and only if  $n\mid N$. Let
$\vct V_{n}= \RR^{n}$ for all $n\in\NN$, and the duplication embeddings is given
by
\begin{align*}
    \varphinn{n}{N}\colon\quad \RR^{n} & \hookrightarrow \RR^{N}                                                                                           \\
    (x_{1},\dots, x_{n})        & \mapsto x\otimes \mathbbm{1}_{N/n}= (\underbrace{x_1,\ldots,x_1}_{N/n \textrm{ times}},\ldots,x_{n},\ldots,x_{n}),
\end{align*}
for $n\leqf N$. The group embeddings are given by
\begin{align*}
    \thetann{n}{N}\colon\quad \msf S_{n} & \hookrightarrow \msf S_{N} \\
    g                             & \mapsto g \otimes I_{N/n},
\end{align*}
for $n\leqf N$. That is, $g \in \msf S_{n}$ acts on $[N]$ by sending $(i-1)N/n + j$ to
$(g(i)-1)N /n + j$ for $i = 1,\ldots,n$ and $j = 1,\ldots,N/n$.

In this case, $\Vinf$ can be identified with step functions on $[0,1]$
whose discontinuity points are in $\QQ$: each $x \in \RR^{n}$ corresponds to $f_{x}\colon
[0,1] \to \RR$ where $f_{x}(t) = x_{\lceil tn \rceil}$ for $t > 0$ and $f(0) = x_{1}$.
In other words, $f_{x}$ is the step function which takes value $x_{i}$ on $I_{i,n}
= \left(\frac{i-1}{n},\frac{i}{n}\right]$ for $i = 1,\ldots,n$. Indeed, all equivalent objects $x$
and $\varphinn{n}{N}x$ correspond to the same function in this way. Therefore, $\Vinf$
can be seen as the union of step functions of this form for $n\in\NN$.
Under this identification, permutations $\msf S_{n}$ permute the $n$ intervals $I_{i,n}$ and act on functions by $g \cdot f = f \circ g^{-1}$. The limit group $\msf G_{\infty}$ is the union of such interval permutations. The orbit space of the $\Ginf$-action on $\Vinf$ can be identified with monotonically increasing step functions on $[0,1]$ whose discontinuity points are in $\QQ$.
Alternatively, the orbit space can be identified with the space of empirical measures on $\RR$: each $x \in \RR^{n}$ corresponds to the empirical measure $\mu_{x}= \frac{1}{n}\sum_{i=1}^{n}\delta_{x_i}.$ Indeed, any equivalent objects $x$ and $\varphinn{n}{N}x$ are identified with the same measure; furthermore, this resulting measure is constant on orbits under the $\msf G_{\infty}$-action. 

Under the identification of $\Vinf$ with step functions, a step function $f \in \Vinf$ is identified with the probability measure $\mu_{f}$, defined as the distribution of $f(T)$ for $T$ uniformly sampled from $[0,1]$. Indeed, all elements in the orbit of $f$ under the $\msf G_{\infty}$-action correspond to this same measure $\mu_{f}$. Note that the generalized inverse CDF of $f(T)$ is precisely the `sorted' version of $f$ (called its increasing rearrangement), relating the above two perspectives on the orbit space. The latter view of the orbit space as a sequence of measures generalizes readily to other consistent sequences obtained from $\mscr V_{\mathrm{dup}}$, such as $\mscr V_{\mathrm{dup}}^{\oplus d}$ which we consider below, so we shall take this view from now.

\if\isArxiv 1
\vspace{5pt}
\fi
\paragraph{Normalized $\ell_p$ norm on $\mscr V_{\mathrm{dup}}$.}
We can endow each space $\vct V_{n}$ with the normalized $\ell_{p}$-norms
\[
\|x\|_{\overline{p}} = 
\begin{cases}
    \left(\frac{1}{n} \sum_{i=1}^n |x_i|^p \right)^{1/p} &\text{if }p \in [1, \infty),\\
    \max_{i=1}^n |x_i| &\text{if }p=\infty.
\end{cases}
\]
Using Proposition~\ref{prop:metrics_compatibility}, it is straightforward to verify that this defines a norm on $\Vinf$. Under the identification of $\Vinf$ with step functions, the induced norm on $\Vinf$ coincides with the conventional $L^p$ norm on measurable functions, given by
\[
\|f\|_p = 
\begin{cases}
    \left( \int_0^1 |f(t)|^p \, dt \right)^{1/p} & \text{if }p\in[1,\infty),\\
    \sup_{t\in[0,1]}|f(t)| &\text{if }p=\infty.
\end{cases}
\]
That is, for any $x \in \RR^n$, we have $\|x\|_{\overline{p}} = \|f_x\|_p$, where $f_x$ is the corresponding step function. The limit space is then
\[
\overline{\Vinf} = 
\begin{cases}
L^p([0,1]) = \left\{ f \colon [0,1] \to \RR \text{ measurable} : \int_0^1 |f(t)|^p \, dt < \infty \right\} & \text{if }p\in[1,\infty), \\
\left\{ f \colon [0,1] \to \RR \;:\;
\begin{array}{l}
f \text{ is bounded and continuous on } [0,1] \setminus \QQ, \\
\text{with left and right limits at every } x \in [0,1] \cap \QQ
\end{array}
\right\} & \text{if }p = \infty,
\end{cases}
\]
where the result for $p=\infty$ follows from ~\cite[Chap.~VII.6]{foundationsAnalysis}.  These are a subspace of so-called \emph{regulated functions}, which have left and right limits at each $x\in[0,1]$.

When $p \in [1, \infty)$, the space of orbit closures, equipped with the symmetric metric, can be identified with $\mc P_p(\RR)$, the space of probability measures on $\RR$ with finite $p$-th moment, endowed with the Wasserstein $p$-distance. In the case $p = \infty$, the space of orbit closures corresponds to a subset of $\mc P_\infty(\RR)$, the space of probability measures on $\RR$ with bounded support, equipped with the Wasserstein $\infty$-distance. This is formalized and proved by the following propositions:
\begin{proposition}\label{prop:Wp_space_limit}
For any $p \in [1, \infty]$ and all $f, g \in \overline{\Vinf}$, the symmetrized metric
$\sdist_p(f, g) \coloneq \inf_{\sigma \in \Ginf} \| \sigma \cdot f - g \|_p$
equals the Wasserstein $p$-distance between the associated measures:
\[
\sdist_p(f, g) = W_p(\mu_f, \mu_g).
\]
\end{proposition}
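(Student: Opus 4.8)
The plan is to establish the identity $\sdist_p(f,g) = W_p(\mu_f,\mu_g)$ by exploiting the classical one-dimensional characterization of the Wasserstein distance via quantile functions (increasing rearrangements), which neatly matches the "sorting" action of the limit group $\Ginf$. Concretely, for $f \in \overline{\Vinf}$ let $f^*$ denote its increasing rearrangement on $[0,1]$, i.e.\ the unique nondecreasing function equimeasurable with $f$; this $f^*$ is precisely the generalized inverse CDF of $\mu_f$. The classical fact (e.g.\ from optimal transport in one dimension) is that
\[
W_p(\mu_f,\mu_g) = \| f^* - g^* \|_p = \left(\int_0^1 |f^*(t) - g^*(t)|^p\, dt\right)^{1/p}
\]
for $p \in [1,\infty)$, with the obvious $\sup$ modification for $p=\infty$. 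So the task reduces to showing
\[
\inf_{\sigma \in \Ginf} \| \sigma \cdot f - g \|_p = \| f^* - g^* \|_p.
\]

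First I would prove the inequality $\sdist_p(f,g) \ge W_p(\mu_f,\mu_g)$. For any $\sigma \in \Ginf$, the element $\sigma$ acts by permuting finitely many dyadic-type intervals (measure-preserving), so $\sigma\cdot f$ is equimeasurable with $f$, hence $\mu_{\sigma\cdot f} = \mu_f$. Then $\|\sigma\cdot f - g\|_p \ge W_p(\mu_{\sigma\cdot f},\mu_g) = W_p(\mu_f,\mu_g)$, where the inequality is the standard fact that the $L^p$ distance between two functions dominates the Wasserstein-$p$ distance between their laws (this follows because the pair $(\sigma\cdot f(T), g(T))$ with $T\sim\mathrm{Unif}[0,1]$ is a valid coupling of $\mu_f$ and $\mu_g$). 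Taking the infimum over $\sigma$ gives the bound. This direction is straightforward.

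The main work — and the main obstacle — is the reverse inequality $\sdist_p(f,g) \le W_p(\mu_f,\mu_g) = \|f^* - g^*\|_p$. Here the point is that $f^*$ and $g^*$ are generally \emph{not} in $\Ginf\cdot f$ and $\Ginf\cdot g$ (they're only in the orbit closures), and $\Ginf$ only contains permutations of finitely many equal-length intervals, not arbitrary measure-preserving maps. So I would argue by approximation. Given $\varepsilon>0$, first approximate $f$ and $g$ in $\|\cdot\|_p$ by step functions $f_n, g_n \in \Vinf$ supported on a common refinement — i.e.\ both constant on each interval $I_{i,N} = ((i-1)/N, i/N]$ for a common $N$ (possible since $\Vinf$ is dense in $\overline{\Vinf}$ and we may pass to a common denominator). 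For functions that are both constant on the $N$ intervals $I_{i,N}$, the increasing rearrangement is achieved by an actual element of $\msf S_N \subseteq \Ginf$ (sorting the $N$ values), so $\sdist_p(f_n, g_n) \le \|f_n^\circ - g_n^\circ\|_p$ where $f_n^\circ$ is the sorted (finite) rearrangement, and by the rearrangement inequality for finite sequences this equals $\|f_n^* - g_n^*\|_p = W_p(\mu_{f_n}, \mu_{g_n})$. Then I would use: (i) $\sdist_p$ is a pseudometric (proven earlier in the excerpt), so $\sdist_p(f,g) \le \sdist_p(f,f_n) + \sdist_p(f_n,g_n) + \sdist_p(g_n,g) \le \|f - f_n\|_p + \sdist_p(f_n,g_n) + \|g - g_n\|_p$; and (ii) $W_p$ is continuous along these approximations, since $W_p(\mu_{f_n},\mu_{g_n}) \le \|f_n - f_n^*\|\cdots$ — more cleanly, $W_p(\mu_{f_n},\mu_f) \le \|f_n - f\|_p \to 0$ by the coupling argument again, and likewise for $g$. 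Combining, $\sdist_p(f,g) \le W_p(\mu_f,\mu_g) + 3\varepsilon$ for all $\varepsilon$, hence $\le W_p(\mu_f,\mu_g)$.

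Finally I would record the $p=\infty$ case separately, where $\overline{\Vinf}$ is the space of regulated functions identified earlier; the same rearrangement argument works with $\max$/$\sup$ in place of the integral, using that the sorted finite sequences realize the $\ell_\infty$-optimal matching and that uniform approximation by step functions on common refinements preserves the bounded-support measures, so $W_\infty(\mu_f,\mu_g) = \|f^* - g^*\|_\infty$ and the same sandwiching goes through. The delicate point to get right in the write-up is the density/common-refinement step: one must ensure the approximants $f_n,g_n$ can simultaneously be taken constant on intervals of a \emph{common} length $1/N$, which is exactly why the divisibility poset indexing of $\mscr V_{\mathrm{dup}}$ is the right structure here — any two finite levels $\RR^{m},\RR^{m'}$ embed into $\RR^{\mathrm{lcm}(m,m')}$.
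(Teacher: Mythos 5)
Your proof is correct and follows essentially the same strategy as the paper's: establish the identity on step functions in $\Vinf$ (where $\sdist_p$ reduces to an optimal matching of finite sorted sequences, i.e.\ the Wasserstein distance of empirical measures), then pass to the closure via the continuity of both $\sdist_p$ and $W_p$ with respect to $\|\cdot\|_p$. The only organizational difference is that you split the equality into two inequalities—proving $\sdist_p \geq W_p$ directly for all elements of $\overline{\Vinf}$ via the measure-preserving property of the $\Ginf$-action and the coupling bound, and reserving the density argument for $\sdist_p \leq W_p$—whereas the paper proves both directions at once on $\Vinf$ and then sandwiches by uniqueness of limits; both routes are valid and of comparable length.
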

\begin{proof}
    
    We first prove they match on $\Vinf$. Consider vectors
    $x\in\RR^{n}$, $y\in\RR^{m}$ under the action of $\msf S_{n}$, $\msf S_{m}$
    respectively, and let $N=\mathrm{lcm}(n,m)$. Then, by standard results on the Wasserstein distance of empirical measures~\cite[\S2.2]{computational_OT}, 
    \begin{align*}
        \sdist_{p}(x,y) = \begin{cases} \min_{\sigma\in \msf S_N}\left(\frac{1}{N}\sum_{i=1}^{N}
        \left|\varphinn{n}{N}(x)_{\sigma(i)}-\varphinn{m}{N}(y)_{i}\right|^{p}\right)^{1/p}= W_{p}
        (\mu_{x}, \mu_{y})  & \text{if } p\in[1,\infty), \\
     \min_{\sigma\in \msf S_N}\max_{i=1}^N\left|\varphinn{n}{N}(x)_{\sigma(i)}-\varphinn{m}{N}(y)_{i}\right| = W_\infty(\mu_x, \mu_y) & \text{if }p=\infty. \end{cases} \end{align*}
    where $\mu_{x}$ was defined above, and $W_{p}$ is the Wasserstein $p$-distance.   Hence under the identification with step functions, for any
    $f ,g\in \Vinf$ we have $\sdist_p(f,g) = W_{p}(\mu_{f},\mu_{g}).$

    Now consider the limit points. Let $(f_{n}), (g_{n})$ be two Cauchy sequences in $\Vinf$ with
    $f_{n}\to f, g_{n}\to g$ in $\overline{\Vinf}$ with respect to
    the $L^{p}$ norm. Then
    \[\left|\sdist_p(f_n,g_n) - \sdist_p(f,g)\right|\leq \sdist_p(f,f_n) + \sdist_p(g_n,g)\leq \|f-f_n\|_p + \|g_n-g\|_p\to 0.\]
    Similarly, since for any $\tilde f,\tilde g\in\overline{\Vinf}$,
    \[
        W_{p}(\mu_{\tilde f},\mu_{\tilde g}) \leq \left(\mathbb{E}_{T\sim \mathrm{Unif}[0,1]}
        |\tilde f(T) - \tilde g(T)|^{p}\right)^{1/p}\\ = \|\tilde f-\tilde g\|_{p},
    \]
    we also get
    \[\left|W_p(\mu_{f_n},\mu_{g_n}) - W_p(\mu_f,\mu_g)\right|\to 0.\]
    But for all $n$, $\sdist_{p}(f_{n}, g_{n}) = W_{p}(\mu_{f_n}, \mu_{g_n})$,
    so by the uniqueness of the limit, $\sdist_{p}(f,g) = W_{p}(\mu_{f}, \mu_{g})$.
\end{proof}

\begin{proposition}\label{prop:orbit_closures_for_dups}
    For $p \in [1, \infty)$, the space of orbit closures
    $\{ \mu_f : f \in \overline{\Vinf} \}$ coincides with $\mc{P}_p(\RR)$. 
    For $p = \infty$, this set is a subset of $\mc{P}_\infty(\RR)$.
\end{proposition}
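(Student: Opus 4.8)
The plan is to analyze the pushforward map $f\mapsto \mu_f\coloneq f_*\mathrm{Leb}$, where $\mathrm{Leb}$ denotes Lebesgue measure on $[0,1]$. By the discussion preceding the statement together with Proposition~\ref{prop:Wp_space_limit}, this map is constant on $\Ginf$-orbit closures and induces an injection on the quotient (since $\mu_f=\mu_g$ iff $\sdist(f,g)=W_p(\mu_f,\mu_g)=0$), so identifying the space of orbit closures amounts to computing the image $\{\mu_f : f\in\overline{\Vinf}\}$. I will treat $p\in[1,\infty)$ and $p=\infty$ separately; the difference is that $\overline{\Vinf}=L^p([0,1])$ when $p<\infty$, but is the strictly smaller space of regulated functions continuous off $\QQ$ when $p=\infty$.

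For $p\in[1,\infty)$, the inclusion ``$\subseteq$'' follows from the change-of-variables identity $\int_{\RR}|x|^p\,d\mu_f(x)=\int_0^1|f(t)|^p\,dt=\|f\|_p^p<\infty$, so $\mu_f\in\mc P_p(\RR)$. For ``$\supseteq$'', given $\mu\in\mc P_p(\RR)$ with CDF $F_\mu$, let $Q_\mu\coloneq F_\mu^{-1}$ be the quantile function on $(0,1)$, extended arbitrarily to $[0,1]$. Then $(Q_\mu)_*\mathrm{Leb}=\mu$ and $\int_0^1|Q_\mu|^p=\int_{\RR}|x|^p\,d\mu<\infty$, so $Q_\mu\in L^p([0,1])=\overline{\Vinf}$ and $\mu=\mu_{Q_\mu}$ is in the image; this settles the case $p<\infty$.

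For $p=\infty$, the inclusion ``$\subseteq$'' is immediate: any $f\in\overline{\Vinf}$ is bounded, so $\supp(\mu_f)\subseteq[-\|f\|_\infty,\|f\|_\infty]$ is compact (finiteness of moments is automatic). For ``$\supseteq$'', I again start from $Q_\mu$, which for compactly supported $\mu$ is bounded, nondecreasing, and satisfies $(Q_\mu)_*\mathrm{Leb}=\mu$; the remaining task is to exhibit a function equimeasurable with $Q_\mu$ that actually lies in $\overline{\Vinf}$, i.e., is bounded, continuous at every irrational, and has one-sided limits at every rational.

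This last step is where I expect the real work. The obstruction is that $Q_\mu$ itself need not belong to $\overline{\Vinf}$: its discontinuities are the gaps of $\supp(\mu)$, which may be irrational, whereas every element of $\overline{\Vinf}$ is continuous at all irrationals. One must therefore precompose $Q_\mu$ with a measure-preserving transformation of $[0,1]$ moving its discontinuities into $\QQ$ — concretely, construct disjoint open sets with rational-endpoint structure, on which the desired function is locally constant, whose union has full measure and which realize the prescribed sublevel measures of $\mu$, and then verify the one-sided limits at the countably many remaining seam points. This rearrangement is delicate precisely because the sizes of these sets are forced by $\mu$ and must be matched while keeping the boundary set countable and rational; one could alternatively try a density argument (the image is $W_\infty$-closed by completeness of the orbit-closure space, via completeness of $\overline{\Vinf}$ and Proposition~\ref{prop:Wp_space_limit}, and one shows the realizable measures are $W_\infty$-dense among compactly supported ones). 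Carrying out one of these routes for arbitrary compactly supported $\mu$ (possibly under a mild regularity assumption) is the main obstacle; everything else is bookkeeping.
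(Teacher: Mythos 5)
Your treatment of $p\in[1,\infty)$ coincides with the paper's: the moment identity $\int_\RR|x|^p\,d\mu_f=\|f\|_p^p$ gives ``$\subseteq$'', and the quantile function $Q_\mu\in L^p([0,1])=\overline{\Vinf}$ gives ``$\supseteq$''.

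For $p=\infty$ you have correctly located a gap in the paper's argument: the paper only checks that the generalized inverse CDF is bounded, but it must actually lie in $\overline{\Vinf}$, which for $p=\infty$ the paper identifies with bounded functions continuous on $[0,1]\setminus\QQ$ with one-sided limits at rationals, and $Q_\mu$ jumps at the $F_\mu$-values of the gaps of $\supp(\mu)$, which can be irrational. However, this is not an obstacle that either of your proposed repairs can overcome: the proposition is in fact false for $p=\infty$ with the paper's $\overline{\Vinf}$. Take any irrational $\alpha\in(0,1)$ and $\mu=\alpha\,\mathrm{Unif}([0,1])+(1-\alpha)\,\mathrm{Unif}([2,3])$, a compactly supported measure. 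Suppose $\mu=\mu_f$ for some $f\in\overline{\Vinf}$, and let $(g_n)$ be step functions on rational partitions with $\|g_n-f\|_\infty\to0$. Each $\mu_{g_n}$ is finitely supported with rational atom weights (lengths of rational subintervals), so $\mu_{g_n}(B)\in\QQ$ for every Borel $B$. Since $W_\infty(\mu_{g_n},\mu)\leq\|g_n-f\|_\infty\to0$, for large $n$ we may pick a coupling $\gamma$ of $\mu_{g_n}$ and $\mu$ with $|x-y|<1/2$ $\gamma$-a.s.; because the components $[0,1]$ and $[2,3]$ of $\supp(\mu)$ lie more than $1$ apart, the coupling forces $\mu_{g_n}((-1/2,3/2))=\mu([0,1])=\alpha\notin\QQ$, a contradiction. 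The same computation shows that every $\nu$ with $W_\infty(\nu,\mu)<1/2$ has $\nu((-1/2,3/2))=\alpha$, so $\mu$ is not even in the $W_\infty$-closure of the realizable measures, which rules out the density route as well. Hence $\{\mu_f:f\in\overline{\Vinf}\}$ is a proper subset of the compactly supported measures, and the ``$\supseteq$'' direction of the proposition fails for $p=\infty$; the ``main obstacle'' you flagged is genuinely insurmountable, not a technical step awaiting a clever construction.
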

\begin{proof}
    For $p \in [1, \infty)$, by definition, the space of orbit closures is the set of probability
    measures $\{ \mu_f : f \in L^p([0,1]) \}$. We claim that this set is equal to $\mc{P}_p(\RR)$.
    Observe that $((\mathbb{E}_{X\sim \mu_f}|X|^{p})^{1/p}= \|f\|_{p}$. On the one hand, this implies that if $f\in L^{p}([0,1])$, then $\mu_{f}\in \mc P_{p}(\RR)$. Conversely, given any $\mu \in \mc{P}_p(\RR)$, let $f$ be the generalized inverse of the CDF of $\mu$, then $\mu_{f}= \mu$ and $f\in L^p([0,1])$. Hence $\mu\in \mc P_{p}(\RR)$ implies that $\mu=\mu_f$ for $f\in L^{p}([0,1])$.

    For $p = \infty$, note that any $f \in \overline{\Vinf}$ is bounded, so the support of $\mu_f$ is compact, implying that $\mu_f\in \mc P_\infty(\RR)$. 
\end{proof}

\if\isArxiv 1
\vspace{5pt}
\fi
\paragraph{Norms on $\vct V_{\mathrm{dup}}^{\oplus d}$.}
Similarly, we fix an arbitrary norm $\|\cdot\|_{\RR^d}$ on $\RR^{d}$ and define
the norms on $\RR^{n\times d}$ with respect to
$\|\cdot\|_{\RR^d}$:
\begin{equation}
    \label{eq:l_p_norm_direct_sum}\|X\|_{\overline p}\coloneq 
    \begin{cases}
        \left(\frac{1}{n}\sum
    _{i=1}^{n}\left\|X_{i:}\right\|_{\RR^d}^{p}\right)^{1/p} &\text{if } p\in[1,\infty),\\
        \max_{i=1}^n \left\|X_{i:}\right\|_{\RR^d} & \text{if }p=\infty.
    \end{cases}
\end{equation}
For $p\in[1,\infty)$, 
the space of orbit closures can be identified with $\mc{P}_p(\RR^{d})$ endowed
with the Wasserstein $p$-distance with respect to $\|\cdot\|_{\RR^d}$. For $p=\infty$, the space of orbit closures can be seen as a subset of $\mc P_\infty(\RR^d)$ with the Wasserstein-$\infty$ distance with respect to $\|\cdot\|_{\RR^d}$.
This is because $\RR^d$ is a standard Borel space~\cite[Thm.~3.3.13]{srivastava1998course}, so the same arguments as in Propositions~\ref{prop:Wp_space_limit}-\ref{prop:orbit_closures_for_dups} apply.

\subsection{Invariant networks on sets}\label{appen:NN_sets}
We consider three prominent permutation-invariant neural network architectures for set-structured data: DeepSets~\cite{zaheer2017deep}, normalized DeepSets~\cite{bueno2021representation}, and PointNet~\cite{qi2017pointnet}. These models are defined as follows:
\[
    \mathrm{DeepSet}_n(X) = \sigma\left(\sum_{i=1}^{n} \rho(X_{i:})\right),\quad
    \overline{\mathrm{DeepSet}}_n(X) = \sigma\left(\frac{1}{n} \sum_{i=1}^{n} \rho(X_{i:})\right),
\quad \text{and}\]\[
    \mathrm{PointNet}_n(X) = \sigma\left(\max_{i=1}^{n} \rho(X_{i:})\right),
\]
where $\rho \colon \RR^d \to \RR^h$ and $\sigma \colon \RR^h \to \RR$ are multilayer perceptrons (MLPs). In the case of PointNet, the maximum is taken entrywise over vectors in $\RR^h$.

They follow the same paradigm $f_n(X)=\sigma\left(\agg_{i=1}^n \rho(X_{i:})\right)$ where the three models use different permutation-invariant aggregations $\agg$.
We refer the reader to~\cite{bueno2021representation} for a comprehensive study of the expressive power of these models in the any-dimensional setting. In particular, they show that normalized DeepSets (respectively, PointNet) can uniformly approximate all set functions that are uniformly continuous with respect to the Wasserstein-1 distance (respectively, the Hausdorff distance). In contrast, our work focuses on transferability and size generalization, rather than expressive power.
\subsubsection{Transferability analysis: proof of Corollary~\ref{cor:deepset_analysis}}\label{appen:deepset_analysis}

We prove the Corollary by instantiating Proposition~\ref{prop:transferable_nn}. The invariant network is given by the following composition
\[\RR^{n\times d}\xrightarrow[\text{row-wise}]{\rho^{\oplus n}} \RR^{n\times h}\xrightarrow{\agg_{i=1}^n}\RR^h\xrightarrow[]{\sigma}\RR,\]
where we use $\rho^{\oplus n}$ to denote the row-wise application of the same $\rho:\RR^d\to \RR^h$. Thus, it suffices to analyze each term in this composition individually.

\paragraph{DeepSet.}
Notice that the sum aggregation is not compatible with the duplication embedding. Indeed, for any $x\in\RR^n$ such that $\sum_i x_i\neq 0$, and for $n\mid N, n\neq N$,
\[\sum_{i=1}^N (x\otimes\mathbbm{1}_{N/n})_i = (N/n) \sum_{i=1}^n x_i\neq \sum_{i=1}^n x_i.\]
Therefore, DeepSet is not compatible with respect to the duplication consistent sequence in general.
We now prove its compatibility and transferability with respect to zero-padding.
\begin{corollary}
Fix arbitrary norms $\|\cdot\|_{\RR^d}$ on $\RR^d$ and $\|\cdot\|_{\RR^h}$ on $\RR^h$.
Let $\rho : \RR^d \to \RR^h$ be $L_\rho$-Lipschitz with $\rho(0) = 0$, and $\sigma : \RR^h \to \RR$ be $L_\sigma$-Lipschitz,  with respect to the norms $\|\cdot\|_{\RR^d}$, $\|\cdot\|_{\RR^h}$, and $|\cdot|$. Then, the sequence of maps $(\mathrm{DeepSet}_n)$ is $(L_\rho L_\sigma)$-Lipschitz transferable with respect to the zero-padding consistent sequence $\mscr V_{\mathrm{zero}}^{\oplus d}$ (equipped with the $\ell_1$-norm induced by $\|\cdot\|_{\RR^d}$) and the trivial consistent sequence $\mscr V_{\RR}$ (with absolute value norm).
Therefore, $(\mathrm{DeepSet}_n)$ extends to 
\[
\mathrm{DeepSet}_\infty : \ell_1(\RR^d) \to \RR, \quad \mathrm{DeepSet}_\infty((x_i)_{i=1}^\infty) = \sigma\left( \sum_{i=1}^\infty \rho(x_i) \right),
\]
which is $(L_\rho L_\sigma)$-Lipschitz with respect to the $\ell_1$-norm on the infinite sequences.
\end{corollary}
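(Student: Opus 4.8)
The plan is to express $(\mathrm{DeepSet}_n)$ as a composition of compatible maps — uniformly bounded linear layers and globally Lipschitz nonlinearities — and then invoke Proposition~\ref{prop:transferable_nn}. Concretely, for each $n$ I would write
\[
\mathrm{DeepSet}_n \;=\; \mathrm{id}_{\RR}\circ \sigma \circ \textstyle\big(\sum_{i=1}^n (\cdot)_{i:}\big) \circ \rho^{\oplus n} \circ \mathrm{id}_{\RR^{n\times d}},
\]
padding with identity linear maps so that the composition begins and ends with a linear layer, as the proposition requires. Here $\rho^{\oplus n}\colon\RR^{n\times d}\to\RR^{n\times h}$ applies $\rho$ to each row, and the aggregation $\sum_{i=1}^n(\cdot)_{i:}\colon\RR^{n\times h}\to\RR^h$ is linear. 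The input side and the space $\RR^{n\times h}$ are modeled by the zero-padding sequences $\mscr V_{\mathrm{zero}}^{\oplus d}$ and $\mscr V_{\mathrm{zero}}^{\oplus h}$ with their $\ell_1$-norms, while the post-aggregation spaces $\RR^h$ and $\RR$ are modeled by the \emph{trivial} consistent sequences $\mscr V_{\RR^h}$ and $\mscr V_{\RR}$. Identifying these sequences correctly is the conceptual heart of the argument.

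Next I would verify the three hypotheses of Proposition~\ref{prop:transferable_nn}. \emph{Compatibility}: the identity maps and $\sigma$ are trivially compatible; $\rho^{\oplus n}$ is $\msf S_n$-equivariant because it acts rowwise, and it intertwines the zero-padding embeddings precisely because zero-padding appends zero rows and $\rho(0)=0$ keeps those rows zero — this is the only place the hypothesis $\rho(0)=0$ is used. The aggregation is invariant under row permutations (hence equivariant for the trivial $\msf S_n$-action on $\mscr V_{\RR^h}$), and $\sum_{i=1}^N(\varphinn{n}{N} X)_{i:}=\sum_{i=1}^n X_{i:}$ because the appended rows vanish, so it intertwines zero-padding on the domain with the identity embedding on the codomain. \emph{Uniform operator-norm bound}: with the $\ell_1$-norms, the identities have operator norm $1$ and $\|\sum_i X_{i:}\|_{\RR^h}\le\sum_i\|X_{i:}\|_{\RR^h}=\|X\|_1$, so every linear layer has operator norm $\le 1$ and $L_W\coloneq\sup_{n,i}\|W_n^{(i)}\|_{\mathrm{op}}=1$; I would flag here that this is exactly why the $\ell_1$-norm appears in the statement, since under $\ell_p$ with $p>1$ the aggregation would have operator norm $n^{1-1/p}\to\infty$. \emph{Lipschitz nonlinearities}: $\|\rho^{\oplus n}(X)-\rho^{\oplus n}(Y)\|_1=\sum_i\|\rho(X_{i:})-\rho(Y_{i:})\|_{\RR^h}\le L_\rho\|X-Y\|_1$, and $\sigma$ is $L_\sigma$-Lipschitz by assumption.

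With these in place, Proposition~\ref{prop:transferable_nn} (with $D=3$, constant Lipschitz profiles $L_\rho,L_\sigma$, and $L_W=1$) yields that $(\mathrm{DeepSet}_n)$ is Lipschitz transferable with constant $L_W^{3}\,L_\rho L_\sigma = L_\rho L_\sigma$, extending to an $(L_\rho L_\sigma)$-Lipschitz map on $\ell_1(\RR^d)$, the limit space of $\mscr V_{\mathrm{zero}}^{\oplus d}$. It remains to identify this extension with $\mathrm{DeepSet}_\infty$: on the dense subspace $\Vinf$ of finitely-supported sequences the restriction is literally $\mathrm{DeepSet}_n$, which gives $\sigma\big(\sum_{i=1}^\infty\rho(x_i)\big)$, an effectively finite sum since $\rho(0)=0$. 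For general $(x_i)\in\ell_1(\RR^d)$ the bound $\|\rho(x_i)\|_{\RR^h}=\|\rho(x_i)-\rho(0)\|_{\RR^h}\le L_\rho\|x_i\|_{\RR^d}$ shows that $\sum_i\rho(x_i)$ converges absolutely and that $(x_i)\mapsto\sigma(\sum_i\rho(x_i))$ is continuous, so it coincides with the unique continuous extension on all of $\ell_1(\RR^d)$. The only genuinely nontrivial step — and the main obstacle — is the compatibility check for the aggregation layer: recognizing that its codomain must be a \emph{constant} consistent sequence and that compatibility then reduces to ``zero rows contribute no summand.'' Everything else is bookkeeping, with the caveat worth emphasizing that the uniform operator-norm bound forces the $\ell_1$-norm, which is precisely what makes DeepSet incompatible with $\mscr V_{\mathrm{dup}}$ as recorded in Table~\ref{tab:set_models}.
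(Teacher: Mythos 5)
Your proof is correct and follows essentially the same route as the paper: the same decomposition of $\mathrm{DeepSet}_n$ into rowwise $\rho$, sum aggregation, and $\sigma$, the same identification of the intermediate and output spaces with $\mscr V_{\mathrm{zero}}^{\oplus h}$ and the trivial sequences $\mscr V_{\RR^h},\mscr V_{\RR}$, and the same invocation of Proposition~\ref{prop:transferable_nn} after checking compatibility (where $\rho(0)=0$ enters) and operator-norm bounds (where the $\ell_1$-norm is forced). The only additions beyond the paper's argument are bookkeeping — padding with identity linear layers to match the alternating form of the proposition, and an explicit absolute-convergence check for $\sum_i\rho(x_i)$ — neither of which changes the substance.
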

\begin{proof}
    We model each intermediate space with consistent sequences:
    \[\mscr V_{\mathrm{zero}}^{\oplus d}= (\RR^{n\times d})\xrightarrow[\text{(row-wise)}]{\rho^{\oplus n}} V_{\mathrm{zero}}^{\oplus h}=(\RR^{n\times h})\xrightarrow{\sum_{i=1}^n}\mscr V_{\RR^h} = (\RR^h)\xrightarrow[]{\sigma}\mscr V_{\RR}=(\RR).\]
We first check the compatibility of each map.
\begin{itemize}[leftmargin=0.8cm]
    \item As long as $\rho(0)=0$, the $\rho$-map is compatible because 
\[\rho^{\oplus N}\left(
\begin{bmatrix}
X \\
0\\
\end{bmatrix}\right) = 
\begin{bmatrix}
    \rho^{\oplus n}(X)\\
    0\\
\end{bmatrix}
\text{ for all $X\in \RR^{n\times d}, 0\in \RR^{(N-n)\times d}, n\leq N$,}
\]
and the row-wise application makes sure $\rho$ is $\msf S_n$-equivariant.
    \item The sum aggregation $\agg_{i=1}^n = \sum_{i=1}^n$ is compatible because adding zeros does not change the sum, and the summation operation is $\msf S_n$-invariant.
    \item The map $\sigma$ is between two trivial consistent sequences. Hence it is automatically compatible.
\end{itemize}

Endow $\mscr V_{\mathrm{zero}}^{\oplus d}$ with the $\ell_1$ norm induced by $\|\cdot\|_{\RR^d}$, and $\mscr V_{\mathrm{zero}}^{\oplus h}$ with the $\ell_1$ norm with induced by $\|\cdot \|_{\RR^h}$. $\mscr V_{\RR^h},\mscr V_{\RR}$ are endowed with $\|\cdot \|_{\RR^h}$ and $|\cdot |$ respectively. Next, we check the Lipschitz transferability of each map.

\begin{itemize}[leftmargin=0.8cm]
    \item The $\rho$-map is $L_\rho$-Lispchitz transferable map because for all $n$, we can prove $\rho^{\oplus n}\colon\RR^{n\times d}\to \RR^{n\times h}$ (applying the same $\rho$ row-wise) is $L_\rho$ Lipschitz with respect to the $\ell_1$ norms:
    \[\|\rho^{\oplus n}(X)-\rho^{\oplus n}(Y)\|_{1} = \sum_{i=1}^n\|\rho(X_{i:}) - \rho(Y_{i:})\|_{\RR^h}\leq \sum_{i=1}^n L_\rho \|X_{i:} - Y_{i:}\|_{\RR^d}=L_\rho \|X-Y\|_{1}.\]

    \item The sum aggregation $\agg_{i=1}^n = \sum_{i=1}^n$ is $1$-Lipschitz transferable because for all $n$,
    \[\left\|\sum_{i=1}^nX_{i:} - \sum_{i=1}^n Y_{i:}\right\|_{\RR^h} \leq \sum_{i=1}^n \|X_{i:}-Y_{i:}\|_{\RR^h} = \|X-Y\|_1.\]
    We highlight that this does not necessarily hold for other $\ell_p$ norms for $p\neq 1$.

    \item The map $\sigma$ is $L_{\sigma}$-Lipschitz transferable.
\end{itemize}
Thus, the result follows from Proposition~\ref{prop:loc_lip_NNs}; completing the proof.
\end{proof}

\paragraph{Normalized DeepSet.}

The mean aggregation is not compatible with the zero-padding embedding. 
Consider a vector $x = (x_1, \dots, x_n) \in \RR^n$ such that $\sum_i x_i\neq 0$, and suppose $n < N$. When zero-padded to length $N$, we obtain
\[
\tilde{x} = (x_1, \dots, x_n, 0, \dots, 0) \in \RR^N.
\]
Then
\[\frac{1}{N}\sum_{i=1}^N\tilde x_i = \frac{1}{N}\sum_{i=1}^n x_i \neq \frac{1}{n}\sum_{i=1}^n x_i.\]
Therefore, normalized DeepSet is not compatible with respect to the zero-padding consistent sequence in general. 

We now prove its compatibility and transferability with respect to the duplication consistent sequence with normalized $\ell_p$ norm.
\begin{corollary}\label{cor:normalized_ds}
    Fix arbitrary norms $\|\cdot\|_{\RR^d}$ on $\RR^d$ and $\|\cdot\|_{\RR^h}$ on $\RR^h$.
    Let $\rho : \RR^d \to \RR^h$ be $L_\rho$-Lipschitz, and $\sigma : \RR^h \to \RR$ be $L_\sigma$-Lipschitz,  with respect to the norms $\|\cdot\|_{\RR^d}$, $\|\cdot\|_{\RR^h}$, and $|\cdot|$.
    Then, for all $p\in[1,\infty]$, the sequence of maps $(\overline{\mathrm{DeepSet}}_n)$ is $(L_\rho L_\sigma)$-Lipschitz transferable with respect to the duplication consistent sequence $\mscr V_{\mathrm{dup}}^{\oplus d}$ (equipped with the normalized $\ell_p$ norm induced by $\|\cdot\|_{\RR^d}$) and the trivial consistent sequence $\mscr V_{\RR}$ (with absolute value norm).
    Therefore, $(\overline{\mathrm{DeepSet}}_n)$ extends to 
   \[\overline{\mathrm{DeepSet}}_{\infty}: \mc{P}_{p}(\RR^{d}) \to \RR, \quad
    \overline{\mathrm{DeepSet}}_{\infty}(\mu) = \sigma\left(\int\rho d\mu\right),\] which is $(L_\rho L_\sigma)$-Lipschitz with respect to the Wasserstein-$p$ distance on $\|\cdot\|_{\RR^d}$. 
\end{corollary}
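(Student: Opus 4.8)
The plan is to instantiate Proposition~\ref{prop:loc_lip_NNs} on the factorization
\[
\mscr V_{\mathrm{dup}}^{\oplus d} \xrightarrow[\text{row-wise}]{\rho^{\oplus n}} \mscr V_{\mathrm{dup}}^{\oplus h} \xrightarrow{\ \frac1n\sum_{i=1}^{n}\ } \mscr V_{\RR^h} \xrightarrow{\ \sigma\ } \mscr V_{\RR},
\]
where $\mscr V_{\RR^h},\mscr V_{\RR}$ are trivial consistent sequences, the mean aggregation plays the role of a linear layer, and $\rho^{\oplus n},\sigma$ are the nonlinearities (inserting identity maps where needed to match the alternating linear/nonlinear pattern required by the proposition). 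First I would check compatibility of each piece. For $\rho^{\oplus n}$: duplicating the rows of $X$ and then applying $\rho$ row-wise gives the same result as applying $\rho$ row-wise and then duplicating, so $\rho^{\oplus N}\circ\varphinn{n}{N} = \psinn{n}{N}\circ\rho^{\oplus n}$; and row-wise application is $\msf S_n$-equivariant. Note that, unlike the zero-padding case, no assumption $\rho(0)=0$ is needed here. For the mean aggregation: $\frac1N\sum_{i=1}^{N}(x\otimes\mathbbm 1_{N/n})_{i} = \frac1n\sum_{i=1}^{n}x_i$ and the mean is $\msf S_n$-invariant, so it is compatible; $\sigma$ maps between trivial consistent sequences and is automatically compatible.

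Next I would verify the uniform Lipschitz bounds with respect to the normalized $\ell_p$ norms induced by $\|\cdot\|_{\RR^d}$ and $\|\cdot\|_{\RR^h}$. The map $\rho^{\oplus n}$ is $L_\rho$-Lipschitz since for $p<\infty$
\[
\|\rho^{\oplus n}(X)-\rho^{\oplus n}(Y)\|_{\overline p}
= \Bigl(\tfrac1n\textstyle\sum_i \|\rho(X_{i:})-\rho(Y_{i:})\|_{\RR^h}^p\Bigr)^{1/p}
\le L_\rho\Bigl(\tfrac1n\textstyle\sum_i \|X_{i:}-Y_{i:}\|_{\RR^d}^p\Bigr)^{1/p} = L_\rho\|X-Y\|_{\overline p},
\]
with the analogous estimate for $p=\infty$. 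The mean aggregation has operator norm at most $1$ from $(\RR^{n\times h},\|\cdot\|_{\overline p})$ to $(\RR^h,\|\cdot\|_{\RR^h})$: by the triangle inequality followed by the power-mean inequality (for $p\ge 1$; trivial for $p=\infty$),
\[
\Bigl\|\tfrac1n\textstyle\sum_i X_{i:}-\tfrac1n\textstyle\sum_i Y_{i:}\Bigr\|_{\RR^h}
\le \tfrac1n\textstyle\sum_i \|X_{i:}-Y_{i:}\|_{\RR^h}
\le \Bigl(\tfrac1n\textstyle\sum_i \|X_{i:}-Y_{i:}\|_{\RR^h}^p\Bigr)^{1/p} = \|X-Y\|_{\overline p}.
\]
Finally $\sigma$ is $L_\sigma$-Lipschitz by hypothesis. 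Applying Proposition~\ref{prop:loc_lip_NNs} then yields that $(\overline{\mathrm{DeepSet}}_n)$ is $(L_\rho L_\sigma)$-Lipschitz transferable, extending to an $(L_\rho L_\sigma)$-Lipschitz map on $\overline\Vinf = L^p([0,1],\RR^d)$.

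To finish, I would identify the extension with the measure-theoretic formula. Since each $\overline{\mathrm{DeepSet}}_n$ is $\msf S_n$-invariant, the limit is $\Ginf$-invariant and descends to the orbit-closure space, which by the discussion preceding Proposition~\ref{prop:Wp_space_limit} (and its $\RR^d$-valued analogue) is $\mc P_p(\RR^d)$ with the Wasserstein-$p$ distance. On a step function $f_x$ corresponding to $x\in\RR^{n\times d}$ with empirical measure $\mu_x = \frac1n\sum_i\delta_{x_{i:}}$, the aggregation layer reads $\frac1n\sum_i\rho(x_{i:}) = \int\rho\,d\mu_x$, so $\overline{\mathrm{DeepSet}}_\infty(\mu_x) = \sigma\bigl(\int\rho\,d\mu_x\bigr)$; by density of empirical measures in $\mc P_p(\RR^d)$ and continuity, this formula persists on all of $\mc P_p(\RR^d)$ — note that $\rho$ being $L_\rho$-Lipschitz gives $\|\rho(v)\|_{\RR^h}\le\|\rho(0)\|_{\RR^h}+L_\rho\|v\|_{\RR^d}$, so $\int\rho\,d\mu$ is well-defined for every $\mu\in\mc P_p(\RR^d)$. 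Lipschitzness with respect to $W_p$ then follows from Proposition~\ref{prop:continuity_in_delta}. The only step requiring genuine care, rather than routine estimation, is this last identification: checking that the sole $n$-dependent piece (the mean) really extends to integration against the limiting measure and that this extension is well-defined on all of $\mc P_p(\RR^d)$, which is exactly where the finite-$p$-th-moment condition and the linear growth of the Lipschitz $\rho$ enter.
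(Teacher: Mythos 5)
Your proof is correct and follows essentially the same route as the paper's: the same three-stage decomposition into $\rho^{\oplus n}$, mean aggregation, and $\sigma$, the same per-layer compatibility checks, the same $L_\rho$ and $1$ Lipschitz estimates (your power-mean step is what the paper invokes as H\"older), and the same appeal to Proposition~\ref{prop:loc_lip_NNs}. The one thing you add that the paper's proof glosses over is the explicit identification of the extension with $\sigma(\int\rho\,d\mu)$ and the check that the integral is well-defined on all of $\mc P_p(\RR^d)$ via the linear growth of the Lipschitz $\rho$ -- a worthwhile bit of completeness, not a different method.
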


\begin{proof}
 We model each intermediate space with consistent sequences:
    \[\mscr V_{\mathrm{dup}}^{\oplus d}= (\RR^{n\times d})\xrightarrow[\text{(row-wise)}]{\rho^{\oplus n}} V_{\mathrm{dup}}^{\oplus h}=(\RR^{n\times h})\xrightarrow{\frac{1}{n}\sum_{i=1}^n}\mscr V_{\RR^h} = (\RR^h)\xrightarrow[]{\sigma}\mscr V_{\RR}=(\RR).\]
We first consider the compatibility of each map.
\begin{itemize}[leftmargin=0.8cm]
    \item The $\rho$-map is compatible because $\rho^{\oplus N}(X\otimes \mathbbm{1}_{N/n})=\rho^{\oplus n}(X)\otimes \mathbbm{1}_{N/n}$ for all $n\mid N$, 
and the row-wise application makes sure $\rho$ is $\msf S_n$-equivariant.
    \item The mean aggregation $\agg_{i=1}^n = \frac{1}{n}\sum_{i=1}^n$ is compatible because for all $n\mid N, X\in \RR^{n\times d}$, 
    \[\frac{1}{N}\sum_{i=1}^N (X\otimes \mathbbm{1}_{N/n})_{i:} = \frac{1}{N}\sum_{i=1}^n (N/n) X_{i:} = \frac{1}{n}\sum_{i=1}^n X_{i:},\]  and the mean operation is $\msf S_n$-invariant.
    \item The map $\sigma$ is again automatically compatible.
\end{itemize}
Endow $\mscr V_{\mathrm{dup}}^{\oplus d}$ with the normalized $\ell_p$ norm with respect to $\|\cdot\|_{\RR^d}$, and $\mscr V_{\mathrm{dup}}^{\oplus h}$ with the normalized $\ell_p$ norm with respect to $\|\cdot \|_{\RR^h}$. The trivial consistent sequences $\mscr V_{\RR^h},\mscr V_{\RR}$ are endowed with $\|\cdot \|_{\RR^h}$ and $|\cdot |$ respectively. Next, we check the Lipschitz transferability of each map in the composition.
\begin{itemize}[leftmargin=0.8cm]
    \item The $\rho$-map is $L_{\rho}$-Lipschitz because for all $n$, we can prove $\rho^{\oplus n}: \RR^{n\times d}\to \RR^{n\times h}$ is $L_\rho$ Lipschitz with respect to the normalized $\ell_p$ norm:
    \begin{align*}
        \|\rho^{\oplus n}(X)-\rho^{\oplus n}(Y)\|_{\overline p} 
        &= \left( \frac{1}{n} \sum_{i=1}^n \|\rho(X_{i:}) - \rho(Y_{i:})\|_{\RR^h}^p \right)^{1/p} \\
        &\leq \left( \frac{1}{n} \sum_{i=1}^n L_\rho^p \|X_{i:} - Y_{i:}\|_{\RR^d}^p \right)^{1/p} \\
        &= L_\rho \|X - Y\|_{\overline p}.
    \end{align*}
    \item The mean aggregation $\agg_{i=1}^n = \frac{1}{n} \sum_{i=1}^n$ is $1$-Lipschitz transferable because
    \[
    \left\| \frac{1}{n} \sum_{i=1}^n X_{i:} - \frac{1}{n} \sum_{i=1}^n Y_{i:} \right\|_{\RR^h}
    \leq \frac{1}{n} \sum_{i=1}^n \| X_{i:} - Y_{i:} \|_{\RR^h}
    = \| X - Y \|_{\bar{1}} \leq \| X - Y \|_{\bar{p}},
    \]
    where the last inequality follows from Hölder's inequality.
    \item The map $\sigma$ is $L_\sigma$-Lipschitz transferable.
\end{itemize}
The result follows from an application of Proposition~\ref{prop:loc_lip_NNs}; completing the proof.
\end{proof}

\begin{remark}
The same result was also proved in~\cite[Theorem 3.7]{bueno2021representation} by directly verifying the Lipschitz property of $\overline{\mathrm{DeepSet}}_\infty$: for all $p\geq 1,$
\begin{align*}
    |\overline{\mathrm{DeepSet}}_\infty(\mu_X) - \overline{\mathrm{DeepSet}}_\infty(\mu_Y)| 
&\leq L_\sigma \left| \int \rho \, d(\mu_X - \mu_Y) \right| \\
&\leq L_\sigma L_\rho W_1(\mu_X, \mu_Y)\leq L_\sigma L_\rho W_p(\mu_X,\mu_Y),
\end{align*}
where the second inequality follows from the Kantorovich-Rubinstein duality. Our methods provide an alternative proof, using a proof technique that applies more generally (Proposition~\ref{prop:loc_lip_NNs}).
\end{remark}

Following this result, we can directly apply Propositions~\ref{prop:transferability_determinimistic} and~\ref{prop:transferability_random}, along with the convergence rates described in Appendix~\ref{appen:convergence_rates}, which immediately yields the following transferability result.

\begin{corollary}[Transferability of normalized DeepSet]\label{cor:normalized_ds_transferability}We have the following transferability results for normalized DeepSet:
\begin{enumerate}
    \item (\textbf{Uniform grid sampling}) Let $(X_n)\in \RR^{n\times d}$ be a sequence of matrices sampled from the same signal $f$ via the ``uniform grid'' sampling scheme, i.e., taking $(X_n)_{i:}=f\left(\frac{i-1}{n}\right)$ for all $i\in[n]$. Suppose $f$ is Lipschitz. Then, 
    \[\left|\overline{\mathrm{DeepSet}}_n(X_n) - \overline{\mathrm{DeepSet}}_m(X_m)\right|\lesssim n^{-1}+m^{-1}. \]

    \item (\textbf{Random signal sampling}) Let $(X_n)\in \RR^{n\times d}$ be a sequence of matrices sampled from the same signal $f$ via the random signal sampling scheme, i.e. taking $(X_n)_{i:} =f\left(x_i\right)$ for all $i\in [n]$, where $x_1,\dots, x_n$ are sampled i.i.d. from $\mathrm{Unif}([0,1])$. Suppose $f\in L^3([0,1],\RR^d)$. Then,  
     \begin{align*}
        &\mathbb{E}\left|\overline{\mathrm{DeepSet}}_n(X_n) - \overline{\mathrm{DeepSet}}_m(X_m)\right|\\
        &\lesssim 
    \begin{cases}
                n^{-1/2} + m^{-1/2}               & \textrm{if }d=1,\\
                n^{-1/2}\log(1+n) + m^{-1/2}\log(1+m) & \textrm{if }d=2, \\
                n^{-1/d} + m^{-1/d}                 & \textrm{if }d>2. 
        \end{cases}
    \end{align*}

    \item (\textbf{Empirical distributions}) Let $(X_n)\in \RR^{n\times d}$ be a sequence of matrices sampled from the same underlying distribution $\mu\in\mc P_3(\RR^d)$, i.e., each $X_n$ has rows sampled i.i.d. from $\mu$. Then, 
    \begin{align*}
        &\mathbb{E}\left|\overline{\mathrm{DeepSet}}_n(X_n) - \overline{\mathrm{DeepSet}}_m(X_m)\right|\\
        &\lesssim 
    \begin{cases}
                n^{-1/2} + m^{-1/2}               & \textrm{if }d=1,\\
                n^{-1/2}\log(1+n) + m^{-1/2}\log(1+m) & \textrm{if }d=2, \\
                n^{-1/d} + m^{-1/d}                 & \textrm{if }d>2. 
        \end{cases}
    \end{align*}
\end{enumerate}
\end{corollary}

\paragraph{PointNet.}
The max aggregation is not compatible with zero-padding. Consider a vector $x = (x_1, \dots, x_n) \in \RR^n$ where all entries $x_i < 0$, and suppose $n < N$. When zero-padded to length $N$, we obtain
\[
\tilde{x} = (x_1, \dots, x_n, 0, \dots, 0) \in \RR^N.
\]
Then, we have
\[
\max_{1 \leq i \leq N} \tilde{x}_i = 0 \neq \max_{1 \leq i \leq n} x_i.
\]
Hence, unless we restrict the model to avoid all-negative entries, PointNet is not compatible with the zero-padding consistent sequence.
We now prove its compatibility and transferability with respect to the duplication sequence with the $\ell_\infty$ norm.
\begin{corollary}
    Fix arbitrary norms $\|\cdot\|_{\RR^d}$ on $\RR^d$ and $\|\cdot\|_{\infty}$ on $\RR^h$.
    Let $\rho : \RR^d \to \RR^h$ be $L_\rho$-Lipschitz, and $\sigma : \RR^h \to \RR$ be $L_\sigma$-Lipschitz,  with respect to the norms $\|\cdot\|_{\RR^d}$, $\|\cdot\|_{\infty}$ on $\RR^h$, and $|\cdot|$.
    Then, the sequence of maps $({\mathrm{PointNet}_n})$ is $(L_\rho L_\sigma)$-Lipschitz transferable with respect to the duplication consistent sequence $\mscr V_{\mathrm{dup}}^{\oplus d}$ (equipped with the $\ell_\infty$-norm induced by $\|\cdot\|_{\RR^d}$) and the trivial consistent sequence $\mscr V_{\RR}$ (with absolute value norm).
    Therefore, $({\mathrm{PointNet}_n})$ extends to 
   \[{\mathrm{PointNet}}_{\infty}: \mc{P}_{\infty}(\RR^{d}) \to \RR, \quad {\mathrm{PointNet}}_{\infty}(\mu) = \sigma\left(\sup_{x\in\supp(\mu)}\rho(x) \right),\] which is $(L_\rho L_\sigma)$-Lipschitz with respect to the Wasserstein-$\infty$ distance on $\|\cdot\|_{\RR^d}$. 
\end{corollary}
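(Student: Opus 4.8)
The strategy mirrors the proofs of the previous two corollaries (DeepSet and normalized DeepSet): we realize $\mathrm{PointNet}_n$ as a composition of compatible, locally Lipschitz maps between consistent sequences, and then invoke Proposition~\ref{prop:transferable_nn} (the detailed version of Proposition~\ref{prop:loc_lip_NNs}). Explicitly, we factor
\[
\RR^{n\times d}\xrightarrow[\text{row-wise}]{\rho^{\oplus n}}\RR^{n\times h}\xrightarrow{\max_{i=1}^{n}}\RR^{h}\xrightarrow{\sigma}\RR,
\]
modeling the first two spaces by $\mscr V_{\mathrm{dup}}^{\oplus d}$ and $\mscr V_{\mathrm{dup}}^{\oplus h}$, each with the $\ell_\infty$ norm (which is the $p=\infty$ case of the normalized $\ell_p$ norm, since no $1/n$ factor appears), and the last two by the trivial consistent sequence $\mscr V_{\RR^h}$ (with $\|\cdot\|_\infty$) and $\mscr V_{\RR}$ (with $|\cdot|$).

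\textbf{Step 1 (compatibility).} As in Corollary~\ref{cor:normalized_ds}, $\rho^{\oplus n}$ is compatible because $\rho^{\oplus N}(X\otimes\mathbbm 1_{N/n})=\rho^{\oplus n}(X)\otimes\mathbbm 1_{N/n}$ and the row-wise action is $\msf S_n$-equivariant. The key new point is that the \emph{max} aggregation is compatible with the \emph{duplication} embedding: duplicating each entry $N/n$ times does not change the maximum (over rows, entrywise in $\RR^h$), and $\max$ is $\msf S_n$-invariant. The map $\sigma$ is between trivial consistent sequences, hence automatically compatible. So $(\mathrm{PointNet}_n)$ is compatible, i.e.\ extends to $\mathrm{PointNet}_\infty$ on $\Vinf$.

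\textbf{Step 2 (Lipschitzness of each layer in $\ell_\infty$).} The row-wise map $\rho^{\oplus n}$ is $L_\rho$-Lipschitz in $\ell_\infty$:
\[
\|\rho^{\oplus n}(X)-\rho^{\oplus n}(Y)\|_{\overline\infty}=\max_i\|\rho(X_{i:})-\rho(Y_{i:})\|_\infty\le L_\rho\max_i\|X_{i:}-Y_{i:}\|_{\RR^d}=L_\rho\|X-Y\|_{\overline\infty}.
\]
The entrywise max aggregation $M(Z)=\max_{i=1}^{n}Z_{i:}$ is $1$-Lipschitz from $(\RR^{n\times h},\|\cdot\|_{\overline\infty})$ to $(\RR^{h},\|\cdot\|_\infty)$: for each coordinate $j$, $|\max_i Z_{ij}-\max_i W_{ij}|\le\max_i|Z_{ij}-W_{ij}|$, so taking the max over $j$ gives $\|M(Z)-M(W)\|_\infty\le\|Z-W\|_{\overline\infty}$. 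Finally $\sigma$ is $L_\sigma$-Lipschitz by hypothesis. Applying Proposition~\ref{prop:transferable_nn} with $L_W=1$ (no linear layers, or identity linear layers) and constant Lipschitz factors yields that $(\mathrm{PointNet}_n)$ is $(L_\rho L_\sigma)$-Lipschitz transferable, extending to a globally Lipschitz function on $\overline{\Vinf}=\{$regulated-type functions on $[0,1]\}$ for the $p=\infty$ duplication sequence. By Proposition~\ref{prop:continuity_in_delta}, the same Lipschitz bound holds in the symmetrized metric, which by Proposition~\ref{prop:Wp_space_limit} is the Wasserstein-$\infty$ distance; hence $\mathrm{PointNet}_\infty$ descends to an $(L_\rho L_\sigma)$-Lipschitz map on $\mc P_\infty(\RR^d)$.

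\textbf{Step 3 (identifying the limiting formula).} It remains to verify that the induced map on measures is $\mu\mapsto\sigma\big(\sup_{x\in\supp(\mu)}\rho(x)\big)$, where the $\sup$ is entrywise. For a step function $f_X$ with $X\in\RR^{n\times d}$, the aggregation $\max_i\rho(X_{i:})$ equals $\sup_{x\in\supp(\mu_X)}\rho(x)$ since $\supp(\mu_X)=\{X_{1:},\dots,X_{n:}\}$; this shows the formula on $\Vinf$. For a general $\mu\in\mc P_\infty(\RR^d)$ with compact support, take a sequence of empirical measures $\mu_{X_n}\to\mu$ in $W_\infty$ (equivalently, $\supp(\mu_{X_n})\to\supp(\mu)$ in Hausdorff distance, using that $W_\infty$ controls Hausdorff distance of supports); continuity of $\mathrm{PointNet}_\infty$ passes $\sigma(\max_i\rho((X_n)_{i:}))=\sigma(\sup_{x\in\supp(\mu_{X_n})}\rho(x))$ to $\sigma(\sup_{x\in\supp(\mu)}\rho(x))$, using that $K\mapsto\sup_{x\in K}\rho(x)$ is continuous in the Hausdorff metric on compact sets (each coordinate $\rho_j$ is continuous, so $\sup_K\rho_j$ is $1$-Lipschitz in Hausdorff distance composed with the modulus of continuity of $\rho_j$ — and $\rho$ here is Lipschitz). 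This pins down $\mathrm{PointNet}_\infty$.

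\textbf{Main obstacle.} The routine part is Steps 1--2; the subtlety is Step 3, specifically the bookkeeping that the $p=\infty$ normalized $\ell_p$ norm genuinely has no $1/n$ factor (so max aggregation is $1$-Lipschitz, not merely $n$-dependent), and the verification that convergence in $W_\infty$ of empirical measures to $\mu$ forces Hausdorff convergence of supports so that the supremum formula is the correct continuous extension. One must also note that $\overline{\Vinf}$ for $p=\infty$ consists of bounded (regulated-type) functions, so every element of $\overline{\Vinf}$ has $\mu_f$ with compact support, matching the domain $\mc P_\infty(\RR^d)$ claimed; this was already established in Appendix~\ref{appen:set-wasserstein}, so we may cite it rather than reprove it.
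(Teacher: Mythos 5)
Your proof follows the same core route as the paper: factor $\mathrm{PointNet}_n$ as $\rho^{\oplus n}$, then entrywise $\max$, then $\sigma$, model the intermediate spaces by $\mscr V_{\mathrm{dup}}^{\oplus d},\mscr V_{\mathrm{dup}}^{\oplus h},\mscr V_{\RR^h},\mscr V_{\RR}$ with the (unnormalized) $\ell_\infty$ norms, verify compatibility and $1$-Lipschitzness of the $\max$ aggregation, and invoke Proposition~\ref{prop:loc_lip_NNs}. Steps~1--2 are essentially the paper's argument. Your Step~3, however, is a genuine and welcome addition: the paper's proof stops after applying Proposition~\ref{prop:loc_lip_NNs} and asserts the closed-form $\mathrm{PointNet}_\infty(\mu)=\sigma\bigl(\sup_{x\in\supp\mu}\rho(x)\bigr)$ without verifying that the continuous extension produced by that proposition actually agrees with this formula on all of $\mc P_\infty(\RR^d)$. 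You close this by checking the formula on empirical measures (the dense set $\Vinf$), then approximating a general compactly supported $\mu$ by empirical measures in $W_\infty$, noting that $W_\infty$-convergence dominates Hausdorff convergence of supports and that $K\mapsto\sup_{x\in K}\rho(x)$ is (entrywise) Lipschitz in the Hausdorff metric because $\rho$ is Lipschitz. This is correct and fills a step the paper leaves implicit. One small side remark: your Step~3 needs only \emph{continuity} of the extension, which Step~2 already supplies, so there is no circularity.
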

    
\begin{proof}
We again consider consistent sequences
\[\mscr V_{\mathrm{dup}}^{\oplus d}= (\RR^{n\times d})\xrightarrow[\text{(row-wise)}]{\rho^{\oplus n}} V_{\mathrm{dup}}^{\oplus h}=(\RR^{n\times h})\xrightarrow{\max_{i=1}^n}\mscr V_{\RR^h} = (\RR^h)\xrightarrow[]{\sigma}\mscr V_{\RR}=(\RR).\]
For compatibility, it is left to check that $\agg_{i=1}^n = \max_{i=1}^n$ is compatible. Indeed, for any $X\in \RR^{n\times d}, n\mid N$, we have $\max_{i=1}^N (X\otimes \mathbbm{1}_{N/n})_{i:} = \max_{i=1}^n X_{i:}$, and the $\max$ operation is $\msf S_n$-invariant.
Endow $\mscr V_{\mathrm{dup}}^{\oplus d}$ with the $\ell_\infty$ norm with respect to $\|\cdot\|_{\RR^d}$, and $\mscr V_{\mathrm{dup}}^{\oplus h}$ with the $\ell_\infty$ norm with respect to $\|\cdot \|_{\infty}$ on $\RR^h$. The trivial consistent sequences $\mscr V_{\RR^h},\mscr V_{\RR}$ are endowed with $\|\cdot \|_{\infty}$ and $|\cdot |$ respectively. Next, we check Lipschitz transferability of each map.
\begin{itemize}[leftmargin=0.8cm]
    \item We have proved in the proof for normalized DeepSet that $\rho, \sigma$ are $L_\rho$, $L_\sigma$ Lipschitz transferable respectively.
    \item For any $j\in[d]$, $|\max_{i=1}^n X_{ij} -\max_{i=1}^n Y_{ij}|\leq \max_{i=1}^n |X_{ij} - Y_{ij}|$. Take max over $j\in[d]$, we conclude  
    \[\left\|\max_{i=1}^n X_{i:} - \max_{i=1}^n Y_{i:}\right\|_\infty = \max_{j=1}^d\left|\max_{i=1}^n X_{ij} - \max_{i=1}^n Y_{ij}\right|\leq \max_{i=1}^n \left\|X_{i:} - Y_{i:}\right\|_\infty.\]
    Hence, $\agg_{i=1}^n =\max_{i=1}^n$ is 1-Lipschitz transferable. 
\end{itemize}
The result follows from Proposition~\ref{prop:loc_lip_NNs}; which completes the proof.
\end{proof}
\begin{remark}\label{rmk:Hausdorff}
$\mathrm{PointNet}_\infty$ produces identical outputs for probability measures with the same support. Thus, it can be viewed as a function
\[
\mathrm{PointNet}_\infty : \mc K(\RR^d) \to \RR, \quad \mathrm{PointNet}_\infty(X) = \sigma\left( \sup_{x \in X} \rho(x) \right),
\]
where $\mc K(\RR^d)$ denotes the set of non-empty compact subsets of $\RR^d$. The $W_\infty$ distance on $\mc P_\infty(\RR^d)$ induces the quotient metric $d_{\mc K}$ on $\mc K(\RR^d)$ via the equivalence relation $\mu\sim\nu$ if $\mathrm{supp}(\mu)=\mathrm{supp}(\nu)$.
Our results imply that $\mathrm{PointNet}_\infty$ is $(L_\rho L_\sigma)$-Lipschitz with respect to $d_{\mc K}$.

A more commonly used metric on $\mc K(\RR^d)$ is the Hausdorff distance, defined by
\begin{equation}\label{eq:hausdorff}
d_H(X, Y) \coloneq \max\left\{ \sup_{x \in X} \inf_{y \in Y} \|x - y\|, \sup_{y \in Y} \inf_{x \in X} \|x - y\| \right\}.
\end{equation}
\cite[Theorem 3.7]{bueno2021representation} shows that $\mathrm{PointNet}_\infty$ is  $(2L_\rho L_\sigma)$-Lipschitz with respect to $d_H$. It is easy to see that $d_H\leq d_{\mc K}$, but we leave exploring further relations between these two metrics to future work.
\end{remark}

Finally, we show that the sequence of maps $(\mathrm{PointNet}_n)$ is, in general, not transferable with respect to the duplication-based consistent sequence $\mscr V_{\mathrm{dup}}^{\oplus d}$ when equipped with the normalized $\ell_p$ norm for any $p \in [1, \infty)$. Consider the sequence of matrices $(X^{(n)} \in \RR^{n \times h})_n$ where the first row is the all-one vector $\mathbbm{1}_h^\top$, and the remaining $n-1$ rows are zero vectors. Then,
\[
\|X^{(n)} - 0\|_{\bar{p}} \to 0 \quad \text{as } n \to \infty,
\]
which implies that $[X^{(n)}] \to [0]$ in $\Vinf$. However,
\[
\max_{i=1}^n X^{(n)}_{i:} = \mathbbm{1}_h \quad \text{for all } n.
\]
This demonstrates that the max aggregation $\agg_{i=1}^n = \max_{i=1}^n$ is not continuously transferable under the normalized $\ell_p$ norm, and so neither is the sequence $(\mathrm{PointNet}_n)$.

\paragraph{Related work.}
Several previous works \cite{zweig2021functional, pevny2019approximation, bueno2021representation} 
studied (variants of) normalized DeepSets in the infinite-dimensional limit as operating in the space of probability measures, 
and investigated their approximation power and generalization behavior. 

Moreover, while our analysis primarily focused on \emph{invariant} neural networks on sets, it is also natural to design and analyze \emph{equivariant} neural networks on sets with our theoretical framework. In particular, we may consider such neural networks that are transferable with respect to duplication-consistent sequences, 
which parametrize measure-to-measure functions. For example, \cite{de2019stochastic} proposed a general framework for designing measure-to-measure neural network architectures. Additionally, \cite{vuckovic2020mathematical, geshkovski2025mathematical, sander2022sinkformers} 
analyzed transformers (without causal masking or positional encoding) as measure-to-measure functions 
in the limit space. We leave the analysis of the transferability of these models within our framework as a future direction.

\subsubsection{Explanation of transferability plots (Figure~\ref{fig:deepset_transferability})}
\label{appen:deepset_transferability_plots}

Our numerical experiment in Figure~\ref{fig:deepset_transferability} illustrates the second column of Table~\ref{tab:set_models} for $p=1$: the Lipschitz transferability of normalized DeepSet, and non-transferability of DeepSet and PointNet, with respect to ($\mscr V_{\mathrm{dup}}, \|\cdot\|_{\overline 1}$).
First, applying Proposition~\ref{prop:transferability_random} to normalized DeepSet, and recalling the convergence rate of empirical distributions given in~\eqref{eq:Wp_conv_rate} for $d=1,p=1$, we get the following corollary.
\begin{corollary}[Convergence and transferability of normalized DeepSet]
    Let $(x_n \in \RR^n)_{n \in \NN}$ be a sequence of inputs with entries $(x_n)_i \stackrel{\text{i.i.d.}}{\sim} \mu$ for $i=1,\dots,n$, where $\mu$ is a probability measure on $\RR$ with finite expectation. Define the empirical measure $\mu_{x} \coloneq \frac{1}{n} \sum_{i=1}^n \delta_{x_i}$ for $x\in\RR^n$. Then $\mathbb{E}[W_1(\mu, \mu_{x_n})] \lesssim  n^{-1/2}$, and hence
    \begin{align*}
        \mathbb{E}\left[
    \left|\overline{\mathrm{DeepSet}}_n(x_n) - \overline{\mathrm{DeepSet}}_\infty(\mu)\right|\right] &\lesssim n^{-1/2}, \\
    \mathbb{E}\left[\left|\overline{\mathrm{DeepSet}}_n(x_n) - \overline{\mathrm{DeepSet}}_m(x_m)\right|\right] & \lesssim n^{-1/2}+m^{-1/2}.
    \end{align*}
\end{corollary}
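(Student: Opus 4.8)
The plan is to obtain the Corollary by specializing three results already established: the empirical-measure convergence rate~\eqref{eq:Wp_conv_rate}, the identification of the symmetrized metric on the duplication sequence with the Wasserstein distance (Proposition~\ref{prop:Wp_space_limit}), and the random-sampling transferability bound (Proposition~\ref{prop:transferability_random}), applied to the sequence $(\overline{\mathrm{DeepSet}}_n)$, which is globally Lipschitz transferable by Corollary~\ref{cor:normalized_ds}.

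First I would establish the Wasserstein rate: since $\mu$ has finite expectation we have $\mu \in \mc P_1(\RR)$, so~\eqref{eq:Wp_conv_rate} applies with $d = 1$, $p = 1$, and because $p = 1 > 1/2 = d/2$ it gives $\mathbb{E}[W_1(\mu,\mu_{x_n})] \lesssim n^{-1/(2p)} = n^{-1/2}$, which is the first displayed bound. Next I would set up the framework: take the input consistent sequence $\mscr V_{\mathrm{dup}}$ (the $d = 1$ instance of $\mscr V_{\mathrm{dup}}^{\oplus d}$) equipped with the normalized $\ell_1$ norm, so that $\overline{\Vinf} = L^1([0,1])$ and the space of orbit closures is $\mc P_1(\RR)$ with $W_1$, and the trivial output sequence $\mscr V_{\RR}$ with the absolute value. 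Let $x \in \overline{\Vinf}$ be the increasing generalized inverse CDF of $\mu$, so $\mu_x = \mu$ and $f_\infty([x]) = \sigma\!\left(\int \rho\, d\mu_x\right) = \overline{\mathrm{DeepSet}}_\infty(\mu)$. Each finite input $x_n \in \RR^n$ determines $[x_n] \in \Vinf$, a step function with associated measure $\mu_{x_n}$, so by Proposition~\ref{prop:Wp_space_limit} (extended to $\overline{\Vinf}$) we have $\sdist([x_n], x) = W_1(\mu_{x_n}, \mu)$, whence the first step yields $\mathbb{E}[\sdist([x_n], x)] \lesssim R(n)$ with $R(n) = n^{-1/2}$. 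By Corollary~\ref{cor:normalized_ds}, $(\overline{\mathrm{DeepSet}}_n)$ is $(L_\rho L_\sigma)$-Lipschitz transferable, hence locally Lipschitz transferable at $x$; being \emph{globally} Lipschitz transferable it also satisfies the localized uniform-integrability hypothesis~\eqref{eq:uniformIntegrability} (case $(i)$ of the remark following Proposition~\ref{prop:transferability_random}, with $M = L_\rho L_\sigma$). Since the convergence above holds in the symmetrized metric rather than in $\|\cdot\|_{\Vinf}$, I would use the version of Proposition~\ref{prop:transferability_random} with $d_\infty$ replaced by $\sdist$, as allowed by that proposition's remark.

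Applying Proposition~\ref{prop:transferability_random} then gives the convergence bound $\mathbb{E}[\,|\overline{\mathrm{DeepSet}}_n(x_n) - \overline{\mathrm{DeepSet}}_\infty(\mu)|\,] = \mathbb{E}[\sdist([f_n(x_n)], f_\infty(x))] \lesssim R(n) = n^{-1/2}$ and the transferability bound $\mathbb{E}[\,|\overline{\mathrm{DeepSet}}_n(x_n) - \overline{\mathrm{DeepSet}}_m(x_m)|\,] \lesssim R(n) + R(m) = n^{-1/2} + m^{-1/2}$, which are exactly the two remaining displayed inequalities. The argument is essentially bookkeeping; the only slightly delicate points are naming the correct limit object $x$ (the inverse CDF of $\mu$) so that $\sdist([x_n], x) = W_1(\mu_{x_n}, \mu)$, and observing that global Lipschitz transferability discharges the uniform-integrability condition, so that the possibly unbounded support of $\mu$ causes no difficulty.
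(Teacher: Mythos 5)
Your proposal is correct and takes essentially the same approach as the paper, which states the result as a direct consequence of Proposition~\ref{prop:transferability_random} applied to normalized DeepSet together with the empirical-distribution convergence rate~\eqref{eq:Wp_conv_rate} at $d=1,p=1$. You have simply spelled out the bookkeeping the paper leaves implicit: choosing $\mscr V_{\mathrm{dup}}$ with the normalized $\ell_1$ norm, identifying the limit object $x$ as the inverse CDF of $\mu$ so that $\sdist([x_n],x)=W_1(\mu_{x_n},\mu)$ via Proposition~\ref{prop:Wp_space_limit}, invoking the global Lipschitz transferability from Corollary~\ref{cor:normalized_ds} to discharge the uniform-integrability hypothesis~\eqref{eq:uniformIntegrability}, and using the symmetrized-metric variant of Proposition~\ref{prop:transferability_random}.
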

Indeed, Figure~\ref{fig:deepset_transferability}(b) shows convergence of the model outputs, and Figure~\ref{fig:deepset_transferability}(d) confirms that the convergence rate is $O(n^{-1/2})$, as predicted.
Figure~\ref{fig:deepset_transferability}(a) illustrates the divergence of outputs from DeepSet. This occurs because the sum $\sum_{i=1}^{n} \rho(x_{i})=O(n)$. If the function $\sigma$ in DeepSet is unbounded, this leads to unbounded (blow-up) outputs as $n$ increases.
Figure~\ref{fig:deepset_transferability}(c) shows divergent outputs from PointNet. When the input distribution $\mu$ has compact support, the output of PointNet will converge, although without guarantees on the rate. However, in our experiment where $\mu = \mc N(0,1)$ has non-compact support. If $\rho$ in the PointNet is unbounded, the maximum value $\max_{i=1}^{n} \rho(x_{i})$ diverges almost surely as $n \to \infty$. This again results in blow-up outputs.
\section{Example 2 (graphs): details and missing proofs from Section~\ref{sec:graphs} }
In this section, we study transferability for GNN architectures. Section~\ref{appensec:cs_graphs} introduces the consistent sequences we consider. Section~\ref{appen:MPNN} studies existing architectures and Section~\ref{appen:GGNN} introduces a new class of GNNs with better transferability properties. 
\subsection{Duplication consistent sequence for graphs}\label{appensec:cs_graphs}

We present an examples of consistent sequences on graphs, illustrated graphically in Figure~\ref{fig:graph-duplication}.
Start with the duplication consistent sequence for sets $\mscr V_{\mathrm{dup}}$
defined in~\ref{appen:set-wasserstein}, we define
\[
    \mscr V_{\mathrm{dup}}^{G}\coloneq \mathrm{Sym}^{2}(\mscr V_{\mathrm{dup}})\oplus
    \mscr V_{\mathrm{dup}}^{\oplus d},
\]
following the definition of direct sum and tensor product in Definition~\ref{def:direct_sum},
\ref{def:tensor_product}. This gives the duplication consistent sequence for graphs.
Specifically, $\mscr V_{\mathrm{dup}}^{G}= \{(\vct V_{n}),(\varphinn{n}{N}), (\msf G_{n}
)\}$ where $\vct V_{n}= \RR_{\mathrm{sym}}^{n\times n}\times \RR^{n\times d}$ for each
$n$ and the embedding for $n \mid N$ is given by,
\begin{align*}
    \varphinn{n}{N}\colon \RR_{\mathrm{sym}}^{n\times n}\times \RR^{n\times d} & \hookrightarrow \RR_{\mathrm{sym}}^{N\times N}\times \RR^{N\times d}                         \\
    (A,X)                                                             & \mapsto (A\otimes (\mathbbm{1}_{N/n}\mathbbm{1}_{N/n}^{\top}), X\otimes \mathbbm{1}_{N/n}),
\end{align*}which can be interpreted as replacing each node in the graph with $N/
n$ duplicated copies. The symmetric group $\msf S_{n}$ acts on $\vct V_{n}$ by
$g\cdot (A,X) = (gAg^{\top}, gX)$.

The space $\Vinf$ can be identified with the space with all step
graphons (and signals) in a similar way: Given $(A,X)\in\RR_{\mathrm{sym}}^{n\times
n }\times \RR^{n\times d}$, define
$W_{A}: [0,1]^{2}\to \RR, f_{X}: [0,1]\to \RR^{d}$ such that  $W_{A}$ takes value $A_{i,j}$
on the interval $I_{i,n}\times I_{j,n}\subset [0,1]^{2}$ for $i,j = 1,\dots, n \in
[n]$, and $f_{X}$ takes value $X_{i}$ on the interval $I_{i,n}\subset[0,1]$ for
$i=1,\dots, n$ . We call $(W_{A}, f_{X})$ the \emph{induced step graphon} from $(
A,X)$. Under this identification, permutations $\msf S_{n}$ permute the $n$ intervals
$I_{i,n}$ and act on $(W,f)$ by $\sigma \cdot (W,f) = (\sigma\cdot W, \sigma\cdot
f)=(W^{\sigma^{-1}},f \circ \sigma^{-1})$, where $W^{\sigma^{-1}}$ is defined by
$W^{\sigma^{-1}}(x,y)\coloneq W(\sigma^{-1}(x),\sigma^{-1}(y))$. The limit group
$\msf G_{\infty}$ is the union of such interval permutations. 

\paragraph{The $p$-norm on $\mscr V_{\mathrm{dup}}^{G}$.} Fix $\|\cdot \|_{\RR^d}$ a norm on $\RR^{d}$. We equip $\vct
V_{n}$ with a $p$-norm given by
\begin{equation}\label{eq:graph-p-norm}
\begin{aligned}
    \|(A,X)\|_{\overline{p}} &\coloneq 
    \max(\|A\|_{\overline{p}} ,\|X\|_{\overline{p}})\\
    &=
    \begin{cases}
        \max\left( \left( \frac{1}{n^2} \sum_{i,j\in [n]} |A_{ij}|^p \right)^{1/p} , \left( \frac{1}{n} \sum_{i=1}^n \|X_{i:}\|_{\RR^d}^p \right)^{1/p} \right) &  p \in [1,\infty), \\
        \max\left( \max_{i,j\in[n]} |A_{ij}| , \max_{i=1}^n \|X_{i:}\|_{\RR^d} \right) &  p = \infty.
    \end{cases}
\end{aligned}
\end{equation}
It is easy to check by Proposition~\ref{prop:metrics_compatibility} that this extends
to a norm on $\Vinf$. Under the identification with step graphons, this
norm on $\Vinf$ coincides with the standard $L^p$-norm given by
\begin{align*}
    \|(W,f)\|_{p} \coloneq& \max\left( \|W\|_{p}, \|f\|_{p} \right) \\ =& 
\begin{cases}
\max\left( \left( \iint |W(x,y)|^p \,dx\,dy \right)^{\!1/p}, \left( \int |f(x)|^p \,dx \right)^{\!1/p} \right)& p\in[1,\infty), \\[8pt]
\max\left( \sup_{x,y} |W(x,y)|, \sup_{x} |f(x)| \right) & p=\infty.
\end{cases}
\end{align*}
That is, for any $(A,X)\in\RR_{\mathrm{sym}}^{n\times n}
\times \RR^{n\times d}$, $\|(A,X)\|_{\overline{p}}  = \|(W_{A},f_{X})\|_{p} $.
The symmetrized metric is
\begin{equation}\label{eq:symmetrized_p_metric}
     \sdist_{ p}((W,f), (W', f')) = \inf_{\sigma\in \msf G_{\infty}}\{\max\left(\|W-
    \sigma\cdot W'\|_p, \|f - \sigma\cdot f'\|_{p}\right)\}.
\end{equation}
\paragraph{Operator $p$-norm on $\mscr V_{\mathrm{dup}}^{G}$.} Fix $\|\cdot \|_{\RR^d}$ a norm on $\RR^{d}$ and $p\in[1,\infty]$. We equip $\vct
V_{n}$ with a norm given by
\begin{equation}\label{eq:graph-operator-norm}
    \|(A,X)\|_{\mathrm{op},p} \coloneq \max\left(\frac{1}{n}\|A\|_{\mathrm{op},p}, \|X\|_{\overline{
    {p}}}\right),
\end{equation}
where $\|A\|_{\mathrm{op},p}$ is the operator norm of $A$ with respect to the $\ell
_{p}$ norm, i.e. \[\|A\|_{\mathrm{op},p}= \max_{\|x\|_p=1}\|Ax\|_{p},\] and
$\|X\|_{\overline{p}}$ is the normalized $\ell_{p}$-norms with respect to $\|\cdot\|_{\RR^d}$ defined in \eqref{eq:l_p_norm_direct_sum}.
It is easy to check by Proposition~\ref{prop:metrics_compatibility} that this extends
to a norm on $\Vinf$. 

Let 
 $T_{W}$ be the shift operator of graphon $W$ defined by $T_{W}(f)(u) \coloneq \int
_{0}^{1} W(u,v)f(v)dv$. Under the identification with step graphons, the 
norm on $\Vinf$ coincides with
\[
    \|(W,f)\|_{\mathrm{op},p}  \coloneq \max\left(\|T_{W}\|_{\mathrm{op},p}, \|f\|_{p}\right),
\]
where $\|T_{W}\|_{\mathrm{op},p}\coloneq \sup_{\|f\|_p=1}\|T_{W}(f)\|_{p}$, and the norm on $\|f\|_{p}$ is the $L^{p}$ norm as before. That is, for any $(A,X)\in\RR_{\mathrm{sym}}^{n\times n}
\times \RR^{n\times d}$, $\|(A,X)\|_{\mathrm{op},p}  = \|(W_{A},f_{X})\|_{\mathrm{op},p} $.
For $p\in [1,\infty)$, the completion with respect to this metric is
\[
    \overline{\Vinf}= \{W\in L^{p}([0,1]^{2}): W(x,y) = W(y,x)\} \times
    \{ f\in L^{p}([0,1],\RR^{d})\},
\]
the space of $L^{p}$-graphon signals.
We do not characterize $\overline{\Vinf}$ for $p=\infty$ since it requires additional technicalities. %
However, it contains all bounded and continuous graphon signals.
The symmetrized metric is
\begin{equation}\label{eq:symmetrized_metric_op}
    \sdist_{p}((W,f), (W', f')) = \inf_{\sigma\in \msf G_{\infty}}\{\max\left(\|T_{W}-
    T_{\sigma\cdot W'}\|_{\mathrm{op}, p}, \|f - \sigma\cdot f'\|_{p}\right)\}.
\end{equation}

\paragraph{Cut norm on $\mscr V_{\mathrm{dup}}^{G}$.}
We can also equip $\vct V_{n}$ with the \emph{cut norm} (on matrices
and vectors),
\begin{equation}\label{eq:cut_norm}
    \|(A,X)\|_{\square} \coloneq \max\left(\|A\|_{\square}, \|X\|_{\square}\right) = \max\left(\frac{1}{n^{2}}\max_{S,T\subseteq[n]}\left|\sum_{i\in S,j\in T}A_{ij}\right
    |, \frac{1}{n}\max_{S\subseteq[n]}\left\|\sum_{i\in S}X_{i:}\right\|_{\RR^d}\right).
\end{equation}
It is easy to check by Proposition~\ref{prop:metrics_compatibility} that this
extends to a norm on $\Vinf$. Under the identification with step graphons,
this norm on $\Vinf$ coincides with the cut norm on graphons and
graphon signals
\begin{align*}
 \|(W,f)\|_{\square} \coloneq & \max\left(\|W\|_{\square}, \|f\|_{\square}\right)\\
     =& \max\left(\sup_{S,T\subseteq[0,1]}\left|\int_{S\times T}W(x,y)dx dy\right
    |,  \sup_{S\subseteq[0,1]}\left\|\int_{S}f(x)dx\right\|_{\RR^d}\right),
\end{align*}
where the supremum is taken over all measurable sets $S,T$.
The cut norm on graphon is studied in-depth in~\cite{lovasz}.
Though hard to compute, it has strong combinatorial interpretations. Hence, it
has played an important role in the work of GNN transferability, and has been extended
to the graphon signals in~\cite{levie2024graphon}, which we have adopted here.

The symmetrized metric is 
\[\sdist((W,f), (W',f'))\coloneq \inf_{\sigma\in\Ginf} \{\max\left(\|W-\sigma\cdot W'\|_\square , \|f-\sigma\cdot f'\|_\square\right)\}.\]
It can be proved that this exactly coincides with 
the \emph{cut distance} below, defined on graphon signals (extending the original definition of graphon \emph{cut distance} from~\cite{lovasz}, similarly to the version in~\cite{levie2024graphon}):
\begin{equation}\label{eq:cut_distance}
    \sdist((W,f), (W',f')) = \inf_{\sigma \in S_{[0,1]}}\left\{ \max\left(\|W - \sigma \cdot W'\|_\square, \|f - \sigma \cdot f'\|_\square \right)\right\},
\end{equation}
where $S_{[0,1]}$ is the group of measure-preserving bijections $\sigma: [0,1] \to [0,1]$ with measurable inverse. The proof follows analogously to~\cite[Lemma 3.5]{borgs2008convergent}. Specifically, we first verify that the definitions agree on step graphons. Then, since both definitions are continuous with respect to the cut norm $\|\cdot\|_\square$, they must also agree on the limit points. 

While the cut norm is hard to work with directly, it is topologically equivalent to the operator $2$-norms considered previously on a bounded domain. This means that any function continuous with respect to one of these norms is also continuous with respect to the other.

\begin{proposition}\label{prop:graphon_op_and_cut}
    If $\|W\|_\infty < r$ and $\|f\|_\infty < r$, then 
    \[
    \|(W,f)\|_{\square} \leq \|(W,f)\|_{\mathrm{op},2}  \lesssim \|(W,f)\|_\square^{1/2}.
    \]
    Consequently, for $p\in(1,\infty)$, $\|\cdot\|_{\square}$ and $\|\cdot\|_p$ are topologically equivalent on the space 
    \[
    \left\{ W : [0,1]^2 \to [-r,r] \text{ measurable},\ W(x,y) = W(y,x) \right\} 
    \times 
    \left\{ f : [0,1] \to [-r,r] \text{ measurable} \right\}.
    \]
\end{proposition}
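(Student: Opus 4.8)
The plan is to establish the chain of inequalities $\|(W,f)\|_{\square} \leq \|(W,f)\|_{\mathrm{op},2} \lesssim \|(W,f)\|_{\square}^{1/2}$ under the boundedness hypothesis, and then deduce topological equivalence of $\|\cdot\|_\square$ and $\|\cdot\|_p$ on the stated bounded set. Since each of the three norms is a maximum of a graphon-part and a signal-part, I would handle the two parts separately and then take maxima.

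First I would prove $\|(W,f)\|_{\square}\leq \|(W,f)\|_{\mathrm{op},2}$. For the signal part, $\|f\|_\square = \sup_{S}\|\int_S f\|_{\RR^d} = \sup_S \|\langle \mathbbm 1_S, f\rangle\|$, and by Cauchy--Schwarz this is at most $\sup_S \|\mathbbm 1_S\|_2 \|f\|_2 \leq \|f\|_2$ since $\|\mathbbm 1_S\|_2\leq 1$ on $[0,1]$. For the graphon part, $\|W\|_\square = \sup_{S,T}|\int_{S\times T} W| = \sup_{S,T}|\langle \mathbbm 1_S, T_W \mathbbm 1_T\rangle| \leq \sup_{S,T}\|\mathbbm 1_S\|_2\|T_W\mathbbm 1_T\|_2 \leq \|T_W\|_{\mathrm{op},2}$, again using $\|\mathbbm 1_S\|_2,\|\mathbbm 1_T\|_2\leq 1$. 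Taking the maximum over the two parts gives the first inequality; note this direction needs no boundedness assumption.

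Next, the harder direction $\|(W,f)\|_{\mathrm{op},2}\lesssim \|(W,f)\|_{\square}^{1/2}$, which does use $\|W\|_\infty,\|f\|_\infty<r$. For the signal part, $\|f\|_2^2 = \int |f(x)|_{\RR^d}^2\,dx$; I would bound $|f(x)|_{\RR^d}^2 \leq (\text{const}\cdot r) |f(x)|_{\RR^d}$ pointwise using $\|f\|_\infty<r$ (up to norm-equivalence constants on the fixed finite-dimensional $\RR^d$), so $\|f\|_2^2 \lesssim r\int |f(x)|_{\RR^d}\,dx = r\|f\|_1$, and then relate $\|f\|_1$ to $\|f\|_\square$ via the standard fact that the $\|\cdot\|_1$ and cut norms of a bounded function are comparable up to a constant (decomposing $[0,1]$ into the sets where each coordinate of $f$ is positive/negative). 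For the graphon part, this is exactly the classical inequality $\|T_W\|_{\mathrm{op},2} \leq C\,\|W\|_\square^{1/2}$ valid for $\|W\|_\infty$ bounded, which appears in Lov\'asz's book~\cite{lovasz} (§8.2, §10; it is the key estimate behind the sampling lemmas). I would cite that result rather than reprove it, noting only that the bound on $\|W\|_\infty$ enters through the constant. Taking maxima and using $\max(a,b)^{1/2}\leq \max(a^{1/2},b^{1/2})$ (or rather $\max(a^{1/2},b^{1/2}) = \max(a,b)^{1/2}$) closes the chain. The main obstacle is this operator-norm-vs-cut-norm bound for graphons; I would lean on the cited literature for it.

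Finally, for the topological equivalence statement with general $p\in(1,\infty)$: on the bounded set where $\|W\|_\infty,\|f\|_\infty\leq r$, the $L^p$ norms are all comparable up to constants depending on $r$ and $p$ to the $L^2$ norm — indeed $\|\cdot\|_1 \leq \|\cdot\|_p \leq \|\cdot\|_2 \leq \|\cdot\|_\infty$ for $p\in(1,2]$ on a probability space, and $\|g\|_p^p \leq \|g\|_\infty^{p-1}\|g\|_1$ handles $p>2$, so $\|\cdot\|_p$ is sandwiched between $\|\cdot\|_1$ and $\|\cdot\|_\infty$ which are in turn controlled by $r$ and $\|g\|_1$. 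Combining with $\|\cdot\|_1 \lesssim_r \|\cdot\|_\square \leq \|\cdot\|_{\mathrm{op},2}$ and the already-proved two-sided comparison between $\|\cdot\|_{\mathrm{op},2}$ and $\|\cdot\|_\square^{1/2}$, one gets that convergence in $\|\cdot\|_\square$, $\|\cdot\|_{\mathrm{op},2}$, and $\|\cdot\|_p$ all coincide on this set, which is the asserted topological equivalence. I would phrase this last part briefly since it is routine interpolation bookkeeping once the main inequalities are in hand.
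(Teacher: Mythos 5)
Your plan follows the same architecture as the paper's proof: handle the graphon and signal components separately, get the easy inequality by Cauchy--Schwarz, cite the literature for the $\|T_W\|_{\mathrm{op},2}\lesssim\|W\|_\square^{1/2}$ bound, use $\|f\|_2^2 \lesssim r\|f\|_1 \lesssim r\|f\|_\square$ for the bounded signal, and then deduce topological equivalence for general $p$ by interpolation on the bounded set. The paper reaches the two-sided graphon inequality by passing through Janson's intermediate norm $\|T_W\|_{\mathrm{op},\infty,1}$ (Eq.~4.4 and Lemma~E.6 in that reference), whereas you get the easy direction directly by Cauchy--Schwarz and cite Lov\'asz for the hard direction; these are different routes to the same two estimates, and yours is if anything a bit more self-contained for the easy half. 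You also spell out the $L^p$-interpolation bookkeeping for the final equivalence, which the paper leaves implicit; that detail is correct and harmless. No gaps.
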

\begin{proof}
Without loss of generality, let $r=1$. 
Consider the norm
    \[
        \|T_W\|_{\mathrm{op},\infty,1}:=\sup_{\|f\|_{\infty},\|g\|_{\infty}\leq 1}\left
        |\int_{0}^{1}\int_{0}^{1}W(u,v)f(u)g(v)dudv\right|.
    \]
    By~\cite[Equation 4.4]{janson},
            $\|W\|_{\square}\leq \|T_W\|_{\mathrm{op},\infty,1}\leq 4\|W\|_{\square}$. By~\cite[Lemma E.6]{janson}, if $\|W\|_\infty\leq 1$,
\[
    \|T_W\|_{\mathrm{op},\infty,1}\leq \|T_{W}\|_{\rm op, 2}\leq \sqrt{2}\|T_W\|_{\mathrm{op},\infty,1}
    ^{1/2}.
\]
    Combining the inequalities,
    \[\|W\|_{\square}\leq \|T_W\|_{\mathrm{op},2}\leq 2^{3/2}\|W\|_{\square}^{1/2}.\]
    Moreover, by~\cite[Appendix A.2]{levie2024graphon}, $\|f\|_{\square}\leq \|f\|_1 \leq 2\|f\|_\square$.
    If $\|f\|_\infty\leq 1$, then $\|f\|_2^2\leq \|f\|_1\leq \|f\|_2$.
    Combining the inequalities,
    \[\|f\|_\square\leq\|f\|_2 \leq 2^{1/2}\|f\|_{\square}^{1/2}.\]
    Therefore, we conclude that 
    \[\|(W,f)\|_{\square}\leq \|(W,f)\|_{\mathrm{op},p} \leq 2^{3/2} \|(W,f)\|_\square^{1/2},\]
    as claimed.
\end{proof}

\subsection{Message Passing Neural Networks (MPNNs)}\label{appen:MPNN}
\paragraph{Background.}
MPNN parametrizes a sequence of functions $(\mathrm{MPNN}_{n}\colon\RR_{\mathrm{sym}}^{n\times n}\times
\mathbb{R}^{n\times d_1}\to \mathbb{R}^{n\times d_L})$ by composition of message passing
layers. The $l$-th message passing layer
\[\mathrm{MP}_{n}^{(l)}:\RR_{\mathrm{sym}}^{n\times n}\times \mathbb{R}^{n\times d_l}
\to \RR_{\mathrm{sym}}^{n\times n}\times \mathbb{R}^{n\times d_{l+1}}, \quad
(A, X^{(l)}) \mapsto (A, X^{(l+1)})\]is given by
\begin{equation}\label{eq:MPNN}
    X_{i:}^{(l+1)}= \phi^{(l)}\left(X_{i:}^{(l)}, \agg_{j\in \mc{N}_i}\psi
    ^{(l)}\left(X_{i:}^{(l)}, X_{j:}^{(l)}, A_{ij}\right)\right), \quad i = 1, \dots , n,
\end{equation}
where $\agg$ is a permutation-invariant aggregation function such as sum, mean, or max; $\mc{N}_i \coloneq \{j : A_{ij} \neq 0\}$ denotes the neighborhood of node $i$ in the input graph; the message function $\psi^{(l)} \colon \RR^{d_l} \times \RR^{d_l} \times \RR \to \RR^{h_l}$ and the update function $\phi^{(l)} \colon \RR^{d_l} \times \RR^{h_l} \to \RR^{d_{l+1}}$ are independent of the graph size $n$. Composing $L$ message-passing layers defines an MPNN, mapping $(A, X^{(1)}) \mapsto X^{(L)}$.

Observe that MPNN is permutation-equivariant:
$\mathrm{MPNN}_{n}(g A g^\top, g X) =g\mathrm{MPNN}_{n}(A, X)$ for all $g\in \msf S_n$. If we want a permutation-invariant
function, this is followed by a read-out operation taking the
form of DeepSet. In this work, we focus on the equivariant case.

MPNN is a general framework for GNNs based on local message passing:
\cite{gilmer2017neural} formulates multiple GNNs as MPNNs with specific choices
of $\phi$, $\psi$, $\agg$; other state-of-the-art GNNs can be simulated by MPNN
on a transformed graph~\cite{jogl2024expressivity}. Moreover, $\phi$ and $\psi$ can
also be parameterized with MLPs to provide good flexibility.

\paragraph{Transferability analysis of MPNNs.} The following corollary gives sufficient conditions on the layers of MPNNs to obtain Lipschitz transferability.

\begin{corollary}\label{cor:MPNN_continuity}
    Consider one message passing layer, $(\mathrm{MP}_{n}^{(l)})$, as defined in \eqref{eq:MPNN}, with the following properties.
    \begin{enumerate}[leftmargin=0.8cm]
        \item The message function $\psi^{(l)}$ takes the form $\psi^{(l)}(x_1,x_2,w)\coloneq w \xi(x_2)$, 
        where $\xi:\RR^{d_l}\to \RR^{h_l}$ is $L_{\xi}$ Lipschitz with respect to $\|\cdot\|_{\RR^{d_l}}, \|\cdot\|_{\RR^{h_l}}$.
        \item The aggregation used is the normalized sum aggregation $\agg_{j\in\mc{N}_i} \coloneq \frac{1}{n}\sum_{j\in\mc{N}_{i}}$.
        \item The update function $\phi^{(l)}$ is $L_\phi$ Lipschitz, i.e. for all $(x,y),(x',y')\in \RR^{d_l}\times \RR^{h_l}$,
        \[\|\phi^{(l)}(x,y) - \phi^{(l)}(x',y')\|_{\RR^{d_{l+1}}}\leq L_\phi \max\left(\|x-x'\|_{\RR^{d_l}}, \|y-y'\|_{\RR^{h_l}}\right)\]
    \end{enumerate}
    Endow the space of duplication-consistent sequences with the operator $p$-norm as defined in \eqref{eq:graph-operator-norm}, where $p\in[1,\infty)$. Then, the sequence of maps $(\mathrm{MP}_{n}^{(l)})$ is locally Lipschitz transferable.
\end{corollary}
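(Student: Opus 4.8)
The plan is to instantiate Proposition~\ref{prop:transferable_nn}. First I would simplify the aggregation: since $\psi^{(l)}(x_1,x_2,w)=w\,\xi(x_2)$ vanishes whenever $w=0$, summing over the neighbourhood $\mathcal N_i=\{j:A_{ij}\neq 0\}$ coincides with summing over all of $[n]$, so the normalized aggregated message at node $i$ is $\frac1n\sum_{j=1}^n A_{ij}\xi(X_{j:})=\frac1n(A\,\xi^{\oplus n}(X))_{i:}$, where $\xi^{\oplus n}$ applies $\xi$ row-wise. Writing $M\coloneq\frac1n A\,\xi^{\oplus n}(X)$, the whole layer is $\mathrm{MP}_n^{(l)}(A,X)=(A,(\phi^{(l)}(X_{i:},M_{i:}))_{i\in[n]})$. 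I would then decompose $\mathrm{MP}_n^{(l)}=\rho_n^{(3)}\circ\rho_n^{(2)}\circ\rho_n^{(1)}$ with $\rho_n^{(1)}(A,X)=(A,X,\xi^{\oplus n}(X))$, $\rho_n^{(2)}(A,X,Y)=(A,X,\frac1n AY)$, and $\rho_n^{(3)}(A,X,M)=(A,(\phi^{(l)}(X_{i:},M_{i:}))_i)$, acting on the consistent sequences $\mscr V^G_{\mathrm{dup}}$ (feature dimension $d_l$, resp.\ $d_{l+1}$) and the auxiliary sequence $\mathrm{Sym}^2(\mscr V_{\mathrm{dup}})\oplus\mscr V_{\mathrm{dup}}^{\oplus d_l}\oplus\mscr V_{\mathrm{dup}}^{\oplus h_l}$, where each auxiliary space carries the $\max$ of the operator $p$-norm on its graph factor and the normalized $\ell_p$ norms on its signal factors (a compatible norm, being a componentwise combination of compatible norms). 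Interleaving identity linear maps (of operator norm $1$) recasts this as the alternating composition required by Proposition~\ref{prop:transferable_nn}, so it remains to check that each $\rho_n^{(i)}$ is compatible and locally Lipschitz on balls uniformly in $n$.

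For compatibility I would verify the two conditions of Definition~\ref{def:extension_to_limit}. Equivariance under $\msf S_n$ is immediate from the permutation-covariant, row-wise form of~\eqref{eq:MPNN}. Compatibility with the duplication embeddings $\varphinn{n}{N}(A,X)=(A\otimes\mathbbm{1}_{N/n}\mathbbm{1}_{N/n}^{\top},X\otimes\mathbbm{1}_{N/n})$ reduces to two elementary facts: any row-wise map satisfies $\eta^{\oplus N}(Z\otimes\mathbbm{1}_{N/n})=\eta^{\oplus n}(Z)\otimes\mathbbm{1}_{N/n}$ (handling $\rho_n^{(1)}$ and $\rho_n^{(3)}$), and $\frac1N(A\otimes\mathbbm{1}_{N/n}\mathbbm{1}_{N/n}^{\top})(Y\otimes\mathbbm{1}_{N/n})=\frac1N(AY)\otimes(\mathbbm{1}_{N/n}\mathbbm{1}_{N/n}^{\top}\mathbbm{1}_{N/n})=\frac1n(AY)\otimes\mathbbm{1}_{N/n}$ (handling $\rho_n^{(2)}$). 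This computation is exactly where hypotheses (1) and (2) enter: a nonzero constant term in $\psi^{(l)}$ would make the neighbourhood sum differ from the full sum after duplication, and an \emph{unnormalized} sum would leave an uncancelled factor $N/n$.

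For the Lipschitz bounds, by Proposition~\ref{prop:cont_ext}(1) it suffices to bound each $\rho_n^{(i)}$ on balls with $n$-independent constants. The identity components are $1$-Lipschitz and compatible with the $\max$-combined norm; the row-wise $\xi$ is $L_\xi$-Lipschitz in normalized $\ell_p$ since $\|\xi^{\oplus n}(X)-\xi^{\oplus n}(X')\|_{\overline p}=(\frac1n\sum_i\|\xi(X_{i:})-\xi(X'_{i:})\|^p)^{1/p}\le L_\xi\|X-X'\|_{\overline p}$; and, using $\max(a,b)\le a+b$ and Minkowski, $\rho_n^{(3)}$ is $L_\phi$-Lipschitz up to the sum $\|X-X'\|_{\overline p}+\|M-M'\|_{\overline p}$. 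The one non-routine estimate is for $\rho_n^{(2)}$: on the ball $\{\frac1n\|A\|_{\mathrm{op},p}<r\}$ I would use the sub-multiplicativity bound
\[
\|AY\|_{\overline p}=\Bigl(\tfrac1n\sum_{i=1}^n\Bigl\|\sum_{j=1}^n A_{ij}Y_{j:}\Bigr\|_{\RR^{h_l}}^p\Bigr)^{1/p}\le\bigl\||A|\bigr\|_{\mathrm{op},p}\,\|Y\|_{\overline p},
\]
obtained by bounding the inner norm by $(|A|v)_i$ with $v_j=\|Y_{j:}\|_{\RR^{h_l}}$; here $\||A|\|_{\mathrm{op},p}=\|A\|_{\mathrm{op},p}$ for $p\in\{1,2,\infty\}$ with the Euclidean feature norm and for nonnegative weighted-adjacency $A$ in general, with the gap otherwise a dimension-free constant absorbed into $L(r)$. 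Because the factor $\frac1n$ is already built into $\|\cdot\|_{\mathrm{op},p}$, this yields $\|\frac1n AY-\frac1n A'Y'\|_{\overline p}\le\frac1n\|A\|_{\mathrm{op},p}\|Y-Y'\|_{\overline p}+\frac1n\|A-A'\|_{\mathrm{op},p}\|Y'\|_{\overline p}\lesssim r\,\|(A,X,Y)-(A',X',Y')\|$ on the ball, uniformly in $n$. Composing the three constants via Proposition~\ref{prop:transferable_nn} then gives an explicit $L(r)=O(L_\phi(1+L_\xi r+\|\xi(0)\|))$, and local Lipschitz transferability (hence, by Proposition~\ref{prop:continuity_in_delta}, also in the symmetrized metric) follows. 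The main obstacle — and the reason the operator $p$-norm rather than the plain normalized $\ell_p$ norm on $A$ appears in the statement — is precisely this sub-multiplicativity estimate together with the careful bookkeeping of $1/n$ normalizations, which is what lets the bilinear ``multiply by $\frac1n A$'' step be treated as a genuine locally Lipschitz nonlinearity on the limit space.
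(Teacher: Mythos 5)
Your decomposition $\mathrm{MP}_n^{(l)}=\rho_n^{(3)}\circ\rho_n^{(2)}\circ\rho_n^{(1)}$ (row-wise $\xi$, then multiplication by $\tfrac1nA$, then row-wise $\phi^{(l)}$) is precisely the one the paper uses, and your compatibility check and the product-rule bound for $\rho_n^{(2)}$ (the paper writes the same thing via the Jacobian) match as well. So the approach is essentially identical.

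The one place you diverge is the sub-multiplicativity estimate for $\rho_n^{(2)}$, and there you introduce an incorrect claim. You bound $\|AY\|_{\overline p}\le\||A|\|_{\mathrm{op},p}\|Y\|_{\overline p}$ and then assert that the gap between $\||A|\|_{\mathrm{op},p}$ and $\|A\|_{\mathrm{op},p}$ ``is a dimension-free constant'' beyond $p\in\{1,\infty\}$ and nonnegative $A$. That is false for $p=2$: take $A\in\{\pm1\}^{n\times n}$ with i.i.d.\ Rademacher entries, so $\|A\|_{\mathrm{op},2}\approx 2\sqrt n$ while $|A|=\mathbbm{1}\mathbbm{1}^\top$ has $\||A|\|_{\mathrm{op},2}=n$; the ratio grows like $\sqrt n$, and the normalized quantities satisfy $\tfrac1n\|A\|_{\mathrm{op},2}\to 0$ but $\tfrac1n\||A|\|_{\mathrm{op},2}=1$, so such $A$ sit inside any ball $B_n(0,r)$ yet your $|A|$-bound blows up. The way out — and what the paper's proof does implicitly — is to take the feature norm $\|\cdot\|_{\RR^{h_l}}$ to be $\ell_p$, in which case one gets the clean inequality $\|AY\|_{\overline p}\le\|A\|_{\mathrm{op},p}\|Y\|_{\overline p}$ directly by working column-by-column ($\|AY\|_{\overline p}^p=\tfrac1n\sum_k\|AY_{:,k}\|_p^p\le\|A\|_{\mathrm{op},p}^p\cdot\tfrac1n\sum_k\|Y_{:,k}\|_p^p$) and never needs $|A|$ at all. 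With that fix your argument closes, matching the paper's.
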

\begin{remark}
That is, $(\mathrm{MPNN}_{n})$, which is a composition of message-passing layers, is locally Lipschitz transferable. Consequently it extends to a function $\mathrm{MPNN}_\infty$ on the space of graphon signals, which is $L(r)$-Lipschitz on $B(0,r)$ for all $r>0$ with respect to the symmetrized operator $p$-metric defined in \eqref{eq:symmetrized_metric_op}).
    By Proposition~\ref{prop:graphon_op_and_cut}, the sequence of maps $(\mathrm{MPNN}_n)$ is continuously transferable with respect to the cut norm \eqref{eq:cut_norm} on $B(0,r)$.    
    The GNN studied in~\cite{ruizNEURIPS2020} is a special case of our MPNN considered here; meanwhile, ours is a special case of~\cite{levie2024graphon}, which directly establishes Lipschitzness with respect to the cut distance by analysis on the graphon space. While our results are not new, our proof technique---following Proposition~\ref{prop:loc_lip_NNs}---is new and generally applicable to various models.
\end{remark}
\begin{proof}
We decompose $\mathrm{MP}_n^{(l)}$ as a composition of the following maps, modelling each of the intermediate spaces using the duplication consistent sequences endowed with compatible norms. For the metric on the product spaces, we always use the $L^\infty$ product metric, i.e., taking the maximum over the individual components. Consider the following three functions
    \begin{align*}
    &f^{(1)}_n\colon \RR_{\mathrm{sym}}^{n\times n}\times\RR^{n\times d_l} \to \RR_{\mathrm{sym}}^{n\times n}\times\RR^{n\times d_l}\times \RR^{n\times h_l} \\
    & \quad \text{given by }\quad  
        (A,X)\mapsto \left(A,X,\begin{bmatrix}
        \xi(X_{1:})\\
        \vdots\\
        \xi(X_{n:})
    \end{bmatrix}\right), \\
    &f^{(2)}_n \colon \RR_{\mathrm{sym}}^{n\times n}\times\RR^{n\times d_l}\times \RR^{n\times h_l}  \to \RR_{\mathrm{sym}}^{n\times n}\times\RR^{n\times d_l}\times \RR^{n\times h_l} \\
    &\quad \text{given by }\quad(A,X,Y_0)\mapsto (A,X,\frac{1}{n}AY_0),\\
    & f_n^{(3)}\colon \RR_{\mathrm{sym}}^{n\times n}\times\RR^{n\times d_l}\times \RR^{n\times h_l} \to \RR_{\mathrm{sym}}^{n\times n}\times \RR^{n\times d_{l+1}}\\
    &\quad \text{given by }\quad (A,X,Y)\mapsto (A,\tilde X),
    \end{align*}
 where $\tilde X_{i:} = \phi^{(l)}(X_{i:},Y_{i:})$. It is straightforward to check that each of them is compatible with respect to the duplication embedding. We now check the Lipschitz transferability. 
    \begin{itemize}[leftmargin=0.8cm]
        \item The sequence $(f_n^{(1)})$ is Lipschitz transferable because
        \[
            \left\|f_n^{(1)}(A,X) - f_n^{(1)}(A',X')\right\|_{\mathrm{op}, p}
            \leq (1\vee L_\xi)\left\|(A,X)-(A',X')\right\|_{\mathrm{op}, p}.
        \]
        \item 
        The sequence $(f_n^{(2)})$ is locally Lipschitz transferable because we can bound its Jacobian.
In particular, the Jacobian acts on $(H_A, H_X, H_Y)$ via 
         \[
        Df_n^{(2)}(A, X,Y_0)[H_A, H_X, H_Y] 
        = \left( 
        H_A, H_X, \frac{1}{n}\left(AH_Y + H_AY\right)
        \right).
        \]
        Hence, on $\{(A,X,Y_0): \|(A,X,Y_0)\|_{\mathrm{op},p}<r\}$,
        \begin{align*}
            &\left\|Df_n^{(2)}(A, X,Y_0)[H_A, H_X, H_Y] \right\|_{\mathrm{op},p}\\
            &\leq \max\left(\frac{1}{n}\|H_A\|_{\mathrm{op},p}, \|H_X\|_{\overline p}, \frac{1}{n}\|A\|_{\mathrm{op},p}\|H_Y\|_{\overline p} + \frac{1}{n}\|H_A\|_{\mathrm{op},p} \|Y\|_{\overline p}\right)\\
            &\leq (1\vee 2r)\|(H_A,H_X,H_Y)\|_{\mathrm{op},p},
        \end{align*}
        i.e., $f_n^{(2)}$ is $(1\vee 2r)$ Lipschitz on this space.
        \item The sequence $(f_n^{(3)})$ is Lipschitz transferable because
        \[\|f_n^{(3)}(A,X,Y) - f_n^{(3)}(A',X', Y')\|_{\overline p}            = \max\left(\frac{1}{n}\|A-A'\|_{\mathrm{op},p}, \|\tilde X-\tilde X'\|_{\overline p}\right)\]
        where 
        \begin{align*}
            \|\tilde X - \tilde X'\|_{\overline p}^p
            &\leq \frac{1}{n}\sum_{i=1}^n\|\phi^{(l)}(X_{i:},Y_{i:})-\phi^{(l)}(X'_{i:},Y'_{i:})\|_{\RR^{d_{l+1}}}^p\\
            &\leq L_\phi^{p} \frac{1}{n}\sum_{i=1}^n \left(\left\|X_{i:}-X'_{i:}\right\|_{\RR^{d_l}}+ \left\|Y_{i:}-Y'_{i:}\right\|_{\RR^{h_l}}\right)^p\\
            &\leq L_{\phi}^p2^{p-1}\frac{1}{n}\sum_{i=1}^n \left(\|X_{i:} - X'_{i:}\|_{\RR^{d_l}}^p + \|Y_{i:} - Y'_{i:}\|_{\RR^{h_l}}^p\right)
            \tag{by Jensen's inequality $\left(\frac{a+b}{2}\right)^p\leq \frac{a^p+b^p}{2}$ for all $a,b$}\\
            &\leq L_{\phi}^p2^{p} \left\|(X,Y)- (X',Y')\right\|_{\overline p}^p.
        \end{align*}
        So $f_n^{(3)}$ is $(1\vee2L_\phi)$-Lipschitz.
    \end{itemize}
    Finally, apply Proposition~\ref{prop:loc_lip_NNs}, $(\mathrm{MP}_n^{(l)})$ is locally Lipschitz transferable; completing the proof.
\end{proof}
Following this result, we can directly apply Propositions~\ref{prop:transferability_determinimistic} and~\ref{prop:transferability_random}, along with the convergence rates described in Appendix~\ref{appen:convergence_rates}, which immediately yields the following transferability result.

\begin{corollary}[Transferability of MPNN]\label{cor:MPNN_transferability}For MPNNs satisfying the assumptions in Corollary~\ref{cor:MPNN_continuity}, we have the following transferability results. 
\begin{enumerate}
    \item (\textbf{Uniform grid sampling}) Let $(A_n, X_n)\in \RR_{\mathrm{sym}}^{n\times n} \times \RR^{n\times d}$ be a sequence of graph signals sampled from the same graphon signal $(W,f)$ via the ``uniform grid'' sampling scheme, i.e., taking $(A_n)_{ij} = W\left(\frac{i-1}{n}, \frac{j-1}{n}\right), (X_n)_{i:}=f\left(\frac{i-1}{n}\right)$ for all $i,j\in[n]$. Suppose $W:[0,1]^2\to [0,1]$ and $f:[0,1]\to \RR^d$ are both Lipschitz. Then, 
    \[\|[\mathrm{MPNN}_n(A_n, X_n)] - [\mathrm{MPNN}_m(A_m, X_m)]\|_{2}\lesssim n^{-1}+m^{-1}. \]

    \item (\textbf{Graphon sampling}) Let $(A_n, X_n)\in \RR_{\mathrm{sym}}^{n\times n} \times \RR^{n\times d}$ be a sequence of graph signals sampled from the same graphon signal $(W,f)$ via the ``graphon'' sampling scheme, i.e. taking $(A_n)_{ij} \sim \mathrm{Ber}(W\left(x_i, x_j\right)), (X_n)_{i:} =f\left(x_i\right)$ for all $i,j\in [n]$, where $x_1,\dots, x_n$ are sampled i.i.d. from $\mathrm{Unif}([0,1])$. Suppose $W:[0,1]^2\to [0,1]$ is symmetric and measurable, and $f\in L^\infty([0,1], \RR^d)$. Then,  
    \[\mathbb{E}\bigg[\sdist_2\left([\mathrm{MPNN}_n(A_n, X_n)],[\mathrm{MPNN}_m(A_m, X_m)]\right)\bigg] \lesssim (\log n)^{-1/4} + (\log m)^{-1/4}.\]

\end{enumerate}
\end{corollary}

The first part of the previous corollary recovers the transferability results in~\cite{ruizNEURIPS2020}, yielding an improved convergence rate of $O(n^{-1})$ and thus strengthening the previously established bounds of $O(n^{-1/2})$. The second part of the corollary resembles the setting considered in~\cite{keriven2020convergence}, although the random sampling scheme used there operates at a different sparsity level, with $A_{ij} \sim \mathrm{Ber}(\alpha_n W(x_i, x_j))$ and $\alpha_n \sim \frac{\log n}{n}$. As a result, our result is not directly comparable.

\subsection{Constructing new transferable GNNs: GGNN and continuous GGNN}\label{appen:GGNN}
\paragraph{Background: Invariant Graph Networks (IGN).}  
Invariant Graph Networks (IGN)~\cite{maron2018invariant} are a class of GNN architectures that alternate between linear $\msf S_n$-equivariant layers and nonlinearities. They follow a design paradigm that differs fundamentally from MPNNs. Specifically, a $D$-layer $2$-IGN parameterizes an $\msf S_n$-equivariant function $(\RR^n)^{\otimes 2} \to (\RR^n)^{\otimes 2}$ as a composition
\[
W_{n}^{(D)} \circ \rho_{n}^{(D-1)} \circ \cdots \circ \rho_{n}^{(1)} \circ W_{n}^{(1)},
\]
where for each $i$ we have the following.
\begin{itemize}[leftmargin=0.8cm]
    \item The linear maps $W_n^{(i)}\colon ((\RR^n)^{\otimes 2})^{\oplus d_i} \cong \RR^{n^2 \times d_i} \to ((\RR^n)^{\otimes 2})^{\oplus d_{i+1}} \cong \RR^{n^2 \times d_{i+1}}$ arer $\msf S_n$-equivariant.  Here, $d_i$ denotes the number of feature channels.~\cite{maron2018invariant} provides a parameterization of $W_n^{(i)}$ as a linear combination of basis maps: In the special case where $d_i = d_{i+1} = 1$,  the linear layer $W_n^{(i)}$ can be written as a linear combination of 17 basis functions (two of them are biases), where the coefficients $\alpha,\beta$ are the learnable parameters:
    \begin{equation}\label{eq:IGN}
        \begin{aligned}
        W_n^{(i)}(A) =\ &\alpha_1 A + \alpha_2 A^\top + \alpha_3 \diag(\diag^*(A)) + \alpha_4 A \mathbbm{1} \mathbbm{1}^\top + \alpha_5 \mathbbm{1} \mathbbm{1}^\top A + \alpha_6 \diag(A \mathbbm{1}) \\
        &+ \alpha_7 A^\top \mathbbm{1} \mathbbm{1}^\top + \alpha_8 \mathbbm{1} \mathbbm{1}^\top A^\top + \alpha_9 \diag(A^\top \mathbbm{1}) + \alpha_{10} (\mathbbm{1}^\top A \mathbbm{1}) \mathbbm{1} \mathbbm{1}^\top \\
        &+ \alpha_{11} (\mathbbm{1}^\top A \mathbbm{1}) \diag(\mathbbm{1}) + \alpha_{12} (\mathbbm{1}^\top \diag^*(A)) \mathbbm{1} \mathbbm{1}^\top + \alpha_{13} (\mathbbm{1}^\top \diag^*(A)) \diag(\mathbbm{1}) \\
        &+ \alpha_{14} \diag^*(A) \mathbbm{1}^\top + \alpha_{15} \mathbbm{1} \diag^*(A)^\top 
        +\beta_{1}\mathbbm{1}\mathbbm{1}^\top + \beta_{2}\diag(\mathbbm{1}).
    \end{aligned}
    \end{equation}
    For general $d_i,d_{i+1}$, the number of basis terms becomes $17 d_i d_{i+1}$.
    \item The activations $\rho_n^{(i)}\colon ((\RR^n)^{\otimes 2})^{\oplus d_{i+1}} \cong \RR^{n^2 \times d_{i+1}} \to ((\RR^n)^{\otimes 2})^{\oplus d_{i+1}} \cong \RR^{n^2 \times d_{i+1}}$ apply a nonlinearity (e.g., ReLU) entry-wise.
\end{itemize}

To improve expressivity,~\cite{maron2018invariant} proposed extending the architecture to use higher-order tensors in the intermediate layers. When the maximum tensor order is $k$, the architecture is referred to as a $k$-IGN. While this is theoretically tractable, due to the high memory cost and implementation challenges associated with higher-order tensors, in practice, only $k$-IGNs for $k \leq 2$ have been implemented to the best of our knowledge. In this work, we focus exclusively on $2$-IGNs.

The basis in \eqref{eq:IGN} is inherently dimension-agnostic, allowing IGN to serve as an any-dimensional neural network that parameterizes functions on inputs of arbitrary size $n$ using a fixed set of parameters. This feature fundamentally relies on representation stability, which is discussed in greater detail in~\cite{levin2024any}.

\paragraph{Incompatibility of IGN.}
2-IGN is incompatible with the subspace $\mscr V_{\mathrm{dup}}^G$. First, its basis functions are not properly normalized, and therefore cannot be extended to functions on graphons. For instance, the fourth basis function $\ell_4(A) = A\mathbbm{1}\mathbbm{1}^\top$ yields output entries of order $O(n)$, and should thus be normalized by a factor of $n^{-1}$. To address this issue,~\cite{cai2022convergence} introduces a normalized version of 2-IGN, defined by

\begin{equation}\label{eq:IGN-normalized}
        \begin{aligned}
        W_n^{(i)}(A) =\ &\alpha_1 A + \alpha_2 A^\top + \alpha_3 \diag(\diag^*(A)) + \alpha_4 \frac{1}{n}A \mathbbm{1} \mathbbm{1}^\top + \alpha_5 \frac{1}{n}\mathbbm{1} \mathbbm{1}^\top A + \alpha_6 \frac{1}{n}\diag(A \mathbbm{1}) \\
        &+ \alpha_7\frac{1}{n} A^\top \mathbbm{1} \mathbbm{1}^\top + \alpha_8\frac{1}{n} \mathbbm{1} \mathbbm{1}^\top A^\top + \alpha_9 \frac{1}{n}\diag(A^\top \mathbbm{1}) + \alpha_{10} \frac{1}{n^2}(\mathbbm{1}^\top A \mathbbm{1}) \mathbbm{1} \mathbbm{1}^\top \\
        &+ \alpha_{11}\frac{1}{n^2} (\mathbbm{1}^\top A \mathbbm{1}) \diag(\mathbbm{1}) + \alpha_{12} (\mathbbm{1}^\top \diag^*(A)) \mathbbm{1} \mathbbm{1}^\top + \alpha_{13} (\mathbbm{1}^\top \diag^*(A)) \diag(\mathbbm{1}) \\
        &+ \alpha_{14} \diag^*(A) \mathbbm{1}^\top + \alpha_{15} \mathbbm{1} \diag^*(A)^\top 
        +\beta_{1}\mathbbm{1}\mathbbm{1}^\top + \beta_{2}\diag(\mathbbm{1}).
    \end{aligned}
    \end{equation}

However, the normalized 2-IGN is still not compatible. Consider the third basis function $\ell_3(A) \coloneq \diag(\diag^*(A))$. It fails to satisfy the compatibility condition: 
\[
\ell_3(A \otimes \mathbbm{1}_m) \neq \ell_3(A) \otimes \mathbbm{1}_m, m\geq 2,
\]
as the left-hand side yields a diagonal matrix, while the right-hand side generally does not. In fact, all basis maps that output diagonal matrices share this incompatibility.

Nonetheless, our Proposition~\ref{prop:loc_lip_NNs} immediately provides a constructive recipe for making 2-IGN transferable: we start from a basis for linear equivariant layers $W_n^{(i)}$---which is compatible under duplication---and then select only the basis elements which have a finite operator norm as $n$ grows. Furthermore, we use nonlinearities $\rho_n^{(i)}$ which are compatible and Lipschitz continuous.
Following this recipe, we introduce two modified versions of 2-IGN:
\begin{enumerate}[font={\textit}, align=left, leftmargin=*]
    \item[] \textbf{Generalizable Graph Neural Network (GGNN)}: Compatible with respect to $\mscr V^G_\mathrm{dup}$, locally Lipschitz transferable under the $\infty$-norm.

    \item[] \textbf{Continuous GGNN}: Compatible with respect to $\mscr V^G_\mathrm{dup}$, locally Lipschitz transferable under the operator $2$-norm, and continuously transferable under the cut-norm.
\end{enumerate}

We highlight that this is a general methodology for constructing transferable equivariant networks: the framework established in~\cite{levin2024any} yields bases for compatible equivariant linear layers. We can then select only those basis elements whose operator norms do not grow with dimension, which we have shown yields a transferable neural network.
\paragraph{GGNN architecture.}
A $D$-layer GGNN parameterizes an $\msf S_n$-equivariant function 
$
\RR^{n\times n}_{\mathrm{sym}} \times \RR^{n\times d'_1} \to \RR^{n\times n}_{\mathrm{sym}} \times \RR^{n\times d'_D}
$
defined via the composition
\[
W_{n}^{(D)} \circ \rho_{n}^{(D-1)} \circ \cdots \circ \rho_{n}^{(1)} \circ W_{n}^{(1)},
\]
where for the following conditions hold for all $i.$ 
\begin{itemize}[leftmargin=0.8cm]
    \item The linear map $W_n^{(i)}\colon (\RR_{\mathrm{sym}}^{n\times n})^{\oplus d_i} \oplus (\RR^n)^{\oplus d'_i} \to (\RR_{\mathrm{sym}}^{n\times n})^{\oplus d_i} \oplus ((\RR^n)^{\oplus d'_i})^{\oplus S}$ is a is $\msf S_n$-equivariant and compatible with the duplication embedding.
    \item The functions $\rho_n^{(i)}\colon (\RR_{\mathrm{sym}}^{n\times n})^{\oplus d_i} \oplus ((\RR^n)^{\oplus d'_i})^{\oplus S} \to (\RR_{\mathrm{sym}}^{n\times n})^{\oplus d_i} \oplus (\RR^n)^{\oplus d'_i}$ that are compatible with respect with the duplication embedding.
\end{itemize}
Here, $d$ and $d'$ are feature channels of $ A$ and $  X,$ respectively--- we fix $d_1=d_D=1$.

For the ease of notation, we assume $d_i=d_{i+1}=1$ (The general case follows analogously). The maps $W_n^{(i)}, \rho_n^{(i)}$ are given by
\begin{equation}
\label{eq:GGNN}
\begin{aligned}
    & W_n^{(i)}(A, X) = (A', (X'_s)_{s=0}^S) \\
    &= \Bigg(\alpha_{1}A + \alpha_{2}\frac{\mathbbm{1}^\top A\mathbbm{1}}{n^2}\mathbbm{1}\mathbbm{1}^{\top}+ \alpha_{3}\frac{\mathrm{Tr}(A)}{n}\mathbbm{1}\mathbbm{1}^{\top}  +
    \alpha_{4}\frac{1}{n}(A\mathbbm{1}\mathbbm{1}^{\top}+ \mathbbm{1}\mathbbm{1}^{\top}A)\\ 
    & \qquad 
    + \alpha_{5}(\diag(A)\mathbbm{1}^{\top}+ \mathbbm{1}\diag(A)^{\top})
    + \sum_{j=1}^{d'_{i}}\Big[\alpha_{6,j}(X_{:,j}\mathbbm{1}^{\top}+ \mathbbm{1}X_{:,j}^{\top})
     + \alpha_{7,j}\frac{1}{n}(\mathbbm{1}^{\top}X_{:,j})\mathbbm{1}\mathbbm{1}^\top \Big] \\
     & \qquad + \beta_1 \mathbbm{1}\mathbbm{1}^\top ,                                                       \\
             & \qquad\ X\Theta_{1,s}+ \frac{1}{n}\mathbbm{1}\mathbbm{1}^{\top}X\Theta_{2,s}+ \frac{1}{n}A\mathbbm{1}\theta_{1,s}^{\top}+ \diag(A)\theta_{2,s}^{\top}+ \frac{\mathrm{Tr}(A)}{n}\mathbbm{1}\theta_{3,s}^{\top}+ \frac{\mathbbm{1}^\top A\mathbbm{1}}{n^2}\mathbbm{1}\theta_{4,s}^{\top} + \mathbbm{1} \beta_{2,s}^\top \Bigg), \\
    & \rho_n^{(i)}(A, (X_s)_{s=0}^{S}) =
    \left( A,\ \sigma\left(\sum_{s=0}^{S} n^{-s} A^{s} X_s \right) \right)
\end{aligned}
\end{equation}
where $\alpha, \theta, \beta$ are learnable parameters, and $\sigma\colon \RR \to \RR$ is an arbitrary $L$-Lipschitz entrywise nonlinearity. 

Consider the input and output spaces as (variants of) $\mscr V^G_{\mathrm{dup}}$, the duplication consistent sequences for graph signals. The linear layer $W_n$ in \eqref{eq:GGNN} parameterizes all linear $\msf S_n$-equivariant maps between these two spaces that are also compatible with the duplication embedding.

The GGNN model is a modification of the 2-IGN \eqref{eq:IGN} with three key differences. Firstly, we treat the adjacency matrix and node features separately so that each layer has a graph and a signal component. Moreover, we explicitly require the matrix component to be symmetric. Secondly, we impose the compatibility with respect to the duplication embedding on the linear layers. This leads to both proper normalization of each basis function and a reduction in the total number of basis functions. In particular, all basis functions that output a diagonal matrix are removed. Thirdly, for the nonlinearity $\rho_n$, instead of the entrywise nonlinearity used in IGN, we adopt a message-passing-like nonlinearity. The form of the nonlinearity mirrors the GNN model studied in~\cite{ruizNEURIPS2020}. Particularly, in \eqref{eq:GGNN}, if we set all the $\alpha$'s to $0$ except $\alpha_1=1$, and all the $\theta$'s to $0$ except $\Theta_{1,s}$, then we exactly recover the transferable GNN in~\cite{ruizNEURIPS2020}.Therefore, our model is at least as expressive as the GNN in~\cite{ruizNEURIPS2020}.

\paragraph{Transferability analysis of GGNN.}
Even though we only impose compatibility by design, we can still prove that GGNN is Lipschitz transferable with respect to some norm. Albeit this norm is arguably too weak.
\begin{corollary}
    Consider one layer of GGNN, $(\mathrm{GGNN}_{n}(A,X)= \rho_n^{(i)} \circ W_{n}^{(i)})$, as defined in \eqref{eq:GGNN}, where the entrywise nonlinearity $\sigma$ is $L_\sigma$-Lipschitz. Endow the space of duplication-consistent sequences with the $\infty$-norm as defined in \eqref{eq:graph-p-norm} (with respect to $\|\cdot\|_{\infty}$ on $\RR^{d'_i}$). Then, the sequence of maps $(\mathrm{GGNN}_n)$ is locally Lipschitz transferable. Consequently, $(\mathrm{GGNN}_n)$ extends to a function $\mathrm{GGNN}_\infty$ on the space of graphon signals, which is $L(r)$-Lipschitz on $B(0,r)$ with respect to the symmetrized metric defined in \eqref{eq:symmetrized_p_metric} with $p=\infty$.
\end{corollary}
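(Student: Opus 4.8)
The plan is to apply Proposition~\ref{prop:loc_lip_NNs} (equivalently Proposition~\ref{prop:transferable_nn}) by decomposing a single GGNN layer $\mathrm{GGNN}_n = \rho_n^{(i)} \circ W_n^{(i)}$ into the composition of its linear part $W_n^{(i)}$ and its nonlinear part $\rho_n^{(i)}$, and verifying the three hypotheses of that proposition: compatibility, uniform operator-norm bound on the linear maps, and local Lipschitzness of the nonlinearities. First I would check compatibility of $W_n^{(i)}$: by construction each basis map appearing in~\eqref{eq:GGNN} was selected precisely so that it commutes with the duplication embeddings $\varphinn{n}{N}(A,X) = (A\otimes\mathbbm{1}_{N/n}\mathbbm{1}_{N/n}^\top, X\otimes\mathbbm{1}_{N/n})$; this is a routine check term-by-term, using that $\mathrm{Tr}(A\otimes\mathbbm{1}\mathbbm{1}^\top) = (N/n)\mathrm{Tr}(A)$, $\mathbbm{1}^\top(A\otimes\mathbbm{1}\mathbbm{1}^\top)\mathbbm{1} = (N/n)^2\mathbbm{1}^\top A\mathbbm{1}$, $(A\otimes\mathbbm{1}\mathbbm{1}^\top)\mathbbm{1}_N = (N/n)(A\mathbbm{1}_n)\otimes\mathbbm{1}_{N/n}$, and $\diag(A\otimes\mathbbm{1}\mathbbm{1}^\top) = \diag(A)\otimes\mathbbm{1}_{N/n}$, together with the normalization factors $1/n$, $1/n^2$ inserted in each term. $\msf S_n$-equivariance is immediate from the permutation-equivariance of each basis map. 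For $\rho_n^{(i)}(A,(X_s)) = (A, \sigma(\sum_s n^{-s}A^sX_s))$, compatibility follows because $A^s$ commutes with duplication up to the factor $(N/n)^s$, which is exactly cancelled by the $n^{-s}$ normalization, and $\sigma$ is applied entrywise hence commutes with duplication.

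Next I would establish the uniform operator-norm bound for $W_n^{(i)}$ with respect to $\|\cdot\|_{\overline\infty}$. The key observation is that under the normalized $\infty$-norm, every normalized basis map in~\eqref{eq:GGNN} has operator norm bounded by an absolute constant independent of $n$: for instance $\|\tfrac{1}{n^2}(\mathbbm{1}^\top A\mathbbm{1})\mathbbm{1}\mathbbm{1}^\top\|_{\overline\infty} \le \|A\|_{\overline\infty}$ since $|\mathbbm{1}^\top A\mathbbm{1}|/n^2 \le \max_{ij}|A_{ij}|$, similarly $\|\tfrac1n A\mathbbm{1}\mathbbm{1}^\top\|_{\overline\infty} \le \|A\|_{\overline\infty}$ since each entry of $\tfrac1n A\mathbbm{1}$ is an average of entries of $A$, and $\|\diag(A)\mathbbm{1}^\top\|_{\overline\infty} \le \|A\|_{\overline\infty}$, and the signal-to-graph terms $X_{:,j}\mathbbm{1}^\top$ have $\infty$-norm equal to $\max_i |X_{ij}| \le \|X\|_{\overline\infty}$, etc. Summing over the finitely many terms (with the learnable coefficients $\alpha,\theta,\beta$ fixed, these contribute a constant) gives $\|W_n^{(i)}\|_{\mathrm{op},\infty} \le L_W$ for some $L_W$ depending only on the parameters, uniformly in $n$. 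This is where one must be slightly careful: the bound must hold with the \emph{same} constant for all $n$, so I would write out the estimate for each of the roughly ten basis-map families and take $L_W$ to be the sum of the absolute values of the coefficients times the (absolute) operator norm bound of each family.

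Finally, for the nonlinearity, $\rho_n^{(i)}$ restricted to the graph component is the identity (1-Lipschitz), and on the signal component it is $X \mapsto \sigma(\sum_{s=0}^S n^{-s}A^sX_s)$; since $\|n^{-s}A^s\|_{\mathrm{op},\infty} \le (\tfrac1n\|A\|_{\mathrm{op},\infty})^s \le \|(A,X)\|_{\overline\infty}^s$ on the ball of radius $r$ (using $\tfrac1n\|A\|_{\mathrm{op},\infty} \le \|A\|_{\overline\infty}$ when... actually here I should use that on $B(0,r)$ we have $\|A\|_{\overline\infty} < r$ and crucially $A$ has bounded entries, giving $\|A^s\|$ bounds polynomial in $nr$, cancelled by $n^{-s}$), the map $\sum_s n^{-s}A^sX_s$ is linear in $X$ with operator norm bounded by a polynomial $q(r)$ in $r$ uniformly in $n$, and composing with the $L_\sigma$-Lipschitz $\sigma$ yields that $\rho_n^{(i)}$ is $L(r) = L_\sigma \cdot q(r)$-Lipschitz on $B(0,r)$ for all $n$. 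Then Proposition~\ref{prop:loc_lip_NNs} applies directly: the composition $\rho_n^{(i)}\circ W_n^{(i)}$, and hence any finite composition of GGNN layers, is $\widehat L(r)$-locally Lipschitz transferable, so it extends to $\mathrm{GGNN}_\infty$ on graphon signals which is $\widehat L(r)$-Lipschitz on $B(0,r)$; Proposition~\ref{prop:continuity_in_delta} then transfers this to the symmetrized metric~\eqref{eq:symmetrized_p_metric} with $p=\infty$. \textbf{The main obstacle} I anticipate is controlling the term $n^{-s}A^sX_s$ in $\rho_n$ uniformly: one needs that on $B(0,r)$, i.e. $\|A\|_{\overline\infty} < r$ (so entries are $O(r)$), the quantity $\|n^{-s}A^s\|_{\mathrm{op},\infty}$ stays bounded by a constant depending only on $r$ and $s$ — this works because $\|A\|_{\mathrm{op},\infty} \le n\|A\|_{\overline\infty}$ for such $A$ (each row sum of $|A|$ is at most $n\cdot\max|A_{ij}|$), giving $\|n^{-s}A^s\|_{\mathrm{op},\infty} \le n^{-s}(n r)^s = r^s$, clean and $n$-free; I would make sure this row-sum estimate is stated precisely since it is the linchpin of the uniformity.
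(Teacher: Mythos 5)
Your overall strategy mirrors the paper's: split the layer into linear part and nonlinearity, check compatibility term-by-term, bound $\|W_n^{(i)}\|_{\mathrm{op}}$ uniformly in $n$, and establish local Lipschitzness of $\rho_n^{(i)}$, finally invoking Proposition~\ref{prop:loc_lip_NNs} and Proposition~\ref{prop:continuity_in_delta}. Your key estimate $\|n^{-s}A^s\|_{\mathrm{op},\infty}\le n^{-s}(n\|A\|_{\overline{\infty}})^s\le r^s$ (via the row-sum bound $\|A\|_{\mathrm{op},\infty}\le n\|A\|_{\overline{\infty}}$) is exactly the right linchpin and matches the paper's cancellation.

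However, your argument for $\rho_n^{(i)}$ has a genuine gap: you only establish Lipschitzness in the signal components $(X_s)$ for $A$ held fixed. The map $(A,(X_s))\mapsto \sigma(\sum_s n^{-s}A^sX_s)$ is indeed linear in the signal tuple, but it is a polynomial of degree $s$ in $A$, and Proposition~\ref{prop:loc_lip_NNs} requires $\rho_n^{(i)}$ to be $L(r)$-Lipschitz on $B(0,r)$ as a function of the full tuple $(A,X_0,\dots,X_S)$, i.e.\ you must bound $\|\rho_n(A,(X_s))-\rho_n(A',(X'_s))\|_{\overline{\infty}}$ in terms of $\|A-A'\|_{\overline{\infty}}$ as well. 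Saying the map is linear in $X$ with bounded operator norm controls only the $X$-direction. The paper handles the $A$-direction by computing the Fr\'echet derivative of $(A,(X_s))\mapsto(A,\sum_s n^{-s}A^sX_s)$, whose $A$-perturbation contributes the terms $n^{-s}\sum_{k=0}^{s-1}A^kHA^{s-1-k}X_s$; each of these is bounded by $r^{s-1}\|H\|_{\overline{\infty}}\|X_s\|_{\overline{\infty}}\le r^s\|H\|_{\overline{\infty}}$ using the very same row-sum estimate you already have. Summing gives the extra $\sum_s s r^s$ contribution, which is where the paper's $L(r)=1\vee\sum_{s=0}^S(sr^s+r^s)$ comes from, whereas your $q(r)=\sum_s r^s$ misses the $sr^s$ terms entirely. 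Equivalently, you could avoid derivatives and use the telescoping identity $A^s-(A')^s=\sum_{k=0}^{s-1}A^k(A-A')(A')^{s-1-k}$ to bound the difference directly. You have all the needed ingredients — you just never write the estimate in the $A$-direction, so as stated the hypothesis of Proposition~\ref{prop:loc_lip_NNs} is not verified.
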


\begin{proof}
    The sequence of linear maps $(W_{n}^{(i)})$ in \eqref{eq:GGNN} is Lipschitz transferable because
    \[
        \begin{aligned}
            \|W_{n}^{(i)}\|_{\mathrm{op}}\leq \max\Bigg\{ & |\alpha_{1}| + |\alpha_{2}| + |\alpha_{3}| + 2|\alpha_{4}| + 2|\alpha_{5}|   + \sum_{j=1}^{d'_i}(|\alpha_{6,j}| + |\alpha_{7,j}|),\ \\ 
            & \|\Theta_{1,s}\|_{\mathrm{op},1,1}+ \|\Theta_{2,s}\|_{\mathrm{op},1,1}+ \|\theta_{1,s}\|_{\infty} + \|\theta_{2,s}\|_{\infty}+ \|\theta_{3,s}\|_{\infty}+ \|\theta_{4,s}\|_{\infty}\Bigg\},
        \end{aligned}
    \]
    where $\|\Theta\|_{\mathrm{op},1,1}= \max_{j}\sum_{k}|\thetann{j}{k}|$ is the max
    $\ell_{1}$ norm of a column. 

    For the nonlinearity $(\rho_n^{(i)})$, we consider its Fr\'echet derivative (since all norms are equivalent in finite-dimensional vector spaces, this is independent of the norm chosen):
    \[
        D\rho_n^{(i)}(A, X_0, \dots, X_S)[H, H_0, \dots, H_S] 
        = \left( 
        H, \sum_{s=0}^S n^{-s} \left( \sum_{k=0}^{s-1} A^k H A^{s-1-k} \cdot X_s + A^s H_s \right) 
        \right).
    \]
    Hence,
    \begin{align*}
        &\|D\rho_n^{(i)}(A, X_0, \dots, X_S)[H, H_0, \dots, H_S]\|_{\infty} \\
        &\leq\  \max\left(\|H\|_{\infty} 
        , \sum_{s=0}^S \left( \sum_{k=0}^{s-1} \|A\|_{\infty}^{s-1} \|H\|_{\infty}\|X_s\|_{\infty} 
        + \|A\|_{\infty}^s \|H_s\|_{\infty} \right)\right) \\
        &\leq\ \max\left( \|H\|_{\infty} 
        , \sum_{s=0}^S s r^s \|H\|_{\infty} 
        + r^s \|H_s\|_{\infty} \right)\\
        &\leq\  \underbrace{\left(1 \vee \sum_{s=0}^S \left( s r^s + r^s \right)\right)}_{\eqcolon L(r)} 
        \cdot \|(H, H_0, \dots, H_S)\|_{\infty}.
    \end{align*}
    Therefore, for all $n$, $\rho_n^{(i)}$ is $L_\sigma L(r)$-Lipschitz on the set 
    \[
        B_n(0,r)=\left\{ (A, X_0, \dots, X_S) : \|A, X_0, \dots, X_S\|_{\infty} < r \right\}.
    \]
    Applying Proposition~\ref{prop:loc_lip_NNs}, the sequence of maps $(\mathrm{GGNN}_n)$ is locally Lipschitz transferable, where the extension $\mathrm{GGNN}_\infty$ is $(L_\sigma L(\|W_n\|_{\mathrm{op}} r)\|W_n\|_{\mathrm{op}})$-Lipschitz on the set 
    \[
        B(0,r)=\left\{ (W,f) : \|(W,f)\|_{\infty} <r \right\}.
    \]
\end{proof}

\paragraph{Continuous GGNN architecture.}
We aim to further restrict GGNN to construct a model that is transferable with respect to the cut norm. By Proposition~\ref{prop:graphon_op_and_cut}, we consider endowing the consistent sequence with the operator $2$-norm, which is easier to analyze. The \emph{Continuous GGNN} is a variant of GGNN with an additional constraint on the linear layers $W_n^{(i)}$, requiring them to have bounded operator norm: $\|W_n\|_{\mathrm{op}} < \infty$ (with respect to the operator $2$-norm). This constraint effectively leads to a further reduction in the set of basis functions:

\begin{equation}
    \label{eq:continuous_GGNN}
    \begin{aligned}
         W_{n}^{(i)}(A,X) = (A', (X'_s)_{s=0}^S) = \Bigg(& \alpha_{1}A + \alpha_{2}\frac{\mathbbm{1}^\top A \mathbbm{1}}{n^2} \mathbbm{1} \mathbbm{1}^{\top}
         + \alpha_{4} \frac{1}{n} (A \mathbbm{1} \mathbbm{1}^{\top} + \mathbbm{1} \mathbbm{1}^{\top} A) \\
         & + \sum_{j=1}^{k} \left[ \alpha_{6,j} (X_{:,j} \mathbbm{1}^{\top} + \mathbbm{1} X_{:,j}^{\top}) 
         + \alpha_{7,j} \frac{1}{n} (\mathbbm{1}^{\top} X_{:,j}) \mathbbm{1} \mathbbm{1}^{\top} \right], \\
         & X\Theta_{1,s} + \frac{1}{n} \mathbbm{1} \mathbbm{1}^{\top} X \Theta_{2,s}
         + \frac{1}{n} A \mathbbm{1} \theta_{1,s}^{\top}
         + \frac{\mathbbm{1}^{\top} A \mathbbm{1}}{n^2} \mathbbm{1} \theta_{4,s}^{\top} \Bigg),
    \end{aligned}
\end{equation}
Therefore, the hypothesis class of continuous GGNN forms a strict subset of that of GGNN, with the additional constraint enabling improved transferability. Meanwhile, for the same reasons outlined for GGNN, the continuous GGNN is also at least as expressive as the GNN proposed in~\cite{ruizNEURIPS2020}. We use $(\mathrm{cGGNN}_n)$ to denote the sequence of functions of continuous GGNN.

\paragraph{Transferability analysis of Continuous GGNN.}
\begin{corollary}
    Consider one layer of the continuous GGNN, $(\mathrm{cGGNN}_{n}(A,X)= \rho_n^{(i)} \circ W_{n}^{(i)})$, as defined in \eqref{eq:continuous_GGNN}, where the entrywise nonlinearity $\sigma$ is $L_\sigma$-Lipschitz. Endow the space of duplication-consistent sequences with the operator $2$-norm as defined in \eqref{eq:graph-operator-norm} (with respect to $\|\cdot\|_\infty$ on $\RR^{d'_i}$). Then, the sequence of maps $(\mathrm{cGGNN}_n)$ is locally Lipschitz transferable.
    Therefore, $(\mathrm{cGGNN}_n)$ extends to a function $\mathrm{cGGNN}_\infty$  on the space of graphon signals, which is $L(r)$-Lipschitz on $B(0,r)$ with respect to the symmetrized operator $2$-metric defined in \eqref{eq:symmetrized_metric_op} with $p=2$.
\end{corollary}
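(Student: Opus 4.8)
The plan is to instantiate Proposition~\ref{prop:loc_lip_NNs} (equivalently Proposition~\ref{prop:transferable_nn}) exactly as in the proof for GGNN, but now with respect to the operator $2$-norm of \eqref{eq:graph-operator-norm}. A single layer $\mathrm{cGGNN}_n = \rho_n^{(i)}\circ W_n^{(i)}$ is a composition of a linear map and a nonlinearity, so it suffices to verify: (a) compatibility of both with the duplication embedding; (b) a uniform-in-$n$ bound $\sup_n\|W_n^{(i)}\|_{\mathrm{op},2}<\infty$; and (c) a local Lipschitz bound for $\rho_n^{(i)}$ on balls $\{\|(A,(X_s))\|_{\mathrm{op},2}<r\}$ independent of $n$. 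Compatibility is inherited from GGNN, since \eqref{eq:continuous_GGNN} is obtained from \eqref{eq:GGNN} by deleting basis terms and was already checked there; alternatively it is a direct computation using the identities $\mathbbm{1}_N = \mathbbm{1}_n\otimes\mathbbm{1}_{N/n}$ and $(MX)\otimes\mathbbm{1}_{N/n} = \tfrac{n}{N}\left(M\otimes\mathbbm{1}_{N/n}\mathbbm{1}_{N/n}^\top\right)\!\left(X\otimes\mathbbm{1}_{N/n}\right)$ together with the entrywise action of $\sigma$.

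For (b), the very point of passing from GGNN to continuous GGNN is that the deleted terms---$\tfrac{\mathrm{Tr}(A)}{n}\mathbbm{1}\mathbbm{1}^\top$, $\diag(A)\mathbbm{1}^\top + \mathbbm{1}\diag(A)^\top$, the $\theta_{2,s},\theta_{3,s}$ contributions, and the constant biases---have operator $2$-norm growing like $\sqrt n$, whereas each surviving term has normalized operator $2$-norm bounded uniformly in $n$. For instance, $(A,X)\mapsto \tfrac{1}{n^2}(\mathbbm{1}^\top A\mathbbm{1})\mathbbm{1}\mathbbm{1}^\top$ gives, using $\|\mathbbm{1}\mathbbm{1}^\top\|_{\mathrm{op},2}=n$ and $|\mathbbm{1}^\top A\mathbbm{1}|\le n\|A\|_{\mathrm{op},2}$, a normalized output norm $\le \tfrac1n\|A\|_{\mathrm{op},2}$; and $\tfrac1n A\mathbbm{1}\mathbbm{1}^\top$ is rank one with $\tfrac1n\|A\mathbbm{1}\mathbbm{1}^\top\|_{\mathrm{op},2} = \tfrac1n\|A\mathbbm{1}\|_2\sqrt n \le \tfrac1n\|A\|_{\mathrm{op},2}\cdot n$, again yielding normalized norm $\le\tfrac1n\|A\|_{\mathrm{op},2}$. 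The signal-to-graph and signal-to-signal terms are bounded by $\|\Theta_{\cdot,s}\|_{\mathrm{op}}$, $\|\theta_{1,s}\|,\|\theta_{4,s}\|$ and $\|X\|_{\overline 2}$ by the same elementary estimates, and summing over all terms produces an explicit constant $L_W$ depending only on the learned parameters.

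For (c), I would compute the Fr\'echet derivative of $\rho_n^{(i)}$: in direction $(H,H_0,\dots,H_S)$ it equals $\bigl(H,\ \sigma'(\,\cdot\,)\text{ (entrywise) }\cdot\sum_{s=0}^S n^{-s}\bigl(\sum_{k=0}^{s-1}A^kHA^{s-1-k}X_s + A^sH_s\bigr)\bigr)$, and bound it in $\|\cdot\|_{\mathrm{op},2}\times\|\cdot\|_{\overline 2}$. The crucial cancellation is that on the ball of radius $r$ one has $\tfrac1n\|A\|_{\mathrm{op},2}<r$, hence $n^{-s}\|A^s\|_{\mathrm{op},2}\le r^s$ and $n^{-s}\|A\|_{\mathrm{op},2}^{s-1}\le n^{-1}r^{s-1}$; combined with the submultiplicativity estimate $\|MX\|_{\overline 2}\le\|M\|_{\mathrm{op},2}\|X\|_{\overline 2}$ (for $\|\cdot\|_{\RR^{d'}}=\|\cdot\|_2$) and $\|X_s\|_{\overline 2}<r$, each summand is controlled by $r^s$ times the input norm, so $\rho_n^{(i)}$ is $L_\sigma\bigl(1\vee\sum_{s=0}^S(sr^s+r^s)\bigr)$-Lipschitz on the ball, uniformly in $n$. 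Proposition~\ref{prop:loc_lip_NNs} then assembles these into local Lipschitz transferability of the full $D$-layer network with the explicit constant from the recursion in Proposition~\ref{prop:transferable_nn}, and the extension to graphon signals is $L(r)$-Lipschitz on $B(0,r)$ in the symmetrized operator $2$-metric \eqref{eq:symmetrized_metric_op} with $p=2$ by Proposition~\ref{prop:continuity_in_delta}; by Proposition~\ref{prop:graphon_op_and_cut} this additionally yields continuous transferability with respect to the cut norm on bounded balls. The main obstacle is the bookkeeping in step (c): tracking which power of $n$ each term carries and checking that the normalizations in \eqref{eq:continuous_GGNN} are exactly what makes every $n^{-s}$ balance against $\|A\|_{\mathrm{op},2}^{O(s)}$ on the ball; everything else is a routine adaptation of the GGNN argument.
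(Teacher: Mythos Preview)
Your proposal is correct and matches the paper's proof essentially line for line: the paper also verifies compatibility (inherited from GGNN), bounds $\sup_n\|W_n^{(i)}\|_{\mathrm{op},2}$ by an explicit constant in the learned parameters, computes the same Fr\'echet derivative of $\rho_n^{(i)}$ and obtains the identical local Lipschitz constant $L_\sigma\bigl(1\vee\sum_{s=0}^S(sr^s+r^s)\bigr)$ on the $r$-ball, and concludes via Proposition~\ref{prop:loc_lip_NNs}. Your parenthetical flag that the submultiplicativity $\|MX\|_{\overline 2}\le\|M\|_{\mathrm{op},2}\|X\|_{\overline 2}$ is cleanest for $\|\cdot\|_{\RR^{d'}}=\|\cdot\|_2$ is a point the paper does not explicitly address, but otherwise the arguments are identical.
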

\begin{remark}
    By Proposition~\ref{prop:graphon_op_and_cut}, the sequence of maps $(\mathrm{cGGNN}_n)$ is continuously transferable with respect to the cut distance \eqref{eq:cut_distance} on the space \[
    \{(W,f) : \|(W,f)\|_{\mathrm{op},2}< r,\|W\|_\infty,\|f\|_\infty < r \}.
    \] Moreover, for the convergence and transferability results stated in Proposition~\ref{prop:transferability_determinimistic},\ref{prop:transferability_random}, one can additionally obtain quantitative rates of convergence with respect to the cut distance.
\end{remark}
\begin{proof}
    First, by construction, the sequence of maps $(W_n^{(i)})$ is Lipschitz transferable because
    \begin{equation*}
        \begin{aligned}
            \|W_n^{(i)}\|_{\mathrm{op}} \leq \max \Bigg\{ 
            & |\alpha_{1}| + |\alpha_{2}| + 2|\alpha_{4}| + \sum_{j=1}^{d'_i} \left( |\alpha_{6,j}| + |\alpha_{7,j}| \right), \\
            & \|\Theta_{1,s}\|_{\mathrm{op},1} + \|\Theta_{2,s}\|_{\mathrm{op},1} + \|\theta_{1,s}\|_{\infty} + \|\theta_{4,s}\|_{\infty} 
            \Bigg\} < \infty.
        \end{aligned}
    \end{equation*}

    For the nonlinearity $(\rho_n^{(i)})$, the action of its Jacobian yields
    \[
        D\rho_n^{(i)}(A, X_0, \dots, X_S)[H, H_0, \dots, H_S] 
        = \left( 
        H, \sum_{s=0}^S n^{-s} \left( \sum_{k=0}^{s-1} A^k H A^{s-1-k} \cdot X_s + A^s H_s \right) 
        \right).
    \]
    Hence,
    \begin{align*}
        &\|D\rho_n^{(i)}(A, X_0, \dots, X_S)[H, H_0, \dots, H_S]\|_{\mathrm{op},2} \\
        &\leq \max\left(n^{-1} \|H\|_{\mathrm{op},2} 
        , \sum_{s=0}^S \left( \sum_{k=0}^{s-1} \frac{\|A\|_{\mathrm{op},2}^{s-1}}{n^{s-1}} \cdot \frac{\|H\|_{\mathrm{op},2}}{n} \cdot \|X_s\|_{\overline{2}} 
        + \frac{\|A\|_{\mathrm{op},2}^{s}}{n^{s}} \cdot \|H_s\|_{\overline{2}} \right) \right)\\
        &\leq \max\left(n^{-1} \|H\|_{\mathrm{op},2},
         \sum_{s=0}^S s r^s \cdot \frac{\|H\|_{\mathrm{op},2}}{n} 
        + r^s \|H_s\|_{\overline{2}}\right) \\
        &\leq \underbrace{\left(1 \vee \sum_{s=0}^S \left( s r^s + r^s \right) \right)}_{\eqcolon L(r)} 
        \cdot \|(H, H_0, \dots, H_S)\|_{\mathrm{op},2}.
    \end{align*}
    Therefore, for all $n$, $\rho_n^{(i)}$ is $L_\sigma L(r)$-Lipschitz on the set 
    \[
        \left\{ (A, X_0, \dots, X_S) : \|A, X_0, \dots, X_S\|_{\mathrm{op},2} <r \right\}.
    \]
    Applying Proposition~\ref{prop:loc_lip_NNs}, the sequence of maps $(f_n)$ is locally Lipschitz transferable, where the extension $\mathrm{cGGNN}_\infty$ is $(L_\sigma L(\|W_n\|_{\mathrm{op}} r)\|W_n\|_{\mathrm{op}})$-Lipschitz on the set 
    \[
        B(0,r)=\left\{ (W,f) : \|(W,f)\|_{\mathrm{op},2} <r \right\}.
    \]
\end{proof}

We can directly apply Propositions~\ref{prop:transferability_determinimistic} and~\ref{prop:transferability_random}, together with the convergence rates established in Appendix~\ref{appen:convergence_rates}. This leads to transferability results for continuous GGNNs that are exactly the same as those for MPNNs, as stated in Corollary~\ref{cor:MPNN_transferability}.
\paragraph{Related work on IGN transferability.} 
We discuss two closely related works,~\cite{cai2022convergence} and~\cite{herbst2025invariant}, that address the transferability of IGNs. Interpreting their results within our theoretical framework offers a better understanding of IGN transferability.
As shown in our work, the normalized 2-IGN is not compatible with the duplication-consistent subspace $\mscr V_{\mathrm{dup}}^G$, and thus fails to satisfy the convergence and transferability in Proposition~\ref{prop:convergence_transferability}. At first glance, this observation may seem to contradict~\cite[Theorem 2]{cai2022convergence}. However, this is not the case. While~\cite{cai2022convergence} introduces cIGN, a ``graphon analogue of IGN,'' and proves its continuity in the graphon space, it is crucial to note that the discrete IGN does not extend to cIGN in general:
\[
\mathrm{IGN}_{n}(A_n, X_n) \neq \mathrm{cIGN}([A_n], [X_n]).
\]
Therefore, the convergence of cIGN established in Theorem 2 of~\cite{cai2022convergence} does not imply the convergence or transferability of the finite-dimensional IGN model.
Moreover,~\cite[Definition 6]{cai2022convergence} introduces a constraint that resembles our compatibility condition, formulated through a restricted variant termed ``IGN-small.'' Our definition of compatibility clarifies this notion and enables explicit constructions and practical implementations of compatible, transferable versions of IGNs.

In a more recent work,~\cite{herbst2025invariant} adopts an approach similar to ours by imposing additional constraints on the linear layers of IGN, specifically requiring them to have bounded operator norm. This leads to the Invariant Graphon Network (IWN) model. Unlike our construction, IWN retains standard point-wise nonlinearities. As shown in~\cite[Proposition 5.5]{herbst2025invariant}, it is precisely these point-wise nonlinearity layers that cause IWN to be discontinuous with respect to the cut norm, rendering it generally non-transferable in our setting.
Interestingly,~\cite[Appendix G.4]{herbst2025invariant} observes that under suitable assumptions, IWN restricted to the space of simple-graph inputs (i.e., adjacency matrices with 0/1 entries) is Lipschitz continuous with respect to the cut norm and hence admit Lipschitz extensions.  This implies convergence and transferability of IWN specifically under the ``graphon sampling'' scheme, where edges are Bernoulli-randomized. However, the limit of this convergence result does not align with the behavior on weighted graphs (i.e., adjacency matrices with non-binary entries). This discrepancy highlights the cost of lacking cut-norm continuity over the full space.
This phenomenon may also explain Figure~\ref{fig:gnn_transferability}(c) in our GGNN experiments, where outputs on graphs sampled from the Erdős–Rényi model appear to converge (with diminishing error bars), yet to a different limit than those on the corresponding fully connected weighted graphs derived from the same graphon.

In the case of 2-IGN, our continuous cGGNN model provides a remedy for the lack of cut-norm continuity by replacing point-wise nonlinearities with message-passing-style operators, thereby ensuring Lipschitz continuity with respect to the cut norm and circumventing the issue. However, our construction is currently limited to 2-IGN and does not generalize to higher-order $k$-IGNs. As noted in~\cite[Section 5.1]{herbst2025invariant}, cut-norm discontinuity is inherent to the $k$-WL hierarchy and is likely unavoidable for all higher-order GNNs with better expressivity.

\paragraph{Remarks on expressive power.}

The expressive power of GNNs has been widely studied in terms of their ability to distinguish non-isomorphic graphs, where a GNN is said to be $k$-WL expressive if it is as powerful as $k$-WL testing \cite{huang2021short}. 
While this work does not explore expressivity, we note that the standard $k$-WL expressivity is not appropriate for studying the expressive power of graphon-compatible GNNs (i.e., GNNs that are compatible with the duplication-consistent sequence and thus extend to graphon space). 
This is because different-sized graphs corresponding to the same step graphon are always distinguishable by WL, but are considered equivalent by any graphon-compatible GNN. 
Instead, it is necessary to consider a variant of $k$-WL specifically designed for graphons \cite{boker2024fine, herbst2025invariant}.

\section{Example 3 (point clouds): details and missing proofs from Section~\ref{sec:point_clouds}}
In this appendix, we study the transferability of two any-dimensional architectures for point clouds. We start by presenting the consistent sequences we consider. Then, in Section~\ref{appen:ds-ci;oi-ds} we study the DS-CI model proposed in \cite{blum2024learning}, which is known to be very expressive but computationally expensive. Finally, in Section~\ref{appen:svd-ds}, we introduce a novel architecture that turns out to be much cheaper to compute.
\subsection{Duplication consistent sequence for point clouds}
The duplication consistent sequence for point clouds $\mscr V_{\mathrm{dup}}^{P}= \{(\vct V_{n}),(\varphinn{n}{N}), (\msf G_{n}
)\}$ is defined as follows. The index set is again $\ind = (\NN, \cdot\mid\cdot)$. For each $n$, the vector spaces are $\vct V_{n}= \RR^{n\times k}$, with the group $\msf G_n = \msf S_n\times \msf O(k)$ acting on $\vct V_n$ by 
\[(g,h)\cdot X = gXh^\top.\]
For any $n\mid N$, the embedding is given by,
\begin{align*}
    \varphinn{n}{N}\colon  \RR^{n\times k} & \hookrightarrow\RR^{N\times k}                         \\
    X&\mapsto X\otimes \mathbbm{1}_{N/n},
\end{align*}
and the group embedding is 
\begin{align*}
    \thetann{n}{N}\colon \msf S_n\times \msf O(k) &\hookrightarrow \msf S_N \times \msf O(k)\\
    (g,h)&\mapsto (g\otimes I_{N/n}, h).
\end{align*}
Analogous to the case of sets, we can identify each matrix $X \in \RR^{n \times k}$ with a step function $f_X : [0,1] \to \RR^k$, thereby interpreting $\Vinf$ as the space of step functions with discontinuities at rational points $\QQ$. We also view the orbit of $X$ as an empirical probability measure
$
\frac{1}{n}\sum_{i=1}^n \delta_{X_{i:}}$. Equivalently, this identifies the orbit of a step function $f\in\Vinf$ with $\mu_f =\mathrm{Law}(f(T))$ where $T\sim \mathrm{Unif}[0,1]$.

The orthogonal group $\msf O(k)$ acts on probability measures via push-forward: for $g \in \msf O(k)$ and a measure $\mu$, the action is given by $g \cdot \mu = g^\# \mu$, where $g^\# \mu(B) = \mu(g^{-1}(B))$ for all measurable sets $B \subseteq \RR^k$. The orbit space of $\vct V_\infty$ can be identified with the orbit space of empirical probability measures on $\RR^k$ under the action of $\msf O(k)$.
\paragraph{Norm on $\mscr V_{\mathrm{dup}}^P$.}
Consider Euclidean norm $\|\cdot\|_{2}$ on $\RR^k$ which corresponds to the inner product preserved by elements of $\msf O(k)$. We equip each $\vct V_n$ with the normalized $\ell_p$ norm:
\[
\|X\|_{\overline p} = 
\begin{cases}
    \left(\frac{1}{n} \sum_{i=1}^{n} \|X_{i:}\|_{2}^{p} \right)^{1/p} &p\in[1,\infty)\\
    \max_{i=1}^n \|X_{i:}\|_{2}&p=\infty
\end{cases}
\]
By Proposition~\ref{prop:metrics_compatibility}, it is straightforward to verify that this norm extends naturally to $\Vinf$, and that the limit space in this case can be identified with $\overline\Vinf = L^p([0,1];\RR^k)$ of functions $f\colon[0,1]\to\RR^k$ with norm $\|f\|=\left(\int_0^1 \|f(t)\|_2^p\, dt\right)^{1/p}<\infty$.

Analogous to the case of sets, the corresponding space of orbit closures can be identified with the space of orbit closures of probability measures on $\RR^{k}$ (with finite $p$-th moments) under the $\msf O(k)$-actions. The symmetrized metric is given by:
\begin{equation}\label{eq:symmetrized_metric_pc}
    \sdist_p(f,g) = \inf_{g\in \msf O(k)} W_p(g\cdot \mu_f ,\mu_g) \quad \text{ for $f,g\in \overline\Vinf$},
\end{equation}
where $W_p$ is the Wasserstein $p$-distance with respect to the $\ell_2$-norm on $\RR^k$.

\subsection{DeepSet for Conjugation Invariance (DS-CI)}\label{appen:ds-ci;oi-ds}
The DS-CI model~\cite{blum2024learning} is given by
\begin{align*}
\operatorname{DS-CI}_n:\RR^{n\times k}&\to \RR\\
    V&\mapsto\operatorname{MLP}_{c}
    \bigg(\operatorname{DeepSet}_{(1)}\left(\left\{ f_{j}^{d}(VV^\top) \right\}_{j=1, \ldots, n}\right),              \\
              & \hspace{51pt}\operatorname{DeepSet}_{(2)}\left(\left\{ f_{\ell}^{o}(VV^\top) \right\}_{\ell=1, \ldots, n(n-1)/2}\right), \\
              & \hspace{51pt}\operatorname{MLP}_{(3)}\left( f^{*}(VV^\top) \right) \bigg),
\end{align*}
where for symmetric matrix $X\in \mathbb{R}^{n\times n}_{\mathrm{sym}}$, the invariant features are given by $f_{j}^{d}(X)=$ the $j$-th largest of the numbers $X_{11}, \ldots, X_{nn}$, by $f_{\ell}^{o}(X)=$ the $\ell$-th largest of the numbers $X_{ij}, 1 \leq i<j \leq n$, and by $f^{*}(X)=\sum_{i \neq j}X_{ii}X_{ij}$. 

We define \emph{normalized DS-CI} with appropriate normalization: replacing $\mathrm{DeepSet}_{(1)}, \mathrm{DeepSet}_{(2)}$ with their normalized version (i.e. replacing the sum aggregation with the mean aggregation), and replacing $f^{*}(X)$ with $\overline{f^*}(X) = \frac{1}{n(n-1)}\sum_{i\neq j}X_{ii}X_{ij}$. We denote the sequence of functions of normalized DS-CI by $(\overline{\operatorname{DS-CI}}_n)$.

\paragraph{Transferability analysis of normalized DS-CI.}
Normalized DS-CI is not compatible with respect to $\mscr V_{\mathrm{dup}}^P$. To see this, observe that under duplication, we have
\[
( V\otimes \mathbbm{1}_{N/n})(V\otimes \mathbbm{1}_{N/n})^\top = (VV^\top)\otimes (\mathbbm{1}_{N/n} \mathbbm{1}_{N/n}^\top),
\]
Therefore, diagonal elements of $VV^\top$ become off-diagonal elements in $(VV^\top)\otimes(\mathbbm{1}_{N/n}\mathbbm{1}_{N/n}^\top)$, so
\[
\overline{\operatorname{DeepSet}}_{(2)}\left(\left\{ f_{\ell}^{o}\left(( V\otimes \mathbbm{1}_{N/n})(V\otimes \mathbbm{1}_{N/n})^\top \right) \right\}_{\ell=1}^{N(N-1)/2}\right) \neq \overline{\operatorname{DeepSet}}_{(2)}\left(\left\{ f_{\ell}^{o}(VV^\top) \right\}_{\ell=1}^{n(n-1)/2} \right),
\]
\[
\overline{f^*}\left(( V\otimes \mathbbm{1}_{N/n})(V\otimes \mathbbm{1}_{N/n})^\top\right)\neq \overline{f^*}(VV^\top).
\]
However, we can make some additional adjustments to ensure compatibility: we define the \emph{compatible DS-CI} by modifying the inputs of $\overline{\mathrm{DeepSet}}_{(2)}$ to be $\{ f^{a}_{l}(VV^{\top})\}_{l=1,\dots, n^2}$, where $f^{a}_{l}(X)$ denotes the $l$-th largest value among the entries $X_{ij}$ for $1\leq i,j\leq n$. Additionally, we replace $\overline{f^{*}}(X)$ with 
\[
\tilde{f^*}(X) \coloneq \frac{1}{n^{2}}\sum_{i,j}X_{ii}X_{ij}.
\]
We denote the sequence of functions of compatible DS-CI by $(\operatorname{C-DS-CI}_n)$.
We prove that this model is locally Lipschitz transferable.

\begin{corollary}
    Endow $\mscr V_{\mathrm{dup}}^P$ with the normalized $\ell_p$ norm with $p\in[1,\infty]$. Assume that all activation functions in the MLPs used for DS-CI are Lipschitz. Then the sequence of maps $(\operatorname{C-DS-CI}_n)$ is Lipschitz transferable
     on the space $\{f:\|f\|_\infty< r\}$ for all $r>0$. 
\end{corollary}
\begin{proof}
By Proposition~\ref{prop:loc_lip_NNs}, it is sufficient to verify the compatibility and Lipschitz continuity of each individual layer.

\begin{itemize}[leftmargin=0.8cm]
    \item The sequence of maps 
    \[
    (\RR^{n\times k}, \|\cdot\|_{\overline{p}}) \to (\RR^n, \|\cdot\|_{\overline{p}}), \quad V \mapsto \diag^*(VV^\top)
    \]
    is $(2r)$-Lipschitz transferable. Indeed, it is $\msf S_n$-equivariant, $\msf O(k)$-invariant, and 
    \begin{align*}
        &\diag^*\left((V \otimes \mathbbm{1}_m)(V \otimes \mathbbm{1}_m)^\top\right) 
        = \diag^*\left((VV^\top) \otimes (\mathbbm{1}_m \mathbbm{1}_m^\top)\right) \\
        &\hspace{124pt}= \diag^*(VV^\top) \otimes \mathbbm{1}_m, \\
        \\
        &\left\|\diag^*(VV^\top) - \diag^*(WW^\top)\right\|_{\overline{p}} 
        = \left( \frac{1}{n} \sum_{i=1}^n \left| \|V_{i:}\|_{2}^2 - \|W_{i:}\|_{2}^2 \right|^p \right)^{1/p} \\
        &\hspace{143pt}= \left( \frac{1}{n} \sum_{i=1}^n \left| \left\langle V_{i:} - W_{i:},\ V_{i:} + W_{i:} \right\rangle \right|^p \right)^{1/p} \\
        &\hspace{143pt}\leq \left( \frac{1}{n} \sum_{i=1}^n \left\| V_{i:} - W_{i:} \right\|_2^p \cdot (2r)^p \right)^{1/p} \tag{by Cauchy-Schwarz} \\
        &\hspace{143pt}= 2r \|V - W\|_{\overline{p}},
    \end{align*}
    for $V,W$ satisfying $\|V\|_\infty = \max_i \|V_{i:}\|_2 <r, \|W\|_\infty=\max_i \|W_{i:}\|_2 <r$.
    \item The sequence of maps
\[
    (\RR^{n\times k}, \|\cdot\|_{\overline{p}}) \to (\RR^{n^2}, \|\cdot\|_{\overline{p}}), \quad V \mapsto \mathrm{vec}(VV^\top)
\]
is $(2r)$-Lipschitz transferable, where the codomain is equipped with a consistent sequence structure as follows: for $g \in \msf S_n$, define $\pi(g) = g^\top \otimes g \in \msf S_{n^2}$, and let $g$ act on $\RR^{n^2}$ by $g \cdot x = \pi(g) x$. The symmetric groups $(\msf S_n)$ are embedded into each other as usual, and the vector spaces are embedded by $\varphi_{nm,n}\colon\RR^{n^2}\to\RR^{(nm)^2}$ where
\begin{equation*}
    \varphi_{nm,n}(x) = \mathrm{vec}(\mathrm{reshape}_n(x)\otimes\mathbbm{1}_m\mathbbm{1}_m^\top),
\end{equation*}
and $\mathrm{reshape}_n\colon\RR^{n^2}\to\RR^{n\times n}$ is the inverse of $\mathrm{vec}$ on $n\times n$ matrices. Since these are all linear maps, so is $\varphi_{nm,n}$.
We then have for all $V\in \RR^{n\times k}, g\in \msf S_n, h \in \msf O(k)$ that
\begin{align*}
    \mathrm{vec}\left((V \otimes \mathbbm{1}_m)(V \otimes \mathbbm{1}_m)^\top\right) 
    &= \mathrm{vec}\left((VV^\top) \otimes (\mathbbm{1}_m \mathbbm{1}_m^\top)\right) 
    = \varphinn{n}{nm}(\mathrm{vec}(VV^\top)), \\
    \mathrm{vec}\left((g Vh^\top)(gVh^\top)^\top\right)
    &= \mathrm{vec}(gVV^\top g^\top) 
    = \pi(g) \mathrm{vec}(VV^\top).
\end{align*}

Furthermore,
\begin{align*}
    \|\mathrm{vec}(VV^\top) - \mathrm{vec}(WW^\top)\|_{\overline{p}} 
    &= \left( \frac{1}{n^2} \sum_{i,j} \left| \langle V_{i:}, V_{j:} \rangle - \langle W_{i:}, W_{j:} \rangle \right|^p \right)^{1/p} \\
    &\leq \left( \frac{1}{n^2} \sum_{i,j} \left( 
        \left| \langle V_{i:}, V_{j:} - W_{j:} \rangle \right| 
        + \left| \langle V_{i:} - W_{i:}, W_{j:} \rangle \right| 
    \right)^p \right)^{1/p} \\
    &\leq \left( \frac{1}{n^2} \sum_{i,j} 2^{p-1}r^p 
        \left( \|V_{j:} - W_{j:}\|_2^p + \|V_{i:} - W_{i:}\|_2^p \right) 
    \right)^{1/p} 
    \tag{Using $(a+b)^p \leq 2^{p-1}(a^p + b^p)$ and Cauchy-Schwarz} \\
    &= 2r \|V - W\|_{\overline{p}},
\end{align*}
for $V,W$ satisfying $\|V\|_\infty = \max_i \|V_{i:}\|_2 <r, \|W\|_\infty=\max_i \|W_{i:}\|_2 <r$.
    \item The scalar maps
    \[
    (\RR^{n\times k}, \|\cdot\|_{\overline{p}}) \to (\RR, |\cdot|), \quad V \mapsto \tilde{f}^*(VV^\top)
    \]
    is $(4r^3)$-Lipschitz transferable. Indeed, it is $\msf S_n\times  \msf O(k)$ invariant and 
    
    \begin{align*}
    &\tilde{f}^*((V\otimes \mathbbm{1}_m)(V\otimes \mathbbm{1}_m)^\top)=\tilde f^*(VV^\top),\\
    \\
        &\left| \tilde{f}^*(VV^\top) - \tilde{f}^*(WW^\top) \right| \\
    &\leq \frac{1}{n^2} \sum_{i,j} \left| \langle V_{i:}, V_{j:} \rangle \|V_{i:}\|_{2}^2 - \langle W_{i:}, W_{j:} \rangle \|W_{i:}\|_{2}^2 \right|\\ 
    &\leq \frac{1}{n^2} \sum_{i,j} \left| \langle V_{i:}, V_{j:} \rangle -\langle W_{i:}, W_{j:} \rangle \right| \|V_{i:}\|_{2}^2 + \left|\langle W_{i:}, W_{j:} \rangle\right|\left| \|V_{i:}\|_{2}^2 -  \|W_{i:}\|_{2}^2 \right|\\ 
    &\leq 4r^3 \|V-W\|_{\overline 1}\tag{by the previous two computations}\\
    &\leq 4r^3 \|V - W\|_{\overline{p}},
    \end{align*}
for $V,W$ satisfying $\|V\|_\infty = \max_i \|V_{i:}\|_2 <r, \|W\|_\infty=\max_i \|W_{i:}\|_2 <r$.
    \item If the activation functions used are Lipschitz, then MLPs are Lipschitz. By Corollary~\ref{cor:normalized_ds}, the normalized DeepSet is Lipschitz transferable, assuming the constituent MLPs are Lipschitz.
\end{itemize}
Thus, our compatible DS-CI architecture is a composition of Lipschitz layers.
\end{proof}

Finally, we conclude that the normalized DS-CI is ``approximately transferable'' since it is asymptotically equivalent to the compatible DS-CI up to an error of $O(n^{-1})$.

\begin{lemma}\label{lem:normalized-compatible-dsci}
    If the activations in all the MLPs used are Lipschitz, then for any sequence of inputs $V^{(n)}\in\RR^{n\times k}$, $|\operatorname{C-DS-CI}_n(V^{(n)}) - \overline{\operatorname{DS-CI}}_n(V^{(n)})|= O(n^{-1})$
\end{lemma}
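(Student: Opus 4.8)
The plan is to show that \emph{both} models are close to a common asymptotic value by quantifying the discrepancies introduced by the normalization choices, and then applying the triangle inequality. The key observation is that the only differences between $\operatorname{C-DS-CI}_n$ and $\overline{\operatorname{DS-CI}}_n$ are: (i) the second $\overline{\mathrm{DeepSet}}$ block receives $\{f^a_\ell(VV^\top)\}_{\ell=1}^{n^2}$ (all entries of $VV^\top$, normalized by $n^{-2}$) versus $\{f^o_\ell(VV^\top)\}_{\ell=1}^{n(n-1)/2}$ (strictly off-diagonal entries, normalized by $\tfrac{2}{n(n-1)}$); and (ii) the scalar feature is $\tilde f^*(VV^\top)=\tfrac{1}{n^2}\sum_{i,j}X_{ii}X_{ij}$ versus $\overline{f^*}(VV^\top)=\tfrac{1}{n(n-1)}\sum_{i\ne j}X_{ii}X_{ij}$. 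The first $\overline{\mathrm{DeepSet}}$ block receives the same input $\{f^d_j(VV^\top)\}$ in both models, so it contributes nothing. So the strategy is: bound the effect of each of (i) and (ii) on the final output, using Lipschitzness of the outer $\operatorname{MLP}_c$ and the inner $\overline{\mathrm{DeepSet}}$ (which is Lipschitz transferable by Corollary~\ref{cor:normalized_ds}, hence in particular uniformly Lipschitz across $n$), and conclude the composite difference is $O(n^{-1})$.

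\textbf{Key steps.} First I would reduce to controlling the perturbation of the three feature vectors fed into $\operatorname{MLP}_c$. Since $\operatorname{MLP}_c$ is Lipschitz, it suffices to bound, in the relevant norms, (a) the distance between $\overline{\mathrm{DeepSet}}_{(2)}$ applied to the multiset of all $n^2$ entries (normalized by $n^{-2}$) and applied to the multiset of $n(n-1)/2$ off-diagonal entries (normalized by $\tfrac{2}{n(n-1)}$), and (b) the difference $|\tilde f^*(VV^\top)-\overline{f^*}(VV^\top)|$. For (b) this is a direct calculation: $\tilde f^*-\overline{f^*} = \tfrac1{n^2}\sum_{i,j}X_{ii}X_{ij} - \tfrac1{n(n-1)}\sum_{i\ne j}X_{ii}X_{ij}$, and writing $\sum_{i,j}=\sum_{i\ne j}+\sum_i X_{ii}^2$, the two normalization constants differ by $O(n^{-2})$ while the sums themselves are $O(n^2)$ (since entries $X_{ij}=\langle V_{i:},V_{j:}\rangle$ need a boundedness hypothesis — this is where one must either assume the inputs lie in a bounded ball, or state the bound as depending on $\sup_n\|V^{(n)}\|_{\overline\infty}$), so the difference is $O(n^{-1})$; similarly the diagonal correction term $\tfrac1{n(n-1)}\sum_i X_{ii}^2$ is a sum of $n$ terms divided by $n^2$, hence $O(n^{-1})$. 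For (a), the cleanest route is: the multiset of all $n^2$ entries with weight $n^{-2}$ defines an empirical measure $\nu^{\mathrm{all}}_n$ on $\RR$, and the multiset of off-diagonal entries with weight $\tfrac{2}{n(n-1)}$ defines $\nu^{\mathrm{off}}_n$; since $\overline{\mathrm{DeepSet}}$ extends to a Lipschitz function on $\mathcal P_1(\RR)$ (Corollary~\ref{cor:normalized_ds}), it suffices to bound $W_1(\nu^{\mathrm{all}}_n,\nu^{\mathrm{off}}_n)$. But $\nu^{\mathrm{all}}_n = \tfrac{n(n-1)}{n^2}\nu^{\mathrm{off}}_n + \tfrac1{n^2}\sum_i\delta_{X_{ii}}$ — a convex combination placing mass $\tfrac1n$ on the diagonal empirical measure and mass $1-\tfrac1n$ on $\nu^{\mathrm{off}}_n$ — so $W_1(\nu^{\mathrm{all}}_n,\nu^{\mathrm{off}}_n)=O(n^{-1}\cdot\mathrm{diam})=O(n^{-1})$ under the same boundedness hypothesis. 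Then the Lipschitzness of $\overline{\mathrm{DeepSet}}_{(2)}$ (uniform in $n$) transfers this to an $O(n^{-1})$ bound on the second feature coordinate, and Lipschitzness of $\operatorname{MLP}_c$ finishes the argument.

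\textbf{Main obstacle.} The genuine subtlety is the implicit \emph{boundedness} assumption: the whole estimate is $O(n^{-1})$ only if the entries $X_{ij}=\langle V^{(n)}_{i:},V^{(n)}_{j:}\rangle$ stay bounded, which holds iff $\sup_n\|V^{(n)}\|_{\overline\infty}<\infty$ (equivalently the point clouds stay in a fixed ball). The statement of Lemma~\ref{lem:normalized-compatible-dsci} says "for any sequence of inputs $V^{(n)}$", so strictly one needs the $O(\cdot)$ to absorb a constant depending on this $\sup$; I would make that dependence explicit in the proof. A secondary, more bookkeeping-heavy obstacle is verifying that $\overline{\mathrm{DeepSet}}$ is Lipschitz with a constant \emph{independent of $n$} and with respect to $W_1$ on the empirical measures — but this is exactly the content of Corollary~\ref{cor:normalized_ds} (with the normalized $\ell_1$ norm / Wasserstein-$1$ pairing), so it can be cited rather than reproved. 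No single step is hard; the care is entirely in tracking which normalization constant goes with which multiset and confirming each correction is a sum of $O(n)$ bounded terms divided by $n^2$, or a reweighting by a factor $1+O(n^{-1})$, hence $O(n^{-1})$.
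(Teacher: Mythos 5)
Your proof is correct and follows essentially the same decomposition as the paper's: reduce to the three feature coordinates fed into $\operatorname{MLP}_c$, observe the first $\overline{\mathrm{DeepSet}}$ block is unchanged, and bound the two remaining discrepancies. Your treatment of part (b) is effectively identical to the paper's (the paper writes the bound $(\tfrac{1}{n(n-1)}-\tfrac{1}{n^2})\sum_{i\ne j}|(VV^\top)_{ii}(VV^\top)_{ij}| + \tfrac{1}{n^2}\sum_i (VV^\top)_{ii}^2$ explicitly). For part (a), you take a slightly more abstract route: you interpret the two multisets as empirical measures $\nu^{\mathrm{all}}_n$ and $\nu^{\mathrm{off}}_n$, note that $\nu^{\mathrm{all}}_n$ is a convex combination of $\nu^{\mathrm{off}}_n$ (weight $1-\tfrac1n$) and the diagonal empirical measure (weight $\tfrac1n$), so $W_1(\nu^{\mathrm{all}}_n,\nu^{\mathrm{off}}_n)=O(n^{-1})$, and then invoke the $W_1$-Lipschitzness of $\overline{\mathrm{DeepSet}}_{(2)}$ from Corollary~\ref{cor:normalized_ds}. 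The paper instead opens up $\overline{\mathrm{DeepSet}}_{(2)}(x) = \sigma(\tfrac1m\sum_i\rho(x_i))$ and bounds the difference of the inner sums directly. Both are valid; your measure-theoretic phrasing makes it cleaner to see \emph{why} the correction is $O(n^{-1})$ (a $\tfrac1n$-mass perturbation), while the paper's direct computation is more self-contained.

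One thing you flag correctly and more honestly than the paper does: the $O(n^{-1})$ in the conclusion implicitly requires the inputs to lie in a bounded set, i.e., $\sup_n\|V^{(n)}\|_{\overline\infty}<\infty$, so that the Gram-matrix entries $X_{ij}=\langle V^{(n)}_{i:},V^{(n)}_{j:}\rangle$ and the diagonal correction terms $\tfrac1{n^2}\sum_i X_{ii}^2$ are $O(n^{-1})$ rather than unbounded. The paper's proof writes ``$=O(n^{-1})$'' without making this dependence explicit, even though the statement says ``for any sequence of inputs.'' Making the constant's dependence on $\sup_n\|V^{(n)}\|_{\overline\infty}$ explicit, as you propose, is the right call and is consistent with the surrounding local-Lipschitz framing of the section.
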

\begin{proof}
Assume for $x\in \RR^n$, $\overline{\mathrm{DeepSet}}_{(2)}(x) = \sigma(\frac{1}{n}\sum_i \rho(x_i))$, and $\sigma,\rho$ are $L_\sigma,L_\rho$ Lipschitz respectively. Then, we have
\begin{align*}
    &\left|\overline{f^{*}}(VV^\top) - \tilde f^{*}(VV^\top)\right|\\
    &\leq \left(\frac{1}{n(n-1)}-\frac{1}{n^{2}}\right)\sum_{i\neq j}\left|(VV^\top)_{ii}(VV^\top)_{ij}\right| + \frac{1}{n^{2}}\sum_{i}(VV^\top)_{ii}^{2} = O(n^{-1})\end{align*} 
 and, moreover, 
 \begin{align*}
    &\left|\overline{\operatorname{DeepSet}}_{(2)}\Big(\big\{ f_{\ell}^{o}(VV^\top) \big\}_{\ell=1, \ldots, n(n-1)/2}\Big) - \overline{\operatorname{DeepSet}}_{(2)}\Big(\big\{ f_{\ell}^{a}(VV^\top) \big\}_{\ell=1, \ldots, n^2}\Big)\right|\\
    &\leq L_{\sigma}\left(\left(\frac{1}{n(n-1)} - \frac{1}{n^2}\right)\sum_{i\neq j}\left|\rho((VV^\top)_{ij})\right| +\frac{1}{n^2}\sum_{i}\left|\rho((VV^\top)_{ii})\right|\right)= O(n^{-1}).
\end{align*}
Since every layer is Lipschitz, this leads to an overall error of $O(n^{-1})$.
\end{proof}

Following the analysis above, we can directly apply Propositions~\ref{prop:transferability_determinimistic} and~\ref{prop:transferability_random}, together with the convergence rates established in Appendix~\ref{appen:convergence_rates}. Moreover, observe that the $O(n^{-1})$ discrepancy between normalized DS-CI and compatible DS-CI is dominated by the convergence rate of interest. This immediately yields the following transferability result.

\begin{corollary}[Transferability of normalized DS-CI]
\label{cor:ds-ci_transferability}
Let $(X_n) \in \RR^{n \times k}$ be a sequence of matrices sampled from the same underlying distribution $\mu \in \mathcal{P}_3(\RR^k)$ with bounded support in the following way: first, sample $Y_n \in \RR^{n \times k}$ with rows drawn i.i.d.\ from $\mu$. Then, let $G \in \mathrm{O}(k)$ be a (random or deterministic) rotation matrix, sampled independently of $Y_n$, and define $X_n = Y_n G$. Then,
\begin{align*}
    &\mathbb{E}\left[\left| \overline{\operatorname{DS-CI}}_n(X_n) - \overline{\operatorname{DS-CI}}_m(X_m) \right|\right]\\
    &\lesssim 
    \begin{cases}
        n^{-1/2} + m^{-1/2} & \text{if } k=1, \\
        n^{-1/2} \log(1+n) + m^{-1/2} \log(1+m) & \text{if } k=2, \\
        n^{-1/k} + m^{-1/k} & \text{if } k>2.
    \end{cases}
\end{align*}
\end{corollary}

\subsection{Constructing new transferable models: SVD-DS}\label{appen:svd-ds}

We propose the SVD-DS model defined as follows:
\[\overline{\operatorname{SVD-DS}}_n:\RR^{n\times k}\to \RR,\quad X\mapsto \overline{\mathrm{DeepSet}}_n(XV),\]
where $X = UDV^{\top}$ is the singular value
decomposition (SVD) for $X$ with ordered singular values. 
We proceed to show that it is locally transferable almost everywhere on its domain, and that its performance is competitive with DS-CI while being more computationally efficient.

\paragraph{Transferability analysis of SVD-DS.}
We extend the SVD to elements in the limit space $\overline\Vinf = L^2([0,1],\RR^k)$ and analyze its continuity, yielding the following transferability result. Recall our definition of \emph{locally Lipschitz transferable at a point} in Appendix~\ref{app:prop:cont_ext}.
\begin{corollary}
    Endow the duplication consistent sequences with the normalized $\ell_2$ norm induced by $\|\cdot\|_2$ on $\RR^k$. Observe that
    \[
    \|X\|_{\overline 2} \coloneq 
    \left(\frac{1}{n} \sum_{i=1}^n \|X_{i:}\|_2^2\right)^{1/2} = \frac{1}{\sqrt{n}} \|X\|_F,
    \]
    where $\|\cdot\|_F$ denotes the Frobenius norm of a matrix. Then, the sequence of maps $(\overline{\operatorname{SVD-DS}}_n)$ is compatible and locally Lipschitz transferable at every point except for a set of measure zero, corresponding to points with non-distinct singular values.
\end{corollary}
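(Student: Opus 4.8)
The plan is to instantiate Proposition~\ref{prop:loc_lip_NNs} (equivalently Proposition~\ref{prop:transferable_nn}) by decomposing $\overline{\operatorname{SVD-DS}}_n$ into two pieces: the canonicalization map $X \mapsto XV$ (where $X = UDV^\top$ is the ordered SVD) followed by the normalized DeepSet $\overline{\mathrm{DeepSet}}_n$, whose compatibility and Lipschitz transferability are already established in Corollary~\ref{cor:normalized_ds}. So the work reduces entirely to understanding the SVD-canonicalization layer $\kappa_n\colon \RR^{n\times k}\to\RR^{n\times k}$, $\kappa_n(X) = XV$.

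The key steps, in order. First I would establish compatibility: for $n\mid N$, if $X = UDV^\top$ then $X\otimes\mathbbm{1}_{N/n} = (U\otimes\mathbbm{1}_{N/n}/\sqrt{N/n})\,(\sqrt{N/n}\,D)\,V^\top$ is (up to reordering columns, which is harmless since the normalizing factor is a positive scalar on every singular value) the ordered SVD of $X\otimes\mathbbm{1}_{N/n}$ with the \emph{same} right-singular matrix $V$; hence $\kappa_N(X\otimes\mathbbm{1}_{N/n}) = (X\otimes\mathbbm{1}_{N/n})V = (XV)\otimes\mathbbm{1}_{N/n} = \varphinn{n}{N}(\kappa_n(X))$. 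Also $\kappa_n$ is $\msf S_n$-equivariant (row permutations commute with the right SVD factor) and $\msf O(k)$-invariant (replacing $X$ by $Xh^\top$ replaces $V$ by $hV$, so $XV$ is unchanged), so $\kappa_n$ descends to the orbit space, and $XV$ depends only on the (orbit of the) empirical measure $\mu_X$. This gives compatibility of $(\overline{\operatorname{SVD-DS}}_n)$. Second, since $\overline{\mathrm{DeepSet}}_\infty$ is globally Lipschitz and $\kappa_n$ is norm-preserving ($\|XV\|_{\overline 2} = \|X\|_{\overline 2}$ because $V$ is orthogonal on $\RR^k$), the only missing ingredient is local Lipschitz continuity of the canonicalization at points with distinct singular values. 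Third, I would invoke the classical perturbation theory for singular subspaces (Davis–Kahan / Wedin's theorem): when $X$ has $k$ distinct nonzero singular values, each with a spectral gap $\delta > 0$, the right-singular vectors $v_1,\dots,v_k$ are locally $O(1/\delta)$-Lipschitz functions of $X$ in Frobenius norm (after fixing signs, which is the measure-zero-set caveat built into any canonicalization, cf.~\cite{contin_canon}); hence $V = V(X)$ is locally Lipschitz, and $X\mapsto XV(X)$ is locally Lipschitz as a product of a locally bounded map and a locally Lipschitz one. Passing to the limit: an element $f\in L^2([0,1],\RR^k)$ has a well-defined $k\times k$ Gram-type operator $\int_0^1 f(t)f(t)^\top\,dt$ on $\RR^k$; its eigendecomposition yields $V(f)\in\msf O(k)$, and $\overline{\operatorname{SVD-DS}}_\infty(f) = \overline{\mathrm{DeepSet}}_\infty(\mu_{fV(f)})$. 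Combining Wedin's bound on this finite-dimensional operator with the Lipschitz continuity of $f\mapsto\int ff^\top$ (which is $O(r)$-Lipschitz on the ball of radius $r$, exactly as in the C-DS-CI computation earlier) gives local Lipschitz continuity of $f\mapsto V(f)$, hence of $\overline{\operatorname{SVD-DS}}_\infty$, at every $f$ whose Gram operator has distinct eigenvalues — equivalently, every point with distinct singular values. Finally, Proposition~\ref{prop:cont_ext} lets me transfer the finite-level local Lipschitz estimates to the limit, and Proposition~\ref{prop:continuity_in_delta} upgrades continuity in $\|\cdot\|_{\overline 2}$ to continuity in the symmetrized metric~\eqref{eq:symmetrized_metric_pc}.

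The main obstacle is the canonicalization step: SVD is only locally Lipschitz away from the set where singular values collide (and the sign/ordering ambiguity forces a further measure-zero exclusion), so one cannot hope for global transferability, and the quantitative local constant necessarily blows up like the inverse spectral gap. Handling this cleanly requires stating the right version of Wedin's perturbation bound for the right-singular subspaces at the level of the limiting Gram operator on $\RR^k$ and checking that the spectral-gap condition is genuinely an open, full-measure condition on $\overline\Vinf$ — this is exactly the ``locally Lipschitz transferable at $x_0$'' notion from Appendix~\ref{app:prop:cont_ext}, which is why the statement only claims Lipschitz continuity at points with distinct singular values rather than everywhere.
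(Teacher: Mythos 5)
Your proposal follows essentially the same approach as the paper: the same decomposition into the $\msf O(k)$-canonicalization $X\mapsto XV(X)$ followed by $\overline{\mathrm{DeepSet}}_n$, the same verification that under duplication the right singular matrix $V$ is unchanged, and the same reduction of the Lipschitz claim to a Davis--Kahan-type singular-subspace perturbation bound together with Mirsky's inequality to control the spectral gap. The one cosmetic difference is that you propose extending the canonicalization to $\overline\Vinf$ via the eigendecomposition of the $k\times k$ Gram operator $\int_0^1 f(t)f(t)^\top\,dt$, whereas the paper identifies $\overline\Vinf$ with finite-rank Hilbert--Schmidt operators $L^2([0,1])\to\RR^k$ and applies a singular-vector perturbation bound directly; these are the same bound in disguise since the right singular vectors of $X$ are exactly the eigenvectors of $XX^*$ on $\RR^k$, and your Gram-operator framing is arguably slightly cleaner because it avoids the $X$-dependent subspace $\mc V_k$. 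One small imprecision to be aware of: the paper's Proposition~\ref{prop:SVD-lipschitz} also explicitly excludes points where some right singular vector has a zero entry (needed to make the lexicographic sign convention Lipschitz), and its effective Lipschitz constant is $O(\sigma_1/\mathrm{gap}_2)$, not just $O(1/\mathrm{gap})$ as you wrote -- the $\sigma_1$ factor comes from the quadratic map $f\mapsto\int ff^\top$, which you correctly note is only $O(r)$-Lipschitz on the ball of radius $r$.
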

\begin{remark}
    This transferability result is weaker than the ``$L(r)$-locally Lipschitz transferability'' defined in Definition~\ref{def:continuous_extension}, since our model may be discontinuous at points with non-distinct singular values. 
    Therefore, in this case our transferability results in Propositions~\ref{prop:transferability_determinimistic} and~\ref{prop:transferability_random} only apply to sequences $(x_n)$ converging to a limit $x\in\overline\Vinf$ with distinct singular values.
\end{remark}

\begin{proof}
    Decompose $\overline{\operatorname{SVD-DS}}_n$ as the composition
    \[
    \mscr V^P_{\mathrm{dup}} = \{ \RR^{n \times k} \} \xrightarrow[]{X \mapsto XV} \mscr U = \{ \RR^{n \times k} \} \xrightarrow[]{\overline{\mathrm{DeepSet}}} \mscr V_{\RR},
    \]
    where $\mscr V_{\RR}$ is the trivial consistent sequence over $\RR$, and the consistent sequence $\mscr U$ consists of vector spaces $\vct U_n = \RR^{n \times k}$ under the duplication embedding $\otimes \mathbbm{1}$. The group $\msf S_n \times \msf O(k)$ acts on $\vct U_n$ by $(g, h) \cdot X = gX$, i.e., the action of $\msf O(k)$ is trivial.
    By Corollary~\ref{cor:normalized_ds}, the normalized DeepSet map is Lipschitz transferable, assuming the constituent MLPs are Lipschitz.
    It remains to show that the SVD-based map $X \mapsto XV$ extends to a function that is locally Lipschitz at every point with distinct singular values. We show this in Proposition~\ref{prop:SVD-lipschitz} below after extending the SVD to all of $\overline\Vinf$ and considering its ambiguities.
\end{proof}

Following the analysis above, we can directly apply Propositions~\ref{prop:transferability_determinimistic} and~\ref{prop:transferability_random}, together with the convergence rates established in Appendix~\ref{appen:convergence_rates}. These results immediately yield a transferability result for SVD-DS.

\begin{corollary}[Transferability of SVD-DS]
\label{cor:svd-ds_transferability}
Let $(X_n) \in \RR^{n \times k}$ be a sequence of matrices sampled from the same underlying distribution $\mu \in \mathcal{P}_4(\RR^k)$ in the following way: First, sample $Y_n \in \RR^{n \times k}$ with rows drawn i.i.d.\ from $\mu$. Then, let $G \in \mathrm{O}(k)$ be a (random or deterministic) rotation matrix, sampled independently of $Y_n$, and define $X_n = Y_n G$. 
Suppose the second moment $\mbb{E}_{x\sim \mu}xx^\top \in \RR^{k\times k}$ of $\mu$ has $k$ distinct eigenvalues, then
\begin{align*}
    &\mathbb{E}\left[\left| \overline{\operatorname{SVD-DS}}_n(X_n) - \overline{\operatorname{SVD-DS}}_m(X_m) \right|\right]\\
    &\lesssim 
    \begin{cases}
        n^{-1/4} + m^{-1/4} & \text{if }k<4, \\
        n^{-1/4} \log^{1/2}(1+n) + m^{-1/4} \log^{1/2}(1+m) & \text{if } k=4, \\
        n^{-1/k} + m^{-1/k} & \text{if } k>4.
    \end{cases}
\end{align*}
\end{corollary}

Note that the functional SVD is locally Lipschitz only at points where the singular values are distinct. This motivates our assumption on the distribution $\mu$ in the Corollary. To see this, observe that when $n \geq k$, the singular values of $X_n$ are the same as those of $Y_n$, and are the square roots of the eigenvalues of $Y_n^\top Y_n=\sum_{i=1}^n x_ix_i^\top$, where $x_i$ are the rows of $Y_n$, sampled i.i.d. from $\mu$. The functional singular values of $X_n$ (i.e. $\frac{1}{\sqrt n}$ of the usual matrix singular values) is then 
$\frac{1}{n} \sum_{i=1}^n x_i x_i^\top$.
By the law of large numbers, each entry of this matrix converges almost surely:
\[
\left(\frac{1}{n} \sum_{i=1}^n x_i x_i^\top\right)_{mn} \xrightarrow{\text{a.s.}} \Sigma_{mn} \text{ for all $m,n\in[k]$},
\]
where $\Sigma \coloneqq \mathbb{E}_{x \sim \mu}[x x^\top]$ is the second-moment of $\mu$. It follows from Weyl's theorem that each eigenvalue converges almost surely to those of $\Sigma$:
\[\left|\lambda_j\left(\frac{1}{n} \sum_{i=1}^n x_i x_i^\top\right) - \lambda_j\left(\Sigma\right)\right|
\leq \left\|\frac{1}{n} \sum_{i=1}^n x_i x_i^\top - \Sigma\right\|_{2}\stackrel{a.s.}{\to} 0\text{ for all $j\in [k]$}.\]
Therefore, if $\Sigma$ has distinct eigenvalues, then with probability one, the empirical matrix has distinct eigenvalues for all sufficiently large $n$, ensuring that the functional SVD is locally Lipschitz at this point.

\paragraph{Functional SVD and its local Lipschitz continuity.}
    We can identify the space $\overline {V_\infty}=L^2([0,1],\RR^k)$ with $\mc L(L^2([0,1]),\RR^k)$, the space of bounded linear maps $L^2([0,1])\to\RR^k$ endowed with the Hilbert-Schmidt norm $\|\cdot\|_{HS}$. In more detail, each $X\in L^2([0,1],\RR^k)$ can be written as a sequence of rows $X=(f_1,\ldots,f_k)^\top$ where $f_i\in L^2([0,1])$, and such $X$ defines the bounded linear map $Xf=(\langle f_1,f\rangle,\ldots,\langle f_k,f\rangle)^\top$. Conversely, any bounded linear map $X\colon L^2([0,1])\to\RR^k$ is of the form $Xf = (\langle f_1,f\rangle,\ldots,\langle f_k,f\rangle)^\top$ for some $f_1,\ldots,f_k\in L^2([0,1])$ which we view as the columns of $X$, and $\|X\|_{HS}^2=\sum_{i=1}^k\|f_i\|_2^2$.
    Here $\vct V_n=\RR^{n\times k}$ is viewed as the subspace of $\overline{\Vinf}$ with piecewise-constant columns $f_i$ on consecutive intervals of length $1/n$.%
    
    Note that $X$ vanishes identically on $\mc V_k=\mathrm{span}\{f_1,\ldots,f_k\}^{\perp}$, while $X\colon\mc V_k\to\RR^k$ is a linear map between finite-dimensional vector spaces and therefore admits a singular value decomposition.  
    Thus, there exists positive numbers $\sigma\in\RR^k_{\geq0}$, orthonormal $v_1,\ldots,v_k\in \RR^k$, and orthonormal functions $u_1,\ldots,u_k\in L^2([0,1])$ satisfying 
    \begin{equation}\label{eq:cont_svd}
        X = \sum_{i=1}^k\sigma_i\langle u_i,\cdot \rangle v_i.   
    \end{equation}
    If $X\in\RR^{n\times k}$ and 
    $
        X=\sum_{i=1}^k\tilde \sigma_i \tilde u_i\tilde v_i^\top,    
    $
    is the usual SVD of $X$, then $\sigma_i=\tilde\sigma_i/\sqrt{n}$, $v_i=\tilde v_i$, and $u_i(t)=\sqrt{n}[\tilde u_i]_{\lceil nt\rceil}$ is the functional SVD of $X$ as in~\eqref{eq:cont_svd}. Conversely, if~\eqref{eq:cont_svd} is the functional SVD of such an $X$ then $X=\sum_{i=1}^k(\sigma_i\sqrt{n})([u_i(j/n)]_{j=1}^k/\sqrt{n})v_i^\top$ is the usual SVD of $X$. 
    Note that the right singular vectors $V$ are the same in both SVDs.
    
    If for any $X\in\vct V_n$ we let $\sigma(X)$ be its (functional) singular values from~\eqref{eq:cont_svd} and $\tilde\sigma(X)$ be its usual singular values, then
    \begin{equation}\label{eq:norm_frob_norm_sos}
        \|X\|_{\overline 2}^2 = \frac{1}{n}\|X\|_F^2 = \frac{1}{n}\sum_{i=1}^k\tilde\sigma_i(X)^2 = \sum_{i=1}^k\sigma_i(X)^2,
    \end{equation}
    and by Mirsky's inequality~\cite{mirsky},
    \begin{equation}\label{eq:cont_mirsky}
        \|\sigma(X) - \sigma(Y)\|_2 = \frac{1}{\sqrt{n}}\|\tilde\sigma(X)-\tilde\sigma(Y)\|_2\leq \frac{1}{\sqrt{n}}\|X-Y\|_F = \|X-Y\|_{\overline 2}.
    \end{equation}
    Furthermore, whenever $V\in\RR^{k\times k}$ and $X\in\vct V_n$ we have
    \begin{equation}\label{eq:XV}
    \|XV\|_{\overline 2} = \frac{1}{\sqrt n} \|XV\|_F\leq \frac{1}{\sqrt{n}}\|V\|_F\tilde\sigma_1(X) = \|V\|_F\sigma_1(X).
    \end{equation}
    Note that the final bounds in all of the above inequalities are independent of $n$, continuous in $\|\cdot\|$, and hold for all $X\in\Vinf$. We therefore conclude that they also hold for all $X\in\overline{\Vinf}$, with $\|\cdot\|_{\overline 2}$ replaced with $\|\cdot\|_{HS}$.
    
    There is an ambiguity in the above decomposition, since if $(u_i),(v_i)$ satisfy~\eqref{eq:cont_svd} then so do $(s_iu_i),(s_iv_i)$ for any choice of signs $s_i\in\{\pm 1\}$. Furthermore, if the singular values $(\sigma_i)$ are distinct then this is the only ambiguity in~\eqref{eq:cont_svd}, see~\cite{unique_svd}. 
    To disambiguate the SVD, we therefore choose signs so that $v_i>-v_i$ in lexicographic order. When the entries of the $v_i$ are all nonzero, this amounts to requiring the first row of $V=[v_1,\ldots,v_k]$ to be positive.
    We proceed to prove that the map $X\mapsto V(X)=[v_1,\ldots,v_k]$ with this choice of signs is locally Lipschitz continuous on a dense subset of $\overline{V_{\infty}}$.
    \begin{proposition}\label{prop:SVD-lipschitz}
        Fix $X_0\in\overline{\Vinf}$ with distinct singular values and all-nonzero entries in its right singular vectors $(v_i)$. Let $\mathrm{gap}_p(X_0)=\min_{2\leq i\leq k}\{\sigma_{i-1}(X_0)^p-\sigma_i(X_0)^p\}$ be the minimum gap between $p$-th powers of (functional) singular values of $X_0$, and set
        \begin{equation}
            B(X_0) = \frac{\sqrt{8}(2\sigma_1(X_0) + 1)}{\mathrm{gap}_2(X_0)}.
        \end{equation}
        For any $\widehat X\in \overline{\Vinf}$ satisfying 
        \begin{equation*}
            \|X_0-\widehat X\|_{HS} \leq  1\wedge\frac{\mathrm{gap}_1(X_0)}{2\sqrt{k}}\wedge \frac{1}{2B(X_0)}\min_{1\leq i,j\leq k}|[v_i(X_0)]_j|,
        \end{equation*}
        the following two hold true.
        \begin{enumerate}[leftmargin=0.8cm]
            \item The matrix $\widehat{X}$ has distinct singular values, and all nonzero entries of $v_i(\widehat{X})$ have the same sign as those of $v_i(X_0)$.
            \item We have
            \begin{equation}
                \|V(X_0)-V(\widehat X)\|_F \leq kB(X_0)\|X_0-\widehat X\|_{HS},
            \end{equation}
            and
            \begin{equation}
                \|X_0 V(X_0) -  \widehat X V(\widehat X)\|_{HS} \leq (k\sigma_1(X_0)B(X_0)+1)\|X_0-\widehat X\|_{HS}.
            \end{equation}
        \end{enumerate}
    \end{proposition}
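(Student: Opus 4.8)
The plan is to establish the claim as a quantitative perturbation result for the right singular vectors of the functional SVD, using Davis--Kahan / Wedin-type estimates in the infinite-dimensional (Hilbert--Schmidt operator) setting, together with the sign-disambiguation argument. Write $X_0 = \sum_{i=1}^k \sigma_i u_i \langle \cdot\rangle v_i$ for its disambiguated functional SVD as in~\eqref{eq:cont_svd}, and let $\widehat X$ be a perturbation with $\|X_0 - \widehat X\|_{HS}\le R(X_0)$. First I would observe that $X_0^\top X_0$ and $\widehat X^\top \widehat X$ are symmetric positive semidefinite operators on the $k$-dimensional spaces $\mathrm{span}\{v_i(X_0)\}$ and $\mathrm{span}\{v_i(\widehat X)\}$ respectively; pulling back to $\RR^k$ these become genuine $k\times k$ symmetric matrices whose eigenvalues are $\sigma_i^2$ and whose eigenvectors are the $v_i$. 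The operator-norm (indeed Hilbert--Schmidt-norm) perturbation of $X^\top X$ is controlled by $\|X_0^\top X_0 - \widehat X^\top \widehat X\|\le (\|X_0\|+\|\widehat X\|)\|X_0-\widehat X\| \le (2\sigma_1(X_0)+1)\|X_0-\widehat X\|_{HS}$, using $\|X_0\|=\sigma_1(X_0)$ and $R(X_0)\le 1$. By Mirsky's inequality~\eqref{eq:cont_mirsky} applied to $\sigma^2$ (or directly to the squared singular values, which are eigenvalues of $X^\top X$), the gap condition $\mathrm{gap}_2(X_0)>0$ together with $\|X_0-\widehat X\|_{HS}\le R(X_0)$ guarantees the perturbed squared singular values stay separated, hence $\widehat X$ has distinct singular values; this gives the first half of item (1).

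Next, for the eigenvector perturbation: since each $\sigma_i^2$ is a simple eigenvalue of $X_0^\top X_0$ with spectral gap at least $\mathrm{gap}_2(X_0)/2$ after perturbation, the Davis--Kahan $\sin\Theta$ theorem (in its eigenvector form for simple eigenvalues of symmetric matrices) yields $\|v_i(X_0) - s_i v_i(\widehat X)\|_2 \le \frac{\sqrt 8\,\|X_0^\top X_0 - \widehat X^\top \widehat X\|}{\mathrm{gap}_2(X_0)} \le B(X_0)\|X_0-\widehat X\|_{HS}$ for an appropriate sign $s_i\in\{\pm1\}$; summing over $i$ (or taking the Frobenius norm over the $k$ columns) gives a bound of the form $k B(X_0)\|X_0-\widehat X\|_{HS}$ up to the disambiguating signs. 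To pin down the signs: because every entry of $v_i(X_0)$ is nonzero and $\|v_i(X_0)-s_iv_i(\widehat X)\|_2 \le B(X_0)\|X_0-\widehat X\|_{HS}\le \tfrac12\min_{i,j}|[v_i(X_0)]_j|$ by the choice of $R(X_0)$, the signed perturbed vector $s_i v_i(\widehat X)$ has every entry within half its magnitude of the corresponding entry of $v_i(X_0)$, so it has the same sign pattern; since the disambiguation rule ("first row of $V$ positive," more precisely $v_i>-v_i$ lexicographically) is determined by that sign pattern, we must have $s_i = +1$, i.e. the disambiguated $v_i(\widehat X)$ already has the right signs. This completes item (1) and gives the first inequality of item (2). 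Finally, for the last inequality I would write
\[
X_0 V(X_0) - \widehat X V(\widehat X) = X_0\big(V(X_0)-V(\widehat X)\big) + (X_0 - \widehat X)V(\widehat X),
\]
and bound the first term by $\|X_0\|_{\mathrm{op}}\|V(X_0)-V(\widehat X)\|_F \le \sigma_1(X_0)\cdot kB(X_0)\|X_0-\widehat X\|_{HS}$ and the second by $\|X_0-\widehat X\|_{HS}\|V(\widehat X)\|_{\mathrm{op}} = \|X_0-\widehat X\|_{HS}$ since $V(\widehat X)$ is orthogonal; adding gives the constant $k\sigma_1(X_0)B(X_0)+1$.

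The main obstacle is making the Davis--Kahan step rigorous in the functional/operator setting rather than just quoting the finite-dimensional matrix theorem: one must verify that the eigenspace perturbation analysis genuinely reduces to the $k\times k$ symmetric matrices obtained by restricting $X^\top X$ and $\widehat X^\top \widehat X$ to the (generically distinct, $k$-dimensional) row spans, and that the pairing of singular triples across $X_0$ and $\widehat X$ is the correct one (i.e. that the $i$-th perturbed singular value and vector genuinely track the $i$-th unperturbed one, which again uses the $\mathrm{gap}_2$ separation surviving the perturbation). A clean way to do this is to note both operators factor through a common finite-dimensional subspace once $\|X_0-\widehat X\|_{HS}$ is small --- or, more simply, to work throughout with the compact self-adjoint operators $X^\top X$ on all of $L^2([0,1])$, whose nonzero spectrum is exactly $\{\sigma_i^2\}$, and apply the standard spectral perturbation bounds for isolated simple eigenvalues of self-adjoint operators, which hold verbatim in Hilbert space. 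The rest is bookkeeping of the explicit constants, which I would not grind through in detail.
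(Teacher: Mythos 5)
Your argument is correct and reaches the same constants by an equivalent route. You apply the symmetric (eigenvector) Davis--Kahan bound of~\cite{davis_kahan_stats} to the $k\times k$ Gram matrices, using $\|X_0^\top X_0 - \widehat X^\top\widehat X\|_{\mathrm{op}}\le(\|X_0\|_{\mathrm{op}}+\|\widehat X\|_{\mathrm{op}})\|X_0-\widehat X\|_{\mathrm{op}}\le(2\sigma_1(X_0)+1)\|X_0-\widehat X\|_{HS}$, and this reproduces the factor $B(X_0)$ exactly. The paper instead invokes the SVD form~\cite[Thm.~4]{davis_kahan_stats} (itself obtained by applying the symmetric form to the Gram matrix) directly on $X_0,\widehat X\in\vct V_n\cong\RR^{n\times k}$, checks that the constants become $n$-independent after converting $\tilde\sigma_i=\sqrt n\,\sigma_i$ and $\|\cdot\|_F=\sqrt n\,\|\cdot\|_{\overline 2}$, and then passes to $\overline{\Vinf}$ by a density-and-continuity argument. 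Your sign-disambiguation step and your final triangle inequality are the same as the paper's.

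The one thing that needs correcting is the ``main obstacle'' you flag at the end; it is not actually there. The right singular vectors $v_i$ are eigenvectors of a genuine $k\times k$ real symmetric matrix no matter whether $X$ is a matrix in $\RR^{n\times k}$ or an operator $L^2([0,1])\to\RR^k$: in the operator picture this is the \emph{finite}-dimensional Gram $XX^*\colon\RR^k\to\RR^k$ (the same matrix the $n\times k$ convention writes as $X^\top X$). Consequently the finite-dimensional Davis--Kahan theorem applies verbatim, and neither a density/closure step nor any Hilbert-space spectral perturbation theory is needed. In particular, your second proposed resolution --- ``work throughout with the compact self-adjoint operators $X^\top X$ on all of $L^2([0,1])$'' --- would be a wrong turn: with $X\colon L^2([0,1])\to\RR^k$, the operator $X^*X\colon L^2\to L^2$ has eigenvectors $u_i$ (the left singular \emph{functions}), not $v_i$, so perturbation bounds for it do not directly control $V(X)$. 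The object you want is precisely the $k\times k$ Gram matrix. Once that is cleared up, the rest of your argument goes through cleanly, and in fact streamlines the paper's proof by removing the need to pass from $\vct V_n$ to $\overline{\Vinf}$ by density.
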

    \begin{proof}
        We start by establishing the first claim.
        If $\|X_0-\widehat X\|_{HS}\leq \frac{\mathrm{gap}_1(X_0)}{2\sqrt{k}}$, then for any $2\leq i\leq k$ we have by 
        \begin{align*}
            \sigma_{i-1}(\widehat X) - \sigma_i(\widehat X) &= \sigma_{i-1}(\widehat X) - \sigma_{i-1}(X_0) + \sigma_{i-1}(X_0)-\sigma_i(X_0) + \sigma_i(X_0)-\sigma_i(\widehat X)\\&\geq (\sigma_{i-1}(X_0)-\sigma_i(X_0)) - \sum_{i=1}^k|\sigma_i(\widehat X)-\sigma_i(X_0)|\\
            &\geq \mathrm{gap}_1(X_0) - \sqrt{k}\|\sigma(X_0)-\sigma(\widehat X)\|_2 \tag{Cauchy-Schwarz}\\
            &\geq \frac{\mathrm{gap}_1(X_0)}{2}>0 \tag{Mirsky's inequality~\eqref{eq:cont_mirsky}}.
        \end{align*}
        Thus, $\widehat X$ has distinct singular values.

        Let $\widetilde{\mathrm{gap}}_p(X_0)=\min_{2\leq i\leq k}\{\tilde \sigma_{i-1}(X_0)^p-\tilde \sigma_i(X_0)^p\}$ be the minimum gap between $p$-th powers of (usual) singular values of $X_0$. For $X_0,\widehat X\in\vct V_n$, the result~\cite[Thm.~4]{davis_kahan_stats} shows that for each $i\in[k]$
        \begin{align*}
            &\min_{s\in\{\pm 1\}}\|v_i(X_0)-s\cdot v_i(\widehat X)\|_2 \\
            &\leq \frac{\sqrt{8}(2\tilde\sigma_1(X_0) + \tilde\sigma_1(X_0 - \widehat X))\|X_0-\widehat X\|_F}{\widetilde{\mathrm{gap}_2}(X_0)}\\
            &= \frac{\sqrt{8n}(2\sigma_1(X_0) + \sigma_1(X_0 - \widehat X))\|X_0-\widehat X\|_F}{n\mathrm{gap}_2(X_0)} \tag{since $\tilde\sigma_i(X)=\sigma_i(X)\sqrt{n}$}\\
            &= \frac{\sqrt{8}(2\sigma_1(X_0) + \sigma_1(X_0 - \widehat X))\|X_0-\widehat X\|_{\overline 2}}{\mathrm{gap}_2(X_0)} \tag{since $\|X\|_{\overline 2}= \|X\|_F/\sqrt n$}\\
            &\leq \frac{\sqrt{8}(2\sigma_1(X_0) + \|X_0-\widehat X\|_{\overline 2})\|X_0-\widehat X\|_{\overline 2}}{\mathrm{gap}_2(X_0)}\tag{by \eqref{eq:norm_frob_norm_sos}, $\sigma_1(X)^2\leq\|X\|^2_{\overline 2}=\sum_{i=1}^k\sigma_i(X)^2$}\\
            &\leq \frac{\sqrt{8}(2\sigma_1(X_0) + 1)\|X_0-\widehat X\|_{\overline 2}}{\mathrm{gap}_2(X_0)} \tag{since $\|\widehat X-X_0\|_{\overline 2}\leq 1$} \\
            &= B(X_0)\|X_0-\widehat X\|_{\overline 2}.
        \end{align*}
        
        The final bound is independent of $n$ and hence applies on all of $\Vinf$. It is also continuous in $\|\cdot\|_{\overline 2}$ on the dense subset of $\Vinf$ consisting of operators with distinct singular values, so taking closures we conclude that this bound applies for any $X_0,\widehat X\in\overline{\Vinf}$ such that $\mathrm{gap}_2(X_0)>0$. 
      Combining the last line above with our bound on $\|\widehat X-X_0\|_{\overline 2}$, we get
        \begin{equation}
            \min_{s\in\{\pm 1\}}\max_{1\leq j\leq k}\left|[v_i(X_0)]_j-s\cdot [v_i(\widehat X)]_j\right|\leq \min_{s\in\{\pm 1\}}\left\|v_i(X_0)-s\cdot v_i(\widehat X)\right\|_2 \leq \frac{1}{2}\min_{1\leq i,j\leq k}\left|[v_i(X_0)]_j\right|.
        \end{equation}
        Thus, the sign $s$ achieving the above minimum is the one making \emph{all} entries of $v_i(\widehat X)$ have the same sign as those of $v_i(X_0)$. Since the first entries of $v_i(X_0)$ and of $v_i(\widehat X)$ are positive, we conclude that $s=1$ achieves the above minimum.
        
        To prove the second claim, we combine the  bounds above, which yields
        \begin{equation*}\begin{aligned}
            \|V(X_0)-V(\widehat X)\|_F &\leq \sum_{i=1}^k\|v_i(X_0) - v_i(\widehat X)\|_2 \\
            &= \sum_{i=1}^k\min_{s\in\{\pm1\}}\|v_i(X_0) - s\cdot v_i(\widehat X)\|_2\\
            &\leq kB(X_0)\|X_0 - \widehat X\|_{HS},
        \end{aligned}\end{equation*}
        as claimed. Finally, applying the triangle inequality gives
        \begin{align*}
            \|X_0V(X_0) - \widehat XV(\widehat X)\|_{HS} &\leq \| X_0(V(X_0)-V(\widehat X))^\top\|_{HS} + \|(X_0-\widehat X)V(\widehat X)\|_{HS}\\ &\leq kB(X_0)\sigma_1(X_0) \|X_0-\widehat X\|_{HS} + \|X_0-\widehat X\|_{HS}, \tag{by \eqref{eq:XV}}
        \end{align*}
        yielding the last claim.
    \end{proof}

\paragraph{SVD-DS is computationally more efficient than DS-CI.}
When $k\ll n$ (for example, for us $k=2$ or $3$),
to evaluate SVD-DS on a given input of size $n\times k$, we need to compute its SVD---which takes $O(nk^2)$ flops~\cite[Section 8.6]{golub2013matrix}---and then evaluate $\overline{\mathrm{DeepSet}}$ on the output, which takes $O(nC(k))$ where $C(k)$ is the cost of evaluating the involved fixed-size MLPs taking inputs of size $k$. 
Moreover, during training, we can compute the SVD of the dataset once in advance. 
In contrast, for DS-CI we need to form $VV^\top$ at a cost of $O(n^2k)$, and this needs to be differentiated through during training. After forming $VV^\top$, we evaluate normalized DeepSets on its entries at a cost of $O(n^2)$.
Thus, SVD-DS is much faster to train and deploy than DS-CI.

\paragraph{Transferability plots.}\label{sec:pt_cloud_transferability_plots}
The numerical experiments illustrating the transferability of SVD-DS and normalized and compatible DS-CI is shown in 
Figure~\ref{appenfig:od_transferability}. %
\begin{figure}[H]
    \centering
    \subfloat[Normalized SVD-DS (transferable)]{%
        \includegraphics[width=.3\linewidth]{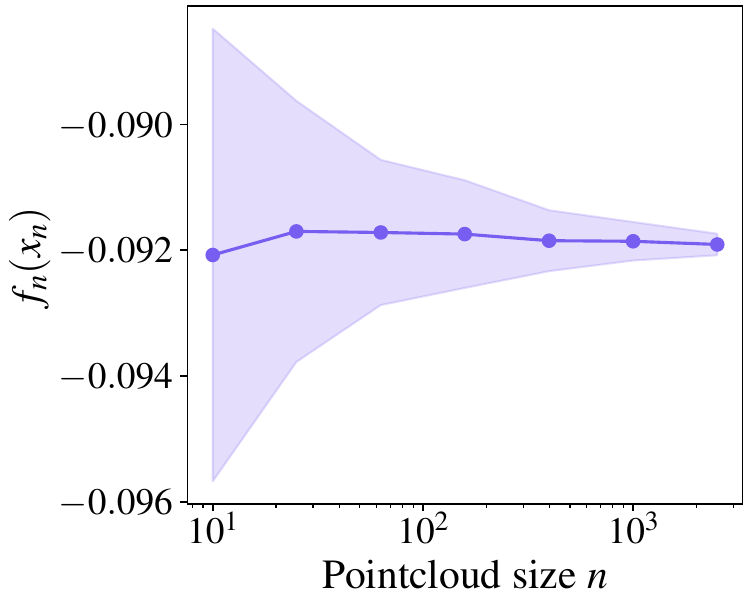}
    }
    \subfloat[Normalized DS-CI (approximately transferable) and Compatible DS-CI (transferable)~\cite{blum2024learning}]{%
        \includegraphics[width=.3\linewidth]{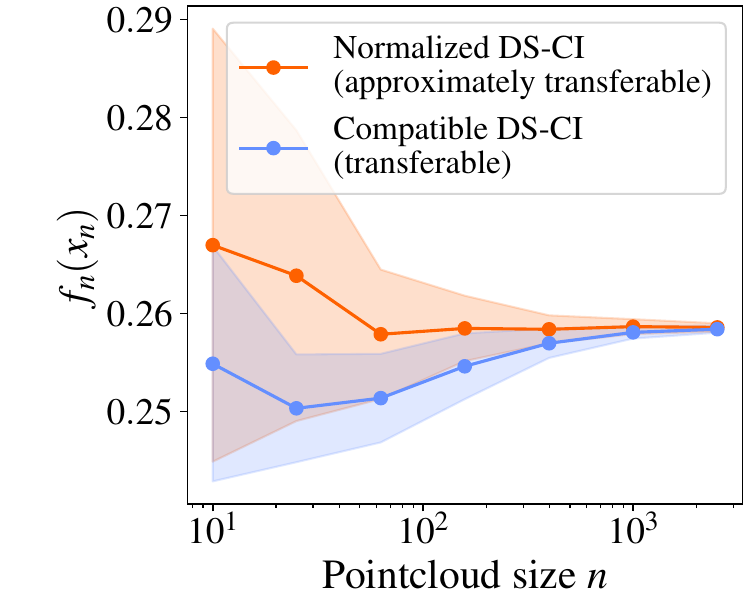}
    }

    \caption{Transferability of invariant models on point clouds with respect to $(\mscr{V}_{\mathrm{dup}}^P, \|\cdot\|_{\overline{p}})$. The plot shows outputs of untrained, randomly initialized models for a sequence of point clouds $X_n \in \mathbb{R}^{n \times k}$, where each point is sampled i.i.d. from $\mc N(0, I_k)$. The error bars extend from the mean to $\pm$ one standard deviation over $100$ random samples. 
    The figure shows that transferable models generate convergent outputs. Figure (b) shows the asymptotic equivalence between the normalized DS-CI and compatible DS-CI as proved in Lemma~\ref{lem:normalized-compatible-dsci}.}
    \label{appenfig:od_transferability}
\end{figure}

\section{Size generalization experiments: details from Section~\ref{sec:experiments}}
\label{appen:size_generalization_experiments}

In this section, we provide details of our size generalization experiments. All experiments were implemented using the \texttt{PyTorch} framework and trained on a single NVIDIA A5000 GPU. Specific training and model configurations are provided in the descriptions of the individual experiments. For all experiments, the training dataset consists of inputs with a fixed, small dimension $n_{\mathrm{train}}$. For evaluation, we use a series of test datasets where the input dimension $n_{\mathrm{test}}$ is progressively larger than $n_{\mathrm{train}}$.

\subsection{Size generalization on sets}

We consider two any-dimensional learning tasks on sets, where the target functions have different properties, so that different models are expected to perform better.
In both experiments, we compare the size generalization of three models: DeepSet, normalized DeepSet, and PointNet as analyzed in Appendix~\ref{appen:NN_sets}. The maps $\sigma$ and $\rho$ in the model are parametrized by three fully connected layers with a hidden dimension of 50 and ReLU activation functions. Training was performed by minimizing the MSE loss using AdamW with an initial learning rate of $0.001$ and a weight decay of $0.1$. The learning rate was halved if the validation loss did not improve for 50 consecutive epochs. Each model was trained for 1000 epochs, with each run taking less than three minutes to complete.

\subsubsection{Experiment 1: Population statistics}

We adopt the experimental setup from~\cite[Section 4.1.1]{zaheer2017deep}, which comprises four distinct tasks on population statistics. 
In all four tasks, the datasets consist of sets where each set contains i.i.d.\ samples from a distribution $\mu$, where $\mu$ itself is sampled from a parameterized distribution family. The objective is to learn either the entropy or the mutual information of the distribution $\mu$.

While the original experiment in~\cite{zaheer2017deep} focused on training and testing with set sizes $n_{\mathrm{train}} = n_{\mathrm{test}} = [300, 500]$, we instead evaluate size generalization. 
During the training stage, the dataset consists of $N = 2048$ sets, each of size $n_{\mathrm{train}} = 500$. This dataset is randomly split into training, validation, and test data with proportions $50\%, 25\%,$ and $25\%$, respectively. During the evaluation stage, the trained model is tested on a sequence of datasets with set sizes $n_{\mathrm{test}} \in \{500, 1000, 1500, \dots, 4500\}$, each consisting of $N = 512$ sets. The descriptions of the four tasks, as originally presented in~\cite{zaheer2017deep}, are provided below:

\begin{itemize}[leftmargin=0.5cm]
    \item[] \textbf{Rotation:} Generate $N$ datasets of size $M$ from $\mc N(0, R(\alpha)\Sigma R(\alpha)^{T})$ for random $\Sigma$ and $\alpha \in [0, \pi]$. The goal is to learn the marginal entropy of the first dimension.

    \item[] \textbf{Correlation:} Generate sets from $\mc N(0, [\Sigma, \alpha \Sigma; \alpha \Sigma, \Sigma])$ for random $\Sigma$ and $\alpha \in (-1, 1)$. The goal is to learn the mutual information between the first $16$ and last $16$ dimensions.

    \item[] \textbf{Rank 1:} Generate sets from $\mc N(0, I + \lambda vv^{T})$ for random $v \in \mathbb{R}^{32}$ and $\lambda \in (0,1)$. The goal is to learn the mutual information.

    \item[] \textbf{Random:} Generate sets from $\mc N(0, \Sigma)$ for random $32 \times 32$ covariance matrices $\Sigma$. The goal is to learn the mutual information.
\end{itemize}
For all tasks, the target functions are scalar functions on the underlying probability measure $\mu$, and are continuous with respect to the Wasserstein $p$-distance. Based on Appendix~\ref{appen:deepset_analysis}, normalized DeepSet is well-aligned with the task from a continuity perspective and PointNet aligns from a compatibility perspective, while unnormalized DeepSet lacks alignment altogether. The results are summarized in Figure~\ref{fig:deepset_size_generalization}, showing that stronger task-model alignment improves both in-distribution and size-generalization performance.

\begin{figure}[!ht]
    \centering
    \includegraphics[width=1\linewidth]{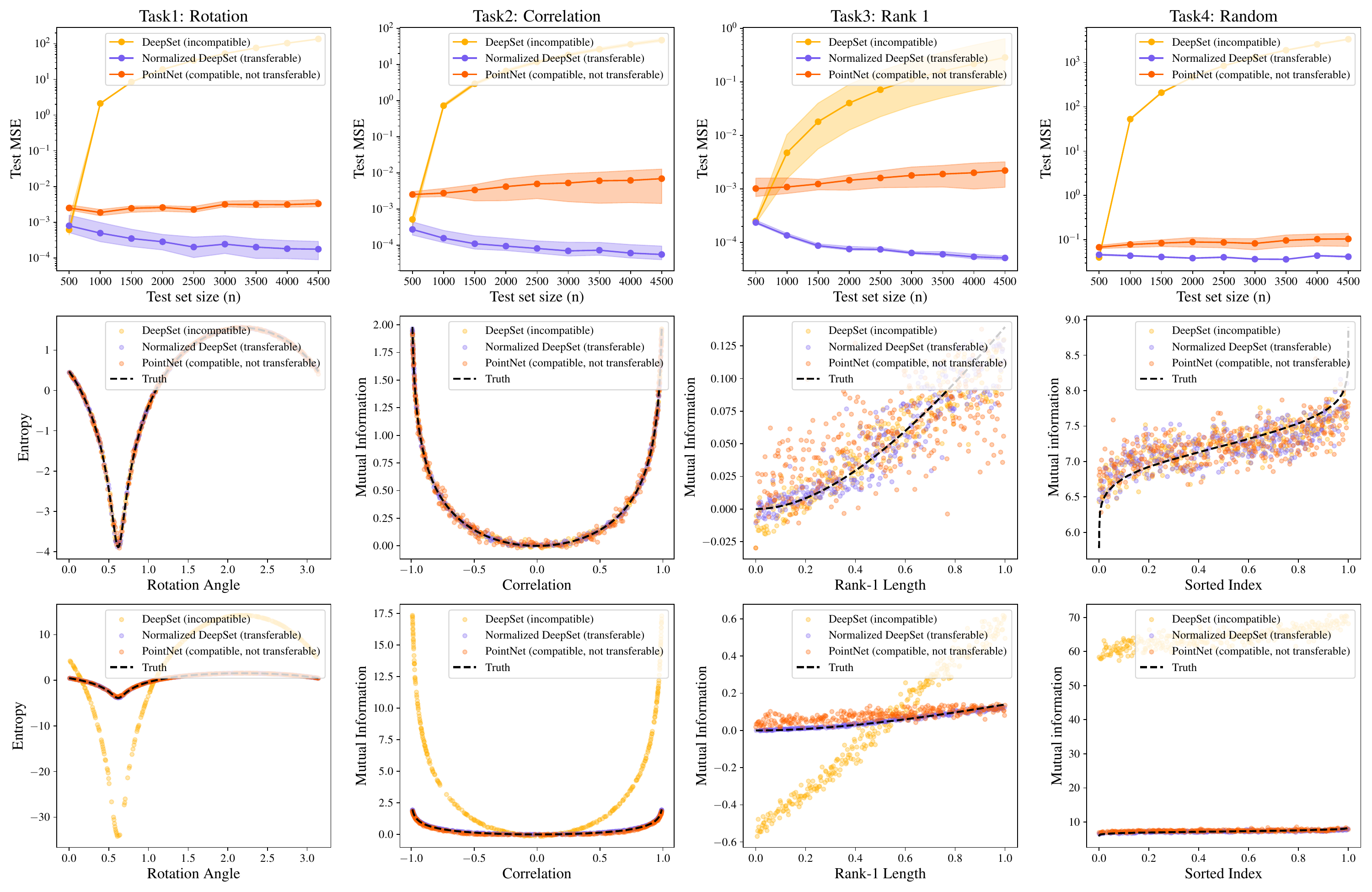}
    \caption{Size generalization for population statistics experiment. All models are trained on set size $n_{\text{train}} = 500$ and tested on set sizes $n_{\text{test}} \in \{500, 1000, \dots, 4500\}$.
    \textbf{Top}: MSE (log scale) vs.\ test set size. The solid line denotes the mean, and the error bars extend from the minimum to the maximum test MSE over $10$ randomly initialized trainings. Normalized DeepSet performs better than PointNet, which in turn outperforms DeepSet.
    \textbf{Middle}: Test-set predictions of the three models vs.\ ground truth for $n_{\mathrm{test}} = n_{\mathrm{train}} = 500$. All models perform similarly.
    \textbf{Bottom}: Test-set predictions vs.\ ground truth for $n_{\mathrm{test}} = 4500 \gg n_{\mathrm{train}} = 500$. DeepSet has blown-up outputs due to scaling. Normalized DeepSet outperforms PointNet in Task 3.}
    \label{fig:deepset_size_generalization}
\end{figure}
\subsubsection{Experiment 2: Maximal distance from the origin}

We consider the following data and task: each dataset consists of sets where each set contains $n$ two-dimensional points sampled as follows. First, a center is sampled from $\mathcal{N}(0, I_2)$ and a radius is sampled from $\mathrm{Unif}([0,1])$, which together define a circle. The set then consists of $n$ points sampled uniformly along the circumference. The goal is to learn the maximum Euclidean norm among the points in each set.
Equivalently, our goal is to learn the so-called Hausdorff distance $d_H(\{0\}, X) = \sup_{x \in X} \|x\|$; see \eqref{eq:hausdorff}. Hence, the target function depends only on the support of the point cloud and is continuous with respect to the Hausdorff distance. 

Once more, we evaluate size generalization. During the training stage, the dataset consists of $N=5000$ sets, each of size $n_{\mathrm{train}}=20$. The dataset is randomly split into training, validation, and test data with proportions $80\%, 10\%,$ and $ 10\%$, respectively. During the evaluation stage, the trained model is tested on a sequence of datasets with set sizes $n_{\mathrm{test}}\in \{20,40,\dots, 200\}$, each consisting of $N=1000$ sets.
For this task, PointNet aligns from a continuity perspective, normalized DeepSet from a compatibility perspective, and DeepSet does not align with the learning task. The results are summarized in Figure~\ref{fig:size_generalization_outputs}, showing that the model performance improves with task-model alignment.

\begin{figure}[!ht]
    \centering
    \subfloat[]{\includegraphics[width=0.37\linewidth]{plots/hausdorff_size_generalization.pdf}}
    \hfill
    \subfloat[]{\includegraphics[width=0.63\linewidth]{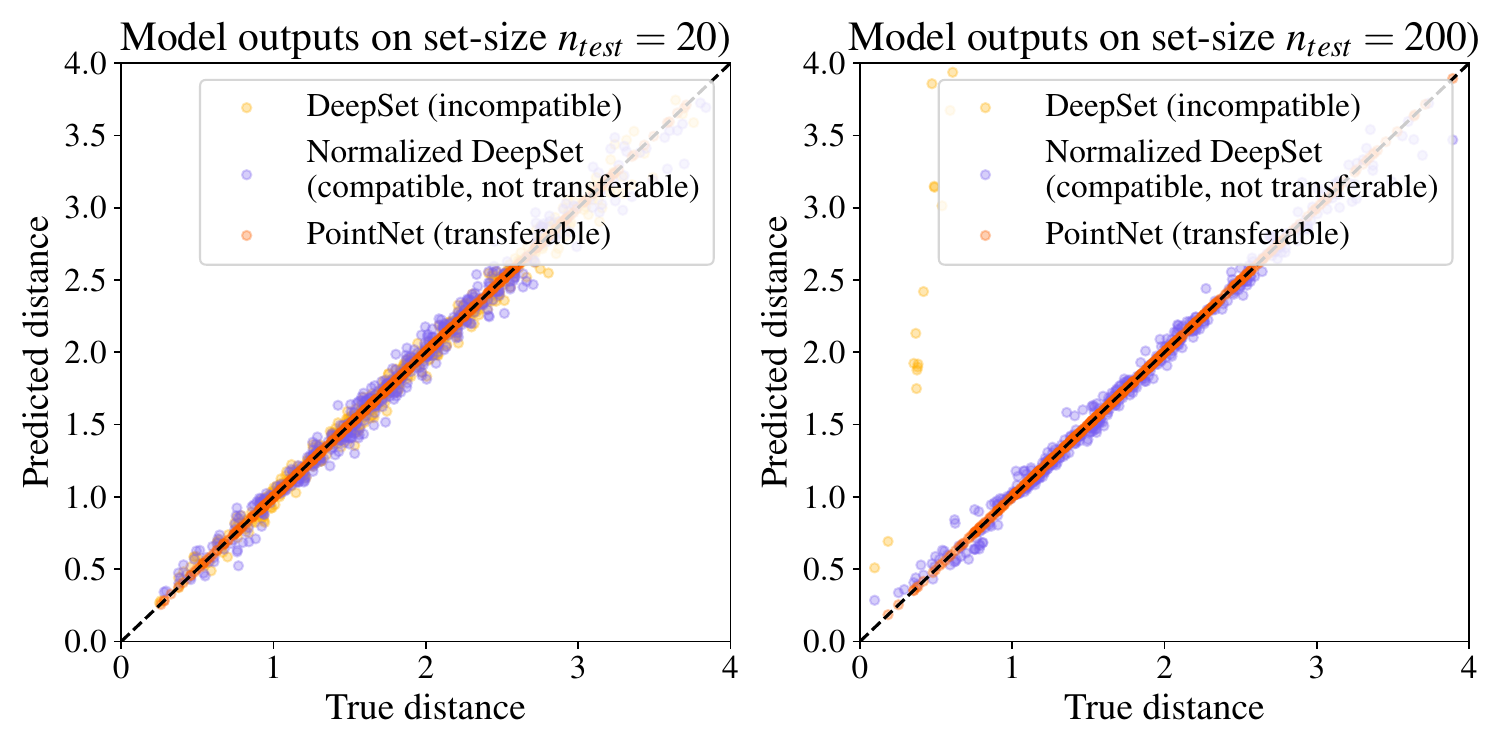}}
    \caption{Size generalization on max-distance-from-origin task. All models are trained on set size $n_{\text{train}} = 20$ and tested on set sizes $n_{\text{test}} \in \{20,40, \dots, 200\}$. \textbf{Figure (a):} Test MSE (log scale) vs.\ test set size. The solid line denotes the mean, and the error bars extend from the minimum to the maximum test MSE over 10 randomly initialized trainings. PointNet performs better than normalized DeepSet, which in turn outperforms DeepSet. \textbf{Figure (b):} Test-set predictions vs.\ ground truth for set size $n_{\mathrm{test}}=n_{\mathrm{train}}=20$. The middle figure shows predictions for $ n_{\mathrm{train}}=20$, while the right-most figure shows predictions for $n_{\mathrm{test}}=200$. PointNet is very accurate in both cases. DeepSet exhibits incorrect blown-up outputs in the latter case.}
    \label{fig:size_generalization_outputs}
\end{figure}

\subsection{Size generalization on graphs}\label{sec:experiments_graphs}
For graphs, the task is to learn a homomorphism density of degree three. To formally describe the task, we first describe its inputs. We consider a dataset consisting of $N$ attributed graphs $(A, x)$ generated according to the following two procedures (the first is described and reported in the main paper):
\begin{enumerate}[leftmargin=0.8cm]
    \item Each graph is a fully connected weighted graph whose adjacency matrix has entries $A_{ij} = A_{ji} \stackrel{\text{i.i.d.}}{\sim} \mathrm{Unif}([0,1])$ for $i \leq j$, and node features $x_i \stackrel{\text{i.i.d.}}{\sim} \mathrm{Unif}([0,1])$.
    
    \item First, sample the number of clusters $K$ uniformly from $\{10, \dots, 20\}$, and construct a $K \times K$ symmetric probability matrix $P$ with entries sampled uniformly from $[0,1]$. The resulting stochastic block model (SBM) is used to generate an undirected, simple graph with edges sampled as $A_{ij} = A_{ji} \stackrel{\text{i.i.d.}}{\sim} \mathrm{Ber}(P_{z_i, z_j})$ for $i \leq j$, where $z_i \stackrel{\text{i.i.d.}}{\sim} \mathrm{Unif}(\{1, \dots, K\})$ are cluster assignments. Node features are given by $x_i = \gamma_{z_i}$, where $\gamma \in \mathbb{R}^{K}$ with i.i.d. entries  $\mathrm{Unif}([0,1])$.
\end{enumerate}
The task is to learn the rooted, signal-weighted homomorphism density of degree three sending
\[\mathbb{R}^{n \times n}_{\text{sym}} \times \mathbb{R}^n \to \mathbb{R}^n, \quad (A, x) \mapsto y,\quad\text{via} \quad y_i = \frac{1}{n^2} \sum_{j,k \in [n]} A_{ij} A_{jk} A_{ki} x_i x_j x_k.\]
Note that when $x = \mathbbm{1}$, $y_i$ corresponds to a normalized count of triangles centered at node $i$. Thus, this can be interpreted as a signal-weighted triangle density. This formulation is related to the signal-weighted homomorphism density studied in~\cite{herbst2025invariant}, which generalizes the notion of homomorphism density extensively studied in graphon theory~\cite{lovasz}.

We conduct experiments to evaluate the size generalization performance of the following models: the GNN from~\cite{ruizNEURIPS2020}, the normalized 2-IGN~\cite{maron2018invariant},  and our proposed GGNN and continuous GGNN. ReLU is used as the entry-wise activation function in all models. We choose the number of layers and hidden dimensions such that each model has approximately \texttt{2k} parameters to ensure an arguably fair comparison.
During training, we use a dataset of $N=5000$ graphs, each with $n_{\text{train}} = 50$ nodes. This dataset is randomly split into training, validation, and test with proportions $60\%, 20\%,$ and $20\%$, respectively. For evaluation, we test the trained models on datasets of graph sizes $n_{\text{test}} \in \{50, 200, 500, 1000, 2000\}$, each containing $N=1000$ graphs.
Training is performed by minimizing the MSE loss using AdamW with an initial learning rate of $0.001$ and weight decay of $0.1$. Each model is trained for 500 epochs. Training a single run takes less than 3 minutes for the GNN model, and approximately 6–9 minutes for the IGN, GGNN, and continuous GGNN models, which are more computationally intensive. We note that evaluation on large graphs is particularly time- and memory-intensive for these models, taking up to several hours. Memory limitations restrict the maximum graph size we can evaluate to $n=2000$.

The results of the size generalization experiments are summarized in Figure~\ref{fig:gnn_triangle_outputs}. 
Let us elaborate on these results. Since the target function naturally extends to the graphon-level signal-weighted triangle density $(W, f) \mapsto g$, given by
\[
    g(x) = \int_{[0,1]^2} W(x, y) W(y, z) W(z, x) f(x) f(y) f(z) \, dy \, dz,
\]
which is continuous with respect to the cut norm, models that are continuous under this topology—such as the GNN and continuous GGNN—are aligned with the task at the level of continuity. Our proposed continuous GGNN, which is provably transferable and possibly more expressive than the GNN, achieves the best performance. Although GGNN is not transferable under the cut norm, it is transferable under a weaker topology (see Appendix~\ref{appen:GGNN}), enabling it to perform reasonably well. In contrast, the 2-IGN model, even after proper normalization, exhibits divergent outputs for larger graph sizes, indicating a lack of compatibility with the task.

Finally, we remark that the expressive power of various GNN architectures with respect to homomorphism densities has been extensively studied. Prior work has shown that common GNNs---including those considered in this study---are generally unable to express high-degree homomorphism densities~\cite{chen2020can, herbst2025invariant}. However, our results demonstrate that GNNs can still achieve strong performance on this task when evaluated over certain large parametric families of random graph models. This does not contradict prior theoretical findings, as our results pertain to an \emph{average-case} evaluation, while the negative results in the literature are established in the \emph{worst-case} setting.

\begin{figure}[!ht]
    \centering
    \subfloat[]{ \includegraphics[width=0.35\linewidth]{plots/gnn_full_random_triangle.pdf} }
    \subfloat[]{ \includegraphics[width=0.6\linewidth]{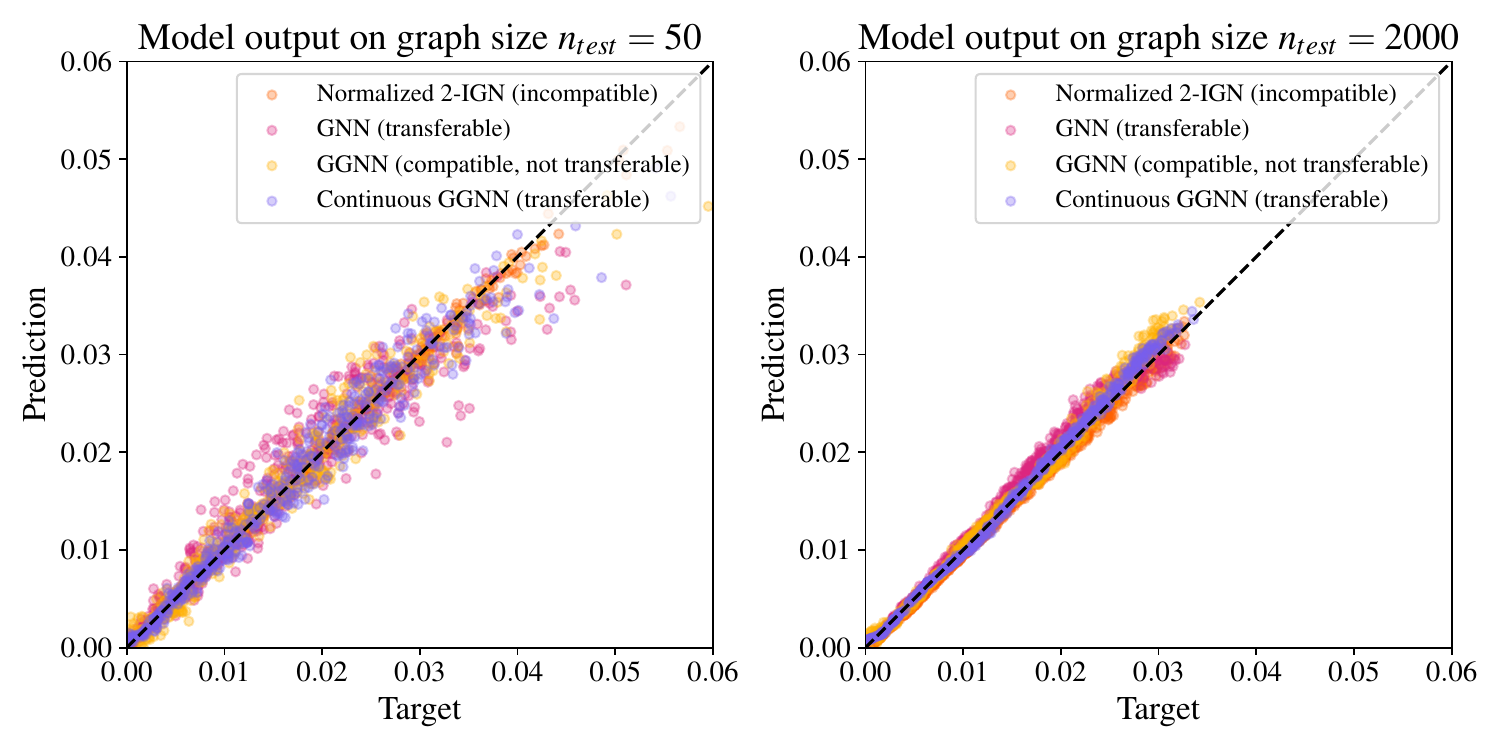} }
    \newline
    \subfloat[]{ \includegraphics[width=0.35\linewidth]{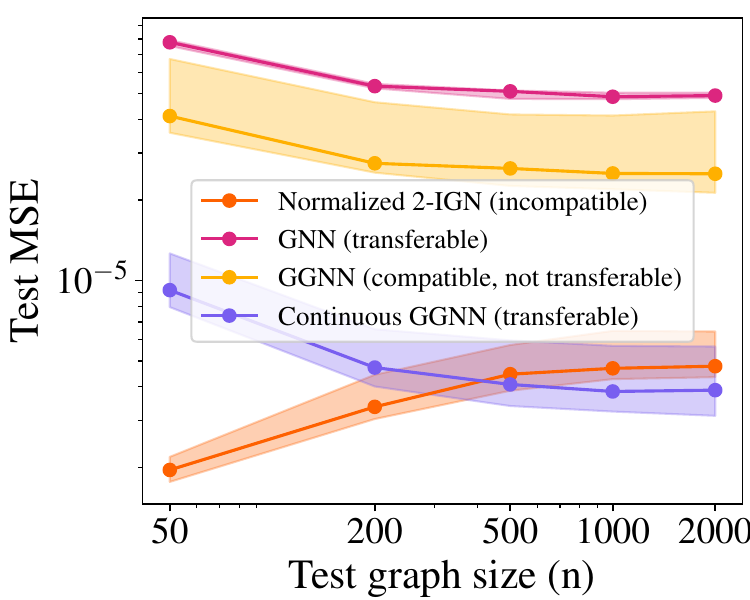} }
    \subfloat[]{ \includegraphics[width=0.6\linewidth]{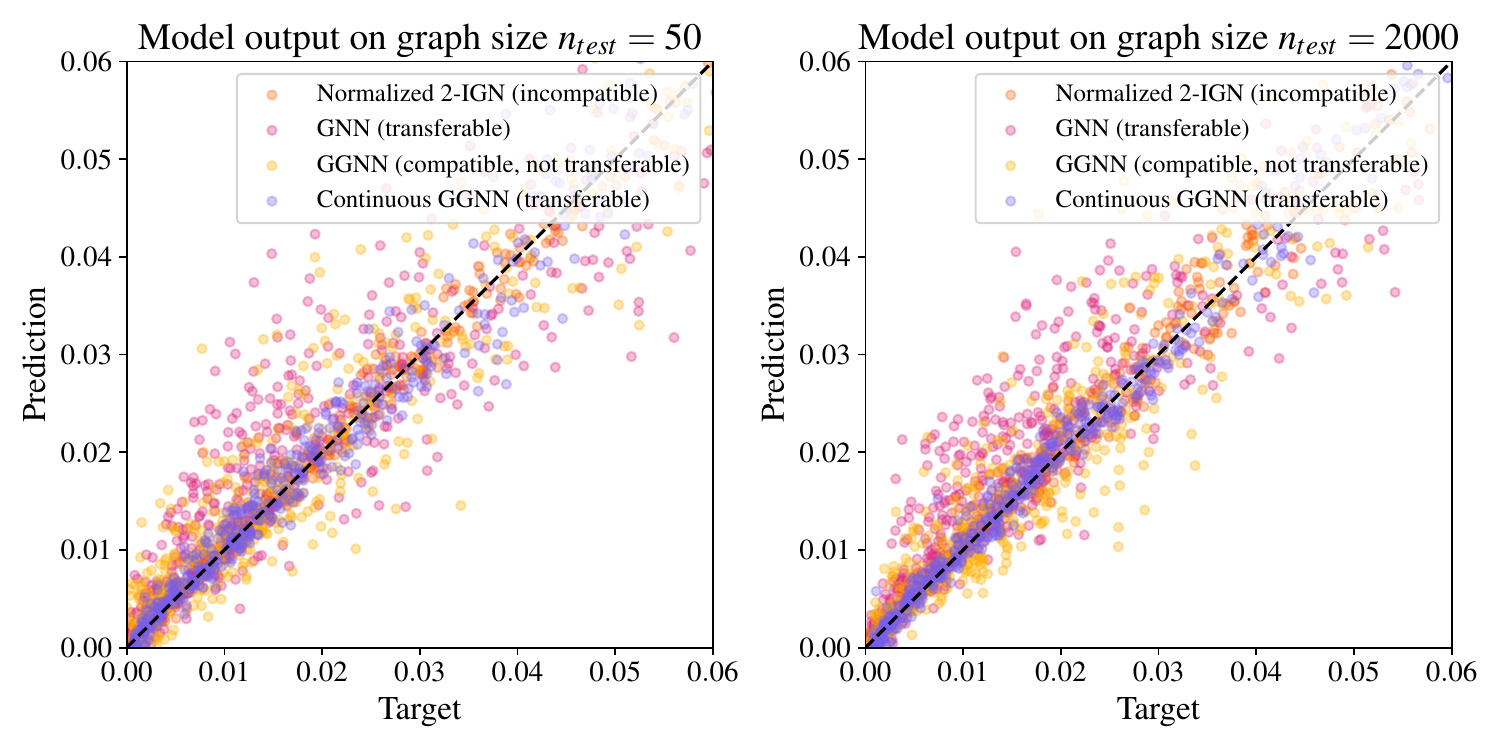} }
    \caption{GNN size generalization results on weighted triangle density. \textbf{Figures (a)-(b)} show the results on fully connected weighted graphs (the first data generation procedure), and \textbf{Figures (c)-(d)} show the results on simple graphs sampled from SBM (the second data generation procedure). {Figures (a) and (c)} display Test MSE vs.\ test graph size. The solid line denotes the mean, and the error bar extend from the $20\%$ to $80\%$ percentile test MSE over 10 randomly initialized trainings. Continuous GGNN performs the best for both random graph models. Figures (b) and (d) display Test-set predictions vs.\ ground truth. The middle columns displays results for graph size $n_{\mathrm{test}}=n_{\mathrm{train}}=50$, while the right-most column displays results for $n_{\mathrm{test}}=2000$ and $ n_{\mathrm{train}}=50$.}
    \label{fig:gnn_triangle_outputs}
\end{figure}

\subsection{Size generalization on 3D point clouds}

For our final set of experiments, we follow the setup of Section 7.2 in~\cite{blum2024learning}, which uses ModelNet10. We select $80$ point clouds from Class $2$ (chair) and $80$ from Class $7$ (sofa), and split them into 40 training and 40 testing samples per class. Each dataset has $40 \times 40$ cross-class pairs. The objective is to learn the third lower bound of the Gromov-Wasserstein distance in ~\cite{memoli2011gromov} (Definition 6.3). We prove in Appendix~\ref{app:continuity-gw} that it is continuous with respect to the Wasserstein $p$-distance. 

Unlike~\cite{blum2024learning}, which downsampled all point clouds to 100 points, we focus on size generalization: training is done on $n_{\text{train}} = 20$, and testing is done on $n_{\text{test}} \in \{20, 100, 200, 300, 500\}$. We compare the size generalization of 3 models: unnormalized SVD-DS, (normalized) SVD-DS and normalized DS-CI. For each pair of inputs $V,V'$, we predict the GW lower bound via:
\[
    \widehat{\mathrm{GW}}(V, V') = a \|W(f(V) - f(V'))\|^2 + b,
\]
where $f : \mathbb{R}^{n \times k} \to \mathbb{R}^t$ is the $\msf S_n,\msf O(k)$-invariant model, $t=10$, and $W \in \mathbb{R}^{t \times t}, a, b \in \mathbb{R}$ are learnable. All DeepSet components ($\sigma$, $\phi$) are fully connected ReLU networks. We choose the number of layers and hidden dimensions such that each model has approximately \texttt{2k} parameters to ensure a fair comparison.
Training is performed by minimizing the MSE loss using AdamW with an initial learning rate of $0.01$ and weight decay of $0.1$. Each model is trained for $3000$ epochs. Training a single run takes less than $3$ minutes.

The experiment results are summarized in Figure~\ref{fig:svd-ds_size_generalization}.  Normalized DS-CI and normalized SVD-DS, both aligned with the target at the continuity level, achieve good performance. While normalized SVD-DS underperforms compared to normalized DS-CI, it offers superior time and memory efficiency.

\begin{figure}[h!]
    \centering
    \subfloat[]{ \includegraphics[width=0.35\textwidth]{plots/SVD-DS_plot.pdf} }
    \subfloat[]{ \includegraphics[width=0.6\textwidth]{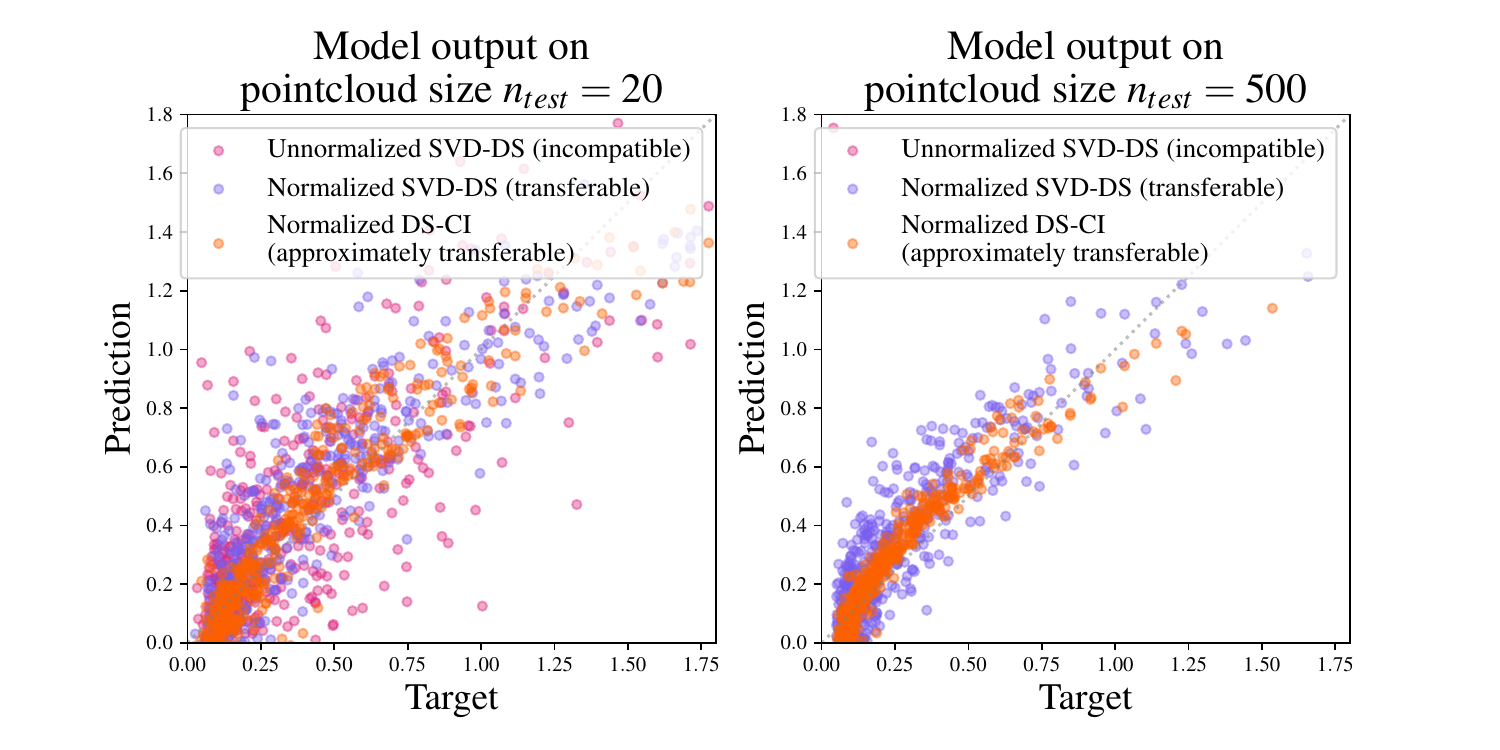} }
    \caption{Size generalization results for point clouds. \textbf{(a):} Test-set MSE (log scale) vs.\ point cloud size $n$. The solid line denotes the mean and the error bars extend from the min to max test MSE over 10 trials, each taking the best of 5 random initializations. \textbf{(b):} Test-set predictions vs.\ ground truth for graph size $n_{\mathrm{test}}=n_{\mathrm{train}}=20$, and $n_{\mathrm{test}}=500 \gg n_{\mathrm{train}}=20$.}
    \label{fig:svd-ds_size_generalization}
\end{figure}

\subsection{Continuity of Gromov-Wasserstein distance and its third lower bound} \label{app:continuity-gw}
The following is based on~\cite{memoli2011gromov}, though these continuity
    results are not stated there. Let $\mu\in \mc P_{p}(\RR^{k})$ be a probability
    measure, and associate with it the ``metric measure space''
    $(\mc X, d_{\mc X},\mu)$ where $\mc X=\mathrm{supp}(\mu)$ and
    $d_{\mc X}(x,y)=\|x-y\|_{p}$. Given two such measures $\mu,\nu\in\mc P_{p}(\RR
    ^{k})$, the Gromov-Wasserstein distance between their associated metric
    spaces is defined by
    \begin{align}
        \mfk D_{p}(\mu,\nu) &= \inf_{\pi\in \mc M(\mu,\nu)}\|\Gamma_{X,Y}\|_{L^p(\pi\otimes
        \pi)}\\
        &= \inf_{\pi\in \mc M(\mu,\nu)}\left(\int \Big| \|x-x'\|_{p}- \|y-y'
        \|_{p}\Big|^{p}\, d\pi(x,y)\, d\pi(x',y')\right)^{1/p},
    \end{align}
    where $\Gamma_{X,Y}(x,y,x',y') = \Big| \|x-x'\|_{p}- \|y-y'\|_{p}\Big|$ and
    $\mc M(\mu,\nu)$ is the set of couplings between $\mu$ and $\nu$. The G-W distance
    admits the following lower bound~\cite[Def.~6.3]{memoli2011gromov}.
    \begin{equation}
        \begin{aligned}
            \mfk D_{p}\geq \mathrm{TLB}_{p}(\mu,\nu) = & \inf_{\pi\in\mc M(\mu,\nu)}\|\Omega_{\mu,\nu}\|_{L^p(\pi)},                                                 \\
                                                       & \textrm{where }\Omega_{\mu,\nu}(x,y) = \inf_{\pi'\in\mc M(\mu,\nu)}\|\Gamma(x,y,\cdot,\cdot)\|_{L^p(\pi')}.
        \end{aligned}
    \end{equation}
    We aim to show that both $\mfk D_{p}$ and $\mathrm{TLB}_{p}$ are continuous
    with respect to the Wasserstein-$p$ metric on $\mc P_{p}$. More precisely,
    the following Lipschitz bounds hold.
    \begin{proposition}
        \label{prop:GW_lip} Let $\mu,\nu,\mu',\nu'\in\mc P_{p}(\RR^{k})$. Then
        \begin{equation}
            |\mfk D_{p}(\mu,\nu) - \mfk D_{p}(\mu',\nu')| \leq W_{p}(\mu,\mu') +
            W_{p}(\nu,\nu').
        \end{equation}
    \end{proposition}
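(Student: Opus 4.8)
The plan is to show that $\mfk D_p$ is $1$-Lipschitz in each of its two arguments separately (with respect to $W_p$), and then combine the two estimates by the triangle inequality. By symmetry of $\mfk D_p$ in its arguments, it suffices to establish
\begin{equation*}
    |\mfk D_p(\mu,\nu) - \mfk D_p(\mu',\nu)| \leq W_p(\mu,\mu'),
\end{equation*}
and the analogous bound in the second slot follows by swapping roles; then
\begin{equation*}
    |\mfk D_p(\mu,\nu) - \mfk D_p(\mu',\nu')| \leq |\mfk D_p(\mu,\nu) - \mfk D_p(\mu',\nu)| + |\mfk D_p(\mu',\nu) - \mfk D_p(\mu',\nu')| \leq W_p(\mu,\mu') + W_p(\nu,\nu').
\end{equation*}

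First I would fix an optimal (or near-optimal up to $\epsilon$) coupling $\gamma \in \mc M(\mu,\mu')$ achieving $W_p(\mu,\mu')$, and an optimal coupling $\pi \in \mc M(\mu,\nu)$ achieving $\mfk D_p(\mu,\nu)$. The goal is to build from these a coupling $\pi' \in \mc M(\mu',\nu)$ whose Gromov-Wasserstein cost is not much larger than that of $\pi$. The natural construction is a gluing: by the gluing lemma for couplings, there is a measure $\lambda$ on $\RR^k \times \RR^k \times \RR^k$ (coordinates labelled $x', x, y$) whose $(x',x)$-marginal is $\gamma$ and whose $(x,y)$-marginal is $\pi$; let $\pi'$ be its $(x',y)$-marginal, which indeed lies in $\mc M(\mu',\nu)$. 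Then, writing $\lambda^{\otimes 2}$ for the product of $\lambda$ with an independent copy (coordinates $x',x,y,\tilde x',\tilde x,\tilde y$), I would estimate
\begin{equation*}
    \mfk D_p(\mu',\nu) \leq \left\| \,\big|\|x'-\tilde x'\|_p - \|y-\tilde y\|_p\big|\, \right\|_{L^p(\lambda^{\otimes 2})} \leq \left\| \,\big|\|x-\tilde x\|_p - \|y-\tilde y\|_p\big|\, \right\|_{L^p(\lambda^{\otimes 2})} + \left\| \,\|x'-\tilde x'\|_p - \|x-\tilde x\|_p\, \right\|_{L^p(\lambda^{\otimes 2})},
\end{equation*}
using the triangle inequality in $L^p(\lambda^{\otimes 2})$. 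The first term on the right equals $\mfk D_p(\mu,\nu)$ (the relevant marginal of $\lambda^{\otimes 2}$ is $\pi^{\otimes 2}$, and $\pi$ was optimal), and the second term is controlled via the reverse triangle inequality $\big|\|x'-\tilde x'\|_p - \|x-\tilde x\|_p\big| \leq \|x'-x\|_p + \|\tilde x' - \tilde x\|_p$, Minkowski's inequality, and the fact that the $(x',x)$-marginal of $\lambda$ is the optimal coupling $\gamma$; this gives $2 W_p(\mu,\mu')$. Symmetrizing the argument (running it with $\mu'$ and $\mu$ interchanged) and taking the better of the two gluings — or, more cleanly, noting the estimate holds both ways — yields $|\mfk D_p(\mu,\nu) - \mfk D_p(\mu',\nu)| \leq 2 W_p(\mu,\mu')$. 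To sharpen the constant from $2$ to $1$, I would be more careful in the gluing: instead of bounding each of $\|x'-x\|_p$ and $\|\tilde x'-\tilde x\|_p$ separately, exploit that the two perturbations are governed by the \emph{same} coupling $\gamma$, so the cross term can be handled by recognizing $\|x' - x\|_p$ and $\|\tilde x' - \tilde x\|_p$ as identically distributed, giving $2\|\|x'-x\|_p\|_{L^p(\gamma)}^{1/1}$... actually the cleanest route to the constant $1$ is to observe that one only perturbs the first space, so only one of the two pairs $(x,\tilde x)$ changes at a time is false — both change. The honest statement the construction gives is the factor $2$; to get $1$ one instead uses the bound $\big| \|x'-\tilde x'\|_p - \|x - \tilde x\|_p \big| \le \|(x' - x) - (\tilde x' - \tilde x)\|_p$ and then a single application of Minkowski against the coupling of $(x'-x, \tilde x' - \tilde x)$, whose law has both marginals equal to the law of $x' - x$ under $\gamma$; this still gives $2\|x'-x\|_{L^p(\gamma)}$. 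The resolution claimed in the statement (constant $1$) presumably comes from a smarter argument, e.g. coupling only "half" the pair or using that $\Gamma$ is a difference of two distances each perturbed on one side — I would look to \cite{memoli2011gromov} for the exact device, as the factor-$2$ version is already enough for continuity and the sharp constant is a refinement.

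\textbf{Main obstacle.} The conceptual content is entirely in the gluing construction and the bookkeeping of which marginals of the glued measure are optimal couplings; this is standard optimal-transport technique. The one genuine subtlety I anticipate is obtaining the \emph{sharp} Lipschitz constant $1$ rather than $2$: the naive gluing perturbs both the $x$ and the $\tilde x$ coordinate and each contributes a full $W_p(\mu,\mu')$, so a direct argument gives $2W_p(\mu,\mu')+2W_p(\nu,\nu')$. Recovering the factor $1$ requires either a cleverer single-sided gluing or an averaging trick; identifying and justifying that step is where I would spend the real effort. Everything else — measurability of the infima, existence of optimal couplings on $\mc P_p(\RR^k)$ (which holds since $\RR^k$ is Polish and the cost is lower semicontinuous), and the analogous but easier argument for $\mathrm{TLB}_p$ (where one additionally needs continuity of $\mu \mapsto \Omega_{\mu,\nu}$ composed appropriately, handled by the same gluing applied to the inner infimum) — is routine.
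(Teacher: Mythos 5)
Your gluing route is genuinely different from the paper's, and more self-contained. The paper simply invokes two facts from \cite{memoli2011gromov}: that $\mfk D_p$ satisfies the triangle inequality (Thm.~5.1(a)), and that $\mfk D_p(\mu,\mu') \leq W_p(\mu,\mu')$ for measures on the same $\RR^k$ (Thm.~5.1(d)); the Lipschitz estimate then falls out of
\begin{equation*}
\mfk D_p(\mu,\nu) \leq \mfk D_p(\mu,\mu') + \mfk D_p(\mu',\nu') + \mfk D_p(\nu',\nu)
\end{equation*}
and symmetry. Your gluing re-derives both facts at once: gluing $\gamma$ to $\pi$ is precisely the proof of the triangle inequality for $\mfk D_p$, and controlling $\big\|\,|\|x'-\tilde x'\|_p - \|x-\tilde x\|_p|\,\big\|_{L^p(\lambda^{\otimes 2})}$ via the reverse triangle inequality is precisely the proof of the comparison to $W_p$. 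Same mechanics, unpacked rather than cited.

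The factor-$2$ gap you ran into and could not close is real, but you should not chase it further: it is a normalization artifact, and the paper's own argument trips on it too. Mémoli's definition carries a $\tfrac12$ prefactor, $\mathcal D_p = \tfrac12 \inf_\pi \|\Gamma\|_{L^p(\pi\otimes\pi)}$; that $\tfrac12$ is what absorbs the $2$ your Minkowski step produces, so that under his normalization your gluing closes cleanly to the constant $1$. The paper's displayed definition drops the $\tfrac12$, and with it the stated constant $1$ is actually false for $p>1$: take $\nu=\nu'=\mu=\delta_0$ and $\mu' = \tfrac12(\delta_{-1}+\delta_1)$ on $\RR$. The unique couplings give $\mfk D_p(\mu,\nu)=0$ and $\mfk D_p(\mu',\nu')=2^{1-1/p}$, while $W_p(\mu,\mu')=1$ and $W_p(\nu,\nu')=0$; the claimed inequality reads $2^{1-1/p}\leq 1$, which fails for every $p>1$. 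So there is no clever single-sided gluing to discover --- your factor $2$ is sharp for the definition as written, and the proposition only holds if $\mfk D_p$ is read with Mémoli's $\tfrac12$ built in. Under that reading your proposal, after halving the GW functional, is a complete and correct proof by a route parallel to but independent of the citations the paper leans on; either way the Lipschitz continuity needed downstream is secured.
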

    \begin{proof}
        By the triangle inequality for $\mfk D_{p}$, which holds by~\cite[Thm.~5.1(a)]{memoli2011gromov},
        we have
        \begin{equation*}
            \mfk D_{p}(\mu,\nu) \leq \mfk D_{p}(\mu,\mu') + \mfk D_{p}(\mu',\nu')
            + \mfk D_{p}(\nu,\nu').
        \end{equation*}
        By~\cite[Thm.~5.1(d)]{memoli2011gromov}, we further have
        $\mfk D_{p}(\mu,\mu') \leq W_{p}(\mu,\mu')$ and similarly for
        $\mfk D_{p}(\nu,\nu')$. Combining these inequalities and interchanging the
        roles of $(\mu,\nu)$ and $(\mu',\nu')$, we get the claim.
    \end{proof}
    The above proof only used the triangle inequality and the bound $\mfk D_{p}\leq
    W_{p}$ for the G-W distance. Since
    $\mathrm{TLB}_{p}\leq \mfk D_{p}\leq W_{p}$, the latter property also holds for
    $\mathrm{TLB}$. Thus, it suffices to prove $\mathrm{TLB}_{p}$ satisfies the triangle
    inequality, hence is similarly Lipschitz in $W_{p}$.
    \begin{lemma}[Triangle inequality for $\Omega_{\mu,\nu}$]
        Let $\mu,\nu,\xi\in\mc P_{p}(\RR^{k})$. We have
        $\Omega_{\mu,\nu}(x,y)\leq \Omega_{\mu,\xi}(x,z) + \Omega_{\xi,\nu}(z,y)$
        for all $x,y,z$ in the relevant supports. Furthermore, we have
        $\mathrm{TLB}_{p}(\mu,\nu) \leq \mathrm{TLB}_{p}(\mu,\xi) + \mathrm{TLB}_{p}
        (\xi,\nu)$.
    \end{lemma}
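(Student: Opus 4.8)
The plan is to obtain both inequalities from the classical \emph{gluing lemma} for couplings together with Minkowski's inequality, exactly as in the standard proof that $\mfk D_p$ obeys the triangle inequality. First I would fix $x\in\mathrm{supp}(\mu)$, $z\in\mathrm{supp}(\xi)$, $y\in\mathrm{supp}(\nu)$ and arbitrary couplings $\alpha\in\mc M(\mu,\xi)$ and $\beta\in\mc M(\xi,\nu)$. The gluing lemma produces a probability measure $\gamma$ on $\RR^k\times\RR^k\times\RR^k$ whose $(1,2)$-marginal is $\alpha$ and whose $(2,3)$-marginal is $\beta$; its $(1,3)$-marginal, call it $\pi$, then lies in $\mc M(\mu,\nu)$. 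For $\gamma$-a.e.\ $(x',z',y')$ the scalar triangle inequality gives $\Gamma(x,y,x',y')\le\Gamma(x,z,x',z')+\Gamma(z,y,z',y')$, and applying Minkowski in $L^p(\gamma)$ and then marginalizing each resulting norm to the coordinates it depends on yields
\[
\Omega_{\mu,\nu}(x,y)\ \le\ \|\Gamma(x,y,\cdot,\cdot)\|_{L^p(\pi)}\ \le\ \|\Gamma(x,z,\cdot,\cdot)\|_{L^p(\alpha)} + \|\Gamma(z,y,\cdot,\cdot)\|_{L^p(\beta)} .
\]
Since the left-hand side does not depend on $\alpha$ or $\beta$, and the two terms on the right depend on them separately, I would take the infimum over $\alpha\in\mc M(\mu,\xi)$ and then over $\beta\in\mc M(\xi,\nu)$ to conclude $\Omega_{\mu,\nu}(x,y)\le\Omega_{\mu,\xi}(x,z)+\Omega_{\xi,\nu}(z,y)$.

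For the second inequality I would reuse the same gluing construction. Given $\alpha\in\mc M(\mu,\xi)$, $\beta\in\mc M(\xi,\nu)$ and the induced $\gamma,\pi$, I would start from $\mathrm{TLB}_p(\mu,\nu)\le\|\Omega_{\mu,\nu}\|_{L^p(\pi)}$, rewrite the integral against $\gamma$ (using that $\pi$ is the $(1,3)$-marginal), apply the pointwise bound from the previous step, and invoke Minkowski in $L^p(\gamma)$ once more to get
\[
\mathrm{TLB}_p(\mu,\nu)\ \le\ \|\Omega_{\mu,\nu}\|_{L^p(\pi)}\ \le\ \|\Omega_{\mu,\xi}\|_{L^p(\alpha)} + \|\Omega_{\xi,\nu}\|_{L^p(\beta)} .
\]
Taking infima over $\alpha$ and $\beta$ separately then gives $\mathrm{TLB}_p(\mu,\nu)\le\mathrm{TLB}_p(\mu,\xi)+\mathrm{TLB}_p(\xi,\nu)$. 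Combining this with the elementary bound $\mathrm{TLB}_p\le\mfk D_p\le W_p$ from \cite{memoli2011gromov}, the argument of Proposition~\ref{prop:GW_lip} carries over verbatim to show $|\mathrm{TLB}_p(\mu,\nu)-\mathrm{TLB}_p(\mu',\nu')|\le W_p(\mu,\mu')+W_p(\nu,\nu')$, which is the continuity statement we ultimately want.

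None of the steps is genuinely difficult; the only points requiring care are measurability and a non-compactness technicality. Specifically, I would need $\Omega_{\mu,\nu}$ to be jointly Borel measurable so that $\|\Omega_{\mu,\nu}\|_{L^p(\pi)}$ is well defined and the pushforward/marginalization identities are valid — this is exactly the measurability established in \cite{memoli2011gromov}, and it transfers without change — and I would need the gluing lemma on $\mc P_p(\RR^k)$ rather than on compact spaces, which follows from disintegration of measures since $\RR^k$ is a standard Borel space, the same fact already used in the point-cloud convergence-rate arguments. I expect the bookkeeping around these technical points to be the main (and only real) obstacle: the core estimate is a routine two-fold application of Minkowski's inequality after gluing.
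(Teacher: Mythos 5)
Your argument is correct and follows the paper's own proof step for step: glue two couplings into a triple coupling, apply the pointwise triangle inequality for $\Gamma$, invoke Minkowski's inequality in $L^p$ of the glued measure, marginalize each term to the two variables it depends on, and take infima. The only difference is notational ($\alpha,\beta,\gamma$ for $\pi_1,\pi_2,\pi$), plus you explicitly flag the measurability and non-compactness points that the paper delegates to its citations of \cite{memoli2011gromov} and \cite{villani2021topics}.
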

    \begin{proof}
        Note that the usual triangle inequality for $\|\cdot\|_{p}$ gives $\Gamma
        (x,y,x',y')\leq \Gamma(x,z,x',z') + \Gamma(z,y,z',y')$ for any $x,y,z,x',
        y',z'\in\RR^{k}$. For any couplings $\pi_{1}\in \mc M(\mu,\xi)$ and
        $\pi_{2}\in\mc M(\xi,\nu)$, the Gluing Lemma~\cite[Lemma~7.6]{villani2021topics}
        guarantees the existence of a coupling $\pi\in\mc M(\mu,\xi,\nu)$ whose corresponding
        marginals are $\pi_{1},\pi_{2}$. Let $\pi_{3}\in \mc M(\mu,\nu)$ be the
        marginal of $\pi$ on its first and third coordinates. Then
        \begin{equation*}
            \begin{aligned}
                \Omega_{\mu,\nu}(x,y)^{p} & \leq \|\Gamma(x,y,\cdot,\cdot)\|_{L^p(\pi_3)}^{p}= \|\Gamma(x,y,\cdot,\cdot)\|_{L^p(\pi)}^{p}                                                                 \\
                                          & \leq \|\Gamma(x,z,\cdot,\cdot) + \Gamma(z,y,\cdot,\cdot)\|_{L^p(\pi)}\leq \|\Gamma(x,z,\cdot,\cdot)\|_{L^p(\pi_1)}+ \|\Gamma(z,y,\cdot,\cdot)\|_{L^p(\pi_2)}.
            \end{aligned}
        \end{equation*}
        Since this holds for all couplings $\pi_{1},\pi_{2}$ as above, we obtain
        the first claim.

        The second claim is proved analogously. For couplings $\pi_{1},\pi_{2},\pi
        _{3},\pi$ as above, we have
        \begin{equation*}
            \mathrm{TLB}_{p}(\mu,\nu) \leq \|\Omega_{\mu,\nu}\|_{L^p(\pi_3)}= \|\Omega
            _{\mu,\nu}\|_{L^p(\pi)}\leq \|\Omega_{\mu,\xi}+ \Omega_{\xi,\nu}\|_{L^p(\pi)}
            \leq \|\Omega_{\mu,\xi}\|_{L^p(\pi_1)}+ \|\Omega_{\xi,\nu}\|_{L^p(\pi_2)}
            ,
        \end{equation*}
        and taking infimums over $\pi_{1},\pi_{2}$ completes the proof.
    \end{proof}
    \begin{proposition}
        Let $\mu,\nu,\mu',\nu'\in\mc P_{p}(\RR^{k})$. Then
        \begin{equation}
            |\mathrm{TLB}_{p}(\mu,\nu) - \mathrm{TLB}_{p}(\mu',\nu')| \leq W_{p}(
            \mu,\mu') + W_{p}(\nu,\nu').
        \end{equation}
    \end{proposition}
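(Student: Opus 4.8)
The strategy is to reuse verbatim the argument behind Proposition~\ref{prop:GW_lip}, replacing the two structural facts about $\mfk D_p$ by the corresponding facts about $\mathrm{TLB}_p$ that have just been recorded: the triangle inequality $\mathrm{TLB}_p(\mu,\nu)\le \mathrm{TLB}_p(\mu,\xi)+\mathrm{TLB}_p(\xi,\nu)$ (the second claim of the preceding lemma), and the comparison $\mathrm{TLB}_p\le \mfk D_p\le W_p$ (noted immediately after the lemma, using \cite[Thm.~5.1(d)]{memoli2011gromov}). First I would apply the triangle inequality twice to obtain
\[
\mathrm{TLB}_p(\mu,\nu)\ \le\ \mathrm{TLB}_p(\mu,\mu')+\mathrm{TLB}_p(\mu',\nu')+\mathrm{TLB}_p(\nu',\nu),
\]
and then bound $\mathrm{TLB}_p(\mu,\mu')\le W_p(\mu,\mu')$ and $\mathrm{TLB}_p(\nu',\nu)\le W_p(\nu,\nu')$, where I use the (evident) symmetry of both $\mathrm{TLB}_p$ and $W_p$ in their arguments. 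This gives
\[
\mathrm{TLB}_p(\mu,\nu)-\mathrm{TLB}_p(\mu',\nu')\ \le\ W_p(\mu,\mu')+W_p(\nu,\nu').
\]
Interchanging the roles of $(\mu,\nu)$ and $(\mu',\nu')$ yields the reverse inequality, and combining the two establishes the claimed Lipschitz bound.

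\textbf{What needs checking, and what is the obstacle.} There is essentially no obstacle, since both ingredients have already been supplied in the excerpt. The only points deserving a one-line verification are: (i) $\mathrm{TLB}_p$ is symmetric, which is immediate from its definition via couplings---the coupling set $\mc M(\mu,\nu)$ and the integrand $\Gamma$ are invariant under swapping the two metric measure spaces, and likewise for the inner functional $\Omega$; and (ii) we only need the \emph{upper} bound $\mathrm{TLB}_p(\mu,\mu')\le W_p(\mu,\mu')$, not that $\mathrm{TLB}_p$ vanishes on the diagonal, so no extra argument about tightness is required. Thus the proof is a purely formal consequence of the triangle inequality plus the $W_p$-comparison, exactly mirroring Proposition~\ref{prop:GW_lip}.

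\textbf{Alternative.} One could instead argue directly from the definition of $\mathrm{TLB}_p$ by gluing an optimal coupling for $W_p(\mu,\mu')$ (resp.\ $W_p(\nu,\nu')$) onto a near-optimal coupling for $\mathrm{TLB}_p(\mu',\nu')$ and estimating $|\,\|x-x'\|_p-\|y-y'\|_p\,|$ via the triangle inequality for $\|\cdot\|_p$; but this merely re-derives the triangle inequality and the $W_p$-comparison in disguise, so I would present the abstract route above instead.
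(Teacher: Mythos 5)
Your proof is correct and takes exactly the approach the paper intends: the paper's own proof of this proposition simply says it is ``now identical to that of Proposition~\ref{prop:GW_lip},'' meaning precisely the two ingredients you assemble --- the triangle inequality for $\mathrm{TLB}_p$ from the preceding lemma, and the comparison $\mathrm{TLB}_p \leq \mfk D_p \leq W_p$ noted just before it. Your auxiliary remarks on symmetry of $\mathrm{TLB}_p$ and on not needing vanishing on the diagonal are reasonable one-line sanity checks but were left implicit in the paper.
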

    \begin{proof}
        The proof is now identical to that of Proposition~\ref{prop:GW_lip}.
    \end{proof}
    The above argument generalizes to measures on different abstract metric spaces,
    we did not use the fact that all measures involved were over $\RR^{k}$.
\section{Limitations of this work}
\label{app:limitations}
This work provides a theoretical framework for transferability based on consistent sequences. It applies to several machine learning models that use a fixed number of parameters to define functions on any-dimensional inputs. However, this theory does not capture all possible ways for inputs to grow in dimension. In particular, it does not capture settings where there is no limiting space containing all finite-sized inputs, and where such inputs can be compared. For example, how do we compare natural language inputs of different lengths to each other?
Furthermore, while we believe our framework may extend to settings such as images (with varying resolutions and sizes), partial differential equations (across different scales), and sequences (of varying lengths), we did not explore these directions. We leave these investigations for future work.
Finally, as discussed briefly in the related work, we do not consider the expressive power of neural networks on the limiting space. If the target function is not expressible, mere alignment in terms of compatibility and continuity---as discussed in Section~\ref{sec:size_generalization}---is insufficient to ensure good performance. Studying universal approximation theory on the limiting space is an important and promising direction for future research.

\end{document}